\newcommand{\ubar}[1]{\underaccent{\bar}{#1}}
\DeclareMathOperator*{\argmax}{arg\,max}
\DeclareMathOperator*{\argmin}{arg\,min}
\newcommand{\Hi}{\mathcal{H}}
\newcommand{\cH}{\Hi}
\newcommand{\cG}{\mathcal{G}}
\newcommand{\R}{\mathbb{R}}
\newcommand{\dR}{{\mathbb{R}'}}
\providecommand{\abs}[1]{\left\lvert#1\right\rvert}
\providecommand{\norm}[1]{\left\lVert#1\right\rVert}
\providecommand{\inner}[2]{\left\langle #1,#2\right\rangle}
\newcommand{\cch}{\text{\upshape{cch}\,}}
\newcommand{\ch}{\text{\upshape{ch}\,}}
\newcommand{\aff}{\text{\upshape{aff}\,}}
\newcommand{\caff}{\overline{\text{\upshape{aff}}}\,}
\newcommand{\spn}{\text{\upshape{span}\,}}
\newcommand{\cspn}{\overline{\text{\upshape{span}}}\,}
\newcommand{\diam}{\text{\upshape{width}\,}}
\newcommand{\diamh}{\text{\upshape{width}}_h}
\newcommand{\diame}{\text{\upshape{width}}}
\newcommand{\vol}{\text{\upshape{vol}}}
\newcommand{\tr}{\text{\upshape{tr}\,}}
\newcommand{\intr}{\text{\upshape{int}\,}}
\newcommand{\restr}{\!\!\upharpoonright\!}
\DeclareMathOperator*{\infd}{inf\vphantom{p}} 
\newcommand{\mean}{\mathfrak{m}}
\newcommand{\E}{E}
\newcommand{\Var}{\text{Var}}
\newcommand{\X}{\mathcal{X}}
\newcommand{\Y}{\mathcal{Y}}
\newcommand{\Cov}{\tilde{\mathfrak{C}}}
\newcommand{\CovCap}{\wideparen{\tilde{\mathfrak{C}}}{\vphantom{\mathfrak{C}}}^{\otimes,(n)}_c}
\newcommand{\CovCapn}{\wideparen{\tilde{\mathfrak{C}}}{\vphantom{\mathfrak{C}}}^{\otimes,S_n}_c}
\newcommand{\CovCapntau}{\wideparen{\tilde{\mathfrak{C}}}{\vphantom{\mathfrak{C}}}^{S_n}_{c,\tau}}
\newcommand{\pStart}{{\bfseries P}\hspace{-0.23cm}{\bfseries P}\hspace{-0.23cm}{\bfseries P} \enspace}
\newcommand{\pEnd}{{\bfseries Q}\hspace{-0.28cm}{\bfseries Q}\hspace{-0.28cm}{\bfseries Q}}
\newcommand{\bn}{\bm{\|}}
\newcommand{\bl}{\bm{\langle}}
\newcommand{\br}{\bm{\rangle}_{2}}
\newcommand{\wh}{\widehat}
\newtheorem{prop}{Proposition}
\newtheorem{lemma}{Lemma}
\newtheorem{example}{Example}
\newtheorem{proposition}{Proposition}
\newtheorem{theorem}{Theorem}
\newtheorem{corollary}{Corollary}
\newtheorem{remark}{Remark}
\title{Compressed Empirical Measures \\ (In Finite Dimensions)}
\author{Steffen Gr\"unew\"alder\\
	University of York \\
Department of Mathematics \\
York, UK 
}
\date{}
\begin{document}

\maketitle

\begin{abstract}
We study approaches for compressing the empirical measure in the context of finite dimensional reproducing kernel Hilbert spaces (RKHSs).
In this context, the empirical measure is contained within a natural convex set and can be approximated using convex optimization methods.
Such an approximation gives rise to a coreset of data points. 
A key quantity that controls how large such a coreset has to be is the size of the largest ball around the empirical measure that is contained within the empirical convex set. The bulk of our work is concerned with deriving high probability lower bounds on the size of such a ball under various conditions and in various settings: we show how conditions on the density of the data and the kernel function can be used to infer such lower bounds; we further develop an approach that uses a lower bound on the smallest eigenvalue of a covariance operator to provide lower bounds on the size of such a ball; we extend the approach to  approximate covariance operators and we show how it can be used in the context of kernel ridge regression. We also derive compression guarantees when standard algorithms like the conditional gradient method are used and we discuss variations of such algorithms to improve the runtime of these standard algorithms. We conclude with a construction of an infinite dimensional RKHS for which the compression is poor, highlighting some of the difficulties one faces when trying to move to infinite dimensional RKHSs.
\end{abstract}

\newpage

\tableofcontents

\section{Introduction}
Many methods in machine learning and statistics make use of the empirical measure which is effectively a representation of the data. 
Reducing the number of points on which the empirical measure is supported, while preserving most of the information that is necessary for inference, can result in a significant speed-up of algorithms without sacrificing accuracy. We study the question of how to compress the empirical measure while preserving information in the context of \textit{finite dimensional reproducing kernel Hilbert spaces (RKHSs)}. To give an overview of our results it is useful to introduce the key objects of our investigation. We are generally concerned with data taking values in some set $\X$. Often we will assume this set to be compact. We then look at a kernel function $k$ defined on $\X$ and the corresponding RKHS $\cH$. For various results, it is useful to assume that the functions in $\cH$ are continuous or even Lipschitz-continuous. Our main interest lies in the unknown distribution $P$ of data $X_1,\ldots, X_n$ where we assume throughout that $X_1,\ldots,X_n$ are independent and identically distributed. We adopt a common convention from the empirical process theory literature and will denote by $Pf$ the integral $\int f(x) \,dP(x)$ whenever $f \in \mathcal{L}^1(\X,P)$. Since $P$ is unknown it is common to use the empirical measure $P_n$ as a surrogate, where $P_n f = (1/n) \sum_{i=1}^n f(X_i)$. There is a useful interplay between the measures $P$ and $P_n$ and RKHSs. Whenever $k(X_1,\cdot)$ is Bochner-integrable with respect to $P$ we can define $\mean = \int k(x,\cdot) \, dP(x) \in \cH$ and it follows that
\[
 \langle \mean,h \rangle = Ph, \text{ for all } h \in \cH.
\]
Similarly, by defining $\mean_n = (1/n) \sum_{i=1}^n k(X_i,\cdot)$ we have that $\langle \mean_n,h \rangle = P_n h$ for all $h\in \cH$.
Our aim in this paper is to find an element $\bar{\mean}_n$ such that 
\[
 \| \bar{\mean}_n - \mean_n \| \approx \| \mean_n - \mean\|
\]
to guarantee that $\| \bar{\mean}_n - \mean \|$ is of the same order as $\|\mean_n - \mean\|$ and $\bar{\mean}_n$ can be used in place of $\mean_n$ without sacrificing significant accuracy in applications.

To gain such an approximation $\bar{\mean}_n$, we make use of another fortunate circumstance. The element $\mean$ does not only lie in $\cH$ but within the convex set 
\[
	C = \cch \{k(x,\cdot) : x\in \X\},
\]
where $\cch$ denotes the closed convex hull. This is useful because the extremes of $C$ are contained within the set $\{ k(x,\cdot) : x \in \X\}$ and often we can reduce the study of $C$ to studying the interaction between $k(x,\cdot)$  and functions $h\in \cH$. For instance, the width of $C$ in a direction $h \in \cH, \|h \| = 1$, is 
\[
\diam_h(C) = \sup_{x \in \X} \langle k(x,\cdot), h \rangle - \inf_{x \in \X} \langle k(x,\cdot), h \rangle = \sup_{x\in \X} h(x)  - \inf_{x \in \X} h(x).
\]
The set $\{k(x,\cdot) : x\in \X\}$ is usually infinite and not directly useful for algorithms. However, when using $\mean_n$, we have another convex set in $\cH$ that is usable, that is the empirical convex set $C_n = \ch\{ k(X_i,\cdot) : i \leq n\}$ which contains $\mean _n$. The extremes of $C_n$ are contained within the finite set $\{k(X_i,\cdot) : i \leq n\}$.   

\begin{figure}[t]
\begin{center}
\begin{tikzpicture}[scale=1]
\draw (0,0) -- (4,0);
\node at (4.2,-0.2) {$\mathbb{R}$};
\foreach \Point in {(0.2,-0.04), (1.3,-0.04),  (3.5,-0.04)}{
    \node at \Point {\textbullet};
};
\foreach \Point in { (0.5,-0.04), (2,-0.04)}{
    \node[text=red] at \Point {\textbullet};
};
\foreach \Point in {(3.5,3),  (5,4), (6,2)}{
    \node at \Point {\textbullet};
};
\foreach \Point in { (4,3.5),  (5.7,3.2)}{
    \node[text=red] at \Point {\textbullet};
};
\draw[dashed]  (3.5,3) --  (4,3.5) -- (5,4) --  (5.7,3.2) --  (6,2)-- (3.5,3); 
\node at (5.5,4.5) {$\mathcal{H}$};
\node at (4.8,3.1) {\textbullet};
\node at (5.15,3.1) {$\mean_n$};
\node[text=red] at (4.85,3.35) {\textbullet};
\node (1) at (1,0.5) {};
\node (2) at (3,2.7) {};
\node (3) at (5,2) {};
\node (4) at (3.2,0.3) {};
\path[->]
(1) edge[bend left] node [left] {} (2)
(3) edge[bend left] node [left] {} (4);
\node at (12,2) {$\bigoplus$};
\foreach \Point in {(10.5,0), (8.8,2), (10.3,4),  (11,3.75), (11.1,0.6)}{
    \node at \Point {\textbullet};};
\draw  (10.5,0) -- (8.8,2) -- (10.3,4)   -- (11,3.75) --  (11.1,0.6) --  (10.5,0); 
\node at (10.6,2.7) {$\mathfrak{C}_{y,n}$}; 
\node at (10.3,2.4) {\textbullet};
\foreach \Point in { (8.8,2),  (11,3.75), (9.92,2.83)}{
    \node[text=red] at \Point {\textbullet};};
\node at (9.5,4.5) {$\widehat{\cH \odot \cH}$};
\node at (13.5,4.34) {$\dR \otimes \cH$};
\foreach \Point in {(12.5,1.3), (13.3,3), (13.8,2.75), (14,1),  (13.1,0.2)}{
    \node at \Point {\textbullet};};
\draw (12.5,1.3) -- (13.3,3) -- (13.8,2.75) -- (14,1) --  (13.1,0.2) -- (12.5,1.3);
\foreach \Point in { (12.5,1.3),  (14,1), (13.25,1.15)}{
    \node[text=red] at \Point {\textbullet};};
\node at (13.5,1.85) {$\mathfrak{m}_{y,n}$}; 
\node at (13.31,1.5) {\textbullet};
\node (5) at (10.5,0) {};
\node (6) at (13.1,0.2) {};
\node (7) at (8.8,2) {}; 
\node (8) at (12.5,1.3) {};
\node (9) at (10.3,4) {};
\node (10) at (13.8,2.76) {};
\node (11) at (11,3.75) {}; 
\node (12) at (14,1) {};
\node (13) at (11.1,0.6) {};
\node (14) at (13.3,3) {};
\path [dotted,gray] (5) edge[bend right] node [left] {} (6);
\path [dotted,gray] (7) edge node [left] {} (8);
\path [dotted,gray] (9) edge[bend left] node [left] {} (10);
\path [dotted,gray] (11) edge node [left] {} (12);
\path [dotted,gray] (13) edge node [left] {} (14);
\node at (0,4.5) {\textit{(i)}};
\node at (8,4.5) {\textit{(ii)}};
\end{tikzpicture}
\end{center}
\caption{\label{fig:coreset}(i) The figure depicts how a subset or coreset of the sample is selected: the data is embedded in $\cH$ by using the kernel function of $\cH$. An approximation algorithm is then applied to the convex polytope in $\cH$ to find an approximation of $\mean$ that uses only a few extremes of the convex polytope. The pre-images of these extremes are the sample points that are selected as the coreset. (ii) For most statistical problems approximating $\mean$ itself is insufficient and one has to approximate closely related quantities. In the case of least-squares regression, one has to approximate the operator $\mathfrak{C}_{y,n} \in \widehat{\cH \odot \cH}$ (see Section \ref{sec:prelim} and  Section \ref{sec:related_approx} for the definitions), which is closely related to the empirical covariance operator, and a `weighted' mean embedding  $\mean_{y,n} \in \dR \otimes \cH$. It is often of interest to approximate $\mathfrak{C}_{y,n}$ and $\mean_{y,n}$ simultaneously, for instance, when building a coreset for least-squares regression. This can be achieved by considering the direct sum $(\widehat{\cH \odot \cH}) \oplus (\dR \otimes \cH)$  and a `direct sum' of the convex polytopes in the two spaces. The relation between the extremes of the convex polytopes is highlighted in the figure through the dotted lines; i.e. an algorithm will select a pair that is connected by a dotted line and by selecting such a pair of extremes the approximation of both the covariance and mean element will change.}
\end{figure}

Standard techniques like the conditional gradient method or the kernel herding algorithm are directly applicable to approximate $\mean_n$ by convex combinations of $\{k(X_i,\cdot) : i \leq n\}$. The kernel herding algorithm generates an approximation of the form $(1/l) \sum_{i=1}^l k(X_{\iota_i},\cdot)$, where $\iota: \{1,\ldots,l\} \to \{1,\ldots,n\}$ is some selection of data points and $l \leq n$. The data points $X_{\iota(1)},\ldots, X_{\iota(l)}$ themselves can be seen as a coreset for the data set. This approach is visualized in Figure \ref{fig:coreset}.(i). The conditional gradient method does not provide such an average but an arbitrary convex combination of the points $k(X_1,\cdot), \ldots, k(X_n,\cdot)$ and cannot be used directly to find a coreset. However, a coreset is often not necessary and many algorithms can work directly with an approximation of $\mean$ or related quantities; we demonstrate this in Section \ref{sec:intro_application} and Section \ref{sec:applications}. The advantage of the conditional gradient method when compared to the kernel herding algorithm is that it usually leads to a vastly superior compression of the data. 

\begin{figure}[t]
\begin{tikzpicture}
\node at (6.5,0){\includegraphics[scale=0.7,trim=2cm 8cm 1cm 1cm,clip]{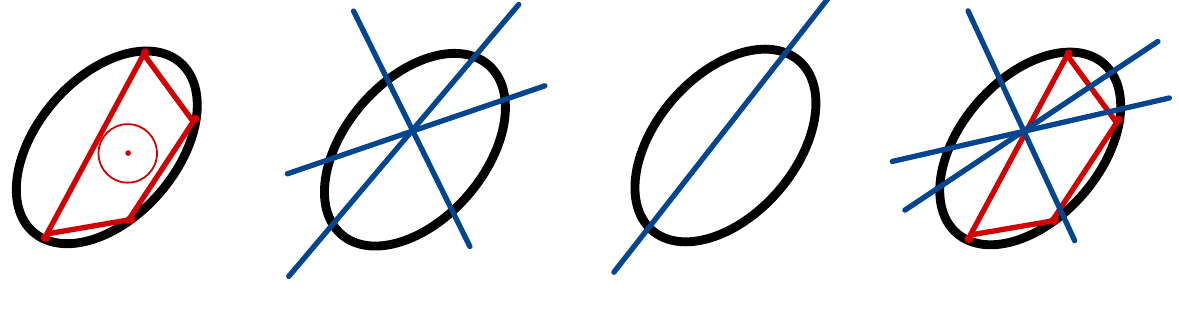}};
\node at (0,2.5) {\textit{(i)}};
\node at (3.5,2.5) {\textit{(ii)}};
\node at (7.5,2.5) {\textit{(iii)}};
\node at (11,2.5) {\textit{(iv)}};
\node at (0.7,1.5) {$C$}; 
\node[text=red] at (0.4,0.4) {$C_n$}; 
\node[text=red] (1) at (1.6,-1) {$\mean_n$}; 
\node (2) at (0.9,0.06) {};
\draw [thin,dashed,red,shorten <= 0.01cm, shorten >= 0.01cm] (1) to[out=90,in=0] (2);
\node at (3.4,1.7) {$h_1$}; 
\node at (5.1,1.7) {$h_2$}; 
\node at (5.75,1.15) {$h_3$}; 
\node at (7.8,-0.1) {\textbullet};
\node at (8.1,-0.1) {$\mean$};
\draw [decorate,decoration={calligraphic brace,amplitude=3pt}, line width=1.25pt] (7.22,-0.73) -- (7.7,-0.1);
\draw [decorate,decoration={calligraphic brace,amplitude=3pt}, line width=1.25pt] (7.82,0.05) -- (8.73,1.2);
\node at (7.3,-0.2) {$a$};
\node at (8,0.8) {$b$};
\node at (11.4,1.7) {$h_1$}; 
\node at (13.1,1.7) {$h_2$}; 
\node at (13.1,0.9) {$h_3$}; 
\end{tikzpicture}
\caption{\label{fig:overview_ball}The figure summarizes some of the key questions we address in this paper: \textit{(i)} This is the central question in this paper; `how large a ball exists within the empirical convex set $C_n$ around $\mean_n$?' \textit{(ii)} This question can be addressed by first controlling the width of $C$ itself in different directions $h_1,h_2,\ldots \in \cH$. The width in such a direction $h$ is the size of the projection of $C$ on the span of the function $h \in \cH$. Lower bounds on the width that hold simultaneously for all relevant $h$ translate to the existence of a ball in $C$; furthermore, the size of the ball is directly related to the lower bounds on the width. \textit{(iii)}  We need not just any  ball in $C$ but one that is centered at $\mean$. Now, generally, $\mean$ can lie close to the boundary and no large ball around it might exist. However, under certain natural conditions, it can be ruled out that $\mean$ will lie too closely to the boundary. In particular, under these conditions, we can control the ratio of $a/b$ for the segments shown in the figure. Controlling this ratio for all relevant $h\in \cH$ allows us to show that there exists a ball around $\mean$ in $C$. \textit{(iv)} To translate this back to $C_n$ and $\mean_n$ we are making use of empirical process theory to control the convergence of $C_n \to C$ and $\mean_n \to \mean$ which allows us to lower bound the size of a ball around $\mean_n$ in $C_n$ with high probability. Similarly to \textit{(ii)} we control the convergence per direction $h$ and then use high probability guarantees that hold simultaneously for all relevant $h$.}
\end{figure}

Crucially, the performance of these techniques depends on the size of the largest ball in $C_n$ that can be centered at $\mean_n$. The existence of such a ball is in itself already of major importance for the performance of the techniques and 
is known as \textit{Slater's condition}. In this paper, our main focus lies on the derivation of high probability lower bounds on the size of such a ball around $\mean_n$ within $C_n$. Figure \ref{fig:overview_ball} outlines our approach. In \textit{(i)} the setting is shown with $\mean_n \in C_n \subset C$ and the largest ball around $\mean_n$ in $C_n$ is drawn. One of the main difficulties is that both $\mean_n$ and $C_n$ are stochastic and change with the sample. 
We sidestep this difficulty by analyzing $C$ and $\mean$, and relating the empirical quantities $C_n$ and $\mean_n$ to $C$ and $\mean$.  Standard techniques from empirical process theory suffice to control the deviations between the empirical versions and their population limits (Figure \ref{fig:overview_ball}.\textit{(iv)}). There are at least two useful approaches to control the size of the largest ball around $\mean$ within $C$. The first approach is sketched in  \textit{(ii)} and \textit{(iii)}:   
first, we lower bound the width of $C$ uniformly over a range of `directions' $h$ in $\cH$ (Figure \ref{fig:overview_ball}.\textit{(ii)}). Then we determine how centered $\mean$ lies within $C$ in each direction $h$ (Figure \ref{fig:overview_ball}.\textit{(iii)}). Combining these two arguments, we can derive a lower bound on the size of the largest ball around $\mean$ in $C$. The second approach is quite different in that it does not try to control the width of the set $C$ explicitly. Instead, it uses the spectrum of the covariance operator to derive lower bounds on the largest ball around $\mean$ in $C$. In particular, a simple argument using the Paley-Zygmund inequality goes a long way and leads to lower bounds that are controlled by the smallest non-zero eigenvalue of the centered covariance operator. 


\subsection{Lower bounding the width of $C$}
When trying to control the width of $C$ the first thing one notices is that 
we seem to know relatively little about $C$. Even the RKHS $\cH$ itself is usually only accessed through $k$ and we do not  have easy access to a basis of $\cH$. So it might come as a surprise that there is a relatively simple way to access the width of $C$. The key to bounding the width is that  
\begin{equation*} 
	\diamh(C) = 2 \inf_{c\in \mathbb{R}} \|h - c \bm 1\|_\infty,
\end{equation*}
where $h\in \cH, \|h\|=1$, and $\bm 1$ is the constant function that is equal to $1$ everywhere. This relationship holds because 
\begin{align*}
\sup_{g \in C} \langle h,g \rangle 
-\infd_{g\in C} \langle h,g \rangle = \sup_{x  \in \X} \langle h, k(x,\cdot) \rangle 
-\infd_{x \in \X} \langle h, k(x,\cdot) \rangle = \sup_{x\in \X} h(x) - \infd_{x \in \X} h(x).
\end{align*}
The relevance of this equality is that it reduces the problem of measuring the width to the problem of measuring how well constant functions can be approximated by functions in the RKHS. The question of how well certain functions can be approximated by RKHS functions is well understood when the RKHS is infinite dimensional. In particular, the K-functional is a common tool to control the approximation quality, and results about the K-functional can be brought to bear to provide bounds on the width of $C$. However, in the finite dimensional setting, these results are of limited use. We develop for this case a simple approach to measure how well constant functions can be approximated: if the constant functions do not lie in the RKHS  $\cH$ then we can construct a new RKHS $\cH^+$ by introducing the kernel function $k^+ = k + \bm 1 \otimes \bm 1$, where $k$ is the kernel of $\cH$. The RKHS $\cH^+$ then contains the constant functions and $\cH \subset \cH^+$. In fact, we have an isometric embedding of $\cH$ into $\cH^+$. Now, in $\cH^+$ it is easy to measure how well constant functions can be approximated by functions in the unit sphere of $\cH$. In detail,
\[
\inf_{h \in \cH, \|h\|=1} \inf_{c\in \mathbb{R}} \|h - c\bm 1\|_{\cH^+} = 1.
\]
There are different ways to move from the norm of $\cH^+$ to $\|\cdot\|_\infty$ which we summarize in Lemma \ref{lem:tighter_lbnd} on p. \pageref{lem:tighter_lbnd}. One of these approaches applies if $k^+$ is a Mercer kernel and $\tilde \lambda_{d+1} > 0$ is the smallest eigenvalue in the series expansion. In this case 
\[
	2 \tilde \lambda_{d+1}^{1/2} \leq \diamh(C),
\]
for all $h\in \cH, \|h\|=1$.

If the constant functions lie already in $\cH$ then  a different approach is necessary. Let us mention that we only need to control the width of $C$ within the affine subspace that is spanned by it. Since $\langle k(x,\cdot), \bm 1 \rangle = 1 $ for all $x \in \X$ we can observe that the space spanned by $\bm 1$ is perpendicular to the affine subspace of $C$. To get a lower bound on $\diamh(C)$ for functions $h$ in the affine subspace we can consider the kernel $k^- = k - \|\bm 1\|^2 \bm 1 \otimes \bm 1$ and the corresponding RKHS $\cH^-$. The constant functions do not lie in $\cH^-$ and $\cH^-$ can be isometrically embedded in $\cH$. Most importantly the functions $h \in \cH^-$ of norm $\|h\|_{\cH^-} = 1$ are exactly the directions in which we need to bound the width of $C$. Now, with an approach analogous to the one involving $\cH$ and $\cH^+$ we get a lower bound of the form
\[
2 \tilde \lambda_d^{1/2} \leq \diamh(C),
\]
for all $h\in \cH^-, \|h\|_{\cH^-} = 1$, and with $\tilde \lambda_d$ being the smallest eigenvalue of the Mercer decomposition of the kernel $k$. Proposition \ref{prop:approx_lower_bnd} on p. \pageref{prop:approx_lower_bnd}
contains these results and results about related approaches to bound the width.

\subsection{Locating $\mean$ within $C$}
Controlling the width of $C$ alone is insufficient since $\mean$ might lie in the boundary of $C$. We need to complement the lower bounds on the width with results that tell us how centered $\mean$ lies. This can be achieved by controlling the ratio $a/b$ and $b/a$ of the segments along any function $h$ from $\mean$ to the boundary as depicted in Figure \ref{fig:overview_ball}.\textit{(iii)}. An observation that is useful in this context is the following:
if we have a probability measure on $\mathbb{R}$ which has a mean value of zero and there exists some measurable set $B$ with $\inf B \geq \epsilon > 0$ and $P(B) > 0$, then there will be probability mass on the negative axis since otherwise 
\[
	0 = \int_{\mathbb{R}} x \,dP  = \int_{[0,\infty)} x \, dP \geq \int_{B} x \,dP  \geq \epsilon P(B) > 0.
\]   
A similar argument applies to $C$ and $\mean$. For instance, if we have a uniform distribution on the boundary of the ellipse shown in 
Figure \ref{fig:overview_ball}.\textit{(iii)}, then $\mean$ cannot lie in the boundary: otherwise, there would exist a function $h \in \cH$, $\|h\|=1$, such that $\langle h, k(x,\cdot) \rangle \geq \langle h, \mean \rangle$ for all $x\in \X$ and an $\epsilon >0 $ such that $A = \{x : \langle h, k(x,\cdot) - \mean \rangle \geq \epsilon\}$ has non-zero measure. Hence,  
\[
0 =  \int \langle h, k(x,\cdot) - \mean \rangle \,dP(x)  \geq \epsilon P(A) > 0. 
\]
Combining this argument with a Lipschitz assumption on the kernel function and a lower bound on the density allows us to show that $\mean$ has to lie away from the boundary. How far away it has to lie is made precise in Proposition \ref{prop:Bauer_inspired} on p. \pageref{prop:Bauer_inspired}.

\subsection{Convergence of $C_n$ to $C$} 
To transfer the results about $C$ and $\mean$ to $C_n$ and $\mean_n$ we use VC and Rademacher arguments to bound the difference between $C_n$ and $C$, and $\mean_n$ and $\mean$. For controlling $\|\mean_n - \mean\|$ a standard argument suffices. However, it is less clear how to best control the difference between $C_n$ and $C$.

The approach that we are taking is the following. We consider indicators $\chi \{\langle k(X,\cdot) - \mean, h \rangle \leq -c  \}$ where $X$ is a random variable with the same distribution as $X_1,\ldots, X_n$ and $c$ is a constant that we vary. Observe that
whenever
\[
P \chi\{ \langle k(X,\cdot) - \mean, h \rangle \leq -c  \} > 0 
\]
then there is a point  $x\in \X$, such that $\langle k(x,\cdot) - \mean, h \rangle \leq -c$, or in other words, there is a point which lies $c$ away from $\mean$ along $h$. A VC argument allows us to control all these indicators simultaneously over all $h$ in the unit ball of $\cH$ and to show that for any such $h$,
\[
| P_n \chi\{ \langle k(X,\cdot) - \mean, h \rangle \leq -c  \} - P  \chi\{ \langle k(X,\cdot) - \mean, h \rangle \leq -c  \} |,  
\]
is small for sufficiently large $n$. This allows us to show that $C_n$ converges along $h$ towards $C$ with a certain rate and since we have guarantees that hold uniformly over the unit ball in $\cH$ we can derive a rate of convergence of $C_n$ to $C$.
A similar approach works for Rademacher complexities with the main difference being that we have to approximate the indicator functions with continuous functions.

Both approaches rely on a lower bound on the probability that $h(X)$ attains values below a threshold. We use two different approaches to get such lower bounds: the first approach uses an assumption on the the density (lower bounded away from zero) and a Lipschitz assumption on the functions in $\cH$. The second approach uses assumptions on the covariance operator. The second approach is more general in the sense that assumptions on the density imply a certain behavior of the covariance operator but our density assumption is certainly not the only way to control the covariance operator. On the other hand, the assumption on the covariance operator is quite abstract while the density assumption is in a sense very concrete.

Combining these different arguments allows us to control the size of the ball around $\mean_n$. In particular, our first theorem  combines the Rademacher approach with a density assumption (Theorem \ref{thm:ball_in_empirical}
on p. \pageref{thm:ball_in_empirical}). This approach brings together some of the  results on the width of $C$, the location of $\mean$ and the convergence results to show that for large enough $n$ there is with high probability a ball of a certain radius around $\mean_n$ in $C_n$. In detail, there exists a ball of size $\delta$ with the dominant term of $\delta$ being  
\[
\frac{2 \tilde c \tilde \lambda_d^{(l+1)/2} \beta_l}{(l+1) L^l},  
\]
where $\X = [0,1]^l$, $L$ is the Lipschitz constant, $\tilde \lambda_d$ the smallest eigenvalue of the Mercer decomposition of $k$, $\tilde c > 0$ is a lower bound on the density of the law of $X_1$ on $\X$ and $\beta_l$ is the Lebesgue measure of the $l$-dimensional unit ball in $\R^l$. 

With probability $q\in (0,1)$ there then exists a ball of radius $\delta/4$ around $\mean_n$ in $C_n$ whenever $n$ is greater than 
\[
	n \geq \left(\frac{\sqrt{2\log(1/q)} + 96 \|k\|_\infty^{1/2}/\delta}{\tilde c\beta_l(\delta/8L)^l}\right)^{2} \vee 
		\left(\frac{4\|k\|_\infty^{1/2} + 3 \sqrt{2\log(1/q)}}{\delta/4}\right)^{2}.
\]
We can observe that $\delta$ is strongly dependent on the dimension $l$ of the space $\X$. This stems from our approach: we identify a point $x_0 \in \X$ which corresponds to an element $k(x_0,\cdot) \in \cH$ that lies far away from  $\mean$. We then identify a second point $x_1$ such that $k(x_1,\cdot)$ lies in the opposite direction of $k(x_0,\cdot)$ with respect to $\mean$. If the space is low dimensional then $k(x_1,\cdot)$ needs to lie far from $\mean$ to counter the mass that is accumulated around $k(x_0,\cdot)$ and, thus, $\mean$ lies reasonably centered between $k(x_0,\cdot)$ and $k(x_1,\cdot)$. However, when the space is high dimensional then no single point $k(x_1,\cdot)$ has to lie far away from $\mean$ because the mass accumulated around $k(x_0,\cdot)$ can be countered by `many points' that lie close to $\mean$ and $\mean$ can lie significantly closer to the boundary. 

To contrast this worst-case bound with the best-case scenario, observe that there is a point in $C$ such that a ball of radius 
$2\tilde \lambda_d^{1/2}$ lies around it within $C$. The factor $\tilde \lambda_d^{1/2}$ itself is in all likelihood tight and reflects the fact that the convex set $C$ is very small in certain directions.

\subsection{Assumptions on the spectrum of the covariance operator}
An alternative approach to controlling the width of $C$ in different directions $h \in \cH$ and then determining how centered $\mean$ lies in each direction is to use assumptions on the covariance operator. In fact, the argument that involves the covariance operator is considerably simpler: when 
\[
E(h^2(X)) - E^2(h(X))  \geq \bar \lambda > 0 
\]
for some positive $\bar \lambda$ then $|h(X)|$ must attain large enough values with some non negligible probability. Furthermore, when $h$ is a bounded function then both $(h(X) - E(h(X)))^+$ and $(h(X)-E(h(X)))^-$ must be large with a non-negligible probability. A simple argument involving the Paley-Zygmund inequality suffices to make these statements precise. To get a lower bound on the largest ball around $\mean$ in $C$ we have to control all $h$ in the unit ball with this approach. In terms of the spectrum of the covariance operator this means that we have to use the smallest non-zero eigenvalue of the covariance operator as $\bar \lambda$. 

Another advantage of the covariance operator approach is that it adapts nicely to settings where the distribution has support $S$ that is not equal to $\X$. Effectively, algorithms like the CGM or kernel herding work implicitly with a subspace of $\cH$ that is isometric to an RKHS $\cH_S$ with kernel $k\!\!\upharpoonright\!\!S \times S$ (the restriction of $k$ to $S \times S$) and for $\cH_S$ we have a covariance operator that has the same non-zero eigenvalues as the covariance operator for $\cH$. Hence, we can use the same $\bar \lambda$ for $\cH_S$ as for $\cH$ and we can control the largest ball around $\mean_S$ in  $\cH_S$  through this argument.  Theorem \ref{thm:covariance} on p. \pageref{thm:covariance} is based on that argument.

\subsection{Adapting the approach to concrete statistical problems} \label{sec:intro_application}
Most methods for inference do not use $\mean$ itself but related quantities. For example, in the least squares problem, where we try to fit observations $Y_i$ through $f(X_i)$ with some function $f$ in an RKHS,
we have 
\begin{align*}
	\frac{1}{n} \sum_{i=1}^n (f(X_i) - Y_i)^2 &= \frac{1}{n} \sum_{i=1}^n \langle f \otimes f, k(X_i,\cdot) \otimes k(X_i,\cdot) \rangle_{\otimes} -  \frac{2}{n} \sum_{i=1}^n \langle f, Y_i k(X_i,\cdot) \rangle + \frac{1}{n} \sum_{i=1}^n Y_i^2 \\
						  &= \langle f \otimes f, \mathfrak{C}_n \rangle_{\cH \odot \cH} + 2 \langle f, \mean_{y,n} \rangle + \frac{1}{n} \sum_{i=1}^n Y_i^2,
\end{align*}
where we denote by $\cH \odot \cH$ the tensor space $\cH \otimes \cH$ when the functions are restricted to the diagonal $\Delta = \{(x,x) : x \in \X\}$,
$\mathfrak{C}_n = (1/n) \sum_{i=1}^n k(X_i,\cdot) \otimes k(X_i,\cdot)\!\!\upharpoonright \!\! \Delta$
and 
$\mean_{y,n} = (1/n) \sum_{i=1}^n Y_i k(X_i,\cdot)$.

There are significant similarities between the problem of compressing $\mean_n$ and that of compressing $\mathfrak{C}_n$ or $\mean_{y,n}$. We discuss  these in Section \ref{sec:related_approx}. Let us highlight a few results. 

The empirical covariance operator $\mathfrak{C}_n$ can be dealt with quite easily by associating it to the element $(1/n) \sum_{i=1}^n \kappa(X_i,\cdot)$, where $\kappa(x,y) = k^2(x,y)$. This way one can apply all the results we developed for $\mean_n$ to $\mathfrak{C}_n$, one only has to substitute $\kappa$ for $k$. 

Dealing with the element $\mean_{y,n}$ is more challenging and there is a certain degree of freedom of how to phrase the compression problem. A natural and simple choice is to consider $Y_i k(X_i,\cdot)$ as the random elements which attain values in $\cH$. A first indicator that things are more complicated is that when $Y_i$ is unbounded then we run into serious problems when trying to define a bounded convex set that contains $(1/n) \sum_{i=1}^n Y_i k(X_i,\cdot)$. Things simplify if we assume boundedness of the $Y_i$ and make some natural assumptions about how the data is generated. In particular, if we assume 
that $X_1,\ldots, X_n$ are i.i.d. and $Y_i = f_0(X_i) + \epsilon_i$, where $f_0$ is some bounded measurable function, the $\epsilon_i$'s are i.i.d., centered, independent of $X_1,\ldots, X_n$ and bounded by $|\epsilon_i | \leq b$ a.s., then $\mean_{y,n}$ converges to 
\[
	\mean_y = \int f_0(X_1) k(X_1,\cdot) \, dP \in \cH
\]
and $\mean_y$ is contained in the convex set 
\[
C_y = \cch \{ (f_0(x) \pm b) k(x,\cdot): x\in \X \}.
\]
In this setting there is also a simple relationship between the width of $C_y$ and $C$: consider some $h\in \cH, \|h\|=1$, then 
\[
\diamh (C_y) \geq b \, \diamh(C) 
\]
and results on $\diamh(C)$ are  applicable. The downside of this approach is that the convergence of the empirical convex set towards $C_y$ can be very slow since the $|\epsilon_i|$ might only have a low  probability of attaining values close to $b$. This problem can be circumvented by using an alternative approach. Instead of considering the convergence of the empirical convex set to a suitable population limit we can directly work with the empirical convex set and analyze how deep the empirical mean element lies within that set. We develop this approach in Section \ref{sec:bnd_Y}. The discussion in that section cumulates in Proposition \ref{prop:bnd_Y}, which provides lower bounds on the radius of a ball that is centered on the empirical mean element $\mean_{y,n}$ and which is contained within the empirical convex set.

We extend this approach to the case of unbounded $Y_i$ by using random variables $\wideparen{Y}_i$ that are capped at a certain, $n$ dependent, threshold. There are a variety of technical challenges that have to be overcome to make this approach work. In particular, one has to verify that the empirical mean element corresponding to the capped random variables is close the empirical mean element of the original variables when the threshold of the cap is selected appropriately. Also, one has now to work with a family of covariance operators corresponding to the different thresholds and the corresponding capped random variables. We show that the lowest eigenvalues of these covariance operators are close the the lowest eigenvalue of the original covariance operator if the threshold for the cap is set in the right way. Proposition \ref{prop:unbounded} contains the details of that result.
 
\paragraph{Simultaneous approximation.} Up to now we considered the approximation problems in isolation but it also makes sense to try to approximate $\mathfrak{C}_n$ simultaneously to $\mean_{y,n}$ by selecting elements $Y_i k(X_i,\cdot)$ that reduce the approximation error for both elements. Quite a different set of techniques are needed to deal with this simultaneous approximation problem. In Section  \ref{sec:simultaneous} we develop an approach based on direct sums of Hilbert spaces to deal with this problem. The analysis is much more intricate and interesting than for the individual approximation problems. In Figure \ref{fig:coreset}.\textit{(ii)} the high level approach is visualized. The space $\widehat{\cH \odot \cH}$ is the space of functions $\cH \odot \cH$ when trivially extended from $\X$ to $\R \times \X$ and the space $\dR \otimes \cH$ is an RKHS with kernel function $((y_1,x_1), (y_2,x_2)) \mapsto \langle y_1,y_2 \rangle_\R k(x_1,x_2)$ which is also defined on $\R \times \X$. The convex sets we introduced above have natural analogues in $\widehat{\cH \odot \cH}$ and in $\dR \times \cH$. By taking the direct sum of these spaces we also get a sort of direct sum of these convex sets and we are trying again to control quantities like the width of that set. The particular problem of approximating $\mathfrak{C}_n$ simultaneously to $\mean_{y,n}$ is benefiting from the fact that 
$\widehat{\cH \odot \cH} \cap (\dR \otimes \cH) = \{0\}$. This allows us to define an RKHS that is isometrically isomorphic to the direct sum. The analysis of the simultaneous approximation problem then breaks down to studying the empirical mean element and the empirical convex set within that RKHS.

The situation that the two Hilbert spaces that we combine through the direct sum are not overlapping is rather special. For instance, if we try to 
approximate $\mean$ and $\mathfrak{C}$ simultaneously then the Hilbert spaces overlap, which adds another layer of difficulties. We are developing for this case a quotient space approach that factors out the intersection between the two Hilbert spaces. An interesting finding in this context is that the direct sum cannot be related directly to an RKHS but, like in the case of approximating $\mean$ and $\mathfrak{C}$ simultaneously, the affine subspace spanned by the convex set can be isometrically isomorphic to an RKHS which then allows us to use results we developed for RKHSs (see Lemma \ref{lem:iso_iso_quotient}, p. \pageref{lem:iso_iso_quotient}). 

When we apply the conditional gradient method to the above RKHSs then we will not end up with a coreset of data points but with elements in $\widehat{\cH \odot \cH}, \dR \otimes \cH$ or $\widehat{\cH \odot \cH} \oplus (\dR \otimes \cH)$. However, that is not a major obstacle and it is for various problems quite easy to adapt the algorithms to deal with these approximations; we highlight that approach for kernel ridge regression in Section \ref{sec:applications}.

\subsection{Implications for algorithms}
The various results that we derived to control the size of the largest ball around $\mean_n$ in $C_n$ can be translated directly to results for algorithms like the CGM. In particular, we can give high probability guarantees on the approximation error when the CGM is being run for $t$ iterations and we can give guarantees on the expected size of a coreset when the kernel herding algorithm is used with a stopping criterion that is an error of below $n^{-1/2}$. The corresponding results are contained in Section \ref{sec:algorithms}.

One problem with these algorithms is that they require an upfront computation of order $O(n^2)$ which is too high for large-scale data. Standard approaches to scale the CGM to large-scale problems do not seem to yield direct computational advantages but there are some interesting directions to explore. In particular, a divide-and-conquer approach has some intriguing features. The performance of the approach depends to a large extent on the bias of the algorithms (CGM or kernel herding). Section \ref{sec:algorithms} contains a detailed discussion of these ideas.

\subsection{Slow rate of convergence in infinite dimensions} 
It was observed in \cite{BACH12} that the proof technique used to derive fast rates of convergence for the kernel herding algorithm and the conditional gradient method cannot be applied to compact sets in infinite dimensional RKHSs since compact sets in such spaces do not contain norm balls. It was later found that there are general limits to how well the representer of the empirical measure can be approximated. In particular,  \cite[Thm.3.1]{PHIL20} states that there exists a set of $n$ points $x_1,\ldots, x_n$ in $\R^d$, for large enough $d$ and $n \geq d$, such that for any set of points $y_1, \ldots, y_l$ with $l < \sqrt{dn}/2$ it holds that 
\[
\| \frac{1}{l} \sum_{i=1}^l k(y_i,\cdot) - \frac{1}{n} \sum_{i=1}^n k(x_i,\cdot) \| > \|k\|_\infty^{-1} n^{-1/2}, 
\]
under mild assumptions on the kernel. This implies that for this particular set of elements $x_1,\ldots, x_n$ there cannot be any significant compression of the element $\mean = (1/n) \sum_{i=1}^n k(x_i,\cdot)$. The argument in \cite{PHIL20} is not stochastic and is not concerned with draws of samples $X_1,\ldots, X_n$ from a distribution, but it seems likely that the argument can be extended to provide restrictions on how well $\mean_n$ can be approximated by a core-set of points in high probability (over the values that $\mean_n$ attains). Nevertheless, there is hope to circumvent the barries erected by this theorem. First of all, the construction uses approximations of the form $(1/l) \sum_{i=1}^l k(y_i,\cdot)$ and not arbitrary convex combinations of the elements $k(y_i,\cdot), i \leq l$. A greedy algorithm to find such a core-set of points $y_1,\ldots, y_l$ requires generally significantly more points than algorithms that approximate $\mean_n$ with convex combinations of elements $k(y_i,\cdot)$. An interesting question is therefore if there exists an inherent limitation for approximating with core-sets that can be avoided by more general convex combinations, or if this difference in performance is simply due to the algorithms (kernel herding vs. CGM). There is a simple argument that hints at the former: consider the set $\X = [0,1]^{d_\X}$ for some positive $d_\X \in \mathbb{N}$ and a continuous kernel function $k:\X \times \X \to \R$ whose corresponding RKHS is infinite dimensional and separable. The set $C$ is then compact and for any orthonormal basis $\{e_i\}_{i \geq 1}$ of $\cH$ it holds that $\sup_{f \in C} \langle e_i,f \rangle - \inf_{g \in C} \langle g, e_i \rangle$ converges to zero as $i \to \infty$. The rate with which this series converges is in all likelihood of crucial importance for determining how well $\mean_n$ can be approximated. Therefore, let us introduce  
\[
d(P_{U_i} C, C) = \sup_{f\in C} \| P_{U_i} f - f \|, 
\]
where $U_i$ is the subspace spanned by $e_1,\ldots, e_i$ and $P_{U_i}$ is the orthogonal projection onto this subspace. Now, Caratheodory's theorem tell us that for $i \geq 1$ there exists a convex combination $\wh \mean_i$ of $i+1$ elements $k(x_1,\cdot),\ldots, k(x_{i+1},\cdot)$ such that
\[
\| \mean_n - \wh \mean_i \| \leq d(P_{U_i} C,C).
\]
In other words, when $d(P_{U_i} C, C)$ is of order $i^{-\alpha}$, for some $\alpha>0$, and if we are aiming for an approximation error of $n^{-1/2}$ then we need approximately $n^{1/2\alpha}$ many points. Furthermore, when $d(P_{U_i}C,C)$ falls exponentially fast, say with order $\exp(-i)$, then $\log(n)$ many points suffice.

Another important aspect of the compression problem that is not captured by the theorem is the dependence of the compression problem on the distribution of the data. For example, in \cite{BACH12} a lower bound on the density of the distribution was crucial for deriving fast rates of convergence in certain settings. This is because such properties of the density translate directly to geometric properties of the approximation problem (the existence of a ball around $\mean_n$ in $C_n$). Similarly, in this paper, we use the size of the largest non-zero eigenvalue of the covariance operator to control the rate of convergence. One might wonder if such properties also influence the compression performance in infinite dimensional RKHSs. To this end, we provide an example that shows that an assumption on the density alone will in all likelihood be insufficient. The example we construct is \textit{not universal} in the sense that we show that the kernel herding algorithm does not achieve its fast rate of $1/t$ of approximation in this example. As in the example from \cite{PHIL20}, we construct a particular target $\mean$ and do not consider the empirical version $\mean_n$.
However, our construction incorporates properties of the underlying probability measure and might serve as a starting point for more refined analyses that use properties of the distribution of the data. 
The counter-example is constructed for the kernel herding algorithm and not the conditional gradient methods since the behavior of the kernel herding algorithm is easier to control but we strongly suspect that similar problems will also occur with the conditional gradient method.

\begin{figure}[t]  
\begin{center} 
\includegraphics[scale=1.5,trim=0cm 0.3cm 0cm 1cm,clip]{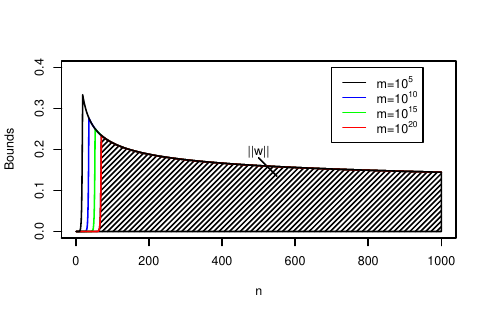}  
\end{center}
\caption{
The figure shows lower bounds on $\abs{\inner{e_n}{w}}$ in dependence of the first element $a_m$ that has not yet been chosen. The shaded area is a lower bound on $\norm{w}$ when $m= 10^{20}$. The norm of $w$ goes to infinity in $m$ which implies that the kernel herding algorithm converges with a rate that is slower than $1/t$.
} \label{fig:CounterEx} 
\end{figure}

In detail, the example we construct shows that there exists a continuous kernel on $[0,1]$, a Borel probability measure on $[0,1]$ which assigns positive measure to open subsets of $[0,1]$, and an initialization for which the kernel herding algorithm converges with a slower rate than $1/t$ when approximating the representer $\mean$ of the probability measure (Theorem \ref{prop:counter-main} on p. \pageref{prop:counter-main}).
The construction of this example is somewhat involved since we need to gain control over the behavior of the kernel herding algorithm. The basic intuition, however, is rather simple. We start with some infinite dimensional Hilbert space $\cH$ and an orthonormal sequence $\{e_n\}_{n\geq 1}$ in it. The construction is best explained when assuming that   $\mean = 0 $  (we cannot set it exactly to $0$ and need later a minor modification). We then construct a compact convex set that contains elements $\{a_n\}_{n\geq 1}$, $\{b_n\}_{n\geq 1}$, where each $a_n$ is a positive multiple of $e_n$ and each $b_n$ is a negative multiple of $e_n$. Furthermore, $b_n$ is of significantly smaller magnitude than $a_n$. Consider now an initialization of the algorithm with an element $c \in \cH$ which is of small magnitude compared to the $a_n$ and has a positive inner product with each $a_n$. Because of this positive inner product the different $a_n$'s will be chosen one by one by the algorithm and because the $b_n$'s are of small magnitude compared to the $a_n$'s hardly any weight will be reduced in the directions $e_n$.  This way the element $w_t$, which measures the approximation error at iteration $t$,  builds up mass in the different directions $e_n$ and its norm grows in $t$. The construction is more involved than this sketch, but, a suitably adapted version of this approach allows us to show that so much mass will be added to $w_t$ that its norm diverges to infinity. This effect is visualized in Figure \ref{fig:CounterEx}. The figure shows four different $w_t$ as inner products with $e_n$ ($n$ being shown on the $x$-axis). The shaded area continues past the right end of the plot (the limit of the shaded area is given in the legend: $10^5$ for the black line etc.). One can observe that the right limit of the shaded area grows significantly from the black line to the red line, i.e. from $10^5$ to $10^{20}$. While the right limit grows exponentially the left limit hardly changes. This is due to the small scale of the $b_n$'s. As a result the overall mass in the shaded area, which corresponds to  $\norm{w_t}$, diverges to infinity. This implies then directly that the algorithm cannot converge with the fast rate of $1/t$ that is achieved under similar assumptions in the finite dimensional setting. All that then remains to complete the example is to show that there exists a continuous kernel that gives rise to this setup. We construct first a continuous function $\phi:[0,1] \rightarrow \cH$ that goes over all $a_n$ and $b_n$ and we then use this Hilbert space and the continuous function to construct an RKHS with a  continuous kernel function.

\vspace{-0.2cm}
\subsection{Literature}
The concept of a coreset is known for at least two decades and there is a wide range of literature on its application to machine learning, Bayesian statistics and geometric approximation problems (e.g. \cite{BAD02,AGAR05,HUG16}). It is natural to apply the conditional gradient method \cite{FW56} in that context (e.g. \cite{HUG16}).  

The kernel herding algorithm and the conditional gradient method are greedy approximation algorithm as they choose at each stage $t$ an element that minimizes the remaining error. Greedy algorithms will generally not return the best possible approximation that can be achieved in $t$ steps but they are easy to compute. This is a big advantage since in the large data context computational efficiency is paramount. Greedy algorithms for approximating functions have been popular at least since the late nineties. An overview of the most popular approaches is provided in \cite{TEMLY11}. The approach is here to make use of a basis of a function space, say of a Sobolev or Besov space, to approximate elements inside these function spaces in a greedy fashion. An important generalization is to use so-called dictionaries which are families of functions that are not necessarily linearly independent, i.e. there are redundancies in the representation of elements in the function space. 
These approaches are very natural if one has access to a basis or related families of functions. In contrast to this approach, we are interested in approximating subsets of the function space that are naturally described by point-evaluators, a kernel function, or, more generally, a set of extremes of a convex set. Instead of working then with linear subspaces of the function space we are working with convex subsets of the function spaces and we apply greedy algorithms to approximate elements inside such convex sets.   

The methods we are studying compress the sample into a potentially small subset of the original sample while retaining optimal, or nearly optimal, rates of convergence. While our approach is inspired by various optimization methods there are links to sample compression schemes as introduced in \cite{LITTLE86,FLOYD95}. Sample compression schemes are concerned with the inference of `concepts', which are indicators $\chi A$, $A$ a Borel subset of some topological space $X$. In this setting, one has given a set of concepts that contains the concept $\chi A$, or are sufficient to approximate $\chi A$ in a suitable way, and one likes to infer $\chi A$ from observations  $(x_1,y_1), \ldots, (x_n,y_n)$, $x_i \in \X$, $y_i \in \{0,1\}$. A sample compression scheme 
compresses these observations into a subset that is sufficient to reconstruct the original labels $y_i$ for all $x_i$, $i\leq n$, if the observations are consistent with some concept $\chi A'$, where $\chi A'$ is contained in the predefined set of concepts. 
Compressibility is directly linked to VC-theory: in \cite{FLOYD95} it is shown that, under some technical conditions, sets with VC-dimension $d$ are $d$-compressible, meaning that one can always reduce the sample to a sub-sample of size $d$ while still being able to reconstruct the sample in the above sense. Furthermore, it is not possible to compress the sample to less than $d$-points without losing the reconstructability property. 
Our aim is quite different in that we do not care about being able to reconstruct the original labels. In that sense our approach is more closely related to sufficient statistics which compress the data to facilitate inference. That being said, there are interesting parallels. For instance, Caratheodory's theorem tells us that, in our setting, there is a compression of the data down to $d+1$-points if we work with a $d$-dimensional RKHS; such an RKHS has VC-dimension $d$. 

Naturally, there are a variety of alternative approaches to deal with large scale data in the RKHS context. In particular, when the RKHS is finite dimensional with dimension $d$ it is straight forward to represent $\mean_n$ using a basis expansion: take points $X_1,\ldots, X_d$ such that $k(X_1,\cdot), \ldots, k(X_d,\cdot)$ are linearly independent and apply the Gram-Schmidt orthogonalization procedure to gain a basis $e_1,\ldots, e_d$ of $\cH$ then 
$\mean$ can be written as a linear combination of $e_1, \ldots, e_d$ which implies that it can be written as a linear combination of $k(X_1,\cdot), \ldots, k(X_d,\cdot)$. 
In more detail, the coefficients $\alpha_1,\ldots, \alpha_d$, such that $\mean_n = \sum_{i=1}^d \alpha_i k(X_i,\cdot)$, can be computed recursively by first computing the basis representation through 
{\allowdisplaybreaks
\begin{gather*}
\langle e_1, \mean \rangle = \langle k(X_1,\cdot), \mean_n \rangle /k(X_1,X_1) \\
\vdots \\
c_i = \langle k(X_i,\cdot), \mean_n \rangle - \sum_{j=1}^{i-1} \langle k(X_i,\cdot), e_{i-j} \rangle \langle e_{i-j}, \mean_n \rangle  \\ 
\langle e_i, \mean_n \rangle = c_i / \| k(X_i,\cdot) - \sum_{j=1}^{i-1} \langle k(X_{i-j}, \cdot), e_j \rangle e_j \|, 
\end{gather*}}
and then to link this back to the coefficients of $k(X_1,\cdot), \ldots, k(X_d,\cdot)$. 
To perform this Gram-Schmidt procedure it is necessary to compute $k(X_i,X_j)$ for all $i,j \leq d$. In other words, we need in the order of $d^2$ many kernel evaluations.
This is a negligible factor when $d \ll n$. Similarly, one can solve concrete statistical problems, like a linear regression problem, by using a $d\times d$ covariance matrix instead of the kernel matrix; one way to gain such a covariance matrix is to use again the  Gram-Schmidt procedure. Our aim in this paper is not to compete with these methods in terms of runtime performance in the context of $d \ll n$, but to gain insights into the behavior of greedy algorithms in the absence of complications that arise in infinite dimensional settings.

The question of how to construct coresets for $\mean$ has garnered significant attention in recent years. In \cite{DWI21} a good overview is given that covers recent approaches most of which focus on the infinite dimensional setting. In the context of finite dimensional RKHSs it is worth mentioning the paper \cite{HARV14} which studies linear kernel functions and shows that under certain conditions they can achieve a compression down to $n^{1/2}$.

\subsection{Preliminaries} \label{sec:prelim}
Throughout this paper we will be working with a set $\X$ in which covariates or features attain values and a kernel function $k:\X \times \X \to \mathbb{R}$ (see \cite[Def.2.12]{PAUL16}). Recall that such a kernel function gives rise to an RKHS $\cH$ \cite[Def2.14]{PAUL16}. While $\X$ does not need a particular structure to define a kernel on, we are interested in integrals involving $k$ and we will assume for most of our results that $\X$ is a measureable space and $k$ is a measurable in the sense that $k(x,\cdot): \X \to \R$ is measurable for all $x\in \X$. This is equivalent to saying that any $h \in \cH$ is a measurable function from $\X$ to $\R$ (see \cite[Lem.4.24]{STEIN08}). We also use the notation $\phi(x) = k(x,\cdot)$ when this is convenient.

We are making use of empirical process theory in various places and to ease the application we will assume that our underlying probability space 
corresponds to a product space and the involved random variables are coordinate projections following essentially \cite[Sec.3.1]{DUDLEY14}. In detail, we will usually have a probability space $(\Omega,\mathcal{A},\mu)$ with independent and identically distributed random variables $X,X_1,X_2,\ldots $ attaining values in $(\X,\mathcal{A}_\X)$, where $\X$ is a topological space and $\mathcal{A}_\X$ is a $\sigma$-algebra on $\X$, which are defined on this probability space. Natural choices for $\mathcal{A}_\X$ are the Borel-algebra or the domain of a Radon measure. We usually do not need assumptions on $\mathcal{A}_\X$ but at various points we need to guarantee that the support of the law $P$ of $X$ is well defined. In these cases we typically assume that $P$ is a $\tau$-additive topological measure and $\mathcal{A}_\X$ is its domain. Alternatively, we could assume that $P$ is a Radon measure which guarantees that $P$ is a $\tau$-additive topological measure (see \cite[411]{FREM}). This is for a wide range of spaces not a strong assumption. In particular, if $\X$ is a Polish space then it is a Radon space \cite[434K(b)]{FREM} and the completion of a Borel measure on $\X$ is a Radon measure \cite[434F(a.iii), 211L]{FREM}.
$\Omega$ will usually be the product $\X^\mathbb{N}$ and for $\omega \in \Omega$, $X_i(\omega) = \omega_i \in \X$ for all $i \geq 1$, and $X(\omega) = \omega_0$. There are multiple natural choices for the $\sigma$-algebra $\mathcal{A}$. In \cite[Sec.3.1]{DUDLEY14} $\mathcal{A}$ is the product $\sigma$-algebra which is the one that is generated by the cylinder sets, that is the smallest $\sigma$-algebra such that all cylinders which are defined by finite many coordinates are measurable. We use in this paper the completion of this $\sigma$-algebra as $\mathcal{A}$.  
If we have pairs $(X_i,Y_i)$, where $X_i$ attains values in $\X$ and $Y_i$ in $\R$  then we use the same setting but let $(X_i,Y_i)(\omega) \in \X \times \R$. We reserve $P$ for the law of the random variables, e.g. the law of $X$, and use $\Pr$ if we want to state probabilities of events in $\mathcal{A}$. In particular, $h(X) \in \mathcal{L}^1(\Omega,\mu)$ if, and only if, $h \in \mathcal{L}^1(\mathcal{X}, P)$ and, in this case, $\int h(X) \, d\mu = \int h \,d P$.   
The empirical measure $P_n$ is $(1/n) \sum_{i=1}^n \delta_{X_i}$, where $\delta_x(A) = 1$ whenever $A \in \mathcal{B}_\X$ and $x \in A$; otherwise $\delta_x(A) = 0$. It is often useful to associate a measure space to $P_n$ to be able to talk about random variables with law $P_n$. For this purpose we will use the measure space  $(\X,\mathcal{B}_\X)$ and equip it with the random measure $P_n$. A random variable will be the measurable function $\tilde X:\X \to \X, \tilde X(x) = x$. If we want to talk about a sequence of independent random variables with law $P_n$ we  use the product space with the product measure assigned to it. 

\paragraph{Separable processes and Rademacher complexities.} There are generally various measurability concerns when working with empirical processes. In this paper these can essentially be avoided by using  separability of $\cH$ to guarantee that suprema are  measurable. In the context of Rademacher complexities we use separability of $\cH$ typically in the following way. Assume we have $x_1,\ldots, x_n \in \X$, let $\mathcal{F}$ be the unit ball of $\cH$ and let $\epsilon_1,\ldots, \epsilon_n$ be i.i.d. Rademacher variables. The map $h \mapsto \sum_{i=1}^n \epsilon_i h(x_i)$ is almost surely continuous on $\cH$. In particular, $\sup_{h \in \mathcal{F}} \sum_{i=1}^n \epsilon_i h(x_i)$ is almost surely equal to a supremum over a countable subset of $\mathcal{F}$ and, due to completeness of the probability space, it follows that $\sup_{h \in \mathcal{F}} \sum_{i=1}^n \epsilon_i h(X_i)$ is measurable. In particular, the Rademacher process is a separable stochastic process \cite[Def.4.1.2]{GINE15} and we have 
\begin{equation} \label{eq:Rade_sep}
E\bigl(\sup_{h \in \mathcal{F}} \sum_{i=1}^n \epsilon_i h(X_i) \bigr) = \sup_{F \subset \mathcal{F}, F \text{ finite}}  E\bigl(\sup_{h \in F} \sum_{i=1}^n \epsilon_i h(X_i)\bigr). 
\end{equation}
When we have i.i.d. variables $X_1,\ldots,X_n$ which are independent of $\epsilon_1,\ldots, \epsilon_n$ 
we will represent this probability space as a product space. It is common to condition wrt. $X_1,\ldots, X_n$ and to study $E_\epsilon  \bigl(\sup_{h \in \mathcal{F}} \sum_{i=1}^n \epsilon_i h(X_i) \bigr)$,   where $E_\epsilon$ denotes Kolmogorov's conditional expectation with respect to $X_1,\ldots, X_n$. Fubini's theorem guarantees us in this setting that we can express $E_\epsilon$ as an integral wrt. the marginal measure corresponding to $\epsilon_1,\ldots, \epsilon_n$.
\paragraph{Bochner integrals and $\mathcal{L}^p(\mu,\cH)$.} We need in various places vector valued integrals. In particular, we make use of Bochner integrals and Hilbert-space valued $\mathcal{L}^p$ spaces. Let $(\Omega,\mathcal{A},\mu)$ be a probability space and $X$ a random variable that attains values in $\X$ then by $\int f(X) \,d\mu$, $f:\Omega \to \cH$ Bochner integrable, we mean the Bochner integral of the function $f(X) : \Omega \to \cH$ with respect to the measure $\mu$. The Hilbert space valued $\mathcal{L}^p$ spaces, where $1\leq p < \infty$, corresponding to this measure space are given by 
\[\mathcal{L}^p(\mu; \cH) = \{f: \Omega \to \R : f \text{ Bochner measurable and } \int \|f(\omega)\|^p \, d\mu(\omega) < \infty \}.\] 
The seminorm on $\mathcal{L}^p(\mu;\cH)$ is $\bn f \bn_p^p = \int \|f\|^p \, d\mu$. We use bold fonts for the  $\mathcal{L}^p(\mu; \cH)$ seminorms throughout this paper. As usual there are corresponding spaces $L^p$ of equivalence classes with norms $\bn \cdot \bn_p$ under which these $L^p$ spaces are complete. The space $L^2(P;\cH)$ is a Hilbert space with the inner product corresponding to the bi-linear function
$\bl \cdot, \cdot \br$ on $\mathcal{L}^2(\mu;\cH)$ given by $\bl f,g \br = \int \langle f(\omega),g(\omega)\rangle \, d\mu(\omega)$  whenever $f,g \in \mathcal{L}^2(\mu;\cH)$. Of particular importance to us is the Bochner integral 
$\int k(X,\cdot) \, d\mu \in \cH$ which is well defined whenever $k(X,\cdot) \in \mathcal{L}^1(\mu;\cH)$ and $\cH$ is separable. We will denote this integral by $\mean$. Finally, we have the following important relation between the inner product in $\cH$ and Bochner integrals: whenever $f \in \mathcal{L}^1(\mu;\cH)$ and $h\in \cH$ then according to \cite[Thm.6,p.47]{DIES77},
\[
\langle h, \int f \, d\mu \rangle = \int \langle f,h\rangle \, d\mu. 
\]
In rare occasions we will make statements about equivalence classes and not functions itself. We use the notation $f^\bullet$ to denote the equivalence class corresponding to $f$, i.e. if $f \in \mathcal{L}^2(\mu)$ then $f^\bullet \in L^2(\mu)$ and, similarly, for Hilbert space valued functions.

\paragraph{Tensor products.} In various parts of this paper we make use of the \textit{tensor product of two Hilbert spaces} $\cH_1$ and $\cH_2$. One way to define this tensor product is to first define an algebraic tensor product of the vector spaces $\cH_1$ and $\cH_2$; given that we are only working  with Hilbert spaces of functions it is natural to define the algebraic tensor product as
\[
\{f:\X \times \Y \to \R : f(x,y) = \sum_{i=1}^n g_i(x) h_i(y), g_i \in \cH_1, h_i \in \cH_2, n\in \mathbb{N}  \},
\]
where we assume that functions in $\cH_1$ map from $\X$ to $\R$ and functions in $\cH_2$ from $\Y$ to $\R$. That this is a tensor product for $\cH_1$ and $\cH_2$ can be verified by applying Criterion 2.3 in \cite{DEF92}. Next, we equip the algebraic tensor product with the inner product $ \langle g_1 \otimes h_1, g_2 \otimes h_2 \rangle_\otimes = \langle g_1, h_1 \rangle_1 \langle g_2, h_2 \rangle_2$, e.g. \cite[Thm.6.3.1]{MUR90}, and  complete the resulting pre-Hilbert space. In the case where $\cH_1$ and $\cH_2$ are RKHSs with kernels $k_1$ and $k_2$ we have bounded point evaluators for elements in the pre-Hilbert space, i.e. $\langle h_1 \otimes h_2, k_1(x,\cdot) \otimes k_2(y,\cdot) \rangle_\otimes = h_1(x) h_2(y)$ for all $x \in \X, y\in \Y, h_1 \in \cH_1$ and $h_2 \in \cH_2$. Due to \cite[second theorem on p.347]{ARON50} there is then a unique functional completion of the algebraic tensor product and we will use this completion when working with RKHSs.  
We do not use the algebraic tensor product itself and, in the following, will reserve the notation $(H_1 \otimes \cH_2, \langle \cdot, \cdot \rangle_\otimes)$ for the above defined tensor product of the two Hilbert spaces, that is $\cH_1 \otimes \cH_2$ is a Hilbert space with inner product $\langle \cdot,\cdot \rangle_\otimes$, and, whenever $\cH_1$ and $\cH_2$ are RKHSs, $\cH_1 \otimes \cH_2$ is a Hilbert space of functions. In fact, in the latter case $\cH_1 \otimes \cH_2$ is an RKHS with kernel $\tilde k((x_1,y_1),(x_2,y_2)) = k_1(x_1,y_1) k_2(x_2,y_2)$. See also \cite[Thm.5.11]{PAUL16}.

When $X,Y$ are independent random variables under the measure $\mu$ attaining values in $\X_1, \X_2$, $k_1,k_2$ are kernel functions on $\X_1$ and $\X_2$ respectively, $g\in \cH_1, h \in \cH_2$, and the Bochner integrals 
$\int k_1(X,\cdot) \, d\mu, \int k_2(Y,\cdot) \, d\mu, \int k_1(X,\cdot) \otimes k_2(Y,\cdot) \, d\mu$  
are well defined then 
\begin{align*}
	&\langle g\otimes h, \int k_1(X,\cdot) \otimes k_2(Y,\cdot) \, d\mu \rangle_\otimes =  \int g(X) h(Y) \, d\mu 
	= \int g(X) \, d\mu \int h(Y) \, d\mu  \\
	&= \langle g, \int k_1(X,\cdot) \, d\mu \rangle_1 \langle h, \int k_2(Y,\cdot) \, d\mu \rangle_2 = \langle g \otimes h, \int k_1(X,\cdot) \, d\mu \otimes \int k_2(Y,\cdot) \, d\mu \rangle_\otimes.
\end{align*}
Since this holds for all $g\otimes h$, $g\in \cH_1, h \in \cH_2$,
\begin{equation} \label{eq:tensor_ind}
	\int k_1(X,\cdot) \otimes k_2(Y,\cdot) \, d\mu =   \int k_1(X,\cdot) \, d\mu \otimes \int k_2(Y,\cdot) \, d\mu.
\end{equation}

There is another natural way to define a tensor product for two RKHSs $\cH_1$ and  $\cH_2$ that is often of use. Here, we identify the tensor product with a rank one operator mapping from $\cH_1$ to $\cH_2$. To distinguish it from the above definition we will use $g \widehat \otimes h$,  $g\in \cH_1, h\in \cH_2$, to denote this tensor product. Whenever $f,g \in \cH_1, h\in \cH_2$, the tensor product is defined by $(g \widehat \otimes h)(f) = \langle g,f \rangle_{\cH_1} h \in \cH_2$. Furthermore, we can define an inner product on this tensor space by letting $\langle f_1 \widehat \otimes f_2, h_1 \widehat \otimes h_2 \rangle_{\widehat \otimes} = \langle f_1, h_1 \rangle_{\cH_1} \langle f_2,h_2 \rangle_{\cH_2}, f_1,h_1 \in \cH_1, f_2,h_2 \in \cH_2$. Using Parseval's identity one can observe that  is just the usual inner product of the space $HS(\cH_1,\cH_2)$ of Hilbert-Schmidt operators and $\spn \{g\widehat \otimes h: g \in \cH_1,h \in \cH_2 \}$ lies dense in $HS(\cH_1,\cH_2)$. It is therefore natural to use $HS(\cH_1,\cH_2)$ as the completion of the algebraic tensor product defined in terms of rank one operators. We will therefore denote the inner product between such tensors by $\langle \cdot,\cdot \rangle_{HS}$.

\paragraph{Covariance operators.} A first application of this tensor product leads us to covariance operators. The covariance operator $\Cov : \cH \to \cH$, given by 
$\langle \Cov g, h \rangle = E(g(X)h(X))$, is linear ($\langle \Cov(\alpha f+g), h \rangle = \alpha E(f \times h) + E(g \times h)=
\langle \alpha \Cov(f) + \Cov(g), h \rangle$ for all $h\in \cH$ and, therefore, $\Cov(\alpha f+g) = \alpha \Cov(f) + \Cov(g)$ whenever $f,g \in \cH, \alpha \in \R$) and is bounded whenever $\cH$ can be continuously embedded in $L^2(\X,P)$, i.e. for some $c>0$, $\|h\|_2 \leq c \|h\|$ for all $h\in\cH$, since  then $ \| \Cov \|_{op} = \sup_{\|f\| = 1} \|\Cov f\| = \sup_{\|f\| = 1} \sup_{\|h\|=1} |\langle \Cov f,h \rangle| = \sup_{\|f\| = 1} \sup_{\|h\|=1} |E(f \times h)| \leq \sup_{\|f\| = 1} \sup_{\|h\|=1} \|f\|_2 \|h\|_2 \leq c^2$. 
In fact it is a Hilbert-Schmidt operator whenever $\cH$ is separable and $k(X,X) \in \mathcal{L}^2(\mu)$ because then for any orthonormal basis $\{e_n\}_{n\in \mathbb{N}}$ of $\cH$, 
$\sum_{n,m \in\mathbb{N}} |\langle \tilde{\mathfrak{C}}e_n,e_m \rangle |^2
\leq E( (\sum_{n\in\mathbb{N}} |\langle e_n, k(X,\cdot) \rangle|^2)^2  ) = E(k^2(X,X))$
due to Beppo Levi's theorem. In this case $\tilde{\mathfrak{C}}$ is also self-adjoint and the spectral theorem applies. Furthermore, we can write the covariance operator as a Bochner-integral of the tensors $k(x,\cdot) \widehat \otimes \, k(x,\cdot)$, i.e. $\tilde{\mathfrak{C}} = \int k(x,\cdot) \widehat \otimes \, k(x,\cdot) \, dP$. This Bochner integral is well defined and attains values in $HS(\cH)$ whenever $\int \|k(x,\cdot) \widehat \otimes \, k(x,\cdot) \|_{HS} \, dP = \int k(x,x) \, dP < \infty$ and $\cH$ is separable. Separability of $\cH$ is important in this context because it implies that $HS(\cH)$ is separable and Bochner measurability, that is necessary for the Bochner integral above to be well defined, is not a restrictive assumption \cite[App.B12]{DEF92}.   

Observe that there is close relationship between the eigen-decomposition of $\tilde{\mathfrak{C}}$ and the expansion of the integral operator $T_k: \mathcal{L}^2(P) \to \mathcal{L}^2(P), (T_k f)(y) =  \int f(x) k(x,y) \, dP(x)$. Whenever $\cH$ is infinite dimensional and Mercer's theorem applies there exists an orthonormal sequence $\{e_i^\bullet\}_{i \geq 1}$ in $L^2(P)$
and corresponding values $\{\tilde \lambda_i\}_{i \geq 1}$ in $\R$ such that $e_i$ are eigenfunctions of $T_k$ with eigenvalues $\tilde \lambda_i$ and $\{\tilde \lambda_i^{1/2} e_i\}_{i\geq 1}$ is an orthonormal basis for $\cH$. Furthermore, $\langle \tilde{\mathfrak{C}} e_i, e_j \rangle = E(e_i(X) e_j(X)) = \langle e_i, e_j \rangle_{\mathcal{L}^2(P)} = \delta_{ij}$ and $\tilde \lambda_1^{1/2} e_i, \tilde \lambda_2^{1/2} e_2, \ldots$ are the eigenvectors of $\tilde{\mathfrak{C}}$ with corresponding eigenvalues $\tilde \lambda_1, \tilde \lambda_2, \ldots$. Also notice that for all $y\in \X$,  $(T_k\bm 1)(y) = \int k(y,x) \, dP = \langle k(y,\cdot), \mean \rangle = \mean(y)$ whenever the Bochner integral $\int k(x,\cdot) dP$ is well defined. Since $T_k \bm 1$ and $\mean$ are real valued functions defined on $\X$ that are equal for all $y \in \X$ it follows that $T_k \bm 1 = \mean$.

The covariance operator as described above is giving us the second moments but not the covariance itself. The centered version $\tilde{\mathfrak{C}}_c = \tilde{\mathfrak{C}} - \mean \widehat{\otimes} \mean$ gives us the covariance itself, i.e. $E((f(X) - E(f(X)))(g(X) - E(g(X)))) = \langle \tilde{\mathfrak{C}}_c f,g \rangle$ for any $f,g \in \cH$. This operator is also self-adjoint under suitable conditions on the kernel and has a spectral decomposition.

\paragraph{Direct sum.} Another construction that we need is the \textit{direct sum of two Hilbert spaces} $\cH_1$ and $\cH_2$.
The direct sum $\cH_1 \oplus \cH_2$ is the Cartesian product $\{(g,h) : g \in \cH_1, h\in \cH_2\}$ equipped with the inner product $\langle (g_1,h_1),  (g_2,h_2) \rangle_{\oplus} = \langle g_1, g_2\rangle_1 + \langle h_1, h_2 \rangle_2$ \cite[p.40, Ex.5]{REED72}. We do not assume here that $\cH_1 \cap \cH_2 = \{0\}$.

\section{Approximating convex sets and locating $\mean$ and $\mean_n$}
We start this section with a discussion of a simple approach for approximating convex sets using $\varepsilon$-nets. We will find that such an approximation is of very limited use only which motivates the remainder of the paper. In this remainder  
we analyze a stochastic approach at length where we consider the random convex set which is induced by the sample. In detail, we control the difference between the empirical convex  set $C_n$ corresponding to the sample and its population limit $C$ using VC-theory and Rademacher complexities in Section \ref{sec:emp_convex_sets}. Such tools are not necessary for the finite dimensional setting but the question of convergence of the empirical convex set to its population limit can easily be developed for the infinite dimensional setting. In particular, the approach based on Rademacher complexities applies directly to infinite dimensional RKHSs. In Section \ref{sec:diameter} we study the width of the convex set $C$. We link here lower bounds on the width of $C$ to how well constant functions can be approximated within the unit ball of the RKHS. Building up on these sections we study how deep $\mean$ lies within $C$ in Section \ref{sec:refinement_location}. We also look in this section at an approach based on covariance operators which adapts automatically to the support of the unknown measure. Finally, in Section \ref{sec:bringing_it_together} we translate these findings to $\mean_n$ and we provide our main theorems in this section which give high probability bounds on the size of balls within the empirical convex set which are centered at $\mean_n$.   

\subsection{Approximation based on $\varepsilon$-nets.} \label{ref:approxEpsNet} Let $\cH$ be an RKHS of real-valued functions acting on $\X= [0,1]^d$ with kernel function $k$ being bounded by $1$. Furthermore, let $\phi:\X \rightarrow \cH$ be the map $\phi(x) = k(x,\cdot)$ and $\mean_n = \frac{1}{n} \sum_{i=1}^n \phi(x_i)$ for certain points $x_1,\ldots x_n \in X$. For $\varepsilon >0$ there exists an $\varepsilon$-net for $[0,1]^d$ that consists of $N_{\varepsilon,d} = \lceil d^{d/2}/\varepsilon^d\rceil$ many closed balls that are centered at points $y_1,\ldots, y_{N_{\varepsilon,d}}$ in $[0,1]^d$.   This $\varepsilon$-cover of $[0,1]^d$ gives rise to a $c \varepsilon^\alpha$-cover of $\phi[\X] = S$  if $\phi$ is $\alpha$-H\"older continuous with Lipschitz constant $c$. Let $s_i = \phi(x_i)$ for all $i\leq n$ and $s_i'$ the closest point to $s_i$ in $\phi[ \{y_1,\ldots,y_{N_{\varepsilon,d}}\}]$. Then the approximation $\mean_n' =   \frac{1}{n} \sum_{i=1}^n s_i'$ of $\mean_n$, which can be written as a sum over at most $N_{\varepsilon,d}$ many terms, achieves an approximation error of 
\[
	\|\mean_n - \mean_n'\| \leq \frac{1}{n} \sum_{i=1}^n \|s_i - s_i'\| \leq c \varepsilon^\alpha.
\]
If we want to achieve an approximation error of at most $n^{-1/2}$ then we need to include $\lceil d^{d/2} (c^2n)^{d/(2\alpha)} \rceil$ many balls in the cover. If $c\geq 1$ then we can only represent $\mean_n$ with less than $n$-points if $d=1$ and $\alpha > 1/2$. The Lipschitz constant $c$ is here only of limited help if we choose our kernel independent of $n$.

We can also observe that a fine cover is necessary for good approximation if we do not impose assumptions on the measure and on  $\mean$. For instance, consider again $\X=[0,1]^d$ and a kernel $k$ such that $k(x,x) = 1$ for all $x\in \X$ and such that $1 - k(x,y) \leq c\|x-y\|$ for some constant $c>0$ and any $x,y \in \X$. Furthermore, assume that we have a cover centered at  $l^d$ points $x_1,\ldots,x_{l^d}$ then there exists a point $x_0$ with $\min_{i\leq l} \|x_0 - x_i\| \geq 1/2l$. If we consider now the measure with unit mass on $x_0$, i.e. $\mean = k(x_0,\cdot)$, then the error, when approximating the expected value of the norm one function $h = k(x_0,\cdot)$, is 
\[
|\langle \mean, h \rangle - \langle \mean_n, h\rangle| = \|\mean\|^2 -
\sum_{i=1}^l \alpha_i k(x_i,x_0) \geq \frac{c}{2l}.
\]
Hence, to attain an approximation error of order $n^{-1/2}$ we need a cover consisting of at least $n^{d/2}$ many points.

\subsection{Empirical convex sets} \label{sec:emp_convex_sets}
In the following, let $\cH$ be a separable RKHS and 
let $C_n = \ch \{\phi(X_i) : i \leq n \}$ be the set valued random variable determined by $X_1,\ldots, X_n$. The variable $C_n$ 
attains values in the closed convex subsets $\mathbf{C}(\cH)$ of $\cH$. There exists various natural topologies on $\mathbf{C}(\cH)$ (see \cite{BEER93}). We equip $\mathbf{C}(\cH)$ with the Vietoris topology and the corresponding Borel-Effron $\sigma$-algebra. The random variable $C_n$ is then well defined as a measurable map from $\Omega$ to $\mathbf{C}(\cH)$. 
The random variable $C_n$ tends to $C = \cch\{ \phi(x) : x \in \X \}$ as $n$ tends to infinity. We aim to quantify how similar $C_n$ is to $C$. We do so by framing the question of convergence in the context of empirical process theory. In the following discussion we assume that $\X$ is \textit{compact}, $\cH$ is \textit{finite dimensional with dimension $d$}, and the corresponding \textit{kernel function  $k$ is continuous}. In particular, $\|k\|_\infty^{1/2} =: b$ is finite. 

Observe that we can reduce the question of convergence of $C_n$ to $C$ to the question of how fast the projection of $C_n$ on some direction $u \in \cH, \|u\| =1$, converges to the projection of $C$ on $u$. More specifically, if we can control the convergence uniformly over all such $u$ then we have control of the convergence of $C_n$ to $C$. Furthermore, since $C_n$ and $C$ are convex we only need to control the end points of the projections; these points correspond to projections of extremes of $C_n$ and $C$ onto $\spn\{u\}$. With this aim in mind, let us introduce the functions  $f_{u,c}(x)=\chi\{ u(x) \leq c \}$, $f_{u,c} : \X \rightarrow \R$, for $u\in \cH, \|u\| =1$, and with $c$ going through the interval
$\{\langle u, h \rangle  : h\in C \} = \ch \{ u(x) : x \in \X \}$ or a superset of this interval.

The importance of the functions $f_{u,c}$ is that $P f_{u,c} > 0$ if, and only if, there is an element $h\in C$ such that $\langle h,u \rangle \leq c$ (given that there is non-zero mass on that element or the mass of all elements whose projection falls below $c$ is strictly greater than zero). For instance, if $C$ contains the origin then we could vary negative $c$'s to explore the extension of the projection of $C$ in direction $u$. Since the extremes of $C$ are a subset of $S := \{ \phi(x) : x \in \X\}$ and the  probability measure is concentrated on $S$ it is sufficient to work with elements in $S$ instead of all of $C$. This setup is depicted in part (i) and (ii) of Figure \ref{figPpsi}. 

The situation is similar for the empirical convex set. The empirical convex set will contain an element which lies $c$ away from the origin in direction $u$ if, and only if, $P_n(u(\tilde X) \leq c) >0$, with $\tilde X$ being a random variable with law $P_n$ (see the preliminaries in Section \ref{sec:prelim}). Notice that the condition $P_n(u(\tilde X) \leq c) >0$ is equivalent to $\min_{i\leq n} u(X_i) \leq c$.

\paragraph{VC-theory.}
To be able make use of this approach to quantify the difference between $C$ and $C_n$ we need to control the convergence of $P_n f_{u,c}$ to $P f_{u,c}$ simultaneously over all these $f_{u,c}$. One simple way to do this is to use VC theory. Since we are working here with  finite dimensional RKHSs this is rather straight forward. In detail, whenever $u \in \cH$, $\|u\| = 1$, then $|u(x)| \leq b$ and we can use $[-b,b]$ as the interval over which we vary $c$. 
Hence, let
\[
	\mathcal{F} = \{f_{u,c} : u \in \cH, \|u\|=1, -b \leq c \leq b  \}.
\]
We want to show that $\mathcal{F}$ is a VC-subgraph class of functions. In fact, it is convenient to work with a countable dense subset of $\mathcal{F}$ to sidestep measure theoretic complications relating to the empirical process. 
To this end, let $\tilde{\cH}$ be a countable  dense subset of $\cH$ such that $\tilde{\cH} \cap \{u : \|u\|=1,u\in \cH\}$ lies dense in $\{u : \|u\|=1,u \in \cH\}$ and define the countable set $\tilde{\mathcal{F}} = \{f_{u,c} : u \in \tilde{\cH}, \|u\|=1, c \in (\mathbb{Q} \cap [-b,b]) \cup \{-b,b\}\} \subset \mathcal{F}$.  

The family $\mathcal{F}$ is a VC-subgraph class and its VC-dimension is upper bounded by $d+1$: consider the family of function $\mathcal{G} = \spn(\cH \cup \{ c\bm 1  : c \in \R \})$. The dimension of $\mathcal{G}$ is at most $d+1$ and $c - u(x) \in \mathcal{G}$ for every $u \in \cH$, $-b \leq c \leq b$. Applying \cite{DUDLEY14}, Theorem 4.6, shows that the VC dimension of $\text{Pos}(\mathcal{G}) = \{ \text{pos}(g) : g \in \mathcal{G}\}$, where $\text{pos}(g) = \{x: g(x) \geq 0 \}$, is at most $d+1$. Furthermore, the family $\mathcal{G}'$ of sets of the form $\{(x,t) : x \in \text{pos}(g), t \leq 1\}$, $g \in \mathcal{G}$, has the same VC-dimension. But $\mathcal{G}'$ is a family of subgraphs that contains all the subgraphs of functions in $\mathcal{F}$ and the claim follows. Since $\tilde{\mathcal{F}} \subset \mathcal{F}$ it also follows that $\tilde{\mathcal{F}}$ is a VC-subgraph class with VC-dimension at most $d+1$.  

The family $\tilde{\mathcal{F}}$ has the measurable envelope $\chi \X$ and, due to \cite[Thm3.6.9]{GINE15}, its covering numbers can be bounded by
\[
	N(\tilde{\mathcal{F}},\mathcal{L}^2(Q),\varepsilon) \leq 4 (8/\varepsilon^2)^{d+2} \vee \tilde c, 
\]
where $\tilde c$ can be chosen as $\max\{m\in \mathbb{N}_+ : \log m \geq m^{1/(d+1)(d+2)} \}$ and whenever $Q$ is a probability measure on $\X$. Be aware that the $\nu$-index as defined in \cite{GINE15}
is equal to one plus the VC-dimension when using the definition of \cite{DUDLEY14} for the VC-dimension.

Now, applying H\"older's inequality, 
\begin{align*}
J(\delta) = \int_0^\delta \sup_Q \sqrt{\log 2 N(\tilde{\mathcal{F}},\mathcal{L}^2(Q),\varepsilon)} \, d\varepsilon 
\leq \delta \left( 
	\log(2\tilde c) 
	\vee
	(1 + 2(d+2))
\right)^{1/2}. 
\end{align*}
In particular, $J(1) \leq  \sqrt{\log 2\tilde c} \vee \sqrt{1+ 2(d+2)}$. 
By Remark 3.5.5 and Theorem 3.5.4 from \cite{GINE15} we can conclude that
\[
	E(\sup_{f \in \tilde{\mathcal{F}}} |P_n f  - P f|) \leq 12 J(1) n^{-1/2}.
\]
We use now Bousquet's version of Talagrand's inequality to move to a high probability bound (e.g. \cite{GINE15}, Theorem 3.3.9). For simplicity, we will denote the supremum over $u,c$, such that $f_{u,c} \in \tilde{\mathcal{F}}$, by $\sup_{u,c}$ in the following. 
Let $S_n =  \sup_{u,c} |\sum_{i=1}^n (f_{u,c}(X_i)- Pf_{u,c} )|= n \sup_{u,c} | P_n f_{u,c} - P f_{u,c}|$. Observe that $\| Pf_{u,c} - f_{u,c}\|_{\infty} \leq 1$ and $E S_n = 
n E(\sup_{u,c} |P_n f_{u,c} - P f_{u,c}|)$. Applying Talagrand's inequality yields
\begin{align*}
	e^{-x} &\geq \Pr\left(\max_{j\leq n} S_j \geq E S_n + \sqrt{2x(2 E S_n + n)} +x/3 \right) 
\end{align*}
for all $x\geq 0$. In particular, with probability at least $1 - \exp(-x)$, 
\begin{equation}
	\sup_{u,c} | P_n f_{u,c} - P f_{u,c}| \leq 12 J(1) n^{-1/2} + n^{-1/2} \sqrt{2x(24 J(1) n^{-1/2}  + 1)} +x/3n .
\end{equation}

\paragraph{Rademacher complexities.} \label{sec:Rademacher_expl} As is usually the case with metric entropy based bounds, the constants are loose and $n$ needs to be large to gain useful results. Tighter bounds can often be attained by using \textit{Rademacher complexities} (see \cite{BART02,GINE15}). While the resulting bounds are generally tighter it is not possible to work directly with the indicator functions $f_{u,x}$ but we need a continuous approximation of these. Also, in the Rademacher approach that we develop it is beneficial to center the functions $h \in C$ by moving to $C_c = \{h - \mean : h\in C\}$. 

In the following, let $F = \{(u,c) : u\in \cH, \|u \|=1, -b \leq c \leq b \}$ and $\tilde F =  \{(u,c) : u\in \tilde{\cH}, \|u \|=1, c \in ([-b,b] \cap \mathbb{Q})\cup \{-b,b\}\}$. Furthermore,  consider the function $\psi_\gamma:\mathbb{R} \to \mathbb{R}$, with $\gamma >0$, defined by
\[
\psi_\gamma(x) = 
\begin{cases}
	1 & x \leq -\gamma, \\
	-x / \gamma & -\gamma < x  < 0, \\
	0 &  0 \leq x.
\end{cases}
\]
Then $f_{u,c}(h) = \chi\{\langle u, h\rangle  \leq c \} \geq \psi_\gamma( \langle u,h \rangle -c)$ for any $u,h\in \cH$ and $-b \leq c \leq b$. The function $\psi_\gamma$ is depicted in part (iii) of Figure \ref{figPpsi}. 
Importantly, $\psi(0) = 0$ and $|\psi_\gamma(x) - \psi_\gamma(y)| \leq |x-y|/\gamma$, that is $\gamma \psi_\gamma(\cdot)$ is a contraction vanishing at zero (see \cite[Thm3.2.1]{GINE15} or \cite[Thm4.12]{TAL13}). 

\begin{figure}[t]
\begin{tikzpicture}
\node at (6.5,0.6){\includegraphics[scale=0.8,trim=4cm 9.7cm 1cm 1cm,clip]{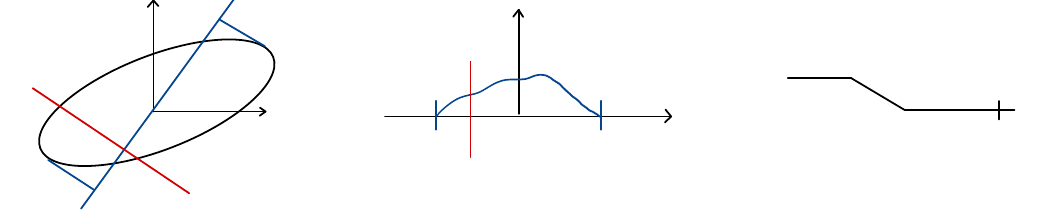}};
\node at (0,2.5) {\textit{(i)}};
\node at (4.5,2.5) {\textit{(ii)}};
\node at (9.5,2.5) {\textit{(iii)}};
\node at (0.5,1.3) {$C$};
\node at (3.2,2.3) {$\spn\{u\}$}; 
\node at (7.1,1.4) {$p(u(x))$}; 
\node at (8.5,0) {$u(x)$};
\node at (5.65,-0.3) {$c$};
\node at (9.8,1) {$1$}; 
\node at (13.2,0.55) {$0$};
\node at (11.5,0.2) {$c$};
\node at (11,1.3) {$c -\gamma$};
\node at (11.5,2) {Graph of $\psi_\gamma(u(x) -c)$};
\end{tikzpicture}
\caption{\label{figPpsi} \textit{(i)} The figure show $C$ as a subset of $\cH$. The diagonal (blue) line is the $\spn\{u\}$ for some function $u\in \cH, \|u\|=1$, The short lines connecting this line to the ellipse indicate the projection of $C$ on $\spn\{h\}$. In particular, the distance between the two short lines is $\diam_u(C)$. The long line which is orthogonal to $\spn\{u\}$ (red) indicates a threshold; the interest is here if $C$ extends past this threshold. \textit{(ii)} The question if $C$ extends past the threshold is rephrased in this figure by focusing on $\spn\{u\}$ and considering the probability that values $u(x)$ are attained that lie beyond the threshold. In this figure, we assume for simplicity that the measure on $C$ induces a density function $p(y)$ through the projection on $\spn\{u\}$, where $y$ goes over the range of $u$. The threshold is in this figure set to $-c$ and $C$ extends past the threshold if the density function is non-zero to the left of $-c$. \textit{(iii)} To link this construction to the empirical measure we use the function $\psi_\gamma$ whose graph is plotted in this figure against $u(x)$. The motivation is here to appromxiate the indicator function corresponding to the event $u(X) \leq - c$ from below by a continuous function. The parameter $\gamma$ controls the approximation and for $\gamma \to 0$  the function $\psi_\gamma$ converges to the indicator function. 
}

\end{figure}

We have that  $P_n f_{u,c + \langle u,\mean \rangle} \geq P_n \psi_\gamma(\langle u, \phi(\cdot) - \mean \rangle -c)$. The proof of \cite[Thm3.4.5]{GINE15}
gives us a high probability lower bound on the latter term (in the notation of the book, combine $S_n < E S_n + \sqrt{2x/n}$ with $ES_n \leq 2 E\tilde S_n$). In detail, with probability $1-p$, simultaneously for all $u \in \tilde \cH, \|u\| = 1$, and $c \in ([-b,b] \cap \mathbb{Q}) \cup \{-b,b\}$, we have have the following lower bound on
$P_n \psi_\gamma(\langle u, \phi(\cdot) - \mean \rangle -c)$,
\begin{align*}
&P \psi_\gamma(\langle u, \phi(\cdot) - \mean \rangle -c) - 2 E\bigl(  \sup_{(u',c') \in \tilde{F}}  | \frac{1}{n} \sum_{i=1}^n \epsilon_i \psi_\gamma(\langle u', \phi(X_i) - \mean \rangle -c') |\bigr)  - \sqrt{\frac{2 \log(2/p)}{n}}, 
\end{align*}
where $\epsilon_i$ are i.i.d. Rademacher variables that are independent of $X_1,\ldots, X_n$.  Because $\gamma \psi_\gamma$ is a contraction vanishing at zero 
\[
	E\bigl(\sup_{(u',c') \in \tilde F}  \bigl| \frac{1}{n} \sum_{i=1}^n \epsilon_i  \psi_\gamma(\langle u', \phi(X_i) - \mean \rangle -c') \bigr|\bigr) \leq \frac{2}{\gamma} E\bigl(  \sup_{(u',c') \in \tilde{F}}  \bigl| \frac{1}{n} \sum_{i=1}^n \epsilon_i (\langle u', \phi(X_i) - \mean \rangle -c') \bigr|\bigr).
\]
Applying \cite[Thm12 (7) and Lem22]{BART02} and using that the Rademacher complexity for the constant functions $\langle u', \mean \rangle + c', (u',c') \in \tilde F$, where $|\langle u', \mean\rangle + c'| \leq b + |c'| \leq 2 b$,   
is upper bounded by $4b n^{-1/2}$, 
\begin{align*}
&E\bigl(  \sup_{(u',c') \in \tilde F}  \bigl| \frac{1}{n} \sum_{i=1}^n \epsilon_i (\langle u', X_i \rangle -c') \bigr|\bigr) \leq 4b n^{-1/2} + E\bigl((2/n)\bigl( \sum_{i=1}^n k(X_i, X_i)\bigr)^{1/2}  \bigr) \leq 6 bn^{-1/2}
\end{align*}
and simultaneously for all $(u,c) \in \tilde F$ with probability $1-p$,
\[
	P_n \psi_\gamma(\langle u, \phi(\cdot) - \mean \rangle -c) \geq P \psi_\gamma(\langle u, \phi(\cdot) - \mean \rangle -c) - (
	\sqrt{2\log(2/p)} + 24b/\gamma) n^{-1/2}.
\]

The VC and Rademacher bounds allow us to control the size of the empirical convex set in terms of $P f_{u,c}$ and  $P \psi_\gamma(\langle u,\phi(\cdot) - \mean \rangle -c)$. In either case we need to get a handle on $P$ to move further. In particular, we need to understand how $P$ concentrates around the extremes of $C$. We are now looking at a few examples to get a better understanding of how $P$ concentrates and what this implies for the convergence of the empirical convex set to $C$. Of major  importance is how smooth $\phi: \X \to \cH$ is and how the distribution of $X_1,\ldots, X_n$ on $\X$ looks like. We start with a couple of simple examples and discuss links to stochastic geometry before addressing typical settings that one faces in practice.


\subsubsection{Example 1: Unit circle}
Consider the unit circle in $\mathbb{R}^2$ with the uniform distribution on it. What can we say about the interior of $C_n$ as a function of $n$? In particular, what can be said about the size of $C_n$ in direction $u \in \mathbb{R}^2$, $\|u\|=1$? Due to the symmetry of the unit sphere and because the uniform distribution is used  it is sufficient to consider the vector $u = (1 , 0)^\top$. The probability that a sample point, when we sample just once, lies to the right of $c u$, with $c \in [-1,1]$, is $(1/2\pi) \int_0^{2\pi} \chi\{ \langle u, (\cos \theta, \sin \theta )^\top \rangle \geq c\} = \arccos(c)/\pi$, using here that $\arccos$ is a monotonically decreasing function. Similarly, the probability that a sample point lies to the left of $c u$ is $1-\arccos(c)/\pi$. Furthermore, if we draw $n$ independent samples then the probability to see at least one sample point to the right of $c u$ is $1 - (1-\arccos(c)/\pi)^n$ and that at least one sample point lies to the left of $c u$ is $1 - ( \arccos(c)/\pi)^n$. Moving on to the distribution of the length of the interval, which corresponds to the projection of $C_n$ onto $u$, that is the distribution of $\diam(u,C_n)$, we can observe that $\diam(u,C_n)$ attains values in $[0,2]$ and that    
$\diam(u,C_n) = \max_i \cos \theta_i - \min_i  \cos \theta_i$, where we denote with $\theta_i$ independent and uniformly distributed random variables on $[0,2\pi)$. We could now try to calculate the distribution of $\diam(u,C_n)$ by controlling the maximum and minimum. Since we are interested in getting a better understanding of the VC and Rademacher approach we  use instead the uniform  guarantees on $P_n f_{u,c}$. Let $\X$ be the unit circle and let the kernel function be $k(x,y) = \langle x, y \rangle_{\R^2}$. This way $\cH$ becomes the dual space $(\R^2)'$ of $\R^2$: 
 recall that a basis of $(\R^2)'$ is given by $\langle e_1, \cdot \rangle_{\R^2}, \langle e_2, \cdot \rangle_{\R^2}$, where $e_1,e_2$ is the standard basis in $\R^2$, and for any $i,j\in\{1,2\}$, $\langle \langle e_i, \cdot \rangle_{\R^2}, \langle e_j, \cdot \rangle_{\R^2} \rangle_{(\R^2)'} = \langle e_i,e_j \rangle_{\R^2}$. Since $e_1,e_2$ lie in $\X$ it holds for any $i,j \in \{1,2\}$, $\langle \langle e_i, \cdot \rangle_{\R^2}, \langle e_j, \cdot \rangle_{\R^2} \rangle_{(\R^2)'} = \langle k(e_i,\cdot), k(e_j,\cdot) \rangle$ and the claim follows. 


Associate to $u \in \mathbb{R}^2$ the function $\tilde u \in \cH$ given by $\tilde u(x) = \langle u, x \rangle_{\R^2}$. Let $\tilde \cH$ be a countably dense subset of $\cH$ such that $\{u : u\in \tilde \cH, \|u\|=1\}$ lies dense in the unit sphere of $\cH$. Define 
$\mathcal{F} = \{f_{\tilde u,c} : u \in \cH, \|u\|=1, -1 \leq c \leq 1 \}$
and $\tilde{\mathcal{F}} = \{f_{\tilde u,c} : u \in \tilde{\cH}, \|u\|=1, -1 \leq c \leq 1 \}$, where $f_{\tilde u,c}(x) = \chi\{ \tilde u(x) \leq c\} = \chi\{\langle u, x\rangle \leq c\}$. The family of functions $\tilde{\mathcal{F}}$  is a VC-subgraph class and 
on an event of probability at least $p$ it holds simultaneously for all $f_{\tilde u,c} \in \tilde{\mathcal{F}}$ that
\begin{align*}
P_n f_{\tilde u,c} \geq P f_{\tilde  u,c} - n^{-1/2} \xi_n = 1- \arccos(c)/\pi - n^{-1/2} \xi_n,  
\end{align*}
where $\xi_n = 12 J(1) + \sqrt{2 \log(1/p) (24 J(1) n^{-1/2} + 1)} + \log(1/p)n^{-1/2}/3$. We use here that
the uniform distribution on the unit circle is invariant under rotations, i.e. for a given $u$ let $A$ be the rotation matrix for which $Au = (1, 0)^\top$. Then, 
\[
	P f_{\tilde  u,c} 
	=  \frac{1}{2\pi} \int_0^{2\pi} \chi\{\langle A u, A (\cos(\theta),\sin(\theta))^\top \rangle \leq c \} 
	=  \frac{1}{\pi} \int_0^{\pi} \chi\{\cos(\theta) \leq c \}.
\]
In other words, 
on an event of probability $p$, whenever $n$ is such that  $n^{-1/2} \xi_n <1/2$,  
and for any $c > \cos((1-n^{-1/2}\xi_n) \pi)$, there will be a sample point which has an inner product with $u$ which is smaller than $c$. 
To be exact, let $c_0 < 0$ be a real number strictly larger than  $\cos((1-n^{-1/2}\xi_n) \pi)$ and let the above event be denoted by $B$. It holds that $P(B) \geq p$ and for any $\omega \in B$,    $\min_{i\leq n} \langle u, X_i(\omega) \rangle \leq c_0$. In fact,  the VC-argument shows that $B$ can be chosen such that $P(B)\geq p$ and for all $\omega \in B$,
\[
\adjustlimits	\sup_{u \in \cH, \|u\|=1} \min_{i\leq n} \langle u, X_i(\omega) \rangle \leq c_0.
\]
From this we can infer that a ball centered at the origin and of radius $c_0$ is contained in the empirical convex set $C_n(\omega)$, whenever $\omega \in B$: consider without loss of generality the vector $v= (c_0, 0)^\top$ and an element $\omega \in B$. There exist elements $X_j$ such that $X_j(\omega)$ lies on the unit circle and $\langle - v, X_j(\omega) \rangle \leq c_0^2$, that is $\langle v, X_j(\omega) \rangle \geq \|v\|^2$.
Let $X_i(\omega)$ be such an element which also attains the maximum of the map $j \mapsto \langle v,X_j(\omega) \rangle$. 

Assume that $X_i(\omega)$ does not lie in $\spn v$ and that $X_i(\omega)$ lies north of $\spn v$, i.e. $\langle X_i(\omega), (0,1)^\top \rangle > 0$.
Consider the lines between $X_i(\omega)$ and the elements $X_j(\omega), j \leq n, j\not = i$. There will be an index $j_0 \leq n, j_0 \not = i,$ such that the line between $X_i(\omega)$ and $X_{j_0}(\omega)$ intersects with $\spn v$. Consider the vector $w=(0,-c_0)^\top$. There will be a sample point $X_{j_1}(\omega)$ such that $\langle w,X_{j_1}(\omega) \rangle \geq \|w\|^2$ and the line between $X_i(\omega)$ and $X_{j_1}(\omega)$ crosses $\spn v$. 
Order the samples according to how large the inner product between the point of intersection of the line between the sample and $X_i(\omega)$ and $v$ is. Let  $X_{j_2}(\omega)$ be the maximum in this ordering. Assume that $\langle X_{j_2}(\omega), v \rangle < \|v\|^2$, that is the intersection lies to the left of $v$. Let $\tilde v$ be the point on the circle with radius $c_0$ for which the line between $X_i(\omega)$ and $X_{j_2}(\omega)$ is tangent and which lies to the right of the line. There is now a point $X_{j_3}(\omega)$ on the sphere such that $\langle X_{j_3}(\omega), \tilde v \rangle \geq \|\tilde v\|^2$. The point $X_{j_3}(\omega)$ cannot lie north of $v$ since this would contradict the maximality of $X_i(\omega)$. However, if $X_{j_3}(\omega)$ lies south of $v$ then the line between $X_{j_3}(\omega)$ and $X_i(\omega)$ crosses $\spn v$ further to the right than the line between $X_{j_2}(\omega)$ and $X_i(\omega)$ which contradicts the maximality of $X_{j_2}(\omega)$. 

Hence, we have either two points to the right of $v$, one on the north side and one  on the south side of the sphere, or the point $(1,0)^\top$ is contained in the sample. By the same argument, either $(-1,0)^\top$ is contained in the sample or there are two points left of $-v$, one on the north side and one on the south side. The convex hull of these points is a subset of $C_n(\omega)$ and contains $v$.         

To provide a concrete example, let $p=0.9$ and observe that the $\tilde c$ which appears in the bound of $J(\delta)$ can be chosen as $10^{21}$. Then 
$J(1) \leq 8 \vee 3 = 8$ and $\xi_n \leq 96 + \sqrt{2\log(10)(192 n^{-1/2} +1)} + n^{-1/2}\log(10)/3$. Hence, a ball of radius $0.2$ exists around the origin inside the empirical convex set with probability $p$ for $n$ being about $52000$ or larger.

As expected $n$ needs to be large to guarantee the existence of the ball or radius $0.2$. Using \textit{Rademacher complexities} we can attain significantly tighter bounds in this setting. Building up on our discussion and using $\mean=0$ we can see that
\[
	P_n \psi_\gamma(\langle u, \cdot \rangle -c) \geq P \psi_\gamma(\langle u, \cdot \rangle -c) - (
	\sqrt{2\log(2/p)} + 12/\gamma) n^{-1/2}.
\]
Finally, by using a rotation of $u$ and with $c_\gamma = (c- \gamma) \vee -1$ it follows that
\begin{align*}
	P \psi_\gamma(\langle u, \cdot \rangle -c) 
	&= \frac{1}{\pi} \int_0^\pi \psi_\gamma(\cos(\theta) - c) = 1- \frac{\arccos(c_\gamma)}{\pi} + \frac{1}{\pi\gamma} \int_{\arccos(c)}^{\arccos(c_\gamma)}  (c-\cos(\theta))   \\
	&= 1 - \frac{\arccos(c_\gamma)}{\pi} \, (1- c/\gamma) -
	\frac{c \arccos(c)}{\pi \gamma}
	+\frac{1}{\pi\gamma} (\sqrt{1-c^2} - \sqrt{1-c_\gamma^2}).
\end{align*}
For instance, with $\gamma=1$ and $c<0$ this leads to
\begin{align*}
	P_n \psi_\gamma(\langle u, \cdot \rangle -c) \geq 
	c (1-  \arccos(c)/\pi) + \frac{\sqrt{1-c^2}}{\pi}
-  (\sqrt{2\log(2/p)} + 12) n^{-1/2}.
\end{align*}
The bound guarantees in this case the existence of a ball of radius $0.2$ around the origin within the empirical convex set with probability at least $0.9$ when $n$ is about $5000$, a 10-fold improvement in the constant over  the VC-bound. While the bound is significantly better it does not come close to capture the right magnitude: even a number as small as $n=10$ suffices in experiments  
for the empirical convex set to contain a ball of radius $0.2$ with high probability.

\subsubsection{Example 2: Polytopes with finitely many extremes}
Let us consider next a simple polytope. Let $\cH = \mathbb{R}^d$ with the usual inner product and $k(x,y) = x^\top y$. Furthermore, consider $C = \ch\{x_i : i \leq m \} $ with $x_1,\ldots, x_m \in \mathbb{R}^d$ and such that the random variable $X$ attains values in $\{x_1, \ldots, x_m\}$ and $\Pr(X = x_i) \geq \alpha > 0$ for all $i\leq m$. Then for all $u\in \cH, \|u\| =1, |c| \leq \|k\|_\infty^{1/2}$ either $P f_{u,c} =0$ or $P f_{u,c} 
\geq \alpha$. Hence, we have that $P_n f_{u,c} > 0$  with probability at least $1-e^{-\sqrt{n}}$ whenever 
\[
	n \geq (12 J(1)  +   \sqrt{2(24 J(1)   + 1)} +1/3)^2/\alpha^2.
\]
In other words, for $n$ that large the empirical convex set equals $C$ on an event of probability at least $1-e^{-\sqrt{n}}$.

If each of the $x_i$ is an extreme then we can compare this probability to the probability that in $n$ independent trials all $m$ extremes are drawn: the probability that  element $i$ is not drawn in $n$ independent trials is $1-\Pr(X=x_i)^n$ and the probability that at least one element $i$ is not drawn is upper bounded by 
\[
	\Pr(\bigcup_{i \leq m} \bigcap_{j\leq n} \{X_j \not = x_i\} ) \leq \sum_{i \leq m} (1-\Pr(X=x_i))^n \leq m (1-\alpha)^n =
	m \exp(- \beta n).
\]
where $\beta = -\log(1-\alpha)$. In other words, instead of $1-e^{-\sqrt{n}}$ we get a probability of 
$1-m \exp(-\beta n)$ that the empirical convex set matches the convex set $C$.

Consider now the special case of the \textit{$d$-dimensional simplex} $\ch S$, with $S = \{0,e_1,\ldots,e_d\}$ and $e_1,\ldots, e_d$ being an orthonormal basis in $\mathbb{R}^d$. Furthermore, assume that each $x \in S$ has probability $1/(d +1)$ to be sampled. The interior of the empirical convex set $C_n$ is empty  unless all points have been sampled. Hence, in this example either $\intr C_n = \emptyset$ or $C_n = C$ and the interior of $n$ does not grow slowly in size as $n$ increases but changes abruptly.    

As a final example consider a \textit{rhombus} given by $C = \ch \{e_1,-e_1, r e_2, -r e_2\}$ where $e_1,e_2$ are orthonormal vectors in $\mathbb{R}^2$ and $r \in (0,1)$. Furthermore, let $X$ be uniformly distributed on the boundary of $C$. We can again consider the functions $f_{\tilde u,c}$ to measure the interior of the empirical convex set. However, in contrast to the unit circle the measure in direction $u$ that lies $c$ apart from the origin is not the same for all $u$ but depends strongly on the direction. For instance, for direction $  -e_1$ and $c \in (0,1)$ it holds that $P f_{-\tilde{e}_1,-c} = 2 p (1-c)\sqrt{1+r^2}$, where $p$ denotes here the density of the uniform distribution on the boundary, while for $-e_2$ we get $Pf_{-\tilde{e}_2,-c'} = 2p(1-c'/r) \sqrt{1+r^2}$, for $c' \in (0,r)$. In particular, for $c = 0.9, c'=0.9 r$ the probabilities  $P f_{-\tilde{e}_1,-c}$ and   $Pf_{-\tilde{e}_2,-c'}$ are equal and the probability   $P f_{-\tilde{e}_1,-c}$, which is spread out over an interval of length $0.1$ in direction $e_1$, is contained in an interval of length $0.1 r$ in direction $e_2$ irrespective of how small $r$ is.

\subsubsection{Example 3: Image of a Lipschitz-continuous kernel function} \label{sec:Lower_bnd_P_Lipschitz}
Let us go back to the setting that we discussed at the beginning of the section. In detail, let $k$ be a \textit{continuous kernel function on compact set $\X$} that is upper bounded by $b$. Furthermore, let us assume that  the corresponding feature map $\phi(x) = k(x,\cdot)$ is  \textit{$L$-Lipschitz continuous} with Lipschitz constant $L>0$ and the law of $X_1,\ldots,X_n$  has \textit{a density on $\X$ which is lower bounded by $b'> 0$}. We are now aiming to quantify the extension of the convex set in a direction $u$ after centering the convex set around $\mean$. 
In detail, for  $u\in \cH, \|u\|=1$, let $x_u \in \X$ be a point at which $c_u^* := \langle u,\phi(x_u) - \mean \rangle  = \min_{x\in \X} \langle u,\phi(x) -\mean \rangle$. As before, for $c\in \mathbb{R}$, let $f_{u,c} = \chi \{\langle u, \phi(\cdot) - \mean \rangle \leq c \}$ and observe that 
$|\langle u, \phi(x_u) - \phi(x)\rangle| \leq L \|x_u - x\|$.   
Therefore, with $r_u = (c - c_u^*)/L$ and
whenever $c > c_u^*$, 
\begin{align*}
	P f_{u,c} &\geq \int_\X b' \times \chi\{\langle u, \phi(x) - \mean \rangle \leq c \} dx \geq \int_{B(x_u,r_u)\cap \X} b' = b'\, \vol(B(x_u,r_u)\cap \X).
\end{align*}
For example, when $\X = [0,1]$, $u$ any element in $\cH$ with $\|u\|=1$ and $c > c_u^*$ such that $x_u - (c-c_u^*)/L \geq 0$, it follows that $\vol(B(x_u,r_u) \cap \X) \geq r_u$ and $P f_{u,c} \geq b'r_u = b' (c- c_u^*)/L$. 

This lower bound on $P f_{u,c}$ can directly be combined with a metric entropy bound. If we want to use instead a \textit{Rademacher complexity bound} then we have to apply $\psi_\gamma$ to $\langle u, \phi(\cdot) - \mean\rangle -c$. 
Under the above Lipschitz assumption for any $u, \|u\|=1$, and whenever $c_u^* \leq c - \gamma$,
\begin{align*}
	&1 = \psi_\gamma(\langle u, \phi(x) - \mean \rangle -c ) 
	\enspace \Leftarrow \enspace c-\gamma \geq  \langle u, \phi(x) - \mean \rangle  \\
	&\Leftarrow \enspace c - \gamma - c_u^*  \geq \langle u, \phi(x) - \phi(x_u) \rangle
	\enspace \Leftarrow \enspace  c - \gamma - c_u^*  \geq L \|x - x_u\|.
\end{align*}
Also, $\psi_\gamma(\langle u, \phi(x) - \mean \rangle -c)$ is strictly positive whenever $L \|x - x_u\| \leq c - c_u^*$.

Let $r_{u,1} = (c - \gamma - c_u^*)^+/L$. For $x \in B(x_u,r_{u,1})$ we have that $\psi_\gamma(\langle u, \phi(x) - \mean \rangle -c) =1$ which gives us right away the following lower bound 
\begin{equation}
	P\psi_\gamma(\langle u, \phi(\cdot) - \mean \rangle - c) \geq b' \vol(\X \cap B(x_u,r_{u,1})).  
\end{equation}
This bound can now be combined with the Rademacher complexity bounds. However, to say anything concrete about the size  of the empirical convex set some knowledge of $c_u^*$ is required. In the Section \ref{sec:diameter} we derive approaches to measure the width of $C$ in any direction, then we derive lower bounds on $c_u^*$. We combine these bounds in Section \ref{sec:bringing_it_together} with the above bound.

\subsubsection{Example 4: Data attaining values in a subset} \label{sec:example_subset}
Working with empirical convex sets has the advantage that the empirical convex set is adapted to the support of the distribution, if $P$ has support $S$ on $\X$ then $C_n$ converges to $\cch \{k(x,\cdot) : x \in S \}$ (see \cite[Def.411N]{FREM} for the defintion of support). Instead of showing this under a density and Lipschitz assumption we are using an assumption on the covariance operator.  

A simple way to deal with $S$ is to consider the space $\cH_S = \{ h\!\upharpoonright\!S : h \in \cH \}$ which is again an RKHS with kernel $k_S = k\!\upharpoonright\! S \times S$. 
We discuss $\cH_S$ and how covariance operators are naturally adapted to $S$ at length in Section \ref{sec:data_subset_covariance_arg}. For the moment it is sufficient to note that for all $h\in \cH$, 
\[
\langle \tilde{\mathfrak{C}}_c h,h \rangle = \langle \tilde{\mathfrak{C}}^S_c h\!\upharpoonright\! S, h\!\upharpoonright\! S\rangle_{\cH_S}, 
\]
where $\tilde{\mathfrak{C}}_c$ is the centered covariance operator and $\tilde{\mathfrak{C}}_c^S$ the corresponding operator for the RKHS $\cH_S$. In Section \ref{sec:data_subset_covariance_arg} we also show that $\cH_S$ is naturally linked to the affine subspace spanned by $k(x,\cdot), x\in S$ and that statements about the behavior of $C_n$ can be derived by analyzing $\cH_S$. In particular, the eigenfunctions of $\tilde{\mathfrak{C}}_c$ which have eigenvalue zero are almost surely constant on $S$ and are all mapped to the same one dimensional subspace of $\cH_S$. Important for the analysis later on are the eigenfunctions which have non-zero eigenvalues and are therefore not constant on $S$. Let $u \in \cH_S$ be an eigenfunction of $\tilde{\mathfrak{C}}_c^S$ with eigenvalue $\bar \lambda$. Let $\phi_S$ be the feature map corresponding to $k_S$ and $\mean_S$ the corresponding mean embedding.
The function $u$ has by definition norm one and for $\gamma > 0, c \in \mathbb{R}$, and $X$ a random variable with law $P$, 
\begin{align*}
P\psi_\gamma(\langle u,\phi_S(\cdot) - \mean_S \rangle_{\cH_S} -c )
&
\geq \Pr( - (u(X) -  E(u(X))) \geq - \gamma - c). 
\end{align*}
In the following, let $Z = - (u(X) - E(u(X)))$ and write 
$Z = Z^+ - Z^-$ where $Z^+ = Z \times \chi\{Z \geq 0\}, Z^-= Z \times \chi\{Z \leq 0\}$. Since $Z$ has  mean zero we have that $E(Z^+) = E(Z^-)$.
Whenever $\|k\|_\infty < \infty$ we also have that $E((Z^+)^2) \leq \|k\|_\infty^{1/2} E(Z^+)$ and $E(Z^+) = E(Z^-) \geq E((Z^-)^2) / \|k\|_\infty^{1/2}$. Furthermore, 
\[
\bar \lambda = E(Z^2)  = E((Z^+)^2) + E((Z^-)^2) \leq E((Z^+)^2) + \|k\|_\infty^{1/2} E(Z^+) \leq 2 \|k\|_\infty^{1/2} E(Z^+).
\]
Consider now $\gamma,c$ such that $ 0 < -\gamma -c \leq  \bar \lambda /2 \|k\|_\infty^{1/2} 
\leq E(Z^+)$ 
then the Paley-Zygmund inequality yields
\begin{align*}
	\Pr(Z \geq -\gamma - c) &= \Pr(Z^+ \geq -\gamma -c) \geq \frac{(E(Z^+) - (-\gamma -c))^2}{E((Z^+)^2)} \\
			       &\geq \frac{(\bar \lambda/2 \|k\|_\infty^{1/2} - (-\gamma -c))^2}{\|k\|_\infty} = (\bar \lambda/2 \|k\|_\infty  - (-\gamma -c)/\|k\|_\infty^{1/2})^2.
\end{align*}
In particular, when $-\gamma -c =  \bar \lambda /8\|k\|_\infty^{1/2}$,
\begin{equation} \label{eq:lbnd_P_cov}
P\psi_\gamma(\langle u,\phi_S(\cdot) - \mean_S \rangle_{\cH_S} -c )
\geq \bar \lambda^2/8\|k\|_\infty.
\end{equation}

\subsection{Width of the convex set $C$} \label{sec:diameter}
The width of a convex set plays an important role when trying to control the convergence behavior of various convex approximation algorithms. By the width of the convex set $C = \cch\{\phi(x) : x \in \X\}$, where $\X$ is as usual a measurable space and $\phi$ is a feature map,  we mean the size of the projection of $C$ on a function of norm one within the RKHS corresponding to $\phi$,
\[
\diamh(C) := \sup_{x\in \X} \langle h, \phi(x) \rangle  - \infd_{x\in \X} \langle h, \phi(x) \rangle =
\sup_{x\in \X} h(x) - \infd_{x\in \X} h(x),
\]
where $h \in \cH$, $\|h\| =1 $.

There is a simple relationship between the width of the convex $C$ in direction $h$ and how close $h$ is to a constant function. 
In the following, let $\bm 1$ denote the function that is equal to one for all $x\in \X$ and let $\| f\|_\infty =
\sup_{x\in \X} |f(x)|$ for any function $f: \X \to \R$, allowing for $\|f\|_\infty = \infty$. For any $h\in \cH$, $\|h \| =1$,   
\begin{equation} \label{eq:constant_vs_diameter}
	\diamh(C) = 2 \inf_{c\in \mathbb{R}} \|h - c \bm 1\|_\infty,
\end{equation}
In particular, $h$ is a constant function  if, and only if,  $\diamh(C) = 0$. 

Small widths of $C$ in any direction $h$ are a concern when trying to approximate $\mean$ because various performance bounds of algorithms discussed in later sections depend on a  lower bound on the width; the higher this lower bound the faster the convergence. To be precise, the set $C$ can lie in an affine subspace that is not all of $\cH$ and the algorithms we study depend only on the affine subspace. Denote the closure of the affine span of $C$ by $\caff C$. In other words,  $\caff C$ is  the closure of $\{\alpha_1 h_1 + \ldots + \alpha_n h_n  : n\in \mathbb{N}, h_i \in C, \alpha_i \in \R \text{ for all } i \leq n \}$ 
which is a closed affine subspace. Furthermore, let $U_C = \caff C - f$, where $f$ is any element of $C$, then $U_C$ is a closed subspace of $\cH$. Observe that the dimension of $U_C^\perp$ is at most one since for $h \in U_C^\perp$ it holds that $h(x) = \langle h,\phi(x) \rangle = \langle h, \phi(y) \rangle = h(y)$ for all $x,y \in \X$, and, hence, only constant functions can lie in $U_C^\perp$.

The key quantity which influences the behavior of the algorithms is now 
\[
\inf_{h \in U_C, \|h\| =1} \diamh(C).
\]
If $\bm 1$ lies in the RKHS then $\bm 1$, and all constant functions, lie in $U_C^\perp$ and we do not have to worry about them. The important question is now, how closely can an $h \in U_C, \|h\| =1$, approximate a constant function.


Before leveraging Equation \eqref{eq:constant_vs_diameter} for controlling the width of $C$  we recall some topological properties. If $\X$ is compact and $\phi$ is continuous then
$\phi[\X]$ is compact \cite[Thm. 3.1.10]{ENG89}. Due to Mazur's Theorem  $C = \cch \phi[\X]$ is then also compact \cite[Thm. 12, p.51]{DIES77}. This implies, in particular, that there exists no norm ball inside $\phi[\X], \ch \phi[\X]$ or $\cch \phi[\X]$ whenever $\cH$ is infinite dimensional because a closed norm ball inside the compact set $\cch \phi[\X]$ would be compact \cite[Thm. 3.1.2]{ENG89}. However, closed norm balls in infinite dimensional Hilbert spaces are not compact \cite[S. I.2.7]{Wer02}. Similarly, there exist no norm ball $B$ such that $B \cap \caff C$ lies inside $C$.

Furthermore, whenever $\cH$ is infinite dimensional, $C$ is compact, $(e_n)_{n\geq 1}$ is an orthonormal sequence in $\cH$ and $\epsilon >0$, it holds that for only finite many of the $e_n$ the width $\diame_{e_n}(C)$ can be greater than  $\epsilon$. Assume otherwise and let $I:\mathbb{N} \rightarrow \mathbb{N}$ be an enumeration of all the elements $e_n$ for which the width is greater than $\epsilon$. Furthermore, assume w.l.o.g. that $C$ is centered in the sense that 
for all $n \in \mathbb{N}$, $\sup_{u \in C} \langle u,e_{I(n)}\rangle + \infd_{u\in C} \langle u, e_{I(n)} \rangle = 0$. Since $C$ is compact $\sup_{u \in C} \langle u,e_{I(1)}\rangle$ is attained at some point $u_1 \in C$. Inductively, we can select a countably infinite sequence of points $(u_n)_{n\geq 1}$ in $C$ such that $\|u_n - u_m\| \geq \epsilon/4 > 0$ whenever $n\not = m$: given points $u_1,\ldots, u_n$ there exists $m\in \mathbb{N}$ such that $\max_{i\leq n} |\langle u_i, e_{I(m')} \rangle | \leq \epsilon/4$ for all $m' \geq m$. Let $u_{n+1}$ be a point in $C$ such that  $\epsilon/2 \leq \sup_{u \in C} \langle u,e_{I(m')}\rangle = \langle u_{n+1}, e_{I(m')} \rangle$. Then $\|u_{n+1} - u_i\| \geq \epsilon/4$ for all $i\leq n$. Hence, we have countably infinitely many points with distance at least $\epsilon/4$ between them. These points give rise to an open cover of $C$ that does not contain a finite sub-cover, contradicting the compactness of $C$.  

This last statement implies that whenever $\cH$ is infinite dimensional, $C$ is compact and $\|k\|_\infty < \infty$ then for any $\epsilon>0$ there are infinitely many orthonormal elements $h_1,h_2, \ldots$ in $\cH$ such that for each $i$, $\sup_{x\in \X} h_i(x) - \inf_{x\in \X} h_i(x) \leq \epsilon$. Furthermore, at most one of the $h_i$'s can be constant, because if $h_i$ and $h_j$, $i\not=j$, were both constants then they clearly would not be orthogonal. 

\subsubsection{Interpolation spaces} \label{sec:interpol}
Interpolation spaces are useful when trying to quantify the width of $C$ because we can use them to measure how well the constant functions can be approximated. Consider $\bm 1$ as an element of $C(\X)$ and let $\cH$ be an RKHS that is continuously embedded in $C(\X)$; for simplicity we will treat $\cH$ as a subset of $C(\X)$.
Furthermore, define for $\theta \in (0,1)$ the interpolation space $\cH_\theta:= (C(\X),\cH)_\theta = \{f : \|f\|_\theta < \infty \}$, where 
$\|f\|_{\theta} = \sup_{t > 0} K(f,t)/t^\theta$ and $K:C(\X) \times (0,\infty)\to \mathbb{R}$ is the K-functional defined by
$K(f,t) = \inf_{h\in \cH}( \|f- h\|_\infty + t \|h\|)$. If $\bm 1 \in \cH_\theta$ then for any $r>0$ there exists an element $h\in \cH$, $\|h\| \leq r$, such that $\|\bm 1 - h\|_{\infty} \leq \|\bm 1\|_\theta^{1/(1-\theta)} r^{-2\theta/(1-\theta)}$.
In particular, for any $\epsilon >0$ there exists an $r$ and $h\in\cH, \|h\|\leq r$, such that $\|\bm 1 - h\|_\infty < \epsilon$. Therefore, with $c = 1/\|h\|$ and $h^* = h/\|h\|$, i.e. $\|h^*\|=1$, it holds that $\|c \bm 1 - h^*\|_\infty < \epsilon$ and $\diame_{h^*}(C) \leq 2 \epsilon$. If $\bm 1$ itself does not lie in $\cH$ then $h^*$ lies in the affine span of $C$ and is a problematic direction. 

In the finite dimensional case the situation is simpler. If $\cH$ is 
finite dimensional and if the constant function is not in $\cH$ then it is also not in any of the interpolation spaces since $\cH_\theta$ is a subset of the closure of  $\cH$ in $C(\X)$. But because $\cH$ is finite dimensional the closure of $\cH$ is equal to $\cH$, i.e. $\cH_\theta = \cH$ for all $\theta \in (0,1)$. The K-functional can be used in this case to quantify how well $\bm 1$ can be approximated. 

The K-functional has a few useful properties with regard to the constant function. Observe that
$K(\bm 1,1) \leq \|\bm 1 - 0\|_\infty = 1$, which does not need any conditions on the kernel function.    
When $\|k\|_\infty \leq 1$ then we also have for any $h\in \cH$ that 
\[
	\|h - \bm 1\|_\infty + \|h\| \geq (1- \|h\|) + \|h\| = 1
\]
since $\|h\|_\infty \leq \|k\|^{1/2}_\infty \|h\| \leq \|h\|$. Hence, $K(\bm 1,1) = 1$ whenever $\|k\|_\infty \leq 1$. 
It is straight forward to generalize this to any $c \in \mathbb{R}$ whenever $\|k\|_\infty < \infty$, i.e.
\begin{equation}
	K(c\bm 1,\|k\|^{1/2}_\infty) = c.
\end{equation}
Also, for any $c \in \mathbb{R}$, $t>0$ we have the trivial bound $K(c\bm 1,t) \leq c$. For $t< \|k\|^{1/2}$ the value $K(c \bm 1,t)$ can be smaller than $c$. If  $K(c \bm 1,t) < c$ then for any $\epsilon >0$ there exists a function $h\in \cH$, $h\not = 0$, such that
\[
	K(c\bm 1,t) + \epsilon \geq \|c \bm 1 - h\|_\infty  + t \|h\|,  
\]
and the norm of such an element $h$ is bounded by 
\[
\frac{c - K(c \bm 1,t)  - \epsilon}{\|k\|_\infty^{1/2}}   \leq  \|h\| \leq \frac{K(c\bm 1,t) + \epsilon}{t}.
\]  
Furthermore, 
\begin{equation*} 
	K(c \bm 1, t) = |c| \inf_{h\in \cH} (\|\bm 1 - h/c\|_\infty + t \|h/c\|) = |c| K(\bm 1,t)
\end{equation*} 
and a minimizer exists for $K(\bm 1,t)$ if, and only if, there exists a minimizer for $K(c \bm 1,t)$. The relation between these minimizers is straight forward: $h^*$ is a minimizer of $K(\bm 1,t)$ if, and only if, $c h^*$ is a minimizer of $K(c \bm 1,t)$. 

When $\cH$ is finite dimensional and $\|k\|_\infty < \infty$ then there exists a minimizer of the K-functional. For any $c\in \mathbb{R}, t>0$, 
\[
	K(c \bm 1,t) = \inf_{h\in \cH} \| c \bm 1-h\|_\infty + t\|h\| = \min_{h \in A}  \|c \bm 1-h\|_\infty + t\|h\|,  
\]
where $A = \{ h : h \in \cH, \|h\| \leq (c/t) \wedge (1+ K(c \bm 1,t)/t)\}$. This holds because $A$ is compact and 
$h\mapsto \|c \bm 1-h\|_\infty + t \|h\|$ is continuous whenever $\|k\|_\infty < \infty$. The norm of such a minimizer $h_{t,c}^*$
is bounded by $(c \wedge K(c \bm 1,t))/t$ (the additional one in the definition of $A$ is, in fact, unnecessary as the above argument shows that the infimum is attained). Hence, we have that
\[
	\frac{1 - K(\bm 1,t)}{\|k\|_\infty^{1/2}}   \leq  \frac{\|h^*_{t,c}\|}{c} \leq \frac{1 \wedge K(\bm 1,t)}{t}.
\]
In fact, we can say more about the norm of $h^*_{t,c}$ in the finite dimensional case. Notice first that $\|h^*_{t,c}\|_\infty \leq 2c$ since otherwise $0$ would be a better approximation of $c \bm 1$. Since the RKHS is finite dimensional this implies an upper bound on the RKHS-norm of $h^*_{t,c}$ as the next lemma shows. The lemma is actually of major importance in this paper and we develop it further than what is needed for the current discussion.  
In particular,  the second part of the Lemma is concerned with the relation between $\|h\|_\infty$ and $\|h\|$ when Mercer's theorem (e.g. \cite[Thm.4.49]{STEIN08}) applies. Recall that Mercer's theorem provides us under certain conditions with 
orthonormal elements $e_1^\bullet, \ldots, e_d^\bullet$ in $L^2(\X,\mu)$, $\mu$ being a Borel measure on $\X$, where $e_1,\ldots, e_d$ are continuous functions and such that $\tilde e_i = \tilde \lambda_i e_i$ for all $i\leq d$, where $\tilde \lambda_1 \geq \ldots \geq \tilde \lambda_d > 0$, lie in the RKHS $\cH$ and are an orthonormal basis of $\cH$. The kernel function has to be continuous for Mercer's theorem to hold. 
There are various forms of Mercer's theorem together with a variety of assumptions for the theorems to hold. Instead of making such assumption the following lemma 
assumes directly in its second part that the $e_1,\ldots, e_d$ exist and have the above properties.

\begin{lemma} \label{lem:tighter_lbnd}
	Let $\X$ be a set, $k$ a kernel on $\X$ such that the corresponding RKHS $\cH$ is $d$-dimensional. For any $c\in  \mathbb{R}$, $\{h : \|h\|_\infty \leq c \}$ is a compact subset of $\cH$. Furthermore, for $h \in \cH$ and any points $x_1,\ldots,x_d$ for which $k(x_1,\cdot),\ldots, k(x_d,\cdot)$ are linearly independent,  
\[
(\lambda_d/d)^{1/2}	\, \|h\| \leq   \|h\|_\infty,
\]	
where $\lambda_d$ is the smallest eigenvalue of the kernel matrix for the points $x_1,\ldots,x_d$.    

Whenever $\X$ is a topological space, $k$ is a continuous kernel function on $\X$ and there exist continuous functions $e_i: \X \to \mathbb{R}, i \leq d,$ and a Borel probability measure $\mu$ on $\X$ such that $e_1^\bullet,\ldots, e_d^\bullet$ are orthonormal in $L^2(\X,\mu)$, and $\{\tilde e_i\}_{i\leq d}$ is an orthonormal basis of $\cH$ where $\tilde e_i = \tilde \lambda_i^{1/2} e_i$, for all $i \leq d$, and $\tilde \lambda_1 \geq \tilde \lambda_2 \ldots \geq \tilde \lambda_d>0$, then
\[
 \tilde \lambda_d^{1/2} \, \| h\| \leq \|h\|_\infty.	
\]
\end{lemma}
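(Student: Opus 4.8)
\emph{Proof plan.} I would treat the three assertions in the order in which they are stated, since each reduces to a short computation once $h$ is written in a suitable basis.

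For the compactness claim the plan is to pass to the linear subspace $N = \{h \in \cH : \|h\|_\infty < \infty\}$ of $\cH$. Because $\cH$ is $d$-dimensional, $N$ is finite dimensional, hence a closed subspace of $\cH$. On $N$ the functional $\|\cdot\|_\infty$ is not merely a seminorm but a genuine norm: if $h \in \cH$ vanishes identically on $\X$ then $\langle h, k(x,\cdot)\rangle = 0$ for every $x$, and since $\spn\{k(x,\cdot) : x \in \X\}$ is dense in $\cH$ this forces $h = 0$. On a finite-dimensional space all norms are equivalent, so $\|\cdot\|_\infty$ and $\|\cdot\|_\cH$ induce the same topology on $N$; consequently $\{h \in N : \|h\|_\infty \le c\}$ is closed and bounded with respect to $\|\cdot\|_\cH$, hence compact. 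Since every $h \in \cH \setminus N$ satisfies $\|h\|_\infty = \infty > c$, this set equals $\{h \in \cH : \|h\|_\infty \le c\}$, which settles the first part.

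For the bound $(\lambda_d/d)^{1/2}\,\|h\| \le \|h\|_\infty$ the plan is to use $k(x_1,\cdot),\dots,k(x_d,\cdot)$ as a basis of $\cH$ (possible since $\dim\cH = d$ and these elements are linearly independent): write $h = \sum_{i=1}^d \alpha_i k(x_i,\cdot)$ with $\alpha \in \R^d$, and let $K = (k(x_i,x_j))_{i,j}$, which is positive definite with smallest eigenvalue $\lambda_d > 0$. The reproducing property gives $(h(x_1),\dots,h(x_d))^\top = K\alpha$ and $\|h\|^2 = \alpha^\top K \alpha$. Then $\|h\|_\infty \ge \max_j |h(x_j)| = \|K\alpha\|_\infty \ge \|K\alpha\|_2/\sqrt{d}$, and since $K^2 \succeq \lambda_d K$ (as $K \succ 0$) one has $\|K\alpha\|_2^2 = \alpha^\top K^2 \alpha \ge \lambda_d\, \alpha^\top K \alpha = \lambda_d \|h\|^2$; chaining these inequalities yields the claim. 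For the Mercer bound $(\tilde\lambda_d)^{1/2}\,\|h\| \le \|h\|_\infty$ the plan is to route through $L^2(\X,\mu)$: expand $h = \sum_{i=1}^d \beta_i \tilde e_i$ in the orthonormal basis of $\cH$, so $\|h\|^2 = \sum_i \beta_i^2$; since $\tilde e_i = (\tilde\lambda_i)^{1/2} e_i$ pointwise, the $L^2(\mu)$-class of $h$ is $\sum_i \beta_i (\tilde\lambda_i)^{1/2} e_i^\bullet$, and orthonormality of $\{e_i^\bullet\}$ in $L^2(\mu)$ gives $\|h\|_{L^2(\mu)}^2 = \sum_i \beta_i^2 \tilde\lambda_i \ge \tilde\lambda_d \sum_i \beta_i^2 = \tilde\lambda_d \|h\|^2$. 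Since $\mu$ is a probability measure, $\|h\|_{L^2(\mu)} \le \|h\|_\infty$, and the bound follows.

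I do not expect a genuine obstacle here; the two places that need care rather than ingenuity are the first part, where one must account for the possibility that $\|\cdot\|_\infty$ takes the value $+\infty$ on part of $\cH$ when $k$ is permitted to be unbounded (handled by restricting to $N$), and the third part, where one must verify that passing from the $\cH$ inner product to the $L^2(\mu)$ inner product introduces exactly the weights $\tilde\lambda_i$ — which is the mechanism by which the smallest Mercer eigenvalue enters the estimate.
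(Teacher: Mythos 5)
Your proposal is correct and follows essentially the same route as the paper: the bound $(\lambda_d/d)^{1/2}\|h\|\le\|h\|_\infty$ comes from evaluating $h$ at $x_1,\dots,x_d$ and using the smallest eigenvalue of the kernel matrix (your step $K^2\succeq\lambda_d K$ replaces the paper's bound via $\|K^{-1}\|_{op}$ and a trace inequality, to the same effect), and the Mercer bound is obtained exactly as in the paper by comparing $\|h\|_{L^2(\mu)}$ with $\|h\|$ and with $\|h\|_\infty$. Your compactness argument, via equivalence of norms on the finite-dimensional subspace of bounded functions, is a harmless variant of the paper's argument, which instead inverts the evaluation operator $A:\cH\to\R^d$ using the open mapping theorem.
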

\begin{proof}
	\textbf{(a)} For the first statement let $x_1,\ldots, x_d$ be such that $k(x_1,\cdot), \ldots, k(x_d,\cdot)$ are linearly independent. Observe that such points always exist: assume that $d' < d$ points $x_1,\ldots, x_{d'}$ exist such that any $k(x,\cdot)$ lies in the span of $k(x_1,\cdot),\ldots, k(x_{d'}, \cdot)$. Now any $h \in \cH$ of the form $\sum_{j=1}^N \alpha_j k(z_j,\cdot)$  with coefficients $\alpha_j$ and $z_j \in \X$ can be written as a sum $\sum_{i=1}^{d'} \beta_i k(x_i,\cdot)$ with suitable coefficients $\beta_i$. The family of functions $h$ that can be written this way lies dense in $\cH$, that is, $\spn \{k(x_i,\cdot) : i \leq d' \}$ is a dense subspace of $\cH$. But this subspace is closed and therefore equal to $\cH$. Hence, $\cH$ is $d'$-dimensional contradicting our assumption about $\cH$.     

Consider the linear operator $A:\cH \rightarrow \mathbb{R}^d$, defined for any $f\in \cH$ by
\[
A f = (f(x_1), \ldots, f(x_d))^\top = (\langle f,k(x_1,\cdot) \rangle, \ldots, \langle f,k(x_d,\cdot) \rangle)^\top.
\]
The operator is bounded since $\|A f\|^2_{\mathbb{R}^d} \leq \|f\|^2 \sum_{i=1}^d k(x_i,x_i)$ and $\|A\|^2_{op} \leq \sum_{i=1}^d k(x_i,x_i)$.  

$A$ is also injective. One way to see this is by means of Gram-Schmidt orthogonalization through which we gain an orthonormal basis $e_1,\ldots, e_d$ of $\cH$ from $k(x_1,\cdot), \ldots, k(x_d,\cdot)$ and for any $f,g \in \cH$ it holds that  $f=g$ if, and only if, $\langle e_i,f\rangle = \langle e_i, g \rangle $ for all $i\leq d$ if, and only if, $\langle k(x_i,\cdot), f\rangle = \langle k(x_i, \cdot), g \rangle $ for all $i\leq d$. 

Since $A$ is injective and the dimension of $\cH$ is $d$ it follows that $A$ is surjective and invertible.
By the open mapping theorem $A^{-1}$ is continuous and $A^{-1}[\{v : v\in \mathbb{R}^d, \|v\|_\infty \leq c \}  ]$ is a compact  subset of $\cH$.

\textbf{(b)} 
Let $K$ be the kernel matrix corresponding to the points $x_1, \ldots, x_d$. The rows of the kernel matrix are linearly independent since they are the images of the linearly independent elements $k(x_1,\cdot), \ldots, k(x_d,\cdot)$ under the isomorphism $A$. Hence, $K$ is invertible and for any $y\in \mathbb{R}^d$, with $\alpha = K^{-1}y$, 
\[
A(\sum_{i=1}^d \alpha_i k(x_i,\cdot)) = \sum_{i=1}^d \alpha_i (k(x_i,x_1), \ldots, k(x_i,x_d))^\top =  K \alpha = y.
\]
In particular, for $f= \sum_{i=1}^d \alpha_i k(x_i,\cdot)$, with $\alpha_i \in \mathbb{R}$, it follows that $\alpha = K^{-1} A(f)$. We have a useful inner product on $\mathbb{R}^d$ given by $\langle x, y\rangle_{K^{-1}} = x^\top K^{-1} y$. For arbitrary $f,g \in \cH$ with $f = \sum_{i=1}^d \beta_i k(x_i,\cdot)$ and $g = \sum_{i=1}^d \alpha_i k(x_i,\cdot)$, 
\[
\langle f,g \rangle = \beta^\top K \alpha = (K^{-1}Af)^\top K (K^{-1} Ag) = \langle Af, Ag\rangle_{K^{-1}}.  
\]
Applying this to $h$,
\begin{align*}
\|h\|^2 &= (Ah)^\top K^{-1} (Ah) = \tr(K^{-1} (Ah)(Ah)^\top) \\
&\leq \|K^{-1}\|_{op} (Ah)^\top(Ah) \leq d \|K^{-1}\|_{op} \|h\|^2_\infty.
\end{align*}

\textbf{(c)} Now assuming that $k$ is continuous and the $e_1,\ldots, e_d$ have the assumed properties, we can write  any $h \in \cH$ as  $h = \sum_{i=1}^d \alpha_i \tilde e_i$, $\sum_{i=1}^d \alpha_i^2 = \|h\|^2$, and
\[
\|h \|^2_2 = \sum_{i=1}^d \alpha_i^2 \tilde \lambda_i \geq \tilde \lambda_d \|h\|^2.
\]
Since $\|h\|_2 \geq \tilde \lambda_d \|h\|$ and $\mu$ is a probability measure, there has to be some point $x \in \X$ at which $|h(x)| \geq \tilde \lambda_d \|h\|$. 
\end{proof}
\begin{example}
 	Consider the space $\X = \{1,\ldots, d\}$ with kernel function $k(x,y) = 1$ if $x=y$ and zero otherwise. Then $\|h\|^2 = \sum_{i=1}^d |h(i)|^2$ and if $h(i) = c>0$ for all $i\leq d$ then $\|h\| = \sqrt{d} \|h\|_\infty$ which matches the bound if we use $x_1 =1, \ldots, x_d = d$.  
\end{example}

Coming back to the case of $\cH$ being $d$-dimensional, $\|k\|_\infty < \infty$ and $x_1,\ldots,x_d \in \X$ be any points such that
$k(x_1,\cdot),\ldots, k(x_d,\cdot)$ are linearly independent and the kernel matrix is full rank. Furthermore, let $\lambda_d$ be the smallest eigenvalue of the kernel matrix. Consider the map $\psi(h) = \|\bm 1 - h\|_\infty$. By a similar argument as above we can infer that \textit{there exists a minimizer of $\psi$}.
However, \textit{the minimizer is usually not unique}. Consider, for example, $\X = [-1,1]$ and the RKHS consisting of linear and quadratic functions such that $x\mapsto x$ and $x\mapsto x^2$ both have norm $1$. Then both of these functions minimize the distance to $\bm 1$ as does $0$.
Any minimizer $h$ of $\psi$ has norm $\|h\|_\infty \leq 2$ and, therefore, according to Lemma  \ref{lem:tighter_lbnd},  it has an RKHS norm $\|h\| \leq (4d/\lambda_d)^{1/2} =: r$. In particular, all minimizers of $\psi$ are included in the compact ball $B = \{ h : h \in \cH, \|h \| \leq r\}$. Let $A$ be the set of all minimizers of $\psi$ then $A$ is a compact set: if $A$ is finite then this follows right away. Otherwise, take a convergent sequence $\{h_n\}_{n\in \mathbb{N}}$ in $A$ and denote the limit by $h$. Since for all $x\in \X$,  $|h(x) - 1| = \lim_{n\to \infty} |h_n(x) -1|  \leq \min_{h\in \cH} \|\bm 1 -h \|_\infty$ and $h \in A$. Finally, consider the norm as a function on $A$. The norm is continuous and the image of the compact set $A$ under the norm is a compact subset in $\mathbb{R}$. Hence, there exists an element $h^*$ in $A$ of maximal norm. Let us assume first that $h^*\not = 0$. For such an element $h^*$ let $b = 1/\|h^*\|$ and note that$\|b \bm 1 - b h^*\|_\infty = \inf_{c \in \mathbb{R}} \min_{\|h\|=1} \|c \bm 1 - h\|_\infty$. Otherwise, there is an element $\tilde h, \|\tilde h\|=1,$ and a $c$ such that $\|c \bm 1 - \tilde h\|_\infty < \|  b \bm 1 - b h^*\|$.  The constant $c$ cannot be equal to $b$ since then $\|\bm 1 - \tilde h/b \|_\infty < \| \bm 1 - h^*\|$ in contradiction to our assumption on $h^*$. It also cannot be larger than $b$ because then
$\|\bm 1 - \tilde h /c \|_\infty < (b/c) \|\bm 1- h^*\|_\infty < \|\bm 1 -h^*\|_\infty$ which is again in contradiction to $h^*$ being a best approximation of $\bm 1$. But $c$ can also not be smaller than $b$; whenever $\|c \bm 1 - \tilde h\|_\infty$ is minimal it follows that $\|\bm 1 - \tilde h/c\|_\infty$ is minimal and equal to $\|\bm 1 - h^*\|_\infty$. However, $\|\tilde h / c \| = 1/c > 1/b = \|h^*\|$ in contradiction to the assumption that $\|h^*\|$ has maximal norm within $A$. 
Therefore,
\[
	\lim_{t \rightarrow 0} K(b \bm 1,t) = \|b\bm 1 - b h^*\|_\infty = \inf_{c\in\mathbb{R}} \min_{\|h\|=1} \|c\bm 1 - h\|_\infty = (1/2) \inf_{\|h\| =1} \diam_h(C) 
\]
and, since $K(b\bm 1,t) = b K(\bm 1,t) \geq (1/r) K(\bm 1,t)$,  it follows that 
\begin{equation} \label{eq:K_functional_LB}	 
	\left(\frac{\lambda_d}{d}\right)^{1/2} \lim_{t\rightarrow 0} K(\bm 1,t) \leq \inf_{\|h\|=1} \diam_h(C).  
\end{equation}
If $h^* = 0$ then $\lim_{t\to 0} K(\bm 1,t) = \|\bm 1\|_\infty = 1$ but also for any $c\in \mathbb{R}$, $h \in \cH$,  $\|c \bm 1 - h\|_\infty >
\|c \bm 1\|_\infty$ since otherwise $h/c$ would  be a minimizer of norm greater than zero, contradicting the assumption that $h^* = 0 $ is the minimizer with the largest norm. 
For $h \in \cH$, there is a sequence of points $x_1,x_2,\ldots$ such that $\lim_{n\to \infty} h(x_n)$ converges and $|h(x_n)|\to \|h\|_\infty$. Fix one such sequence and let $\sigma(h)$ be the sign of all but finitely many elements of this sequence $h(x_1),h(x_2),\ldots$, e.g. if $\sigma(h)$ is positive and $h$ attains maxima then there is a point $x$ such that $h(x) = \|h\|_\infty$. By another application of Lemma  \ref{lem:tighter_lbnd}  
it follows for any $h\in \cH$, $\|h\|=1$, that
\[
\diam_h(C)  = \| \sigma(h) \|h\|_\infty  \bm 1 - h \|_\infty > \|h\|_\infty \geq  \left(\frac{\lambda_d}{d}\right)^{1/2}  \lim_{t\to 0} K(\bm 1,t) 
\]
and 
\[
\inf_{\|h\|=1} \diam_h(C) \geq  \left(\frac{\lambda_d}{d}\right)^{1/2}  \lim_{t\to 0} K(\bm 1,t).
\]
We can set in the above derivation $r$ to  $(4/\tilde \lambda_d)^{1/2}$ when \textit{Mercer's theorem} applies, where $\tilde \lambda_d$ is the $d$-th eigenvalue of $T_k$. The bound then becomes 
\[
	\tilde \lambda_d^{1/2} \lim_{t\rightarrow 0} K(\bm 1,t) \leq \inf_{\|h\|=1} \diam_h(C). 
\]
These results are only meaningful if $\bm 1$ is not in the RKHS. In the next section we discuss an approach to remove constants from an RKHS which allows us, among other things, to extend these results to RKHSs that contain constants.

\subsubsection{Adding and removing constants}
It is sometimes useful to be able to remove constant functions from an RKHS or to add constant functions to an RKHS. There is an efficient way to do this by manipulating the kernel function.

In the following let $\mathcal{X}$ be some topological space and consider the p.s.d. functions $k:\mathcal{X} \times \mathcal{X} \to \mathbb{R}$ that lie in $\mathcal{L}^2(\mathcal{X} \times \mathcal{X})$ and denote these by 
$\mathcal{K}$. Furthermore, consider the partial order on $\mathcal{K}$ given by $k \succeq l$ if, and only if,
$k - l$ is p.s.d. where $k,l \in \mathcal{K}$. Also note that $\mathcal{K}$ is not a lattice, i.e. for $k,l \in \mathcal{K}$
the infimum $k\wedge l$ and the supremum $k \vee l$ will generally not be defined. 

For a function $f: \X \to \R$ we let $f \otimes f$ be the function that maps $(x,y)$ to $f(x)f(y)$ for any $x, y \in \X$. There is a simple criterion which tells us if $f \in \cH_k$ for a kernel function $k \in \mathcal{K}$. Assume that $f\otimes f \in \mathcal{L}^2(\mathcal{X} \times \mathcal{X})$, then $f\in \cH_k$ 
if, and only if, there exists a $c>0$ with $c^2 k \succeq f \otimes f$. In case that $f\in \cH_k$ it holds that $\|f\|_k = \inf \{c:  c^2 k \succeq f \otimes f\}$.

This observation motivates the following definitions. For an RKHS $\cH$ with kernel $k$ that does not contain $\bm 1$ let
\begin{equation} \label{def:k_plus}
	k^+ := k + \bm 1\otimes \bm 1 \text{\quad and \quad} \cH^+ := \cH_{k^+}. 
\end{equation}
The function $k^+$ is a kernel function being the sum of the kernel functions $k$ and $\bm 1 \otimes \bm 1$ and $\mathcal{H}^+$ is well defined. We denote the norm of $\mathcal{H}^+$ by $\|\cdot \|_+$ and we can observe that 
\[
	\| \bm 1\|_+ = \inf\{c : c^2 (k + \bm 1 \otimes \bm 1) \succeq \bm 1 \otimes \bm 1   \} \leq 1. 
\]
In fact, $\|\bm 1\|_+ = 1$ because otherwise there exists a $c < 1$ such that
\[
	c^2 k \succeq (1-c^2) \bm 1 \otimes \bm 1 \quad \Rightarrow \quad \Bigl(\frac{c}{\sqrt{1-c^2}}\Bigr)^2 k \succeq \bm 1 \otimes \bm 1 \quad	\Rightarrow  \quad \bm 1 \in \cH.
\]
We also have that $\mathcal{H} \subset \mathcal{H}^+$, since $k \preceq k^+$, and for $h\in \cH$,
\begin{equation} \label{eq:order_plus_leq}
\|h\|_+ \leq \|h\|.
\end{equation}
When the RKHS $\cH$ is finite dimensional then $\|h\|_+$ is actually equal to $\|h\|$. To show this we make use of the following lemma which is a simple extension of  \cite[Sec5.3]{PAUL16}. 
\begin{lemma} \label{lem:order_orthogonal}
Let $h_1,\ldots, h_d$ be linearly independent functions mapping from some topological space $\X$ to $\mathbb{R}$ and let $a_1,\ldots, a_d >0$ then $\kappa = \sum_{i=1}^d a_i h_i \otimes h_i$ is a kernel function, the functions $h_i$ lie in $\cH_\kappa$ and are orthogonal in $\cH_\kappa$. Furthermore, the dimension of $\cH_\kappa$ is $d$ and $\|h_i\|_\kappa =1/ \sqrt{a_i}$. 
\end{lemma}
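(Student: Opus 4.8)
The plan is to verify each of the four assertions — $\kappa$ is a kernel, the $h_i$ lie in $\cH_\kappa$, they are orthogonal, and the norm identities $\|h_i\|_\kappa = 1/\sqrt{a_i}$ plus $\dim \cH_\kappa = d$ — by exhibiting an explicit Hilbert space isometrically isomorphic to $\cH_\kappa$. First I would observe that each $a_i h_i \otimes h_i$ is positive semi-definite (for any points $x_1,\dots,x_n$ the matrix $(a_i h_i(x_r) h_i(x_s))_{r,s}$ is $a_i vv^\top$ with $v = (h_i(x_1),\dots,h_i(x_n))^\top$), hence a sum of such is p.s.d. and $\kappa$ is a kernel function; its RKHS $\cH_\kappa$ is well defined.

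Next I would identify $\cH_\kappa$ concretely. The natural candidate is the $d$-dimensional space $V = \operatorname{span}\{h_1,\dots,h_d\}$ equipped with the inner product that makes $\{\sqrt{a_i}\,h_i\}_{i=1}^d$ an orthonormal basis, i.e. $\langle \sum_i \alpha_i h_i, \sum_i \beta_i h_i\rangle_V = \sum_i \alpha_i\beta_i/a_i$ (well-defined because the $h_i$ are linearly independent, so coordinates are unique). To see that $(V,\langle\cdot,\cdot\rangle_V)$ is the RKHS of $\kappa$, I would check the two defining properties: (1) for each fixed $y$, the function $x\mapsto \kappa(x,y) = \sum_i a_i h_i(y) h_i(x)$ lies in $V$ — clear; and (2) the reproducing property: for $f = \sum_j c_j h_j \in V$,
\[
\langle f, \kappa(\cdot,y)\rangle_V = \Big\langle \sum_j c_j h_j, \sum_i a_i h_i(y) h_i \Big\rangle_V = \sum_i \frac{c_i \, a_i h_i(y)}{a_i} = \sum_i c_i h_i(y) = f(y).
\]
By uniqueness of the RKHS associated to a kernel (Moore–Aronszajn), $\cH_\kappa = V$ with this inner product, which immediately gives $\dim \cH_\kappa = d$, $h_i \in \cH_\kappa$, $\langle h_i, h_j\rangle_\kappa = \delta_{ij}/a_i$ (so the $h_i$ are orthogonal), and $\|h_i\|_\kappa = 1/\sqrt{a_i}$.

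I do not expect a serious obstacle here; this is essentially the finite-rank specialisation of the standard correspondence between feature maps and RKHSs, and the cited \cite[Sec.~5.3]{PAUL16} presumably already handles the single-term case $\kappa = h\otimes h$. The only point requiring a little care is linear independence: it is what guarantees the coordinate functionals $\sum_j c_j h_j \mapsto c_i$ are well-defined, hence that $\langle\cdot,\cdot\rangle_V$ is a genuine inner product (positive definite, not merely semi-definite) and that $\dim V = d$ exactly rather than $\le d$. Alternatively, and perhaps more in the spirit of the surrounding text, one could derive everything from the criterion recalled just before the lemma — $f\in\cH_\kappa$ iff $c^2\kappa \succeq f\otimes f$ for some $c$, with $\|f\|_\kappa = \inf\{c : c^2\kappa\succeq f\otimes f\}$ — by checking that $a_i^{-1}\kappa - h_i\otimes h_i = \sum_{j\ne i} a_j h_j\otimes h_j \succeq 0$ while no smaller constant works (using linear independence to find points where $h_i$ is nonzero but can be separated from the others); I would present the feature-space argument as the main line since it yields orthogonality and the dimension count simultaneously.
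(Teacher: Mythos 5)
Your proof is correct. The paper gives no argument of its own for this lemma—it simply states it as a simple extension of \cite[Sec.5.3]{PAUL16}—and your explicit construction (exhibit $V=\spn\{h_1,\ldots,h_d\}$ with the inner product making $\{\sqrt{a_i}\,h_i\}$ orthonormal, verify the reproducing property, and invoke uniqueness of the RKHS) is exactly the standard argument that citation encapsulates, with the role of linear independence correctly isolated.
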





Now, let $d< \infty $ be the dimension of $\cH$, choose orthogonal 
functions $h_1,\ldots, h_d \in \cH$, $h_1,\ldots,h_d \not = 0$, and define the kernel $\kappa = \sum_{i=1}^d (1/\|h_i\|^2) h_i \otimes h_i$.
Then $k = \kappa$. This follows because, according to the above lemma, both spaces consist of $\spn\{h_1,\ldots, h_d\}$,  $\|h_i\|_\kappa = \|h_i\|_k$, for all $i\leq d$, and the $h_i$'s are orthogonal in both spaces, i.e. $H_k = H_\kappa$ which implies that $k=\kappa$. The importance of this statement is that it shows that we can write the kernel as a finite sum of weighted tensor products.

From this description of $\kappa$ we also gain that  $k^+ = \sum_{i=1}^d a_i h_i \otimes h_i + \bm 1 \otimes \bm 1$ and, because $\bm 1$ is not in the original RKHS $\cH$, it follows that $\bm 1$ is linearly independent of $h_1,\ldots, h_d$ which implies that $\bm 1$ is orthogonal to $h_1,\ldots, h_d$ in $\cH^+$.

Consider now one of the $h_i$'s. We like to show that $\|h_i\|_+ \geq \|h_i\|$ which then implies, together with  \eqref{eq:order_plus_leq}, that $\|h_i\|_+ = \|h_i\|$ and $\|h\|_+ = \|h\|$ for all $h\in \cH$; the $h_i$'s are orthogonal in both $\cH$ and $\cH^+$. Let $l = k^+ - (1/\|h_i\|^2) h_i \otimes h_i$ so that $h_i \not \in \cH_l$. Furthermore, consider any $c$ such that  $0 < c < \|h_i\|$. If $\|h_i\|_+ = c$ then 
\[
	c^2 k^+ \succeq h_i \otimes h_i \, \Rightarrow \, c^2 l \succeq  (1-c^2/\|h_i\|^2) h_i \otimes h_i \, \Rightarrow \, \frac{c^2\|h_i\|^2}{\|h_i\|^2-c^2} l \succeq h_i \otimes h_i \,	\Rightarrow  \, h_i \in \cH_l,
\]
which is impossible and, therefore, $\|h_i\|_+ \geq \|h_i\|$.

Similarly, for an RKHS $\cH$ that does contain $\bm 1$ and is not of dimension $1$ let 
\begin{equation} \label{def:k_minus}
 	k^- = k - c^2 \bm 1 \otimes \bm 1,\text{ where } c = 
\inf \{\tilde c:  \tilde c^2 k \succeq 1 \otimes 1\}, \text{\quad  and \quad} \mathcal{H}^- := \mathcal{H}_{k^-}.
\end{equation}
It follows right away that $\bm 1 \not \in \cH_-$ and because, $k^- \preceq k$ we know that  $\cH^- \subset \cH$
and $\|h\|_- \leq \|h\|$ for all $h\in \cH^-$. Next, notice that we can write $k = \sum_{i=1}^{d-1} a_i h_i \otimes h_i + c^2 \bm 1 \otimes \bm 1$ where  $h_1,\ldots, h_{d-1}, \bm 1$ are orthogonal in $\cH$ and $a_1, \ldots, a_{d-1} >0$. Due to the orthogonality it follows that the $h_1,\ldots, h_{d-1}$ are linearly independent elements in $\cH^-$ and $\cH^-$ is $d-1$ dimensional. Lemma \ref{lem:order_orthogonal} tells us furthermore that $h_1,\ldots, h_{d-1}$ are orthogonal in $\cH^-$.
Finally, for all $i\leq d-1$ we have that  $\|h_i\|_- = \|h_i\|$; assume $c=1$ and observe that in this case $(\cH^-)^+ = \cH$ and due to the above results for $\cH^+$ we can conclude that $\|h_i\|_- = {\|h_i\|_-}_+ = \|h_i\|$. The above argument for $\cH^+$ does not rely on $\|\bm 1\| = c= 1$ and we can generalize this result right away to any $c >0$. Because the norm of the $h_i$ does not change and since the $h_i$ are orthogonal we can conclude that $\|h\|_- = \|h\|$ for all $h\in \cH_-$.

We summarize these results for the case when $\cH$ is finite dimensional in the following lemma.
\begin{lemma} \label{lem:approx_error_sup_finite}
	If $\cH$ is a finite dimensional RKHS with dimension $d$, kernel $k \in \mathcal{K}$, and which does not contain $\bm 1$ then
	$\cH^+$, as defined in \eqref{def:k_plus}, is $d+1$ dimensional, $\cH \subset \cH^+$, $\bm 1 \in \cH^+$ with $\|\bm 1\|_+ = 1$, $\langle g,h \rangle_+ = \langle g,h \rangle$ for all $g,h \in \cH$,  and $\bm  1$  is orthogonal in $\cH^+$ to all $h\in \cH$.   
	Similarly, if $\cH$ is a finite dimensional RKHS with dimension $d > 1$, kernel $k\in \mathcal{K}$, and which does contain $\bm 1$ then
	$\cH^-$, as defined in \eqref{def:k_minus}, is $d-1$ dimensional, $\cH^- \subset \cH$, $\bm 1 \not \in \cH^-$,$\langle g,h \rangle_- = \langle g,h \rangle$ for all $g,h \in \cH$ which are orthogonal to $\bm 1$.    
\end{lemma}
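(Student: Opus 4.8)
This lemma consolidates the facts worked out in the preceding subsection, so the plan is to assemble them cleanly around Lemma~\ref{lem:order_orthogonal}, the domination criterion ($f\in\cH_k$ exactly when $c^2k\succeq f\otimes f$ for some $c>0$, with $\|f\|_k$ the least such $c$), and uniqueness of the reproducing kernel. For the $\cH^+$ part I would first fix an orthogonal basis $h_1,\dots,h_d$ of $\cH$ with all $h_i\ne0$ (which exists since $\dim\cH=d$), apply Lemma~\ref{lem:order_orthogonal} to $\sum_{i=1}^d\|h_i\|^{-2}h_i\otimes h_i$ and use uniqueness of the kernel to identify this sum with $k$, so that $k^+=\sum_{i=1}^d\|h_i\|^{-2}h_i\otimes h_i+\bm 1\otimes\bm 1$. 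Since $\bm 1\notin\cH=\spn\{h_1,\dots,h_d\}$, the functions $h_1,\dots,h_d,\bm 1$ are linearly independent, and a second application of Lemma~\ref{lem:order_orthogonal}---now to this $(d+1)$-term form of $k^+$---immediately gives $\dim\cH^+=d+1$, the orthogonality of $h_1,\dots,h_d,\bm 1$ in $\cH^+$, $\|h_i\|_+=\|h_i\|$ for every $i$, and $\|\bm 1\|_+=1$. Then $\cH\subset\cH^+$ because each $h_i\in\cH^+$, and expanding $g=\sum_i\beta_ih_i,\ h=\sum_i\gamma_ih_i$ in this shared orthogonal basis yields $\langle g,h\rangle_+=\sum_i\beta_i\gamma_i\|h_i\|^2=\langle g,h\rangle$ and $\langle\bm 1,h\rangle_+=\sum_i\gamma_i\langle\bm 1,h_i\rangle_+=0$, i.e.\ $\bm 1\perp\cH$ in $\cH^+$.

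For $\cH^-$ I would instead pick an orthogonal basis of $\cH$ of the form $h_1,\dots,h_{d-1},\bm 1$ (possible since $d>1$ and $\bm 1\in\cH$), read off $c=\|\bm 1\|_k$ from the domination criterion, and as above write $k$ as $\sum_{i=1}^{d-1}a_ih_i\otimes h_i$ plus the rank-one term carrying $\bm 1$; removing that term leaves $k^-=\sum_{i=1}^{d-1}a_ih_i\otimes h_i$, so Lemma~\ref{lem:order_orthogonal} gives $\dim\cH^-=d-1$, $\bm 1\notin\cH^-$ (being linearly independent of $h_1,\dots,h_{d-1}$), and $h_1,\dots,h_{d-1}$ orthogonal in $\cH^-$, while $\cH^-\subset\cH$ follows from $k^-\preceq k$. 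For the isometry on the orthogonal complement of $\bm 1$ I would re-run the reverse norm inequality of the $\cH^+$ step: were $\|h_i\|_-$ strictly smaller than $\|h_i\|$, then $l=k^{-}-\|h_i\|_{-}^{-2}h_i\otimes h_i$ would force $h_i\in\cH_l$ even though $h_i\notin\cH_l$---a contradiction---so $\|h_i\|_-=\|h_i\|$, and expansion in $(h_i)$ gives $\langle g,h\rangle_-=\langle g,h\rangle$ for all $g,h\in\cH$ orthogonal to $\bm 1$. Alternatively one relates $\cH$ to $(\cH^-)^+$ and quotes the $\cH^+$ case.

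\textbf{Main obstacle.} The substance is in the two applications of Lemma~\ref{lem:order_orthogonal}; the two delicate points are (i) the \emph{reverse} norm inequality $\|h_i\|_+\ge\|h_i\|$---the easy direction $\|h_i\|_+\le\|h_i\|$ is just $k\preceq k^+$, whereas $\ge$ needs the device of subtracting the single rank-one term $\|h_i\|^{-2}h_i\otimes h_i$ from $k^+$ and observing that a strictly smaller value of $\|h_i\|_+$ would place $h_i$ in an RKHS it does not belong to; and (ii) matching the coefficient of $\bm 1\otimes\bm 1$ in the orthogonal decomposition of $k$ against the $c^2$ subtracted when forming $k^-$, which has to be reconciled before the $\cH^+$ argument can be quoted verbatim. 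Everything else reduces to routine linear algebra in the finite-dimensional space $\cH$.
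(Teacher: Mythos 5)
Your proposal is correct and takes essentially the same route as the paper's own development: an orthogonal basis together with Lemma \ref{lem:order_orthogonal}, the domination criterion and uniqueness of the reproducing kernel to write $k$, $k^+$ and $k^-$ as sums of rank-one tensors, from which the dimension counts, the orthogonality of $\bm 1$ and the norm identities are read off, with the rank-one subtraction device and the identification $(\cH^-)^+=\cH$ handling the isometric embedding exactly as in the text preceding the lemma. The only wobble is your ad hoc contradiction for $\|h_i\|_-=\|h_i\|$: under the hypothesis $\|h_i\|_-<\|h_i\|$ the auxiliary function $l=k^--\|h_i\|_-^{-2}h_i\otimes h_i$ need not be positive semidefinite (subtract the decomposition term $a_ih_i\otimes h_i$ instead, or simply read $\|h_i\|_-=1/\sqrt{a_i}=\|h_i\|$ directly off Lemma \ref{lem:order_orthogonal}, or use the $(\cH^-)^+$ route you also cite), so this is a cosmetic repair rather than a gap; and your caveat about reconciling the $c^2$ in \eqref{def:k_minus} with the coefficient of $\bm 1\otimes\bm 1$ in the orthogonal decomposition of $k$ is well taken — that is precisely the point the paper also has to address.
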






\subsubsection{Lower bounds on the approximation error in finite dimensions} \label{sec:finite_dim_appox_LB}
In finite dimensions we can now provide lower bounds on the approximation error of any function $f:\X \to \mathbb{R}$. 
Before specializing to constant functions we take a short detour and discuss the general technique.    
The approach to get lower bounds is the following: let $k = \sum_{i=1}^d a_i h_i \otimes h_i$ for linearly independent $h_1,\ldots, h_d$ and $a_i>0$. If $f$ is linearly dependent on the $h_i$'s then $f \in \cH$. Otherwise, we can move to the kernel function
$k' = \sum_{i=1}^d a_i h_i\otimes h_i + f\otimes f$ and the corresponding RKHS $\cH'$. The function $f$ is orthogonal to $h_1,\ldots, h_d$ in $\cH'$. That means that the lowest approximation error, when approximating $f$ by functions in the subspace corresponding to $\cH$, is given by the projection onto this subspace. Due to the orthogonality the projection of $f$ onto this subspace is just the origin and the approximation error is 
$\|f\|_{\cH'} = 1$ when measured in the RKHS norm of $\cH'$. If we consider the constraint that the approximation has 
to lie in $\cH$ and has to have norm $\|h\| =1 $ then  the best approximation error of $f$ is $\sqrt{2}$, i.e.
\[
	\inf_{h \in \cH, \|h\|=1} \|f-h\|_{\cH'} = \sqrt{2}.
\]
To gain a lower bound on the approximation error in $\|\cdot\|_\infty$ we use Lemma \ref{lem:tighter_lbnd} 
which shows that 
\[
	\inf_{h\in \cH, \|h\|=1} \|f-h\|_\infty \geq \sqrt{2} \left( \frac{\lambda_{d+1}}{d+1}\right)^{1/2}, 
\]
where we get $d+1$ since we use the RKHS $\cH'$ which has dimension $d+1$. The constant $\lambda_{d+1}$ is the smallest eigenvalue of a kernel matrix corresponding to points $x_1,\ldots, x_{d+1}$ such that $k'(x_1,\cdot),\ldots, k'(x_{d+1}, \cdot)$ are linearly independent. Notice, that this approximation error depends implicitly on the particular function $f$ through the kernel matrix and the smallest eigenvalue. The bound can become loose when $\|f\|_\infty$ is significantly larger than $\|h_i\|_\infty$, but observe that we can always replace $f$ by $cf$ for some constant $c<1$ to rescale the infinity norm. In the following, let $k'' =  \sum_{i=1}^d a_i h_i\otimes h_i +  (cf)\otimes (cf)$ and treat $\cH$ as a subset of $\cH'' := \cH_{k''}$. Such a rescaling leads to a problem in the constraint $\|h\|=1$ because 
\[
	\inf_{h\in \cH, \|h\|=1} \|cf-h\|_\infty  = c \inf_{h\in \cH, \|h\|=1/c} \|f-h\|_\infty. 
\]
We can compensate for this by using the constraint $\|h\| = c$. Since $\|cf\|_{\cH''} = 1$, 
\[
	\inf_{h\in \cH, \|h\|=1} \|f-h\|_\infty = \frac{1}{c} \inf_{h\in \cH, \|h\|=c} \|cf- h\|_\infty \geq \frac{\sqrt{1 + c^2}}{c} \left( \frac{\lambda_{d+1}}{d+1}\right)^{1/2}, 
\]
where $\lambda_{d+1}$ is again the smallest eigenvalue of a kernel matrix but now for the kernel $k''$.

\begin{example}
	Let $\X =\{0,1\}$ and $h:\X \to \mathbb{R}$ be given by $h(0)= 1, h(1) =0$, and let $f: \X \to \mathbb{R}$ be defined by $f(0) = 0, f(1) = r$ for $r > 0$. Let $\cH$ be the RKHS with kernel $h\otimes h$ which consists of $\spn \{h\}$.
	The smallest approximation error of $f$ by elements in $\cH$ which have norm $1$ is attained by $-h$ and $h$ and is equal to  $\|h - f\|_\infty = r\vee 1$. Considering now the bound:  let the kernel of the RKHS $\cH'$ be $k = h \otimes h + f \otimes f$. Consider $x_1= 0, x_2 =1$ and the corresponding kernel matrix
	\[
		K = \begin{pmatrix}
			h(0)^2 & 0 \\
			0 & f(1)^2 
		\end{pmatrix} =  \begin{pmatrix}
			1 & 0 \\
			0 & r^2 
		\end{pmatrix} 
	\]
	which has minimal eigenvalue $1 \wedge r^2$. The corresponding lower bound is
	\[
		\inf_{g\in \cH, \|g\|=1} \|g - f\|_\infty \geq \sqrt{2} \left(\frac{1 \wedge r^2}{2} \right)^{1/2} = 1 \wedge r
	\]
	which is exact when $\|f\|_\infty = 1$ but degrades for $r$ away from $1$. 

	Scaling $f$ by $c = 1/\|f\|_\infty = 1/r$ gives us the kernel $k' = h \otimes h + (1/r)^2 f\otimes f$ and a kernel matrix
	\[
	K' =  \begin{pmatrix}
			h(0)^2 & 0 \\
			0 & (1/r)^2 f(1)^2 
		\end{pmatrix} =  \begin{pmatrix}
			1 & 0 \\
			0 & 1 
		\end{pmatrix} 
	\]
which has minimal eigenvalue $1$. The bound becomes 
\[
	\inf_{g\in \cH, \|g\|=1} \|g - f\|_\infty \geq \sqrt{\frac{1 + r^2}{2} } \geq 1 \wedge r.
\]
\end{example}
Coming back to the approximation of constant functions. When $\cH$ does not contain the constant functions then an  approach to calculate lower bounds  is to use the kernel $k_+$ and the corresponding RKHS $\cH^+$. The norm of $c \bm 1$ in this RKHS,  where $c \in \mathbb{R}$, is $|c|$ and for any such $c$,
\[
	\inf_{h \in \cH, \|h\|=1} \|h-c \bm 1\|_\infty \geq \frac{\sqrt{1 + |c|^2}}{|c|} \left(\frac{\lambda_{d+1}}{d+1}\right)^{1/2} \geq  \left(\frac{\lambda_{d+1}}{d+1}\right)^{1/2} 
\]
with $d$ being the dimension of $\cH$ and $\lambda_{d+1}$ the lowest eigenvalue of a kernel matrix corresponding to points $x_1,\ldots,x_{d+1}$ for the kernel $k^+$. 
Using the right hand side as the lower bound 
has the advantage that we only deal with one RKHS, i.e. with $\cH^+$, and we only need $\lambda_{d+1}$ for that kernel.
Scaling of the function $\bm 1$ in dependence of which constant $c \bm 1$ we want to approximate might improve the lower bounds but then $\lambda_{d+1}$ has to be calculated for the individual scalings.  

When Mercer's theorem applies we gain the bound  
\[
\inf_{c \in \R} \inf_{h \in \cH, \|h\|=1} \|h - c\bm 1\|_\infty \geq \tilde \lambda_{d+1}^{1/2}, 
\]
where $\tilde \lambda_{d+1}$ is the $(d+1)$-th eigenvalue of $T_{k^+}$. For Mercer's theorem to apply it is important that $k^+$ is continuous.
But when $k$ is continuous then so is $k^+$. 

If $\cH$ already contains the constant functions then we are interested in determining the width of the convex set in the affine subspace spanned by $C$. In particular, because $\langle k(x,\cdot), \bm 1 \rangle  = 1$ for all $x \in \X$, there exists a subspace $S$ of $\cH$ that is orthogonal to $\bm 1$ and a $c \not = 0$ such that $\aff C = \aff \{k(x,\cdot) : x \in \X\} = c \bm 1 + S$. In fact, $c = \argmin_{c' \in \mathbb{R}} \| k(x,\cdot) - c' \bm 1\|$, where we can use an arbitrary $x \in \X$ and $S = \cH^-$.  
This is exactly the same situation that we faced above with $\cH^+$ and a lower bound on the width of the convex set in the affine space spanned by it can be gained through
\[
	\inf_{c\in \mathbb{R}} \inf_{\|h\|_- = 1} \| h - c \bm 1  \|_\infty \geq \left(\frac{\lambda_d}{d} \right)^{1/2},
\]
where $d$ is the dimension of $\cH$ and $\lambda_d$ the smallest eigenvalue of any kernel matrix for kernel $k$. If we can use Mercer's theorem then we also gain the lower bound 
\[
\inf_{c \in \R} \inf_{\|h\|_- =1} \|h - c \bm 1 \|_\infty \geq \tilde \lambda_{d}^{1/2}, 
\]
where $\tilde \lambda_{d}$ is the $d$-th eigenvalue of $T_{k}$.

We can also extend the results from Section \ref{sec:interpol} on the application of K-functionals. We summarize in the following proposition these results together with a variety of results on the width of $C$ that we derived up to now. We use the notation $K_-(\bm 1,t)$ for the $K$-functional corresponding to $\cH_-$. We hope that the use of the letter $K$ for both the $K$-functional and the kernel matrix does not lead to confusion. To streamline the statement of the following proposition let us say that $k$ has a \textit{Mercer decomposition with lowest eigenvalue $\tilde \lambda_d$} if $k$ is a continuous kernel function on $\X$ and there exist continuous functions $e_i: \X \to \mathbb{R}, i \leq d,$ and a Borel probability measure $\mu$ on $\X$ such that $e_1^\bullet,\ldots, e_d^\bullet$ are orthonormal in $L^2(\X,\mu)$, $\{\tilde e_i\}_{i\leq d}$ is an orthonormal basis of $\cH$, where $\tilde e_i = (\tilde \lambda_i)^{1/2} e_i$, for all $i \leq d$, and $\tilde \lambda_1 \geq \tilde \lambda_2 \ldots \geq \tilde \lambda_d>0$. Notice that the Mercer decomposition based results in the following proposition do not seem to have a dependence on $d$ beyond the eigenvalue $\lambda_d$ but this is somewhat misleading as the discussion in Section \ref{Sec:diam_emp_convex} demonstrates.
\begin{proposition} \label{prop:approx_lower_bnd}
Let $\X$ be a measurable set and $k \in \mathcal{K}$ a kernel function defined on $\X$. 
The following holds. 
\begin{enumerate}
\item If $\cH$ is infinite dimensional, $\X$ is compact and  $k$ is continuous, then for every $\epsilon>0$ there exist infinitely many orthonormal elements $(e_n)_{n \geq 1}$  in $\cH$ such that $\sup_{n \geq 1} \diam_{e_n}(C) < \epsilon$. 
\end{enumerate}
If $\cH$ is finite dimensional with dimension $1 \leq d$ then the following hold.
\begin{enumerate}
\setcounter{enumi}{1}
\item If $\bm 1 \in \cH_\theta$ for some $\theta\in (0,1)$ then there exists $h\in\cH, \|h\|=1,$ such that $\diam_h(C) = 0$. 
\item If $\bm 1 \not \in \cH$ then for any $x_1, \ldots, x_{d+1} \in \X$  and corresponding kernel matrix $K^+ = (k^+(x_i,x_j))_{i,j\leq d+1}$ with smallest eigenvalue $\lambda_{d+1}$,  
\[
\inf_{\|h\| = 1} \diam_h(C)  \geq  2 \left(\frac{\lambda_{d+1}}{d+1}\right)^{1/2}.	
\]
\item If $\bm 1 \not \in \cH$, $\|k\|_\infty<\infty$, then for any $x_1, \ldots, x_{d} \in \X$  and corresponding kernel matrix $K = (k(x_i,x_j))_{i,j\leq d}$ with smallest eigenvalue $\lambda_{d}$,  
\[
	\inf_{\|h\| = 1} \diam_h(C)  \geq \left(\frac{\lambda_{d}}{d}\right)^{1/2} \lim_{t\to 0} K(\bm 1,t).	
\]
\item	If $\bm 1 \in \cH$ and $2 \leq d$, then for any $x_1, \ldots, x_{d} \in \X$  with corresponding kernel matrix $K = (k(x_i,x_j))_{i,j\leq d}$ and with the smallest eigenvalue of $K$ being $\lambda_{d}$,
\[
	\inf_{\|h\|_- = 1} \diam_h(C)  \geq 2 \left(\frac{\lambda_{d}}{d} \right)^{1/2}.
\]
\item If $\bm 1 \in \cH$, $2 \leq d$, then for any $x_1, \ldots, x_{d-1} \in \X$  with corresponding kernel matrix $K^- = (k(x_i,x_j))_{i,j\leq d-1}$ and with the smallest eigenvalue of $K^-$ being $\lambda_{d-1}$,
\[
	\inf_{\|h\|_- = 1} \diam_h(C)  \geq  \left(\frac{\lambda_{d-1}}{d-1} \right)^{1/2} \lim_{t \to 0} K_-(\bm 1,t).
\]
\end{enumerate}
In the following, let $\X$ be a compact space and $k$ a continuous kernel function on $\X$. The following hold.  
\begin{enumerate}
\setcounter{enumi}{6}
\item If $k^+$ has a Mercer decomposition with smallest eigenvalue $\tilde \lambda_{d+1}$ and $\bm 1 \not \in \cH$ then
\[
\inf_{\|h\| = 1} \diam_h(C)  \geq  2 \tilde \lambda_{d+1}^{1/2}. 	
\]
\item If $k$ has a Mercer decomposition with smallest eigenvalue $\tilde \lambda_d$ and $\bm 1 \not \in \cH$ then  
\[	
\inf_{\|h\| = 1} \diam_h(C) \geq \tilde \lambda_d^{1/2} \lim_{t\to 0} K(\bm 1,t). 
\]
\item If $k$ has a Mercer decomposition with smallest eigenvalue $\tilde \lambda_{d}$ and $\bm 1 \in \cH$ then
\[
\inf_{\|h\|_- = 1} \diam_h(C)  \geq  2 \tilde \lambda_{d}^{1/2}. 	
\]
\item If $\cH$ is $d\geq 2$ dimensional, $k^-$ has a Mercer decomposition with smallest eigenvalue $\tilde \lambda_{d-1}$ and $\bm 1 \not \in \cH$ then 
\[	
\inf_{\|h\|_- = 1} \diam_h(C) \geq \tilde \lambda_{d-1}^{1/2} \lim_{t\to 0} K_-(\bm 1,t). 
\]
\end{enumerate}
\end{proposition}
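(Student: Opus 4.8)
The plan is to prove the eleven items in sequence; apart from the first they are all short deductions from the identity \eqref{eq:constant_vs_diameter}, $\diam_h(C)=\frac{1}{2}\inf_{c\in\R}\|h-c\bm 1\|_\infty$, combined with the development in Sections \ref{sec:interpol}--\ref{sec:finite_dim_appox_LB}. Item~1 is purely topological: since $\X$ is compact and $k$ continuous, $\phi[\X]$ is compact, hence $C=\cch\phi[\X]$ is compact by Mazur's theorem and $\|k\|_\infty<\infty$, so the argument given just before the proposition (extracting a countably infinite $\tfrac{\epsilon}{4}$-separated subset of $C$ whenever infinitely many $\diam_{e_n}(C)$ exceed $\epsilon$, contradicting compactness) yields the required orthonormal $e_n$. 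For Item~2, a finite-dimensional $\cH$ is closed in $C(\X)$, so $\cH_\theta=\cH$ for all $\theta\in(0,1)$; thus $\bm 1\in\cH_\theta$ forces $\bm 1\in\cH$, and $h=\bm 1/\|\bm 1\|$ is a unit vector which is a constant function, whence $\diam_h(C)=0$.

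Items 3, 7 and 5, 9 follow a common ``Pythagoras $+$ Lemma \ref{lem:tighter_lbnd}'' template. For 3 and 7 ($\bm 1\notin\cH$): by Lemma \ref{lem:approx_error_sup_finite}, $\cH^+$ is $(d+1)$-dimensional, restricts isometrically to $\cH$, and has $\bm 1\perp\cH$ with $\|\bm 1\|_+=1$; so for $\|h\|=1$ and any $c$, $\|h-c\bm 1\|_+^2=1+c^2\ge 1$, and Lemma \ref{lem:tighter_lbnd} (its first inequality for Item~3, its Mercer inequality for Item~7 --- valid because $k$ continuous gives $k^+$ continuous) converts this to $\|h-c\bm 1\|_\infty\ge(\lambda_{d+1}/(d+1))^{1/2}$, resp.\ $\tilde\lambda_{d+1}^{1/2}$; inserting into \eqref{eq:constant_vs_diameter} gives the bounds, the case $\lambda_{d+1}=0$ being trivial. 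For 5 and 9 ($\bm 1\in\cH$): by Lemma \ref{lem:approx_error_sup_finite}, $\{h\in\cH^-:\|h\|_-=1\}=\{h\in\cH:\|h\|=1,\ h\perp\bm 1\}$, and for such $h$ one has $h-c\bm 1\in\cH$ with $\|h-c\bm 1\|_{\cH}^2=1+c^2\|\bm 1\|^2\ge 1$; applying Lemma \ref{lem:tighter_lbnd} to $\cH$ itself and then \eqref{eq:constant_vs_diameter} yields the bounds with $\lambda_d$, resp.\ $\tilde\lambda_d$.

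Items 4, 8 and 6, 10 instead use the $K$-functional argument of Section \ref{sec:interpol}, i.e.\ inequality \eqref{eq:K_functional_LB} and its Mercer variant, applied to $\cH$ in 4, 8 and to $\cH^-$ in 6, 10. The facts to reuse are: a norm-maximal minimizer $h^*$ of $h\mapsto\|\bm 1-h\|_\infty$ exists, satisfies $\|h^*\|_\infty\le 2$, and therefore (by Lemma \ref{lem:tighter_lbnd}) $\|h^*\|\le r$ with $r=(4d/\lambda_d)^{1/2}$, resp.\ $(4/\tilde\lambda_d)^{1/2}$; and then, with $b=1/\|h^*\|$,
\[
2\inf_{\|h\|=1}\diam_h(C)=\lim_{t\to0}K(b\bm 1,t)=b\lim_{t\to0}K(\bm 1,t)\ge r^{-1}\lim_{t\to0}K(\bm 1,t),
\]
and likewise with $K_-$ in the $\cH^-$ case. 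The auxiliary hypotheses $\lambda\le 4d$ resp.\ $4(d-1)$, and $\tilde\lambda\le 4$, are exactly what cover the degenerate case $h^*=0$, where $\inf_{\|h\|=1}\diam_h(C)=\frac12$ and $\frac14(\lambda_d/d)^{1/2}\lim_{t\to0}K(\bm 1,t)\le\frac14\cdot2\cdot1=\frac12$ since $\lim_{t\to0}K(\bm 1,t)\le\|\bm 1\|_\infty=1$.

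I do not expect a genuine obstacle --- the proposition collects results already in hand --- so the work is careful bookkeeping. The points deserving attention are: (i) keeping the order of the infima so that each bound holds uniformly over the relevant unit sphere; (ii) the identification, in Items 5, 6, 9, 10, of $\{h\in\cH^-:\|h\|_-=1\}$ with the directions spanning $U_C$, which is what lets one invoke \eqref{eq:constant_vs_diameter} (stated for $\cH$) for $h\in\cH^-$, using that $\cH^-\hookrightarrow\cH$ is isometric on that subspace and that $\diam_h(C)$ depends on $h$ only as a function; and (iii) the degenerate-minimizer edge case, which is where the $\le 4$-type hypotheses enter. In Items 6 and 10 I would also state explicitly which kernel matrix and which dimension the eigenvalue refers to, since the indexing there is compressed and pertains to the $(d-1)$-dimensional space $\cH^-$.
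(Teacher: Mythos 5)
Your proposal is correct and follows essentially the paper's own route: the proposition is assembled exactly as you do it, from the compactness argument preceding Section \ref{sec:interpol}, the identity \eqref{eq:constant_vs_diameter}, the orthogonality facts of Lemma \ref{lem:approx_error_sup_finite} combined with Lemma \ref{lem:tighter_lbnd} for items 3, 5, 7, 9, and the K-functional derivation of Section \ref{sec:interpol} (with the $\leq 4$-type hypotheses covering the degenerate minimizer $h^*=0$) for items 4, 6, 8, 10. Your note that in items 6 and 10 the eigenvalue and kernel matrix should be read as pertaining to the $(d-1)$-dimensional space $\cH^-$ (kernel $k^-$) matches the intended reading of the statement.
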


\begin{example}
	Consider the kernels $k_d(x,y) = \sum_{u=1}^d x^u y^u$, with $x,y \in [-1,1]$, which corresponds to polynomials of order $1$ to $4$ but without the constant functions. To test the kernel matrix based lower bound  in a simple experiment we are calculating upper bounds on $\inf_{c\in \mathbb{R}} \inf_{h \in \cH, \|h\|=1} \|h - c \bm 1\|_\infty$ in the following way: the functions $x^u$ and $x^v$ are orthogonal in the corresponding RKHSs whenever $u\not = v$ and have norm $1$. Therefore, functions of the form $(1/\sqrt{d})\sum_{u=1}^d x^u$ have norm $1$. To get a good approximation of constant functions we use such functions for $d=3,4$, with signs adjusted so that the different terms cancel each other as well as possible. In detail, for $d=1$ we use the function $h_1(x) = x$ which has approximation error $1$ when approximating the (constant) function $0$; for $d=2$ we use $h_2(x) = x^2$; for $d=3$ we use $h_3(x) = (1/\sqrt{3}) (x + x^2 - x^3)$; and for $d=4$, $h_4(x) =  (1/\sqrt{4}) (-x + x^2 + x^3 - x^4)$. The functions for $d=2,3$ and $4$ are shown in Figure  \ref{fig:approx_error} in the left three plots in blue. The constant that are best approximated by these functions are shown in orange. In the right plot the corresponding approximation error in $\|\cdot \|_\infty$ norm  is plotted against $d$ (top curve; orange). The blue curve in the right plot corresponds to the lower bound where we use $-1 = x_1 < \ldots < x_d =1$ with equidistant spacing to get full rank kernel matrices.        	
\end{example}

\begin{figure}[t]  
\begin{center} 
\includegraphics[scale=1.]{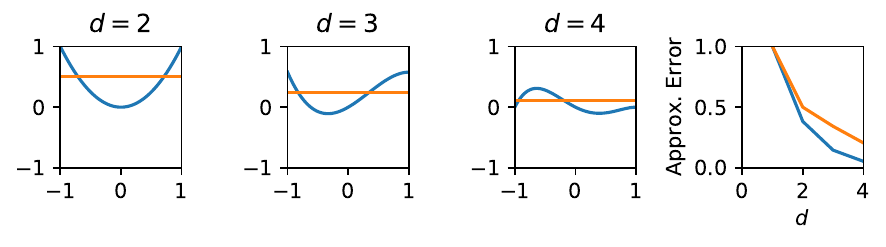}  
\end{center}
\caption{The three plots on the left show in blue polynomials of degree $2,3$ and $4$ respectively. The orange lines correspond to the constant functions that are best approximated by these polynomials. The right most plot shows the corresponding approximation error in $\|\cdot\|_\infty$ (orange curve) and our lower bound on the approximation error (blue curve). Note that the approximation error is calculated for the three curves in the left plots and is only an upper bound for the best approximation error that can be attained.        
} \label{fig:approx_error} 
\end{figure}

\subsubsection{Quantifying the width of the empirical convex set $C_n$} \label{Sec:diam_emp_convex}
The above techniques can also be applied to the empirical convex set $C_n$. An easy way to do so is to identify the subspace spanned by $C_n$ with a new RKHS. In particular, the subspace spanned by the empirical convex set $C_n$ can be identified with an RKHS in a similar way to how we dealt with measures that attain values in a subspace in Section \ref{sec:example_subset}; see also Section \ref{sec:data_subset_covariance_arg} for a more detailed discussion.
For an experiment $\omega \in \Omega$  
let  $S_\omega = \{X_1(\omega), \ldots, X_n(\omega) \}$ be the support of the empirical measure for the realization $\omega$. If $k$ is our original kernel function then let $k_\omega$ be $k\!\!\upharpoonright\!\! S_\omega \times S_\omega$ and let $\cH_\omega$ be the corresponding RKHS. The empirical convex set $C_n$, as an element of $\cH$, has then a corresponding convex set $C_\omega = \ch \{k_\omega(X_i(\omega),\cdot) : i \leq n \}$ within $\cH_\omega$. For ease of notation fix an $\omega \in \Omega$ and let $x_1,\ldots, x_n \in \X$ be  $x_1 = X_1(\omega), \ldots, x_n = X_n(\omega)$ for the rest of this section. 

Importantly, there is a linear map $\psi:\cH \to \cH_\omega$ defined in the following way: if $h \in \cH$ is of the form $\sum_{i=1}^n \alpha_i k(x_i,\cdot)$ for some $\alpha_i \in \R$ then let $\psi(h) = \sum_{i=1}^n \alpha_i k_S(x_i,\cdot)$. Also, let $U = \spn \{k(x_i,\cdot) : i \leq n\}$ be the subspace of $\cH$ corresponding to these functions $h$. For functions $g \in U^\perp$, define $\psi(g) = 0$, and extend $\psi$ to all of $\cH$ by linearity. The function $\psi: \cH \to \cH_\omega$ defined in this way has the following properties: for all $g,h \in \spn \{k(X_1(\omega),\cdot), \ldots, k(X_n(\omega),\cdot)\} \subset \cH$ we have $\langle g,h \rangle = \langle \psi(g), \psi(h) \rangle_{\cH_\omega}$ (this follows right away from the kernel expansion of $g,h$ because $k$ and $k_S$ are equivalent on $x_1,\ldots,x_n$) and $\psi(f) = 0$ if $f$ is orthogonal to the subspace spanned by the data. In other words, $\psi$ is a partial isometry between $\cH$ and $\cH_\omega$ and an isometry between $U$, with the inherited inner product, and $\cH_\omega$.

Beside this natural link between $\cH$  and $\cH_\omega$ there is also the linear map $A$ that we considered in Lemma \ref{lem:tighter_lbnd}. We have to adapt the approach from Lemma \ref{lem:tighter_lbnd} slightly to make use of it in this new context. First, observe that if $\cH_\omega$ is $d_\omega$-dimensional then we have the operators $A_\omega: \cH_\omega \to \mathbb{R}^{d_\omega}$ defined by $A_\omega f = (f(x_{\iota(1)}), \ldots , f(x_{\iota(d_\omega)}) )^\top$ for a given injective function $\iota:\{1,\ldots, d_\omega\} \to \{1, \ldots, n\}$ such that the matrix $(k(x_{\iota(i)},x_{\iota(j)}))_{i,j\leq d_\omega}$ has full rank. This dimension can obviously depend on $\omega$ and will always be upper bounded by the dimension $d_\cH$ of $\cH$. Consider now the kernel matrix $K_\omega = (k(x_{\iota(i)},x_{\iota(j)}))_{i,j \leq d_\omega}$  and equip $\mathbb{R}^{d_\omega}$ with the inner product $\langle a, b \rangle_{K_\omega^{-1}} =a^\top K_\omega^{-1} b$, $a,b \in \mathbb{R}^{d_\omega}$. As in the proof of Lemma \ref{lem:tighter_lbnd} it follows that
$\langle g,h \rangle_{\cH_\omega} = \langle A g, Ah \rangle_{K^{-1}_\omega} $ for all $g,h \in \cH_\omega$ and $A_\omega : \cH_\omega \to \R^{d_\omega}$ is an isometry.

We have the following commutative diagram summarizing the relationship between the three spaces.
\begin{center}
\begin{tikzcd}
	\cH \arrow{rr}{\psi} \arrow[swap]{dr}{A_\omega \circ \psi} & & \cH_\omega \arrow{dl}{A_\omega} \\
     & \mathbb{R}^{d_\omega}
\end{tikzcd}
\end{center}
Furthermore, when $U = \spn\{k(x_1,\cdot), \ldots, k(x_n,\cdot) \}$ is the subspace of $\cH$ induced by the data it follows that the following three spaces are isometric isomorphic 
\[ (U,\langle \cdot,\cdot \rangle) \cong (\cH_\omega, \langle \cdot, \cdot \rangle_{\cH_\omega}) \cong (\mathbb{R}^{d_\omega},\langle \cdot, \cdot \rangle_{K_\omega^{-1}}).
\]
In particular, $A_\omega \circ \psi$ is an isometry between $U$ and $\mathbb{R}^{d_\omega}$. This isometry has takes a  simple form: let $h = \sum_{i=1}^n \alpha_i k(x_i,\cdot)$ then 
\[
	(A_\omega \circ \psi)(h) = \sum_{i=1}^n \alpha_i (k(x_i,x_{\iota(1)}),\ldots, k(x_i,x_{\iota(d_\omega)}))^\top = (h( x_{\iota(1)}),\ldots, h(x_{\iota(d_\omega)}))^\top.
\]
This relation allows us to apply the techniques we developed for measuring the size of $C$ to the empirical convex set $C_n$. For example, if $\cH_\omega$ does not contain constant functions then using the kernel $k_S^+$ and denoting the corresponding RKHS by $\cH_\omega^+$, we can lower bound the width of $C_\omega$. The RKHS $\cH_\omega^+$ has dimension $d_\omega +1$ and there exists an injection  $\iota: \{1,\ldots, d_\omega+1 \} \to \{1,\ldots,n\}$ such that $k_S^+(x_{\iota(1)},\cdot), \ldots, k_S^+(x_{\iota(d_\omega+1)},\cdot)$ are linearly independent. Then, as above, $A_\omega^+: \cH_\omega^+ \to \R^{d_\omega +1}$ defined by $A_\omega^+(h) = (h( x_{\iota(1)}),\ldots h(x_{\iota(d_\omega+1)}))^\top$ is an isometry between $\cH_\omega^+$ and $\R^{d_\omega+1}$ when the latter is equipped with the inner product $\langle a,b \rangle_{(K_\omega^+)^{-1}}$, for all $a,b \in \R^{d_\omega+1}$ and $K_\omega^+$  is the kernel matrix corresponding to the points $x_{\iota(1)},\ldots, x_{\iota(d_\omega+1)}$. From this we can infer a lower bound on the width of $C_\omega$ within $\cH_\omega$. Alternatively, we can apply directly  Proposition \ref{prop:approx_lower_bnd} to $\cH_\omega$ to get this lower bound. Since we have an isometry between $U$ and $\cH_\omega$ these lower bounds translate directly to lower bounds on the width of $C_n$ within $U$.

There is another point worth noting. The lower bound on the width of $C_n$ depends on the choice of $\iota$. Finding the subset of points $x_1,\ldots, x_n$ that maximizes this lower bound seems like a hard problem. Therefore one might wonder if there is a simpler way to \textit{optimize the lower bound}. In particular, there seems hope to get the largest smallest eigenvalue $\lambda_{d_\omega}$ when using the full kernel matrix. To that end, let $K^\star_\omega = (k(x_i,x_j))_{i,j \leq n}$ be the kernel matrix corresponding to all the data. 
Since the subspace spanned by the data has dimension $d_\omega$ it follows that there are exactly $d_\omega$ non-zero eigenvalues $\lambda_1^\star,\ldots, \lambda^\star_{d_\omega}$.
There is a useful interplay between $K_\omega^\star$ and the following linear operator $A_\omega^\star :\cH_\omega \to \R^n$ given by $A_\omega^\star(h) = (h(x_1),\ldots, h(x_n))^\top$. First note that for $h= \sum_{i=1}^n \alpha_i k(x_i,\cdot)$, with suitable $\alpha_i \in \R$, we have that $A^\star_\omega(h) = K_\omega^\star \alpha$. Also observe that $A_\omega^\star$ in injective because if $A_\omega^\star(f) = (f(x_1), \ldots, f(x_n))^\top = (g(x_1),\ldots, g(x_n))^\top = A_\omega^\star(g)$ for two functions $f,g \in \cH_\omega$, $f: S\to \R, g:S \to \R$, then $f$ and $g$ are equal on $S$ and are therefore the same function. 
While $A_\omega^\star$ is injective there are generally for a given $h\in \cH_\omega$ many $\alpha \in \R^n$ such that $A_\omega^\star(h) = K_\omega^\star \alpha$ and $K_\omega^\star$ is not invertible. Therefore, consider the Moore-Penrose pseudo-inverse $(K_\omega^\star)^\dagger$, and observe that  
with  $\alpha_h^\star = (K_\omega^\star)^\dagger A_\omega^\star(h)$ we get $K_\omega^\star \alpha_h^\star = K_\omega^\star (K_\omega^\star)^\dagger A_\omega^\star(h) = A_\omega^\star(h)$ since $A_\omega^\star(h)$ lies in the range of $K_\omega^\star$ \cite[Def.1.1.2(a)]{CAMP09}.  In particular, for $f,g \in \cH_\omega$,
\[
\langle f,g \rangle = (\alpha^\star_f)^\top K_\omega^\star \alpha_g^\star = 
(A_\omega^\star(f))^\top (K_\omega^\star)^\dagger K_\omega^\star (K_\omega^\star)^\dagger A_\omega^\star(g) =  (A_\omega^\star(f))^\top (K_\omega^\star)^\dagger A_\omega^\star(g).
\]
From this relation we get a lower bound on the supremums norm of a function $h \in \cH_\omega$,
\begin{align*}
	\|h\|_{\cH_\omega}^2 &= (A_\omega^\star(f))^\top (K_\omega^\star)^\dagger A_\omega^\star(f) = \tr(
(K_\omega^\star)^\dagger A_\omega^\star(f) (A_\omega^\star(f))^\top) \\
&\leq \| (K_\omega^\star)^\dagger\|_{op}  (A_\omega^\star(f))^\top A_\omega^\star(f) 
\leq n \| (K_\omega^\star)^\dagger\|_{op}  \|h\|_\infty^2. 
\end{align*}
The term  $\|(K_\omega^\star)^\dagger\|_{op}$ is equal to $1/\lambda_{d_\omega}^\star$ but, unfortunately,  
instead of the constant $d_\omega$ we have now the constant $n$.

In Proposition \ref{prop:approx_lower_bnd} seemingly no price had to be paid for the dimension of $\cH$ when using a \textit{Mercer decomposition}. Since intuitively $K^\star_\omega$ is closely related to the integral operator that appears in Mercer's theorem when the underlying measure is the empirical measure $P_n$ one might wonder if the constant $d_\omega$, or $n$, can be removed by following that route. Unfortunately, this approach does not, in fact, remove the constant: consider the integral operator
\[
	(T_\omega f)(y) = \int f(x) k(x,y) \,dP_n(x),
\]
for $f \in \mathcal{L}^2(S, P_n)$ where $S = \{x_1,\ldots,x_n\}$. Observe that $\mathcal{L}^2(S,P_n)$ is the same set of functions as $\cH_\omega$ but the $\mathcal{L}^2$-inner product does not have to be equal to the inner product of $\cH_\omega$. Then for $f\in \mathcal{L}^2(S,P_n) = \cH_\omega$  and  $j\leq n$,
\[
	(T_\omega f)(x_j) = \frac{1}{n} \sum_{i=1}^n f(x_i) k(x_i,x_j) = n^{-1} ((A_\omega^\star(f))^\top K_\omega^\star)_j 
\]
and for $f,g \in \mathcal{L}^2(S,P_n)$,
\[
\langle T_\omega f,g \rangle_2 = n^{-2} (A_\omega^\star(f))^\top K_\omega^\star A_\omega^\star(g).
\]
The eigenfunctions of $T_\omega$ are closely related to the eigenvectors of $K_\omega^\star$. Let $q_1,\ldots, q_n \in \mathbb{R}^n$ be the eigenvectors of $K_\omega^\star$ and 
let $\lambda^\star_1,\ldots, \lambda^\star_n$ be the corresponding eigenvalues. Observe that $q_1,\ldots, q_{d_\omega}$ lie in the range of $A^\star_\omega$ since  for any $i\leq d_\omega$, $q_i= \lambda^\star_i K_\omega^\star q_i = A_\omega^\star(e_i)$ where $e_i = \sum_{j=1}^n \lambda^\star_i (q_i)_j k(x_j,\cdot)$.  Also, it follows directly that $n^{1/2} e_1,\ldots,n^{1/2} e_{d_\omega}$ are an orthonormal basis in $\mathcal{L}^2(S,P_n)$ as $\langle n^{1/2} e_i, n^{1/2} e_j\rangle_2 = A_\omega^\star(e_i)^\top A_\omega^\star(e_j) = q_i^\top q_j$ and $\mathcal{L}^2(S,P_n)$ is $d_\omega$-dimensional. Furthermore, $n^{1/2} e_1,\ldots, n^{1/2} e_{d_\omega}$ are the eigenfunctions of $T_\omega$,
\[
n \langle T_\omega e_i, e_j \rangle_2 = n^{-1} q_i^\top K_\omega^\star q_j = (\lambda_j^\star/n) q_i^\top q_j 
\]
and the corresponding eigenvalues of $T_\omega$ are $\lambda^\star_1/n, \ldots, \lambda_{d_\omega}^\star/n$.

To summarize, we discussed two approaches in this section to get a lower bound on the width of the empirical convex set. The first approach uses a selection of $d_\omega$ sample points and the eigenvalues of the kernel matrix corresponding to these points. It is unclear if there is an efficient way to optimize over this subset selection. The second approach uses instead the full kernel matrix, which sidesteps the problem of selecting sample points, and  leads to a larger eigenvalue but then the constant degrades significantly if $n \gg d_\omega$. There is a third way which `interpolates' between the two approaches. For instance, it might be reasonable to use $2 d_\omega$ many points to help with the subset selection problem while keeping the constant small.

There are multiple hurdles to using these approaches in practice. First off, it is not just the width that needs to be controlled but also how centered $\mean_n$ lies within $C_n$.  Furthermore, the current approach is only applicable in the small sample regime since we need the smallest eigenvalue of the kernel matrix to control the width. This eigenvalue can be computed by applying the power iteration method. The power iteration returns the largest absolute value of a matrix. A standard way to find $\lambda_d$ is the following: apply the power iteration to $K$ to find $\lambda_1$; then move to matrix $B = K - \lambda_1 I$, which is negative definite, and apply the power iteration to get $\lambda_d - \lambda_1$. Each iteration of the power iteration relies on a multiplication of an $n \times n$ matrix with a vector. This makes this method prohibitively costly to apply in the large sample regime. We come back to these issues in Section \ref{sec:algorithms} where we study, among other things, algorithms which split the data into small batches. In such settings it becomes possible to control the width of the empirical convex sets that correspond to the small batches of data.

\subsection{Locating $\mean$}\label{sec:refinement_location}
%
For various convex approximation methods the distance from $\mean$ to the boundary of the convex set characterizes the rate of convergence: the larger the distance the faster the rate of convergence. A crude way to measure the distance is to consider the largest ball that fits within the convex set around $\mean$. Having a closed ball of size $\delta>0$ around $\mean$ in $C$ is equivalent to 
\begin{equation} \label{cri:alt_ball}
	\infd_{\|h\| = 1} \sup_{x \in \X} \langle h, k(x,\cdot) - \mean \rangle \geq \delta, 
\end{equation}
and similarly for affine subspaces.
This can be seen in the following way: clearly when there exists a closed ball around $\mean$ with the stated properties then for any $h, \|h\|=1$, some extreme of the convex set must fulfill \eqref{cri:alt_ball}. On the other hand, $C-\mean$ is equal to the intersection of the closed half-spaces tangent to it \cite[Thm18.8]{ROCK72}. To each of these half-spaces there exists a normal $h \in \cH, h\not =0$, and an $\alpha_h \in \mathbb{R}$ such that $\langle g,h \rangle \leq \alpha_h$ whenever $g$ lies in the half-space. In particular, for any such normal   $\langle g,h \rangle \leq \alpha_h$ whenever $g \in C-\mean$. Without loss of generality we can assume that the normals have norm one and by assuming that \eqref{cri:alt_ball} holds we know that for any such normal $h$, $\alpha_h \geq \delta$. If there would not exist a ball of size $\delta$ around $\mean$ in $C$ then there would be an element $g \not \in C-\mean$, $\|g\| \leq \delta$. But then $\langle h,g \rangle \leq \delta$ for all $h\in \cH, \|h\|=1$, and $g$ would lie in the intersection of the half-spaces and then also in $C - \mean$ due to \cite[Thm18.8]{ROCK72} which cannot be.

In the previous section we quantified the width of the set $C$ in direction $h$, i.e. $\diam_h(C) = \sup_{x \in \X} h(x) - \inf_{x\in\X} h(x)$. The width tells us how large a ball around $\mean$ can be in the ideal case where $\mean$ lies centered within $C$, however, we do not know how centered $\mean$ lies within $C$. Obviously, $\mean$ can lie in the boundary for instance when $\mean = k(x,\cdot)$ and $k(x,\cdot)$ is an extreme of $C$, and assumptions on the distribution of the data are needed to guarantee the existence of a ball around $\mean$. Our aim in this section is to show how natural assumptions on the probability distribution translate to statements of how centered $\mean$ lies. In the following, we are studying two such conditions: (1) a lower bound on the density of the law of $X_1$ together with a Lipschitz condition on $\phi:\X \to \cH$; (2) an assumption on the covariance operator $\tilde{\mathfrak{C}}_c$. We finish this section with a look at the case where the law of $X_1$ does not have full support in $\X$. 

Before looking at these conditions let us add a short comment about the relation between the extremes of $C$ and $\mean$. For the convex set $C = \cch\{ \phi(x) : x \in \X\}$ the extremes of $C$ which are close to $\mean$ are images under $\phi$ of points $x$ which lie close to each other. In detail, consider a kernel function with  $k(x,x) = 1$ for all $x\in \X$. Whenever $\|\phi(x) - \mean\| < \epsilon$ and $\|\phi(y) - \mean \| < \epsilon$ for some $\epsilon > 0$ then $4\epsilon^2 \geq \|\phi(x) - \phi(y)\|^2 =  2(1-k(x,y))$. In other words, 
if there exists an extreme $\phi(x_0)$ of $C$ that lies $\epsilon$ close to $\mean$ then all the extremes of $C$ that are $\epsilon$ close to $\mean$ are contained in 
\[
	\phi[\{y : k(x_0,y) \geq 1- 2 \epsilon^2 \}]. 
\]
Obviously, the case that we have an extreme $\phi(x_0)$ close to $\mean$ is rare since this means that for all functions $h \in \cH, \|h\| \leq 1,$  the expected value $E(h(X))  \approx h(x_0)$ and $\cH$ cannot distinguish between $P$ and a probability measure that puts mass one on $x_0$.

\subsubsection{Assumptions on the density} \label{sec:assumption_density}
In  \cite{BACH12} it was observed that when the probability measure corresponding to $\mean$ has a density on $X$ which is bounded away from $0$ and $\cH$ is finite dimensional then it is at least guaranteed that some open ball exists around $\mean$ in $C$. This result can be strengthened and turned into a quantitative statement by using a simple observation.

Consider first the  Lebesgue integral on $\mathbb{R}$. If we have a (non-atomic) probability measure on $\mathbb{R}$ which has a mean value of $0$ and there exists some measurable set $B$ with $\inf B \geq \epsilon$ and $P(B) > 0$, then there will be probability mass on the negative axis to counter the ``pull'' from $B$ since otherwise 
\[
	0 = \int_{\mathbb{R}} x \,dP  = \int_{[0,\infty)} x \, dP \geq \int_{B} x \,dP  \geq \epsilon P(B) > 0.
 \]   
This argument can also be applied to $\mathfrak{m}$. Consider the set $X=[0,1]$, an RKHS $\cH$ with continuous kernel function $k(x,y)$ and assume that $k(x,\cdot) \in \mathcal{L}^1(P;\cH)$ with Bochner-integral $\mean$ and  the probability measure $P$ has a density function that is bounded away from $0$. For every $y \in X$ with $k(y,\cdot) - \mean \not = 0$ there exists an $x \in X$  such that $\inner{k(y,\cdot) - \mean}{k(x,\cdot) - \mean} < 0$. Otherwise, let $\epsilon = \norm{k(y,\cdot) - \mean}^2/2$ then $B = \{x: \inner{k(y,\cdot) - \mean}{k(x,\cdot) - \mean} > \epsilon \}$ is non-empty as $y \in B$
and contains an open interval $I$ of $X$, with $P(I)>0$. Hence, $P(B) > 0$ and 
because $\|\mean\|^2 = \int_X \inner{\mean}{k(x,\cdot)} \,dP(x)$, 
 \begin{align*}
	 0 &= 
	 \int_{X} \inner{k(y,\cdot) - \mean}{k(x,\cdot) - \mean} dP(x) \geq \int_B \inner{k(y,\cdot) - \mean}{k(x,\cdot) - \mean} dP(x)\\ &\geq \epsilon P(B) >0.
 \end{align*}
This implies that we have on both sides of $\mean$ (with respect to the direction $k(y,\cdot) - \mean$) elements of $\cch \{k(x,\cdot) : x \in X\} = C$. 

To provide lower bounds on the radius of a ball around $\mean$ in $C$ we need more. Ideally, we like to have assumptions on the kernel function and the measure which guarantee the existence of some strictly positive function $\psi:(0,\infty) \rightarrow (0,\infty)$ such that for any $h \in \cH, \norm{h} \leq 1$, $x \in \X$, if  $\inner{h}{k(x,\cdot) - \mean} > 0 $ then  
\[
\inf_{y \in \X} \inner{h}{k(y,\cdot)-\mean} \leq - \psi(\inner{h}{k(x,\cdot) - \mean}).
\]
Under a Lipschitz assumption on the functions in $\cH$ we can provide such a function $\psi$. The Lipschitz assumption we are using is that any $h \in \cH$ fulfills
\begin{equation} \label{def:Lip_for_RKHS}
	\sup_{x \not = x'} \frac{|h(x) - h(x')|}{\|x-x'\|} \leq \|h\| L,
\end{equation}
where $L >0 $ is the Lipschitz-constant.
When the space $\X$ is a compact subset of $\mathbb{R}$ this Lipschitz assumption is often fulfilled. For instance, when a polynomial or Gaussian kernel is used. In fact, whenever we have a well behaved domain $\X$ like $[0,1]^d$, 
$h \! \upharpoonright \! \intr \X   \in C^1(\intr \X)$, $h \in C(\X)$ and $\|D_x h\|_{op} \leq \|h\| L$ (compare to \cite[Cor.4.36]{STEIN08}) then the condition is fulfilled. 

In the following, $\beta_d$ denotes $d$-dimensional Lebesgue measure of the unit ball in $\mathbb{R}^d$ and $\mu_d$ denotes Lebesgue measure.
\begin{proposition} \label{prop:Bauer_inspired}
	Let $\X = [0,1]^d$ and let $\cH$ be an RKHS such that for all $h\in \cH$ and $x,x' \in \X$, 
	$ | h(x) - h(x')| \leq L \|h\| \|x - x'\|$. Furthermore, let $P$ be a probability measure on $\X$ and assume that $P$ has a density $p$ with $\inf_{x\in \X} p(x) \geq c > 0$. Then for any $h \in \cH$, $\norm{h} \leq 1$,  	
	\[
		\max_{y \in X} \inner{-h}{k(y,\cdot) - \mean} \geq \frac{c\gamma^{d+1}}{(d+1) (2L)^d}  \beta_d.
	\]
whenever there exists an $x\in \X$ such that $\inner{h}{k(x,\cdot)-\mean} \geq \gamma > 0 $ and $\gamma/L \leq 1$. 
\end{proposition}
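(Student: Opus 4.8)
The plan is to reduce the statement to a one–dimensional mean–zero argument for the real valued function $g:\X\to\R$, $g(x):=h(x)-Ph$. By the reproducing property and the Bochner–integral identity $\langle h,\mean\rangle=Ph$ recalled in Section~\ref{sec:prelim}, $g(x)=\langle h,k(x,\cdot)-\mean\rangle$, so $\max_{y\in\X}\langle -h,k(y,\cdot)-\mean\rangle=-\min_{y\in\X}g(y)$ and it suffices to show $\min_{y\in\X}g(y)\le-\tfrac{c\gamma^{d+1}}{(d+1)(2L)^d}\beta_d$. Two facts about $g$ drive everything. First, $\int_\X g\,dP=Ph-Ph=0$ since $P$ is a probability measure and $g$ is continuous, hence bounded on the compact set $\X$. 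Second, the Lipschitz hypothesis \eqref{def:Lip_for_RKHS} together with $\|h\|\le1$ gives $|g(x)-g(x')|\le L\|x-x'\|$, so $g$ attains its minimum on $\X$, and, since $g(x_0)\ge\gamma$ for the point $x_0$ furnished by the assumption, $g(x)\ge\gamma-L\|x-x_0\|$ for all $x\in\X$.

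Next I would isolate the ``pull'' of the region where $g$ is large. On $B:=B(x_0,\gamma/L)\cap\X$ the previous bound gives $g\ge\gamma-L\|\cdot-x_0\|\ge0$, so, since $p\ge c$,
\[
\int_B g\,dP\;\ge\;c\int_{B(x_0,\gamma/L)\cap[0,1]^d}\bigl(\gamma-L\|x-x_0\|\bigr)\,dx\;=:\;c\,M_0 .
\]
Letting $y^\star\in\X$ attain $\min g$ and splitting $\X=B\cup(\X\setminus B)$ with $g\ge g(y^\star)$ off $B$,
\[
0=\int_B g\,dP+\int_{\X\setminus B}g\,dP\;\ge\;c\,M_0+g(y^\star)\,P(\X\setminus B).
\]
Because $P(B)\ge c\,\vol(B)>0$, we have $0<P(\X\setminus B)\le1$, hence $g(y^\star)\le-cM_0/P(\X\setminus B)\le-cM_0$. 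The proof therefore comes down to the purely geometric estimate $M_0\ge\tfrac{\gamma^{d+1}}{(d+1)(2L)^d}\beta_d$.

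This estimate is the only non-routine part, and the obstacle is precisely that $x_0$ may sit near the boundary of $[0,1]^d$, so $B(x_0,\gamma/L)$ need not lie inside the cube; the hypothesis $\gamma/L\le1$ is what still yields a clean factor $2^{-d}$. I would prove it by peeling off one coordinate at a time. Put $w(x):=\bigl(\gamma-L\|x-x_0\|\bigr)^+$ and $J_k:=\int_{[0,1]^k\times\R^{d-k}}w\,dx$, so that $J_d=M_0$ and, by a polar-coordinate computation, $J_0=L\int_{B(0,\gamma/L)}\bigl(\gamma/L-\|z\|\bigr)\,dz=\tfrac{\beta_d\gamma^{d+1}}{(d+1)L^{d}}$. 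Since $J_{k-1}$ and $J_k$ differ only in whether the $k$-th coordinate ranges over $\R$ or $[0,1]$, Fubini in that variable reduces the inequality $J_k\ge\tfrac12 J_{k-1}$ to the one-dimensional fact that a measure $\nu$ on $\R$ with a density that is symmetric about a point $a\in[0,1]$, non-increasing in $|t-a|$, and supported in $[a-\rho,a+\rho]$ with $\rho\le1$ satisfies $\nu([0,1])\ge\tfrac12\nu(\R)$: indeed, with all other coordinates fixed, the $k$-th slice of $w$ is $\bigl(\gamma-L\sqrt{(t-x_0^{(k)})^2+R^2}\bigr)^+$, which has exactly this shape with centre $x_0^{(k)}\in[0,1]$ and support radius $\le\gamma/L\le1$. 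To prove the one-dimensional fact, set $F(r):=\nu([a,a+r])$; then $F$ is concave with $F(0)=0$, so $F(r)\ge(r/\rho)F(\rho)$ for $0\le r\le\rho$, while a bookkeeping of the two tails gives $\nu([0,1])=F(\min(a,\rho))+F(\min(1-a,\rho))$. If one argument equals $\rho$ this is already $\ge F(\rho)=\tfrac12\nu(\R)$; otherwise $a<\rho$ and $1-a<\rho$, whence $F(a)+F(1-a)\ge\tfrac{a+(1-a)}{\rho}F(\rho)=\tfrac1\rho F(\rho)\ge F(\rho)$ because $\rho\le1$. Iterating $J_k\ge\tfrac12 J_{k-1}$ from $k=d$ down yields $M_0=J_d\ge2^{-d}J_0=\tfrac{\beta_d\gamma^{d+1}}{(d+1)(2L)^d}$, and combining with the second paragraph gives $\max_{y\in\X}\langle -h,k(y,\cdot)-\mean\rangle\ge-g(y^\star)\ge cM_0\ge\tfrac{c\gamma^{d+1}}{(d+1)(2L)^d}\beta_d$, as claimed.
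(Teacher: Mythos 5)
Your proposal is correct, and its skeleton is the same as the paper's: set $f(x)=\langle h,k(x,\cdot)-\mean\rangle$, use $\int_\X f\,dP=0$, use the Lipschitz bound to get the cone lower bound $f\geq \gamma - L\|x-x_0\|$ near the maximizing point, lower bound the positive mass by $c$ times a geometric integral, and conclude that the minimum of $f$ must dip below minus that quantity. Where you genuinely deviate is in the geometric step that produces the $2^{-d}$ boundary factor. The paper compares the translated intersection $A-x^*$ with the corner-centered set $B=B(0,\gamma/L)\cap\X$, asserting that the corner position minimizes the intersection (and, implicitly, the integral of the radially decreasing weight), and then evaluates $\int_B(\gamma-L\|x\|)\,dx$ via polar coordinates, splitting it into the constant term and the $L\|x\|$ term. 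You instead prove the inequality $\int_{B(x_0,\gamma/L)\cap[0,1]^d}(\gamma-L\|x-x_0\|)\,dx\geq 2^{-d}\int_{\R^d}(\gamma-L\|x-x_0\|)^+\,dx$ directly by peeling one coordinate at a time with Fubini and a one-dimensional lemma (concavity of $r\mapsto\nu([a,a+r])$ plus $\rho\leq 1$), which makes fully explicit the point the paper leaves to the reader, namely why a corner location loses at most a factor $2^{-d}$ even for the weighted integral and not just the volume; the price is a somewhat longer argument, and the paper's computation of the radial integral is absorbed into your closed form for $J_0$, which agrees with the paper's constant. One cosmetic slip: your justification that $P(\X\setminus B)>0$ (``because $P(B)\geq c\,\vol(B)>0$'') is not a valid implication, but the point is immaterial—if $P(\X\setminus B)=0$ then $0=\int_B g\,dP\geq cM_0>0$ is already a contradiction, and in any case the inequality $0\geq cM_0+g(y^\star)P(\X\setminus B)$ forces $g(y^\star)<0$ and hence $g(y^\star)\leq g(y^\star)P(\X\setminus B)\leq -cM_0$ without any division.
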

\begin{proof}
	Fix any $h$ in the unit ball of $\cH$ and let $f(x) = \inner{h}{k(x,\cdot) - \mean}$. Let $x^*\in \X$ be a point at which $ f(x^*) = \inner{h}{k(x^*,\cdot)-\mean} \geq \gamma$. The function $f$ is also Lipschitz continuous with Lipschitz-constant $\|h\| L \leq L$  and $f$ is therefore non-negative on the set $A = \{y : \|y-x^*\| \leq \gamma/L, y \in \X\}$. 
	Let $B= \{y: \|y\| \leq \gamma/L, y\in \X\}$  then $P(A) \geq c \mu_d(A) \geq c \mu_d(B)$ because $B$ minimizes the volume of the intersection of $\X$ with a ball of radius $\gamma/L$. Furthermore, $\mu_d(B) = (\gamma/2L)^d \beta_d$; this is the volume of a $d$-dimensional ball of radius $\gamma/L$ scaled by $2^{-d}$. Now, integrating over $A$ and using \cite[265G, 265H]{FREM}  again
\begin{align*}
	\int_A f(x) \,dP(x) &\geq  \int_{A-x^*} c (f(x^*) - L\|x\|) \,dx \geq c  
	\int_{B} f(x^*) - L\|x\| \,dx \\
			    &\geq \frac{c \gamma^{d+1}}{(2L)^d}  \beta_d - cL2^{-d} \frac{d}{d+1} \frac{\gamma^{d+1}}{L^{d+1}} \beta_d 
			    = \frac{c\gamma^{d+1}}{(2L)^d}  \beta_d \left(1 - \frac{d}{d+1}\right).  
\end{align*}
Since $\int_X f(x) \, dP(x)  = 0$ there must be a point $y\in \X$ such that 
\[
	f(y) \leq - 
\frac{c\gamma^{d+1}}{(2L)^d}  \beta_d \left(\frac{1}{d+1}\right).
\]
\end{proof}
Under the conditions of the proposition we can state a lower bound on the size of a ball included in $C$ around $\mean$.
Let $h\in \cH$, $\|h\|=1$, and assume w.l.o.g. that $s := \sup_{x\in \X} \langle h, k(x,\cdot) - \mean \rangle
\geq - \inf_{x\in \X} \langle h,k(x,\cdot) - \mean \rangle =: i$. Then $s \geq (1/2) \diam_h(C)$ and 
\begin{align} \label{eq:width_lower_bnd}
	i \geq \frac{c s^{d+1}}{(d+1) (2L)^d} \beta_d \geq \frac{c ((1/2)\diam_h(C))^{d+1}}{(d+1) (2L)^d} \beta_d.  
\end{align}
If we have a lower bound $b$ on $\diam_h(C)$ for all such $h$ then we can conclude that there exists a ball of radius 
\[
	\delta =  \min \{(b/2) ,  \frac{c (b/2)^{d+1}\beta_d}{(d+1) (2L)^d} \} 
\]
around $\mean$ in $C$.

\subsubsection{Assumptions on the covariance operator} \label{sec:cov_centered}
Let us start with a useful relationship between $\bm 1$ and $\mean$ whenever $\bm 1$ lies in the RKHS. For any measure $P$, with corresponding element $\mean$ we have  that $\langle \bm 1, \mean \rangle  = 1$. Also, for any $x\in \X$, $\langle k(x,\cdot), \bm 1 \rangle = 1$ and $C$ lies within the closed affine subspace $\{h \in \cH : \langle h,\bm 1 \rangle =1 \}$, where closure follows from $h \mapsto \langle h, \bm 1\rangle$ being a continuous function and $\{1\}$ being closed. Also, since $1 = \langle \mean, \bm 1 \rangle \leq \|\bm 1\| \|\mean\|$ it has to hold that $\|\mean\| \geq 1/\|\bm 1\|$. An upper bound on $\| \bm  1 \|$ is therefore giving us a lower bound on $\| \mean\|$. For instance, we can get a lower bound on $\| \mean \|$ by inspecting the kernel function $k$ of the RKHS in the sense that
\begin{equation}
\|\mean\| \geq 1/ \inf\{ c: c^2 k \succeq \bm 1 \otimes \bm 1\}.
\end{equation}
One might be tempted to move to $\cH^+$ whenever $\bm 1$ does not lie in the RKHS $\cH$; recall that $\cH^+$ has the kernel $k + \bm 1 \otimes \bm 1$. Since $\cH$ can be regarded as a subspace of $\cH^+$ we have that $\|\mean\| = \|\mean\|_+$; however, only for $h\in \cH$ do we have that $\int h \, dP = \langle \mean, h \rangle_+$ since $\bm 1 \in \cH^\perp$ by construction. But there is then an element $\mean^+ \in \cH_+$ for which $P_\cH  \mean^+ = \mean$ and $(I-P_\cH) \mean^+ = \langle \bm 1, \mean^+ \rangle = 1$, where $P_\cH$ is the orthogonal projection onto the subspace $\cH$ and $I$ the identity operator. Now, $1 =  \langle \bm 1,\mean^+ \rangle_+ \leq \|\mean^+\|_+ = 1 + \|\mean\|$ and we only learn from this the trivial fact that $0 \leq \|\mean\|$.

An alternative approach gives us more insight. Whenever there is a function $\tilde{\bm 1} \in \cH$ such that $\|\bm 1 - \tilde{\bm 1}\|_\infty \leq \beta < 1$ then $ 1 - \beta \leq \langle \mean, \tilde{\bm 1} \rangle \leq 1+ \beta$ and 
\begin{equation}
\| \mean \| \geq (1-\beta)/\|\tilde{\bm 1}\| = (1-\beta)/ \inf\{ c: c^2 k \succeq \tilde{\bm 1} \otimes \tilde{\bm 1}\}.
\end{equation}
	 
In the following we will make use of the covariance operator $\tilde{\mathfrak{C}}$ (see Section \ref{sec:prelim}) to determine the location of $\mean$. Before exploring the relation between the covariance operator and the location of $\mean$ we note the following adaptation of the above discussion: If $\bm 1 \in \cH$ then  
$\langle \Cov \bm 1, \bm 1 \rangle = 1$ and whenever $\tilde{\bm 1} \in \cH$ fulfills $\|\bm 1 - \tilde{\bm 1}\|_\infty \leq \beta < 1$ then 
\begin{equation} \label{lbnd:cov_op}
\langle \Cov \bm 1, \bm 1 \rangle \geq (1-\beta)^2 \|\tilde{\bm 1}\| =  (1-\beta)^2/ \inf\{ c: c^2 k \succeq \tilde{\bm 1} \otimes \tilde{\bm 1}\}.
\end{equation}

Coming now to the problem of locating $\mean$ within $C$ we can take note of the following fundamental relationship.
Whenever $\cH$ is $d$-dimensional, $\|k\|_\infty <\infty$ and $\tilde{\mathfrak{C}}$ has an eigen-decomposition with smallest eigenvalue $\bar \lambda_d$ then for any $h \in \cH$, $\|h\|=1$, with $\langle h, \mean \rangle  = 0$, it follows that $\int h(x) \, dP = 0$ and 
\[
\int (\langle h, k(x,\cdot) - \mean \rangle)^2 \, dP = \int h^2(x) \, dP \geq \bar \lambda_d. 
\]
Since $\sup_{x\in\X} |h(x)| \leq \|k\|_\infty^{1/2}$ and $\int h(x) \, dP = 0$, a short calculation shows that 
\[
r(\mean,h) \geq \frac{\bar{\lambda}_d}{\|k\|_\infty^{1/2}}, 
\]
where with $h \in \cH$,
\[
	r(\mean,h) := 	(\sup_{x\in \X} \langle h, k(x,\cdot) - \mean \rangle) \wedge (- \inf_{x\in \X} \langle h, k(x,\cdot) - \mean \rangle). 
\]

In detail, it is sufficient to consider the case of a discrete measure where with probability $p$ the function $\langle h, k(x,\cdot) - \mean \rangle$ attains value $a$  and with probability $1-p$ attains value $b$, with $a < 0 < b$, and $a^2 p + b^2 (1-p) = \bar \lambda_d$. The condition $\langle h,\mean \rangle = 0$ then implies that $a p + b(1-p)  = 0$. For a particular value of $a$ we get that $p=b/(b-a)$ and $b = \bar \lambda_d/(-a)$. The value $b$ is minimized by maximizing $-a$ but $-a = \langle h,k(x,\cdot) \rangle$ for some $x \in \X$ and
$-a = |a| \leq \|k\|_\infty^{1/2}$. By symmetry we get that $(- a) \wedge b \geq \bar \lambda_d/\|k\|_\infty^{1/2}$.

The remaining direction we have to take care of is $h^* = \mean/\|\mean\|$
The distance of $\mean$ to the boundary in direction $h^*$ can be lower bounded away from zero when the smallest eigenvalue of the covariance operator is sufficiently large since 
\[ 
\int (\langle \mean ,k(x,\cdot) - \mean \rangle)^2 \,dP  =  \int (\langle \mean, k(x,\cdot) 
\rangle)^2 \,dP - \|\mean\|^4  = \langle \tilde{\mathfrak{C}} \mean,\mean \rangle - \|\mean\|^4 
\]
and 
\[
\int (\langle h^* ,k(x,\cdot) - \mean \rangle)^2 \,dP  = \langle \tilde{\mathfrak{C}} h^*,h^* \rangle  - \|\mean\|^2  \geq \bar \lambda_d - \|\mean\|^2. 
\]
Also, $|\langle h^*,k(x,\cdot) - \mean \rangle| 
\leq \|k(x,\cdot)\| + \|\mean\| \leq 2 \|k\|_\infty^{1/2}$ and 
\[
r(\mean,h^*) \geq \frac{\bar{\lambda}_d- \|\mean\|^2}{2\|k\|_\infty^{1/2}}. 
\]
For this approach to yield a useful bound $\bar \lambda_d$ has to be strictly greater than $\|\mean\|^2$.
However, $\bar \lambda_d$ can even be smaller than $\|\mean\|^2$. A better bound can  be gained by using an eigen-decomposition of $\tilde{\mathfrak{C}}_c = \tilde{\mathfrak{C}} - \mean \widehat{\otimes} \mean$. In the following let $\bar \lambda_1, \bar \lambda_2, \ldots$ be the eigenvalues of $ \tilde{\mathfrak{C}}_c$ then by the same argument as for $\mathfrak{C}$, whenever $h \in \cH$, $\|h\|=1$, is such that $\langle h, \mean \rangle = 0$, it follows that $\int (\langle h, k(x,\cdot) - \mean \rangle)^2 dP = \langle \tilde{\mathfrak{C}}_c h, h \rangle \geq \bar \lambda_d$, where $\bar \lambda_d$ is the smallest eigenvalue of $ \tilde{\mathfrak{C}}_c$, and $r(\mean,h) \geq \bar \lambda_d/\|k\|_\infty^{1/2}$. 
For $h^* = \mean/\|\mean\|$ we get now that 
\[
\int (\langle h^* ,k(x,\cdot) - \mean \rangle)^2 \,dP  = 
\langle \tilde{\mathfrak{C}}_c h^*,h^* \rangle   \geq \bar \lambda_d 
\]
and
$r(\mean,h^*) \geq \bar \lambda_d/2\|k\|_\infty^{1/2}$. Also notice that when $\bm 1 \in \cH$ then $\langle \tilde{\mathfrak{C}}_c \bm 1,\bm 1 \rangle = 0$ and $\bar \lambda_d = 0$.
We summarize these finding in the following proposition.
\begin{prop} \label{prop:cov_bound_radius} 
Let $(\X,\mathcal{A},P)$ be some probability space with measurable kernel function $k$ defined on it and such that the corresponding RKHS  $\cH$ has dimension $d<\infty$. Furthermore, assume that $\|k\|_\infty < \infty$ and that the centered covariance operator $\tilde{\mathfrak{C}}_c$ has an eigen-decomposition with smallest eigenvalue $\bar \lambda_d > 0$.  Then $\bm 1 \not \in \cH$ and for any $h\in \cH, \|h\| =1$,
\[
r(\mean,h) \geq \frac{\bar \lambda_d}{2\|k\|_\infty^{1/2}}.
\]
\end{prop}
The \textit{dimension dependence} is in this result not as obvious as in Proposition \ref{prop:Bauer_inspired}, but observe that when $\mathcal{X}$ is the $d$-dimensional unit sphere, $d =2l +1$ for some $l \in \mathbb{N}$, $P$ is the Lebesgue measure restricted to $\X$ and normalized, $k(x,y) = \langle x,y \rangle_{\R^d}$ for any $x,y \in \X$ and $\cH$ is the corresponding RKHS which is of dimension $d$, then any $h \in \cH, \|h\|=1,$
is of the form $\langle x,\cdot \rangle_{\R^d}$ for some $x\in \X,\|x\|_{\R^d}=1$ and for such an $h$
\begin{align*}
\langle \tilde{\mathfrak{C}} h,h \rangle &= \beta_d^{-1} \int h^2(y) \,dP(y)	= \beta_d^{-1} \int \langle x,y\rangle^2 \, dP(y) \\
					 &= \beta_d^{-1} \int_{-1}^1 \langle x,\tilde yx\rangle^2 \mu_{d-1}(B_{d-1}(\sqrt{1-\tilde y^2})) \,d \mu(\tilde y) \\ 
&= \beta_{d-1}\beta_d^{-1}  \int_{-1}^1 \tilde y^2 (1-\tilde y^2)^{l} \,d\mu(\tilde y)
=\frac{2^ll! \beta_{d-1}\beta_d^{-1}} {\prod_{i=1}^l (2i +1)}\int_{-1}^1 \tilde y^{2(l+1)} \,d\mu(\tilde y) \\
&= \frac{2^ll! \beta_{d-1}\beta_d^{-1}}{(\prod_{i=1}^l (2i +1))( l+ 3/2)} = \frac{(2l+1)!}{l! 2^{l+1}(\prod_{i=1}^l (2i +1))( l+ 3/2)}
= \frac{1}{d+ 2}, 
\end{align*}
where $\mu_{d-1}$ denotes here the $d-1$-dimensional Lebesgue measure and $\beta_{d-1}$ the Lebesgue measure of the $d-1$-dimensional unit sphere. Hence, the eigenvalues of $\tilde{\mathfrak{C}}$ shrink to zero as the dimension $d$ increases.

If $\bm 1 \in \cH$ then we get a similar result with $\bar \lambda_d$ being replaced by $\bar \lambda_{d-1}$.
\begin{corollary} \label{cor:cov_constant_inH}
Let $(\X,\mathcal{A},P)$ be some probability space with measurable kernel function $k$ defined on it and such that the corresponding RKHS  $\cH$ has dimension $d<\infty$ and $\bm 1 \in \cH$. Furthermore, assume that $\|k\|_\infty < \infty$ and that the centered covariance operator $\tilde{\mathfrak{C}}_c$ has an eigen-decomposition with eigenvalue $\bar \lambda_{d-1} > 0$.  Then for any $h\in \cH, \|h\| =1, \langle h, \bm 1 \rangle = 0$,
\[
r(\mean,h) \geq \frac{\bar \lambda_{d-1}}{2\|k\|_\infty^{1/2}}.
\]
\end{corollary}
\begin{proof}
First note that $\langle \tilde{\mathfrak{C}}_c \bm 1, \bm 1 \rangle = 0$ and for any $h \in \cH$ s.t.  $\langle h, \bm 1 \rangle =0 $, $\langle \tilde{\mathfrak{C}}_c \bm 1, h \rangle =\langle \tilde{\mathfrak{C}}_c h, \bm 1 \rangle = E(h) - E(h) = 0$. In particular, $\tilde{\mathfrak{C}}_c \bm 1 = 0$ and for any $h \in \cH, \|h\| =1, \langle h, \bm 1 \rangle = 0$, 
$\langle \tilde{\mathfrak{C}}_c h, h \rangle \geq \bar \lambda_{d-1}$. Now, for $h\in \cH, \|h\|=1, \langle h, \bm 1\rangle =0 $,
\begin{align*}
&\int (\langle h, k(x,\cdot) - \mean \rangle)^2 \, dP = \langle \tilde{\mathfrak{C}}_c h, h \rangle \geq \bar \lambda_{d-1}
\end{align*}
and the lower bound follows by the same argument as in Proposition \ref{prop:cov_bound_radius}. 

\end{proof}

\subsubsection{Data attaining values in a subset} \label{sec:data_subset_covariance_arg}
Unless $h \in \cH$ is constant on the support of $P$  it holds that $E( (h - E(h))^2)  > 0 $ and by the above arguments it follows that $r(\mean,h) > 0$.   
But $\mean$ can certainly lie in the boundary, for example, when $\mean = k(x,\cdot)$ and $k(x,\cdot)$ is an extreme of $C$. Therefore, there must be some direction $h$ in which $r(\mean,h) = 0$. The point is that when $\mean$ is an extreme or lies in a face of the convex set $C$ and this face has extremes $\{ k(x, \cdot) : x \in S \subsetneq \X \}$ then $P( X \in \X \setminus S)= 0$ has to hold. Furthermore, there then exists a normal $h^*$ to this face and $h^*$ is constant on the support of $P$ which implies that $\langle \tilde{\mathfrak{C}}_c h^*, h^* \rangle = 0$. 

This observation suggests that $\mean$ will either be an extreme or there will be a ball in an affine subset of $\cH$ around $\mean$ within a face of the convex set. Furthermore, we can hope that this affine subspace is directly related to the non-zero eigenvalues of $\tilde{\mathfrak{C}}_c$ and that these eigenvalues characterize a lower bound on the width of this ball. Alternatively, it is natural to consider the space $\cH_S = \{ h\!\!\upharpoonright\!\!S : h \in \cH \}$ where $S$ is the support of $P$ \cite[Def.411N]{FREM}. The space $\cH_S$ is again an RKHS with kernel $k_S = k\!\!\upharpoonright\!\! S \times S$ and norm $\|h\|_{k_S} = \inf\{\|u\| : u \!\upharpoonright\!S = h, u \in \cH\}$ \cite[Cor.5.8]{PAUL16}. In the proposition below we show that the covariance operator $\tilde{\mathfrak{C}}_c^S$ corresponding to $P$ and $\cH_S$ characterizes the ball around $\mean$ within the affine subspace spanned by $C$.

\begin{proposition}\label{prop:data_dep_ball}
Let $(\X,\mathcal{A},P)$ be a probability space with $P$ being a topological $\tau$-additive probability measure which has support $S$ and let $k$ be
a continuous kernel function $k$ defined on $\X$ such that the corresponding RKHS  $\cH$ has dimension $d<\infty$. Furthermore, assume that $\|k\|_\infty < \infty$, and that the centered covariance operator $\tilde{\mathfrak{C}}_c$ has an eigen-decomposition with eigenvalue such that (i)
$\bar \lambda_{l} > 0 = \bar \lambda_{l+1}$ for some $l< d$ or (ii) $ \bar \lambda_d > 0$. It follows that $\cH_S$ has dimension $l+1$ and $\tilde{\mathfrak{C}}^S_c$ has eigenvalues
$\bar \lambda^S_i = \bar \lambda_i$ for all $i\leq l$ and $\bar\lambda_{l+1}^S = 0$ under (i), and $\cH_S$ has dimension $d$ and the eigenvalues of $\tilde{\mathfrak{C}}^S_c$ are the same eigenvalues as of $\tilde{\mathfrak{C}}_c$    under (ii). Furthermore, under condition (ii) there exists a closed ball $B$ centered at $\mean$ with radius $\bar \lambda_d/2 \|k\|_\infty^{1/2}$  inside $C$. If $\lambda_1 = 0$ and there are no two points $x,y \in \X$ such that $k(x,x) = k(y,y) =k(x,y)$ then $S$ consists of a single element and $P = k(x,\cdot)$ for some $x\in \X$. If $\lambda_1 > 0$ then $\mean$ lies in the relative interior of  $F = \cch\{k(x,\cdot) : x \in S \}$. In particular, under condition (i) there exists a closed ball $B$ centered at $\mean$ such that $B\cap \aff F \subset F$ and $B$ has radius  
$\bar \lambda_l / 2 \|k\|_\infty^{1/2}$.
\end{proposition}
\begin{proof}
\textbf{(a)} When $\bar\lambda_d  >0$ it follows that $E(h^2(X))  > E(h(X))^2$ for all $h \in \cH, h \not = 0$. In particular, let $e_1,\ldots, e_d \in \cH$ be linearly independent  and fix  any $a_1,\ldots, a_d \in \mathbb{R}$ such that  $a_1, \ldots, a_d$ are not simultaneously equal to zero. Consider $e_1\!\!\upharpoonright\!\!S, \ldots, e_d\!\!\upharpoonright\!\!S$ then
\[
E\bigl( \bigl( \sum_{i=1}^d a_i e_i\!\!\upharpoonright\!\!S\bigr)^2 \bigr) =  E\bigl( \bigl( \sum_{i=1}^d a_i e_i \bigr)^2 \bigr) > 0.
\]
Since this holds for all such $a_1,\ldots, a_d$ it follows that $e_1\!\!\upharpoonright\!\!S, \ldots, e_d\!\!\upharpoonright\!\!S$ are linearly independent and $\cH_S$ is $d$-dimensional. Similarly, when $\lambda_l > 0$ and $\lambda_{l+1} = 0$ it follows that there is an $h_{\bm 1} \in \cH_S$ which is almost surely equal to $\bm 1 \!\!\upharpoonright\!\!S$ and $l$ linearly independent functions $e_1\!\!\upharpoonright\!\!S,\ldots, e_l\!\!\upharpoonright\!\!S \in \cH$ which are also linearly independent of $h_{\bm 1}$ (for any non-trivial linear combination of $e_1\!\!\upharpoonright\!\!S,\ldots, e_l\!\!\upharpoonright\!\!S$ it follows  that the second moment is strictly larger than the squared expected value and therefore these linear combinations are not equal to a constant function). In fact, $h_{\bm 1}$ is equal to $\bm 1\!\!\upharpoonright\!\!S$ since by assumption the kernel function is continuous: Assume that $h_{\bm 1}$ is not equal to $\bm 1$, take a $x\in S$ such that $h_{\bm 1}(x) \not = 1$ and let $\epsilon = | h_{\bm 1}(x) - 1|$. Take a function $h\in \cH$ such that $h\!\!\upharpoonright\!\!S = h_{\bm 1}$ and $\|h\| \leq \|h_{\bm 1}\|_{\cH_S} + \epsilon/4$. The set $A = h^{-1}[\{y: |y - h_{\bm 1}(x)| < \epsilon/4  \}]$ is open and has non-empty intersection with $S$. Also, $h_{\bm 1}$ is different from $\bm 1$ on all of $A$. Due to \cite[411N]{FREM} it follows that $P(A)>0$ and $\bm 1$ is not almost surely equal to $h_{\bm 1}$ which is impossible. By the same argument it follows that the other  eigenfunctions of $\tilde{\mathfrak{C}}_c$ with zero eigenvalues are constant on $S$ and therefore lie in the span of $h_{\bm 1}$ and the dimension of $\cH_S$ is $l+1$. 

\textbf{(b)} In case (ii), let $e_1,\ldots, e_d$ be orthonormal in $\cH$ then, due to (a), the functions $e_1\!\!\upharpoonright\!\!S,\ldots, e_d\!\!\upharpoonright\!\!S$ are linearly independent. Also $\|e_i\!\!\upharpoonright\!\!S\|_{\cH_S} = \| e_i \| $ for all $i\leq d$, because for any given $i\leq d$,  $e_i\!\!\upharpoonright\!\!S$ does not lie in the linear subspace spanned by $\{e_j\!\!\upharpoonright\!\!S\}_{j\not=i}$. Similarly, for any $i,j \leq d$,  $\|e_i\!\!\upharpoonright\!\!S + e_j\!\!\upharpoonright\!\!S\|_{\cH_S} = \|e_i + e_j\|$ and  $\|e_i\!\!\upharpoonright\!\!S + e_j\!\!\upharpoonright\!\!S\|^2_{\cH_S} = \|e_i\!\!\upharpoonright\!\!S\|_{\cH_S}^2 + \|e_j\!\!\upharpoonright\!\!S\|^2_{\cH_S}$. Hence, 
$\{e_i\!\!\upharpoonright\!\!S\}_{i\leq d}$ is an orthonormal basis of $\cH_S$. In particular, when $e_1,\ldots, e_d$ are the eigenfunctions of $\tilde{\mathfrak{C}}_c$ then 
$e_1\!\!\upharpoonright\!\!S,\ldots, e_d\!\!\upharpoonright\!\!S$ are the eigenfunctions of $\tilde{\mathfrak{C}}^S_c$ since for $i\not = j$,
\[
\langle \tilde{\mathfrak{C}}^S_c  e_i\!\!\upharpoonright\!\!S, e_j\!\!\upharpoonright\!\!S \rangle_{\cH_S} = \langle \tilde{\mathfrak{C}}_c  e_i, e_j \rangle = 0 
\]
and for any $i\leq d$, $\langle \tilde{\mathfrak{C}}^S_c  e_i, e_i \rangle= \bar \lambda_i$. 

By the same argument it follows that in case (i) that $e_1\!\!\upharpoonright\!\!S,\ldots,e_l\!\!\upharpoonright\!\!S, \bm 1\!\!\upharpoonright\!\!S / \|\bm 1\!\!\upharpoonright\!\!S\|_{\cH_S}$ is an orthonormal basis of $\cH_S$. Hence, if $e_1,\ldots, e_l$ are the first $l$ eigenfunctions of $\tilde{\mathfrak{C}}_c$ then  $e_1\!\!\upharpoonright\!\!S,\ldots, e_l\!\!\upharpoonright\!\!S$ are the eigenfunctions of $\tilde{\mathfrak{C}}^S_c$ and the eigenvalues match.

\textbf{(c)} When $\bar\lambda_1 = 0$ it follows that $\cH_S= \spn\{\bm 1\!\!\upharpoonright\!\!S\}$ and $\langle \mean_S,c \bm 1 \rangle_{\cH_S} = c = \langle \mean_{S,n}, c \bm 1\rangle_{\cH_S}$ for all $c\in \R$ where $\mean_S$ and $\mean_{S,n}$ are just $\mean$ and $\mean_n$ when the kernel is restricted to $S$. Also,  every function $h\in \cH$ is constant on $S$ and $\langle \mean, h \rangle = \langle \mean_n, h \rangle = \langle h, k(x,\cdot) \rangle$ for all $x\in S$. In particular, if $S$ does not consist of a single element it follows that $k(x,y) = k(x,x) = k(y,y)$ for all $x,y \in S$. Reversing this statement leads to the claim made in the proposition.
 
\textbf{(d)} Let $U = \aff F$ be the affine subspace spanned by $k(x,\cdot), x \in S$. The element $\mean$ lies within $U$ since for any $h \in \cH$ for which $\langle k(x,\cdot), h \rangle = c$ for all $x\in S$ and for some $c\in \R$ it follows that $\langle \mean,h \rangle = \int_S \langle k(x,\cdot), h \rangle \, dP = c$. In other words, if $\mean$ would not lie in  $U$ then there would be an $h, \|h\|> 0,$ that stands perpendicular on $U$ and such that $\mean = h + \argmin_{g \in U} \|g- \mean\|$, and $ \langle \mean ,h \rangle \not = \langle k(x,\cdot), h \rangle$ for all $x\in S$.

\textbf{(e)} Under condition (ii) the constant functions are not in $\cH$ and the conditions of Proposition \ref{prop:cov_bound_radius} are fulfilled and the existence of the ball with the specified radius follows directly.

Under condition (i) the constant functions are contained in $\cH_S$ and there exists a function $h_0 \in \cH$ such that $h_0(x)= 1$ for all $x \in S$. Consider $\inf \{\|h \| : h(x) =1  \text{ for all } x \in S \} \leq \|h_0\|$. Since $h \mapsto \|h\|$ is continuous and $\{ h : \|h \| \leq \|h_0\|\}$ is compact it follows that the infimum is attained at some $h^* \in \cH$. Also, $\|\bm 1\!\!\upharpoonright\!\!S\|_{\cH_S} = \inf \{\|h\| : h\!\!\upharpoonright\!\!S = \bm 1\!\!\upharpoonright\!\!S \} = \|h^*\|$ and $\bm 1\!\!\upharpoonright\!\!S = h^*\!\!\upharpoonright\!\!S$.

Let $U=  \aff \{ k(x,\cdot) : x\in S\} \subset \cH$  and  $V = U-U$ be the subspace parallel to $U$. For any $h \in U$ there exists $m \in \mathbb{N}$, $\lambda_1,\ldots, \lambda_m \in \R$,  $\lambda_1 + \ldots + \lambda_m = 1$ and $x_1,\ldots, x_m \in S$ such that  $h = \sum_{i=1}^m \lambda_i k(x_i,\cdot)$. In particular, if $g \in \cH$ is constant on $S$ and attains value $c$, then $\langle g,h \rangle = c \sum_{i=1}^m \lambda_i = c$. This implies that $g$ is orthogonal to $V$ since $\langle g, h_1 - h_2 \rangle = 0 $ for all $h_1,h_2 \in U$. 
Due to assumption (i) there are $d-l$ eigenfunctions $e_{l+1},\ldots, e_d$ of $\tilde{\mathfrak{C}}_c$ which are constant on $S$, that is $e_{l+1},\ldots, e_d$ are orthonormal and each of them is orthogonal to $V$. Also, any function $h$ that is orthogonal to $V$ has to be constant on $S$ since $h(x)  = \langle k(x,\cdot), h \rangle = \langle k(y,\cdot), h \rangle =h(y)$ for all $x,y \in S$. Since the eigenfunctions $e_1, \ldots, e_l$ have corresponding eigenvalues which are strictly greater than zero it follows that $e_1,\ldots, e_l$ cannot be constant and $V$ has dimension $l$. Also, note that $U$ cannot be equal to $V$, or better, $U$ cannot be a subspace but only an affine subspace: assume otherwise then $0 \in U$ and 
if $h$ is constant on $S$, attaining value $c \not = 0$, then $0 = \langle h, k(x,\cdot) \rangle = h(x) = c $ for all $x \in S$. In other words there cannot be functions that are constants on $S$ in $\cH$, but we know already that there are functions which are constant on $S$ in $\cH$.   

\textbf{(f)} We claim that $W = \spn( V \cup \{h^*\})$, when  equipped with the inner product of $\cH$, is isometric isomorphic to $\cH_S$, $U \subset W$ and $(\spn \{h^*\}) \cap U \not = \emptyset$. We start with the latter claim. Since $U$ is not a subspace it follows that the orthogonal projection of $0$ onto $U$ is not $0$ itself. In detail, take any $x\in S$ and let $P_V$ the projection onto the subspace $V$ then 
the orthogonal projection onto $U$ is the operator defined by $P_U h = k(x,\cdot) + P_V(h - k(x,\cdot))$. Now $P_U 0 = k(x,\cdot) - P_V k(x,\cdot) \not = 0$ and,  $P_U 0$ is orthogonal to $V$. But that means the $P_U 0$ is constant on $S$ and lies in the span of $e_{l+1},\ldots, e_d$. In particular, there is function that is constant on $S$ which lies in $U$.
Notice that for any function $h$ which is constant on $S$ there exists an element in $U$ that lies in the span of $h$ if $\|h\| = \sup_{g\in U} |\langle h, g \rangle|/\|g\|$.  
Recall that a function $g \in U$ can be written as $g = \sum_{i=1}^m \lambda_i k(x_i,\cdot)$ for some $m\in \mathbb{N}$, $x_1,\ldots, x_m \in S$ and such that $\lambda_1 + \ldots + \lambda_d = 1$. Hence, $\langle h, g \rangle = c$ if $h$ attains value $c$ and $h \in U$ if $\|h\| = |c| \sup_{g \in U} 1/\|g\|$. Consider now a function $h$ that is constant on $S$ and attains value $1$ then $\|h\| = \sup_{g \in U} 1/\|g\|$. For $0 < \epsilon  <  \|h^*\| $ take $g \in U$ such that $\|h\| \leq \epsilon + 1/\|g\|$, then
\[
 \frac{|\langle h^*, g \rangle|}{\|h^*\|} \leq \|g\| \leq \frac{|\langle h, g \rangle|}{\|h\| - \epsilon} = \frac{|\langle h^*, g \rangle|}{\|h\| - \epsilon} \leq  \frac{|\langle h^*, g \rangle|}{\|h^*\| - \epsilon}. 
\]
In other words,
\[
 \frac{|\langle h^*, g \rangle|}{\|g\|} \leq \|h^*\| \leq   \frac{|\langle h^*, g \rangle|}{\|g\|} + \epsilon 
\]
and $\sup_{g \in U} |\langle g, h^* \rangle|/\|h^*\| = \|h^*\|$ which implies that $(\spn \{h^*\}) \cap U \not = \emptyset$ and $U\subset W$

Let $\psi:W \to \cH_S$ be the function that associates with $h\in W$ the function $h\!\!\upharpoonright\!\!S \in \cH_S$. The function $\psi$ is linear since $(a f+g)\!\!\upharpoonright\!\!S = a (f\!\!\upharpoonright\!\!S) +g\!\!\upharpoonright\!\!S$ for all $f,g \in W$, $a\in \R$. We have seen already that $\psi(h^*) = \bm 1\!\!\upharpoonright\!\!S$ and $\|\psi(h^*)\|_{\cH_S} = \|h^*\|$. 
Also any $h \in U$ lies in the span of $\{ k(x,\cdot) : x\in S\}$ and, since $\cH$ is finite dimensional, $h$ can be written as $\sum_{i=1}^d a_i k(x_i,\cdot)$ for some $a_1,\ldots, a_d \in \R$ and $x_1,\ldots, x_d \in S$. Hence, any $g,h \in U \subset W$ can be written as $h =  \sum_{i=1}^d a_i k(x_i,\cdot)$, $g =  \sum_{i=1}^d b_i k(y_i,\cdot)$, with some $a_i,b_i \in \R, x_i, y_i \in S$ for all $i\leq d$, and 
\[
\langle g,h \rangle = \sum_{i,j}^d a_i b_j k(x_i,y_j) = \sum_{i,j}^d a_i b_j k_S(x_i,y_j) = \langle \psi(g), \psi(h) \rangle_{\cH_S}. 
\]
But this implies that for any $g,h \in V$, that is in the subspace parallel to $U$, there exists $\tilde g, \tilde h \in U$ and $x\in S$ such that 
$g = \tilde g - k(x,\cdot), h = \tilde h - k(x,\cdot)$ and 
\begin{align*}
	\langle g,h \rangle &= \langle \tilde g - k(x,\cdot), \tilde h - k(x,\cdot) \rangle  \\
			    &= \langle \psi(\tilde g), \psi(\tilde h) \rangle_{\cH_S} - \langle  \psi(\tilde g), k_S(x,\cdot) \rangle_{\cH_S} - \langle  \psi(\tilde h), k_S(x,\cdot) \rangle_{\cH_S} + k_S(x,x) \\
			    &= \langle \psi(\tilde g - k(x,\cdot)), \psi(\tilde h - k(x,\cdot)) \rangle_{\cH_S}\\
			    &= \langle \psi(g ), \psi( h ) \rangle_{\cH_S}, 
\end{align*}
since $\psi(k(x,\cdot)) = k_S(x,\cdot)$ for any $x\in S$. Finally, for any $g \in V$, write $g = \tilde g - k(x,\cdot)$ for some $x\in S$, $\tilde g \in U$ then 
\[
\langle g, h^* \rangle =  \langle \tilde g, h^* \rangle - \langle k(x,\cdot), h^* \rangle = 0  =  \langle \psi(\tilde g) - k_S(x,\cdot), \bm 1\!\!\upharpoonright\!\!S\rangle_{\cH_S} = \langle \psi(g),\psi(h^*) \rangle_{\cH_S},
\]
since $h^*$ is constant on $S$, $\psi(\tilde g) = \sum_{i=1}^m a_i  \psi(k(x_i,\cdot))$ for some $m\in \mathbb{N}$, $x_1,\ldots, x_m \in S$ and $a_1,\ldots, a_m$ such that $\sum_{i=1}^m a_i = 1$, and $\langle \psi(\tilde g), \bm 1\!\!\upharpoonright\!\!S \rangle_{\cH_S} = 1$. Hence, $\psi$ is an isometry and since $\cH_S$ and $\spn(V \cup \{h^*\})$ have the same it follows that they are isometric isomorphic. The existence of the ball around $\mean$ of the specified radius follows now directly from Corollary   \ref{cor:cov_constant_inH}.
\end{proof}

Notice that $\cH_S$ does not have to be equal to $\cH$ when $\bar \lambda_d > 0$. For instance, when $\cH$ consists of the quadratic functions on $[-1,1]$ and has therefore dimension $1$. If the measure $P$ is discrete with $P(\{-1\}) = P(\{1\}) = 1/2$ then $\bar \lambda_1 > 0$ but $S = \{ -1,1\}$ and $\cH_S \not = \cH$. 

\subsection{Locating $\mean_n$ within the empirical convex set} \label{sec:bringing_it_together}
We are now combining the various results we have derived. Section \ref{sec:refinement_location} allows us to refer the size of a ball within $C$ around $\mean$ back to the question of the width of $C$. In Section \ref{sec:diameter} we derived various ways to lower bound the width of $C$. We also know that $C_n$ converges to $C$. Section \ref{sec:emp_convex_sets} contains various results on that. These results combine Rademacher or VC bounds with lower bounds on$P \psi_\gamma(\langle u,\phi(\cdot) - \mean \rangle -c$ and  $P f_{u,c}$. These lower bounds are closely related to the bounds in Section  \ref{sec:refinement_location} since in both settings we need to measure how much probability mass lies in various directions behind some threshold. To get now high probability bounds for the existence of ball of a certain size around $\mean_n$ within $C_n$ we also need to control the convergence of $\mean_n$ to $\mean$. But that is easy to do with another VC or Rademacher argument. The following two theorems combine these results under some natural conditions. The first result  applies when $\X =[0,1]^l$, $\cH$ is finite dimensional, that functions in $\cH$ are Lipschitz continuous and that we have lower bound on the density of the law of $X_1,\ldots,X_n$ on $\X$. We also assume that $\bm 1 \in \cH$ but the result can easily be adapted to the case of $\bm 1 \not \in \cH$. 
\begin{theorem} \label{thm:ball_in_empirical}
Let $\X = [0,1]^l, l \geq 1,$ and $k$ a continuous kernel function on $\X$ such that the corresponding RKHS $\cH$ is $d$-dimensional, $1 \leq d < \infty$, functions $h \in \cH$ are Lipschitz continuous in the sense of \eqref{def:Lip_for_RKHS} with Lipschitz constant $L>0$, and $\bm 1 \in \cH$. Furthermore, let $X_1,\ldots, X_n$ be i.i.d. random variables defined on some probability space and such that the law $P$ of $X_1$ has a density $p$ on $\X$ and $\inf_{x\in\X} p(x) \geq c >0$ for some constant $c$. Mercer's theorem applies to $k$. Let $\tilde \lambda_d$ be the smallest eigenvalue of the Mercer decomposition. There exists a ball of radius
\[
\delta = 2 \tilde \lambda_d^{1/2} \wedge  \frac{2 c \tilde \lambda_d^{(l+1)/2}\beta_l}{(l+1) L^l}  
\]
around $\mean$ in $C$ in the affine subspace spanned by $C$. Furthermore, for any $q \in (0,1)$ and whenever 
\[
	n > \left(\frac{\sqrt{2\log(6/q)} + 96 \|k\|_\infty^{1/2}/\delta}{c\beta_l(\delta/8L)^l}\right)^{2} \vee 
		\left(\frac{4\|k\|_\infty^{1/2} + 3 \sqrt{2\log(3/q)}}{\delta/4}\right)^{2}
\]
then with probability $1-q$ there exists a ball of radius $\delta/4$ around $\mean_n$ in $C_n$ within the affine subspace spanned by $C$.
\end{theorem}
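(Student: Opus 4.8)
This is an assembly of the diameter bound of Section~\ref{sec:diameter}, the location bound of Section~\ref{sec:refinement_location} and the empirical-process estimates of Section~\ref{sec:emp_convex_sets}; no new idea is needed, but the pieces must be glued inside the affine subspace $\caff C$ and the constants tracked. Since $\cH$ is finite dimensional with $\bm 1\in\cH$ and $k$ has a Mercer decomposition with $\tilde\lambda_d\le 4$, Proposition~\ref{prop:approx_lower_bnd}(9) gives $\diam_h(C)\ge b:=\tilde\lambda_d^{1/2}/2$ for every $h$ in the unit sphere of $\cH^-=U_C$ (recall $U_C^\perp=\spn\bm 1$, and by Lemma~\ref{lem:approx_error_sup_finite} the $\cH^-$- and $\cH$-norms agree on $U_C$, so ``unit in $\cH^-$'' coincides with ``unit in $\cH$, lying in $U_C$''). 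Fix such an $h$ and set $c_h^{*}:=\sup_{x\in\X}\inner{h}{\phi(x)-\mean}$. Since $c_h^{*}+c_{-h}^{*}=\diam_h(C)\ge b$, one of $c_h^{*},c_{-h}^{*}$ is $\ge b/2$; if $c_h^{*}<b/2$ then $c_{-h}^{*}\ge b/2$ and Proposition~\ref{prop:Bauer_inspired} applied to $-h$ with $\gamma=b/2$ (and the role of $d$ there played by $l$) gives $c_h^{*}=\max_{y\in\X}\inner{h}{\phi(y)-\mean}\ge c(b/2)^{l+1}\beta_l/((l+1)(2L)^l)$. In every case $c_h^{*}\ge\delta$ with $\delta=(b/2)\wedge\frac{c(b/2)^{l+1}\beta_l}{(l+1)(2L)^l}$. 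By the $\caff C$-relative form of the characterisation~\eqref{cri:alt_ball} this already yields the first assertion: $C$ contains a closed ball of radius $\delta$ around $\mean$ within $\caff C$.

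\emph{Reduction for $C_n$.} The proof of~\eqref{cri:alt_ball} is a separating-hyperplane argument about an arbitrary closed convex set and one of its interior points, so it applies verbatim to $C_n$, $\mean_n$ and the subspace $U_C$: $C_n$ contains a closed ball of radius $\delta/4$ around $\mean_n$ inside $\caff C$ whenever
\[
\adjustlimits\inf_{h\in U_C,\ \norm{h}=1}\ \max_{i\le n}\ \inner{h}{\phi(X_i)-\mean_n}\ \ge\ \delta/4
\]
(note $\mean_n=\tfrac1n\sum_i\phi(X_i)\in\caff C$, and a positive infimum forces $\caff C_n=\caff C$). Because $\inner{h}{\phi(X_i)-\mean_n}\ge\inner{h}{\phi(X_i)-\mean}-\norm{\mean_n-\mean}$, it suffices to prove, on one high-probability event, that (i) $\max_{i\le n}\inner{h}{\phi(X_i)-\mean}\ge\delta/2$ for every unit $h\in U_C$, and (ii) $\norm{\mean_n-\mean}\le\delta/4$.

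\emph{Claims (i) and (ii).} For (i) I would use the Rademacher bound of Section~\ref{sec:emp_convex_sets} together with the Lipschitz/density lower bound of Section~\ref{sec:Lower_bnd_P_Lipschitz}, taking $\gamma=\delta/4$. In the orientation of Section~\ref{sec:Lower_bnd_P_Lipschitz}, for unit $u$ and $c<c_u^{*}$ the indicator $\chi\{\inner{u}{\phi(\cdot)-\mean}\ge c\}$ dominates a contraction whose $P$-integral is at least $c'\,\vol(\X\cap B(x_u,r_{u,1}))$ with $r_{u,1}=(c_u^{*}-c-\gamma)^{+}/L$ and $c'$ the density lower bound (the ``$c$'' of the theorem); letting $c\uparrow\delta/2$ and using $c_u^{*}\ge\delta$ gives $r_{u,1}\ge\delta/(4L)$, hence $\vol(\X\cap B(x_u,r_{u,1}))\ge 2^{-l}\beta_l(\delta/4L)^{l}=\beta_l(\delta/8L)^{l}$ (a ball of that radius meets $[0,1]^l$ in at least a $2^{-l}$-fraction). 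The Section~\ref{sec:emp_convex_sets} Rademacher estimate — unchanged under this sign orientation — then gives, on an event of probability $\ge q$ and simultaneously over the countable family $\tilde{\mathcal{F}}$,
\[
P_n\psi_\gamma(\,\cdot\,)\ \ge\ c'\beta_l(\delta/8L)^{l}-\bigl(\sqrt{2\log(1/q)}+24\|k\|_\infty^{1/2}/\gamma\bigr)n^{-1/2},
\]
with $24\|k\|_\infty^{1/2}/\gamma=96\|k\|_\infty^{1/2}/\delta$; this is positive — so $\max_i\inner{u}{\phi(X_i)-\mean}>c$ — once $n$ exceeds the first term of the stated bound, and one passes from $\tilde{\mathcal{F}}$ to an arbitrary unit $h\in U_C$ by approximation (finitely many $X_i$, $\psi_\gamma$ continuous, $P\psi_\gamma$ continuous in $u$ by dominated convergence) and $c\uparrow\delta/2$. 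For (ii), $\norm{\mean_n-\mean}=\sup_{\norm h=1}(P_n-P)h$ is the supremum of the empirical process over the unit ball of $\cH$ (a VC-subgraph class of index $\le d+1$), a normalised sum of i.i.d.\ $\cH$-valued vectors bounded by $\|k\|_\infty^{1/2}$; symmetrisation gives $E\norm{\mean_n-\mean}\le 2\|k\|_\infty^{1/2}n^{-1/2}$, and Bousquet's form of Talagrand's inequality (exactly as applied to $\tilde{\mathcal{F}}$ in Section~\ref{sec:emp_convex_sets}) upgrades this to $\norm{\mean_n-\mean}\le\delta/4$ on an event of probability $\ge q$ once $n$ exceeds the second term, the constants $4$ and $3$ absorbing the mean term, the sub-Gaussian deviation term and the lower-order Bernstein term. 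Intersecting the events of (i) and (ii) and invoking the reduction above finishes the proof.

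\emph{Where the work is.} There is no single deep step; the difficulty is organisational. The care-points are: (a) running everything in the relative interior of $\caff C$ — consistently using $U_C=\cH^-$, the norm equality of Lemma~\ref{lem:approx_error_sup_finite}, and the restricted form of~\eqref{cri:alt_ball}; (b) the passage from the countable index set $\tilde{\mathcal{F}}$ to the whole unit sphere of $U_C$; and (c) matching the constants so that the two concentration estimates land on exactly the stated thresholds for $n$, which for the second term is sensitive to the precise inequality invoked for $\norm{\mean_n-\mean}$.
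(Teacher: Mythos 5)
Your proposal is correct and follows essentially the same route as the paper: the population ball comes from combining Proposition \ref{prop:approx_lower_bnd} (Mercer diameter bound in the directions of $U_C=\cH^-$) with Proposition \ref{prop:Bauer_inspired}, and the empirical part uses the same $\psi_\gamma$-Rademacher bound with $\gamma=\delta/4$, threshold $\delta/2$, the Lipschitz/density volume bound $c\beta_l(\delta/8L)^l$, and a separate concentration bound forcing $\|\mean_n-\mean\|\le\delta/4$, with identical constants and sample-size thresholds. The only point where you are looser than the paper is the final step: intersecting two events each of probability at least $q$ does not give probability $q$, which the paper resolves by running each of the two bounds at level $(1-q)/2$ before combining.
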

\begin{proof}
The existence of the ball around $\mean$ in $C$ has already been derived at the end of Section \ref{sec:assumption_density}
and the bound on the width in terms of the lowest eigenvalue of the Mercer decomposition has been stated in Proposition \ref{prop:approx_lower_bnd}.
	
\textbf{(a)} We start with high probability bounds for $\|\mean_n - \mean\|$ being small using Rademacher complexities. Let $\tilde{\mathcal{F}}$ be a countable dense subset of the unit ball of $\cH_-$ then for any $\alpha \in \R$,
\[
\Pr\left(\|\mean_n -\mean\| \geq \alpha \right) = \Pr\left(\sup_{h\in \tilde{\mathcal{F}}} | (1/n)\sum_{i=1}^n f(X_i) - P f| \geq \alpha \right)
\]
since, $\|\mean_n - \mean\| = \sup_{h\in\tilde{\mathcal{F}}} \langle h,\mean_n -\mean \rangle$. In particular, for any $q>0$ and  $\alpha = 4 n^{-1/2} \|k\|_\infty + 3\sqrt{2\log(1/q)} n^{-1/2}$
we can infer from \cite[Lem.22]{BART02} and \cite[Thm.3.4.5]{GINE15} that,
\begin{equation} \label{eq:mean_rademacher}
\Pr\left(\|\mean_n -\mean\| \geq (4 \|k\|^{1/2}_\infty + 3 \sqrt{2\log(1/q)})  n^{-1/2} \right) \leq q.	
\end{equation}

\textbf{(b)} Next, we expand the argument from Section \ref{sec:Lower_bnd_P_Lipschitz} to control the difference between $C_n$ and $C$. Let $c' = -\delta/2$ then   
for any $\gamma >0$ and with probability $1 - q$ simultaneously for all $u \in \tilde{\mathcal{F}}$, 
\begin{align}
&P_n \psi_\gamma(\langle u, \phi(\cdot) - \mean \rangle -c') \\
&\geq P \psi_\gamma(\langle u, \phi(\cdot) - \mean \rangle -c') - (
	\sqrt{2\log(2/q)} + 24\|k\|_\infty^{1/2} /\gamma) n^{-1/2}. \notag
\end{align} 
Chose $\gamma = \delta/4$ and let $x_0 \in \X$ be a point such that $\langle u, \phi(x_0) - \mean \rangle \leq -\delta$.  
Then,
\begin{align*}
P \psi_\gamma(\langle u, \phi(\cdot) - \mean \rangle -c') \geq  \Pr( \langle u,\phi(X_1) -\mean \rangle \leq -\gamma + c').
\end{align*}
As in the proof of Proposition \ref{prop:Bauer_inspired} let $A = \{ y: \|y-x_0\| \leq \delta/4L, y \in \X \}$ and $B$ the translation of $A$ to the origin, $B = \{y: \|y\| \leq \delta/4L\}$. Then the Lebesgue measure of $B$ is  $\mu_l(B) = (\delta/8L)^l \beta_l$ and $\Pr( X_1 \in A) \geq c \mu_l(B)$. Hence,
\[
 \Pr( \langle u,\phi(X_1) -\mean \rangle \leq -\gamma +  c') \geq \Pr(X_1 \in B) \geq c \beta_l (\delta/8L)^l.
\]
For a given $q$ let 
\[
	N_q =  \left(\frac{\sqrt{2\log(2/q)} + 96 \|k\|_\infty^{1/2}/\delta}{c\beta_l(\delta/8L)^l}\right)^{2}
\]
then whenever $n> N_q$ with probability $1-q$ there is a ball of radius $\delta/2$ around $\mean$ in $C_n$. 

\textbf{(c)} Finally, we transfer the lower bound that we have for a ball within $C_n$ around $\mean$ to $\mean_n$. For $q \in (0,1)$ let 
\[
\tilde N_q = \left(\frac{4\|k\|_\infty^{1/2} + 3 \sqrt{2\log(1/q)}}{\delta/4}\right)^{2}.
\]
Then for any $n > \tilde N_q$ with probability at least $1-q$, $\| \mean_n - \mean\| \leq \delta/4$.

Bringing this together, with probability $1-q$ there is ball of size $\delta/4$ in $C_n$ around $\mean_n$ whenever $n > N_{q/3} \vee \tilde N_{q/3}$. 

\end{proof}

The second result uses an assumption on the centered covariance operator instead of an assumption on the density. For this result we actually do not need to use the results on the minimal width of $C$. We forumlate this result directly for the case where $P$ is allowed to attain values in a strict subset of $\X$. A fortunate circumstance in that setting is that the empirical convex set converges to the intersection of $C$ with the minimal face that contains $\mean$ and algorithms that work with the empirical convex set adapt automatically to the structure of the covariance operator. The result relies on the existence of the support of the measure $P$. A weak assumption to guarantee this existence is that $P$ is a $\tau$-additive topological measure \cite[411N]{FREM}.    
\begin{theorem} \label{thm:covariance}
Let $(\X,\mathcal{A},P)$ be some probability space with $P$ being a topological measure that is $\tau$-additive, and 
with measurable kernel function $k$ defined on $\X$ such that the corresponding RKHS  $\cH$ is finite dimensional.  Furthermore, let $X_1,\ldots, X_n$ be i.i.d. random variables attaining values in $\X$ and with law $P$ and assume that $\|k\|_\infty < \infty$, and that the centered covariance operator $\tilde{\mathfrak{C}}_c$ has an eigen-decomposition with smallest non-zero eigenvalue being $\bar \lambda_d$. There exists a ball of radius $\delta = \bar \lambda_d/2 \|k\|_\infty^{1/2}$ around $\mean$ in $C$ within the affine subspace spanned by $C$. 
Furthermore, for any $q \in (0,1)$ and whenever $n$ is (strictly) greater than 
\[
	\left(
\frac{8 \|k\|_\infty (\sqrt{2\log(6/q)} + 192\|k\|_\infty /\bar \lambda_d)}{\bar \lambda_d^2} \right)^{2} \vee 
		\left(\frac{16\|k\|_\infty^{1/2} +  \sqrt{288\log(2/q)}}{\delta}\right)^{2}
\]
then with probability $1-q$ there exists a ball of radius $\delta/4$ around $\mean_n$ in $C_n$ within the affine subspace spanned by $C$.
\end{theorem}
\begin{proof}
\textbf{(a)} The existence of the ball with radius $\delta$ follows directly from Proposition \ref{prop:data_dep_ball}. Furthermore, the same high probability bound for $\|\mean_n -\mean\|$ as in the proof of Theorem \ref{thm:ball_in_empirical} applies. The bound for $C_n$ also runs along the same line as in Theorem \ref{thm:ball_in_empirical}. Consider in the following the RKHS $\cH_S$. Let $\gamma = \delta/4$ and $c = - \delta/2$ then Equation \eqref{eq:lbnd_P_cov} tells us that $ P\psi_\gamma(\langle u,\phi_S(\cdot) - \mean_S \rangle_{\cH_S} -c ) \geq 
\bar \lambda^2/8\|k\|_\infty$ whenever $u\in \cH_S$ has unit norm. Hence, with probability $1-q$ and simultaneously for all $u \in \tilde{\mathcal{F}}$, where $\tilde{\mathcal{F}}$ is countable dense subset of the unit ball of $\cH_S$, 
\[
P_n \psi_\gamma(\langle u, \phi(\cdot) - \mean \rangle -c') \\
\geq  \bar \lambda_d^2/8\|k\|_\infty - (\sqrt{2\log(2/q)} + 192\|k\|_\infty /\bar \lambda_d) n^{-1/2}. 
\]

\textbf{(b)} For a given $q \in (0,1)$ let 
$\tilde N_q = ((16\|k\|_\infty^{1/2} + 12 \sqrt{2\log(1/q)})/ \delta)^2$ then for any $n > \tilde N_q$, $\| \mean_n - \mean\| \leq \delta/4$ with probability $1-q$. Similarly, with probability $1-q$ for $n >N_q$ there is a ball of radius $\delta/2$  around $\mean$ in $C_n$ (as a subset of the affine subspace spanned by $C$), where
\[
N_q = \frac{64 \|k\|^2_\infty (\sqrt{2\log(2/q)} + 192\|k\|_\infty /\bar \lambda_d)^2}{\bar \lambda_d^4 }. 
\]
\end{proof}

The convergence of the  empirical mean embedding and empirical convex set are both unproblematic in the large sample case in both theorems. The bottleneck of the approach is rather the size of the convex set $C$ itself.

\section{Related approximation problems} \label{sec:related_approx}
When confronted with a concrete statistical problem it is typically insufficient to only approximate $\mean$. For example, the least-squares error, when a regressor $f$ from an RKHS $\cH$ with kernel $k$ is used, is
\[
	\frac{1}{n} \sum_{i=1}^n (f(X_i) - Y_i)^2 = \frac{1}{n} \sum_{i=1}^n \langle f \otimes f, k(X_i,\cdot) \otimes k(X_i,\cdot) \rangle_{\otimes} -  \frac{2}{n} \sum_{i=1}^n \langle f, Y_i k(X_i,\cdot) \rangle + \frac{1}{n} \sum_{i=1}^n Y_i^2.  
\]
Looking at the right hand side we can note that we have to deal with \textit{multiple approximation problems}. There are two high level approaches to addressing multiple approximation problems. We can either solve each approximation problem individually or we can solve them  simultaneously. In terms of finding core-sets this means that we will get three different core-sets when solving the approximation problems individually and a single core-set when we solve the approximation problems simultaneously. Before getting back to this discussion, let us have a look at the individual terms in the above least-squares problem.

The \textit{third term} on the right hand side is rather unproblematic since it does not depend on $f$ and can be summarized by a single real number. In particular, if we approximate each term individually then we can compress this term down to a single real number. The 
\textit{first term} on the right hand side 
corresponds to an empirical covariance  and can be treated in a similar way to the empirical measure, i.e. 
\[
	\frac{1}{n} \sum_{i=1}^n  k(X_i,\cdot) \otimes k(X_i,\cdot) 
\]
attains values in  $\cH \otimes \cH$. It fact, since we are only interested in the terms $f^2(X_i)$,
there is an RKHS that is better suited for our purposes than $\cH \otimes \cH$. Due to \cite[Thm.5.16]{PAUL16} there exists a function $g\in \cH \odot \cH$ such that
$f^2(X_i) = g(X_i)$, where $\cH\odot \cH$ is the RKHS that corresponds to the kernel function $\kappa(x,y) = k^2(x,y)$. The empirical covariance, when restricted to $\{ (h,h) : h \in \cH \}$, can be identified with 
\[
	\mathfrak{C}_n = \frac{1}{n} \sum_{i=1}^n \kappa(X_i,\cdot) \in \cH \odot \cH,
\]
i.e. for any $h \in \cH$, 
\[
	\langle \mathfrak{C}_n, h^2 \rangle_{\cH \odot \cH} = \frac{1}{n} \sum_{i=1}^n h^2(X_i) = \frac{1}{n} \sum_{i=1}^n \langle k(X_i,\cdot) \otimes k(X_i, \cdot), h\otimes h \rangle_{\otimes}.
\]
The random element $\mathfrak{C}_n$ attains values within the empirical convex set
\[
	C_{\odot, n } := \cch \{\kappa(X_i,\cdot) : i \leq n\}.
\]
The corresponding population covariance element is given by
\begin{equation} \label{eq:cov_as_mean_emb}
	\mathfrak{C} = \int_{\X} \kappa(x,\cdot)  \, dP(x)
\end{equation}
which is contained in the convex set 
\[
	C_\odot = \cch \{ \kappa(x,\cdot) : x\in \X \}.
\]
In Section \ref {sec:cov_centered}
we used the covariance operator $\Cov:\cH \to \cH$. Notice that $\Cov$ and $\mathfrak{C}$ are closely related since for any $h\in \cH$,
$\langle \Cov h,h \rangle = E(h^2(X)) = \langle \mathfrak{C}, h^2 \rangle_{\cH \odot \cH}$.

The \textit{second term} in the above sum is more difficult to deal with than the other two due to the elements $Y_i$.
We are looking at two approaches in Section \ref{sec:DealWithY}: In the first approach we consider $\mean_{y,n} = (1/n) \sum_{i=1}^n Y_i k(X_i,\cdot)$ as a subset of $\cch \{Y_i k(X_i,\cdot) : i \leq n\}$. That approach works well when we consider the approximation problem in isolation, but it does lead to complications when trying to approximate $\mean_{y,n}$ simultaneously to $\mathfrak{C}_n$. In the second approach we incorporate the $Y_i$'s into the kernel by using  $\langle Y_i, \cdot \rangle_\mathbb{R} \otimes k(X_i,\cdot)$ as a kernel function and by mapping $f\in \cH$ to $\langle 1,\cdot \rangle_\mathbb{R} \otimes f(\cdot) \in \dR \otimes \cH$, i.e. for $i\leq n$, 
\[
	\langle \langle 1,\cdot \rangle_\mathbb{R} \otimes f(\cdot), \langle Y_i,\cdot \rangle_\mathbb{R} \otimes k(X_i,\cdot) \rangle_{\dR \otimes \cH} = Y_i f(X_i) = \langle f,Y_i k(X_i,\cdot) \rangle.  
\]
In this approach we are aiming to approximate 
\[
\mean^\otimes_{y,n} = \frac{1}{n} \sum_{i=1}^n \langle Y_i,\cdot \rangle_\R \otimes k(X_i,\cdot) = \langle 1, \cdot \rangle_\R \otimes \mean_{y,n}. 
\]
In the least-squares problem we might like to use the same points $X_i$ with the same weights $w_i$ to 
\textit{approximate $\mathfrak{C}_n$ and $\mean_{y,n}$ simultaneously}. As we mentioned above this approach is facilitated by incorporating the $Y_i$'s and by moving to $\mean_{y,n}^\otimes$. Similarly, it is useful to extend the functions in $\cH \odot \cH$ to $\R \times \X$ by setting $\hat h(y,x) = h(x)$ for $h\in \cH \odot \cH$. We denote the resulting space by $\widehat{\cH \odot \cH}$ which is again a Hilbert space when using the inner product $\langle \hat h, \hat g \rangle_{\widehat{\cH \odot \cH}} = \langle h, g \rangle_{\cH \odot \cH}$, for any $g,h \in \cH \odot \cH$.
In fact, it is an RKHS with kernel function $\kappa_y((y_1,x_1),(y_2,x_2)) = \kappa(x_1,x_2)$, since 
\[
	 \langle \hat h, \kappa_y((y,x),\cdot) \rangle_{\widehat{\cH \odot \cH} } = \langle h, \kappa(x, \cdot \rangle) \rangle_{\cH \odot \cH} = h(x) = \hat h(y,x).
\]
The empirical covariance operator now becomes $\mathfrak{C}_{y,n} = (1/n) \sum_{i=1}^n \kappa_y((Y_i, X_i), \cdot)$.

One way to achieve a simultaneous approximation of $\mathfrak{C}_{y,n}$ and $\mean_{y,n}^\otimes$ is to use a direct sum $\mathcal{G} = (\widehat{\cH \odot \cH}) \oplus (\dR \otimes \cH)$ and consider the convex set 
\[
	C_{\oplus,n} = \cch \{ (\kappa((Y_i,X_i),\cdot), \langle Y_i,\cdot \rangle_\R \otimes k(X_i, \cdot)) : i \leq n  \}	\subset \mathcal{G}.
\]
The element that we like to approximate is in this context $(\mathfrak{C}_{y,n}, \mean^\otimes_{y,n})$ which lies in $C_{\oplus,n}$. Let $(\bar{\mathfrak{C}}_{y,n}, \bar{\mean}_{y,n}^\otimes)$ be some element in $\mathcal{G}$, then 
\begin{align*}
	&\| (\bar{\mathfrak{C}}_{y,n}, \bar{\mean}_{y,n}^\otimes)  -  (\mathfrak{C}_{y,n}, \mean_{y,n}^\otimes)  \|_{\mathcal{G}}^2 =
	\| \bar{\mathfrak{C}}_{y,n}  -  \mathfrak{C}_{y,n}  \|_{\widehat{\cH \odot \cH}}^2 + \|  \bar{\mean}^\otimes_{y,n}  -  \mean_{y,n}^\otimes  \|_{\dR \otimes \cH}^2 
\end{align*}
and a good approximation in $\cG$ guarantees good approximations of $\mathfrak{C}_{y,n}$ and $\mean_{y,n}^\otimes$ simultaneously.

\subsection{Assumptions}
There are some \textit{minimal assumptions} that we need to impose on $\mathfrak{C}, \mean_y$ and variations thereof to be well defined.
Generally, we assume that we have independent pairs of random variables $(X,Y),(X_1,Y_1), \ldots$ defined on some probability space 
$(\Omega,\mathcal{A},\mu)$. For $\mathfrak{C}$ to be well-defined it suffices to assume that $\kappa(Y,\cdot) \in \mathcal{L}^1(\mu;\cH \odot \cH)$ and, similarly, for $\mean_y$ it suffices to assume that $Y \in \mathcal{L}^2(\mu)$ and  $k(X,\cdot) \in \mathcal{L}^2(\mu;\cH)$ since 
then $\int \| Y k(X,\cdot) \| \, d\mu \leq \|Y\|_2 \bn k(X,\cdot)\bn_2 < \infty$ and 
$\mean_y = \int Y  k(X,\cdot) \, d\mu \in \cH$. 

Some \textit{further assumptions} are useful to facilitate the following analyses. In particular, in the least-squares setting it is natural to assume that 
$Y = f_0(X) + \epsilon$, where $f_0$ is a suitable function, $\epsilon$ is a zero mean real-valued random variable representing measurement noise, and $X$ and $\epsilon$ are independent. When making this assumption we are assuming that $\epsilon$ is a random variable that is defined on the probability space $(\Omega,\mathcal{A},\mu)$. To guarantee that $Y \in \mathcal{L}^2(\mu)$ it is enough to assume that $f_0(X) \in \mathcal{L}^2(\mu)$ and $\epsilon \in \mathcal{L}^2(\mu)$. 


\subsection{Covariance operators} \label{sec:cov_op_approx}
Since Equation \eqref{eq:cov_as_mean_emb} tells us that the covariance operator $\mathfrak{C}$ can be treated like a mean element after changing the kernel, it follows immediately that the techniques we developed for approximating $\mean_n$ and $\mean$ can be applied to the covariance operator $\mathfrak{C}$ and its empirical version $\mathfrak{C}_n$. 
One might also wonder if the approximation problems for $\mean_n$ and $\mathfrak{C}_n$ are  related and if any information that we might deduce about $C_n$ and $\mean_n$ can give us insights into the approximation problem for $\mathfrak{C}_n$. For instance, can we say anyhting about the width of the convex set in $\cH_\kappa$ based on the width of the convex set in $\cH$? Or, does an assumption on the variance of functions in $\cH$ translate to statements about the variance of certain functions in $\cH_\kappa$?  Unsurprisingly, this seems to be impossible to do in general. However, for certain functions in $\cH_\kappa$ we can infer statements about the width and the variance. Similarly, under very stringent assumptions on certain eigenvalues we can say something about the width of the convex set with respect to any function in $\cH_\kappa$. Before looking into these questions we start by taking a closer look at the RKHS $\cH \odot \cH$.

Note that the space $\cH \odot \cH$ is not just the RKHS corresponding to the kernel $\kappa(x,y) = k^2(x,y)$ but it is also closely related to the tensor product $\cH \otimes \cH$. In particular, $ h \in \cH \odot \cH$ if, and only if, there exists $u\in \cH \otimes \cH$  such that $h(x) = u(x,x)$ for all $x \in \X$, and then $\|h\|_{\cH \odot \cH} = \inf\{\|u\|_{\cH \otimes \cH} : h(x) = u(x,x) \text{ for all } x\in \X \}$ \cite[Thm.5.16]{PAUL16}. Observe that for any 
$h \in \cH \odot \cH$ the set $A_h:= \{u \in \cH \otimes \cH : h(x) = u(x,x) \text{ for all } x\in \X  \}$ is a convex subset of $\cH \otimes \cH$. Define the linear operator $T: \cH \otimes \cH \to \cH \odot \cH$ by $T u = u \circ \psi$ where $\psi: \X \to \X \times \X, \psi(x) = (x,x)$, and observe that $T$ is bounded since $\|Tu\|_{\cH \odot \cH} \leq \|u\|_{\cH \otimes \cH}$. Hence, $A_h = T^{-1}[\{h\}]$ is  closed. Also, observe that when 
$h = (f\otimes g) \circ \psi$ for some $f,g \in \cH$ then there exists  functions $f_1,f_2,f_3 \in \cH$ such that $h = (1/2) (f_1 \otimes f_1 - f_2 \otimes f_2 - f_3 \otimes f_3) \circ \psi$ and $\|f \otimes g\|_\otimes \geq \|f_1 \otimes f_1 + f_2 \otimes f_2 + f_3 \otimes f_3 \|_\otimes$: choose $f_1 = (f- g), f_2 = f, f_3 = g$ then  
\[
	f \otimes g + g \otimes f =  (f+g) \otimes (f+g) - (f \otimes f) - (g \otimes g) = f_1 \otimes f_1 - f_2 \otimes f_2 - f_3 \otimes f_3
\]
and, since $(f \otimes g) \circ \psi = (g \otimes f) \circ \psi$, the first statement follows. In terms of the norm observe that $\| f \otimes g + g\otimes f\|_\otimes = \| f_1 \otimes f_1 - f_2 \otimes f_2 - f_3 \otimes f_3\|_\otimes \leq 2\|f\|\|g\| = 2 \|f\otimes g\|_\otimes$. Also, note that 
\[
	(1/2)	\| f \otimes g + g \otimes f\|_\otimes^2 =  \|f\|^2 \|g\|^2 + |\langle f , g \rangle|^2 = \|f \otimes g\|_\otimes^2 + |\langle f , g \rangle|^2 (\leq 2 \|f \otimes g\|^2),   
\]
where the expression in the bracket follows from the Cauchy-Schwarz inequality. Hence, when
$f$ and $g$ are orthogonal, we have that 
\[
\| f_1 \otimes f_1 - f_2 \otimes f_2 - f_3 \otimes f_3\|_\otimes = \sqrt{2} \|f \otimes g\|_\otimes. 
\]
The point is that for any tensor $f \otimes g$ we can express $(f \otimes g) \circ \psi$  as a linear combination of `symmetric' tensor elements applied to $\psi$ without increasing the tensor norm.

Let us next consider the closed subspace $U = \cspn \{ k(x,\cdot) \otimes k(x,\cdot) : x \in \X\} \subset \cH \otimes \cH$ and the orthogonal projection $P_U$ onto it. For any $u \in \cH \otimes \cH$  we have that $P_U u $ lies in the subspace spanned by $k(x,\cdot) \otimes k(x,\cdot)$ and $\|P_U u\|_\otimes \leq \|u\|_\otimes$. In fact, for any $x \in \X$, 
\begin{align*}
u \circ \psi(x) &= \langle u, k(x,\cdot) \otimes  k(x,\cdot) \rangle_\otimes  
 = \langle u, P_U(k(x,\cdot) \otimes  k(x,\cdot)) \rangle_\otimes \\ 
&= \langle P_U u, k(x,\cdot) \otimes  k(x,\cdot) \rangle_\otimes 
= (P_U u) \circ \psi(x).
\end{align*}
In other words, for $h \in \cH \odot \cH$, if we can show that the infimum will be attained over $A_h$, then there exists an element $u$ in $U$ such that $h =u \circ \psi$, $\|u\|_\otimes = \|h\|_\odot$, and for any $v \not \in U$ that fulfills 
$v \circ \psi = h$ it follows that $\|v\|_\otimes > \|u\|_\otimes$.

When $\cH$ is finite dimensional and the kernel function is bounded it follows that $A_h \cap U$ is compact: the tensor space $\cH \otimes \cH$ is finite dimensional since $\cH$ is finite dimensional. Also, $h$ is bounded since the kernel function is bounded. For $\epsilon>0$ consider the centered closed ball $B$ of radius $\|h\|_\odot + \epsilon$ within $\cH \otimes \cH$. The intersection $B\cap A_h \cap U$ is non-empty and compact and the infimum is attained within this compact set.

\subsubsection{Lower bound on the width of $C_\odot$}
How can this \textit{tensor product characterization} of $\cH \odot \cH$ be used to characterize the width of $C_\odot$? Let us first consider tensors of the form $f\otimes f$, $f\in \cH$, $\|f \otimes f\| = 1 =  \|f\|$. The width of $C_\odot$ in direction $h$, where $h  = (f \otimes f) \circ \psi$, is
\begin{align*}
	\diam_{h} C_\odot &= \sup_{x\in \X} h(x) - \infd_{x\in \X} h(x) \\
			  &= \sup_{x\in \X}  \langle f \otimes f, k(x,\cdot) \otimes k(x,\cdot) \rangle_\otimes - \infd_{x\in \X} \langle f\otimes f, k(x,\cdot) \otimes k(x,\cdot) \rangle_\otimes \\
				     &= \sup_{x\in \X} f^2(x) - \infd_{x\in \X} f^2(x).  
\end{align*}
If $\X$ is \textit{path connected},  $k$ is \textit{continuous} and, hence, $f$ is continuous, then we can relate $\diam_h C_\odot$ to 
$\diam_f(C)$: whenever there exist $x,x' \in \X$ such that $f(x) > 0 > f(x')$, then there also exists an $\tilde x \in \X$ with $f(\tilde x) = 0$ due to the mean value theorem. In this case,
\[
	\diam_{h} C_\odot \geq (\sup_{x \in \X} f(x) \vee - \infd_{x\in X} f(x))^2 \geq ((1/2) \diam_f(C))^2.  
\]
If there is no $x$ such that $f(x) =0$, that is $f$ attains only positive or only negative values, then we can argue in the following way. W.l.o.g. assume that $f$ attains only positive values. Since for $a,b \geq 0$, $a \geq b$, $a^2 - b^2 \geq (a-b)^2$,
it also follows in this case that 
\begin{equation*}
	\diam_{h} C_\odot \geq ((1/2) \diam_f(C))^2.
\end{equation*}
\textit{This lower bound can fail to hold when the assumptions about $k$ and $\X$ are not fulfilled}. Consider $\X = \{-1,1\}$ with $k(x,y) = \delta_{x,y}$ and the function $f = k(1,\cdot) - k(-1,\cdot)$ which lies in the RKHS. For this function $f$,
\[
	\sup_{x\in \X} f^2(x) - \infd_{x \in \X} f^2(x) = 0 < 1 = ((1/2) (\sup_{x\in \X} f(x) - \infd_{x\in \X} f(x)))^2.
\]
The factor $1/2$ in the lower bound is redundant  when $f$ attains only positive or negative values since 
$a^2 - b^2 \geq (a-b)^2$ whenever $a$ and $b$ have the same sign. More importantly, the bound becomes loose when $\infd_{x\in \X} f^2(x)$ is large since $a^2 - b^2 - (a-b)^2 = 2b (a-b)$ whenever $a \geq b > 0$.

Moving on to other directions $h \in \cH \otimes \cH$, $\|h\|_\otimes = 1$, we can first observe that the arguments are not as straightforward as elements in $U$ are of the form 
$\sum_{i=1}^d \alpha_i k(x_i,\cdot) \otimes k(x_i,\cdot)$ for some $d\in \mathbb{N}$, $\alpha_i \in \R$ and $x_i \in \X$ for $i\leq d$, and, when considering the supremum over $x \in \X$, the different terms $\alpha_i k^2(x_i,x)$ can potentially cancel each other; in particular, there  is no reason why the $\alpha_i$ should all be positive.

Alternatively, we can consider the set $S= \{k(x,\cdot) \otimes k(x,\cdot) - \int k(z,\cdot) \otimes k(z,\cdot) dP(z) : x \in \X\}$ and the convex hull of $S$. This convex hull is closely related to $C_\odot$. Note that $\ch S$ is a subset of $\cH \otimes \cH$ and contains the origin. Fix now any $u\in \spn S$, $\|u\|_\otimes = 1$, then $\spn \{u\}$ intersects $\ch S$. In fact, $\spn \{u\} \cap \ch S$ consists of more than a single point since otherwise $\langle k(x,\cdot) \otimes k(x,\cdot), u \rangle_\otimes = \langle k(y,\cdot) \otimes k(y,\cdot), u \rangle_\otimes $ for all $x,y \in \X$ and $\langle u, s \rangle_\otimes =0$ for any $s \in S$ which would imply that $u = 0$. Consider the two points at which $\spn \{u\}$ intersects with the boundary of $\ch S$ and let $d$ be the dimension of $\spn S$. Each of these points can be expressed as a convex combination of $ l\leq d+1$ points in $S$ due to Carath\'eodory's theorem. Hence, $u = \tilde u/\|\tilde u\|$ where $\tilde u = \sum_{i=1}^{l} \alpha_i k(x_i,\cdot) \otimes k(x_i,\cdot)$ for some strictly positive $\alpha_i$'s that sum to one and suitable points $x_1,\ldots, x_{l}$. We can note right away that 
\begin{equation} \label{eq:bnds_on_u}
\inf_{x\in \X} k(x,x) /(d+1) \leq \Bigl(\sum_{i,j=1}^{l} \alpha_i \alpha_j k^2(x_i,x_j) \Bigr)^{1/2} =
\|\tilde u \|_\otimes \leq \sup_{x\in \X} k(x,x).
\end{equation}
Also, when \textit{$k(x,y)$ is a non-negative function} an application of Jensen's inequality yields further results. In detail,
\begin{align*}
&\sup_{x,y \in \X}  \tilde u(x,x) - \tilde u(y,y) = \sup_{x,y \in \X}  \sum_{i=1}^{l} \alpha_i (k^2(x_i,x) - k^2(x_i,y)) \\
&\geq  \sup_{x \in \X}  \Bigl(\, \sum_{i=1}^{l} \alpha_i k(x_i,x) \Bigr)^2 - \infd_{y \in \X} \sum_{i=1}^{l} \alpha_i k^2(x_i,y). 
\end{align*}
In the following, let $c = \inf_g \diam_g(C)$, where the infimum is taken over $\{g : g\in \cH, \|g\|=1\}$. When \textit{$c$ is large, $k$ is non-negative and $\|k\|_\infty \leq 1$}, then this simple argument might be of use: since $k(x_i,y) \leq 1$ it follows that $k(x_i,y) \geq k^2(x_i,y)$ and
\[
\sup_{x \in \X}  \Bigl(\, \sum_{i=1}^{l} \alpha_i k(x_i,x) \Bigr)^2 - \infd_{y \in \X} \sum_{i=1}^{l} \alpha_i k^2(x_i,y) \geq \sup_{x \in \X}  \Bigl(\, \sum_{i=1}^{l} \alpha_i k(x_i,x) \Bigr)^2 - \infd_{y \in \X} \sum_{i=1}^{l} \alpha_i k(x_i,y).  
\]
Furthermore, $(\sum_{i=1}^{l} \alpha_i k(x_i,y))^2 \geq   \sum_{i=1}^{l} \alpha_i k(x_i,y) - 1/4$ and 
\[
\sup_{x,y \in \X}  \tilde u(x,x) - \tilde u(y,y) \geq c - 1/4.
\]
This is only useful for large $c$. If, in fact, $c > 1/4$, we can proceed and
\[
\sup_{x,y \in \X}  u(x,x) - u(y,y) \geq \frac{c - 1/4}{\sup_{x \in \X} k(x,x)} \geq c - 1/4.  
\]

\subsubsection{Lower bounds on the fourth moments}
Instead of controlling the width of $C_\odot$ 
 we can also aim to control the covariance operator corresponding to the kernel $\kappa$. Effectively, this corresponds to bounds on the fourth moments. A bound on the non-centered fourth moment is, in fact, easy to derive:  let $\tilde{\mathfrak{C}}$ be the covariance operator (now interpreted as a linear operator) corresponding to the kernel $k$. We need to control $E((h(X) - E(h(X))^2)$ for a function $h \in \cH_\kappa$. Choosing again a suitable $u \in \cH \otimes \cH$ such that $h = u \circ \psi$, we find that
\begin{align*}
	E(h^2(X)) = E(( u \circ \psi(X))^2).
\end{align*}
If $U$ is $d$-dimensional then, as above, we can write $u = \tilde u/\|\tilde u\|$ where $\tilde u = \sum_{i=1}^{l} \alpha_i k(x_i,\cdot) \otimes k(x_i,\cdot)$ for some positive $\alpha_i$, $\sum_{i=1}^{d+1} \alpha_i =1$, and suitable points $x_i \in \X$ with $l\leq d+1$. Given this representation of $u$ and \textit{assuming that the smallest eigenvalue of $\tilde{\mathfrak{C}}$ is $\bar \lambda$}, 
\begin{align*}
	\|\tilde u\|^2 E(h^2(X)) &= \sum_{i,j=1}^{l}\alpha_i \alpha_j E((k(x_i,X)k(x_j,X))^2) 
		\geq \sum_{i,j=1}^{l}\alpha_i \alpha_j |\langle \tilde{\mathfrak{C}} k(x_i,X),k(x_j,X) \rangle|^2 \\
		  &\geq \sum_{i=1}^{l} \alpha_i^2 \frac{\bar \lambda}{k(x_i,x_i)}.
\end{align*}
In particular, \textit{when $k(x,x) = 1$ for all $x \in \X$} then 
\[
E(h^2(X)) \geq \frac{\bar \lambda}{(d+1) \|\tilde u\|^2}  \geq \frac{\bar \lambda}{d+1} 
\]
follows from Eq. \eqref{eq:bnds_on_u}.

However, to say something about the largest ball that lies around $\mathfrak{C}$ within $C_\odot$ we need a lower bound on the variance of $h \in \cH_\kappa$. This is not straight-forward and will need, in all likelihood, some stringent assumptions: consider the variance of an arbitrary functions $h \in \cH_\kappa$, when $\cH_\kappa$ has dimension $d < \infty$. By the above argument, there exists a $u = \tilde u / \|\tilde u\|$, where $\tilde u = \sum_{i=1}^{d+1} \alpha_i k(x_i,\cdot) \otimes k(x_i,\cdot)$ for some non-negative $\alpha_i$ that sum to one and points $x_i \in \X$. Hence,  
\begin{align*}
	&E((h(X) - E(h(X)))^2) = E(( u \circ \psi(X) - E(u \circ \psi(X)))^2) \\
			      &= \|\tilde u\|^{-2} \sum_{i,j=1}^{d+1} \alpha_i \alpha_j E((k^2(x_i,X) - E(k^2(x_i,X)))(k^2(x_j,X) - E(k^2(x_j,X)))). 
\end{align*}
There is no reason why the sum over the off-diagonal elements  
\[
	\sum_{i\not =j} \alpha_i \alpha_j E((k^2(x_i,X) - E(k^2(x_i,X)))(k^2(x_j,X) - E(k^2(x_j,X))))
\]
should be positive or should be of considerably smaller magnitude than the sum over the diagonal elements.

\subsection{Weighted mean embedding} \label{sec:DealWithY}
There are different ways to address the term $Y_i f(X_i)$, $i\leq n$, that occurs in the least squares problem and there are a variety of natural assumptions under which one can study the corresponding compression problem. Let us first have a look at how $Y_i f(X_i)$ can be lifted into the RKHS so that we can apply the compression techniques. A first approach to do so was introduced at the beginning of Section  \ref{sec:related_approx}, where we wrote $Y_i f(X_i) = \langle f, Y_i k(X_i,\cdot)\rangle$. Using this representation we can try to approximate
$(1/n) \sum_{i=1}^n Y_i k(X_i,\cdot) \in \cH$. A second approach is to map the $Y_i$'s to linear functionals, that is to elements in the dual space $\R'$, and to consider the tensor products 
\[
\langle Y_i, \cdot \rangle_\R \otimes k(X_i,\cdot).
\]
We still would like to work with an inner product similarly to $\langle f, k(X_i,\cdot) \rangle$ and we can do so if we work with $\langle 1,\cdot \rangle_\R \otimes \langle f,\cdot \rangle$ instead of $f$. In particular,
\begin{equation} \label{eq:second_modelling_appr}
Y_i f(X_i) = \langle \langle Y_i, \cdot \rangle_\R \otimes k(X_i,\cdot), \langle 1,\cdot \rangle_\R \otimes \langle f,\cdot \rangle \rangle_\otimes.
\end{equation}
Both approaches are natural when the $Y_i$'s are bounded but various issues arise when they are not. In particular, for the latter approach the elements $ \langle Y_i, \cdot \rangle_\R \otimes k(X_i,\cdot)$ are not contained with probability one in a ball in the corresponding tensor product space. We therefore discuss the case where the $Y_i$'s are bounded first before moving on to  the unbounded case. Finally, it is often natural to impose an assumption on the relation between $X_i$ and $Y_i$, like
\begin{equation} \label{eq:std_reg_assump}
Y_i  = f_0(X_i) + \epsilon_i
\end{equation}
with $f_0 \in \mathcal{L}^2(P), \epsilon_i$ independent of $X_i$, $E(\epsilon_i) = 0$, and the $X_i$ and $\epsilon_i$ are i.i.d.

This leaves us with a total of eigth different settings. But not all of these settings are useful for deepening our understanding. In particular, little can be said without the assumption Eq. \eqref{eq:std_reg_assump} and we assume in the following, up to short discussions, that  Eq. \eqref{eq:std_reg_assump} holds. Beyond that we focus on three settings: the first setting uses the assumption that $Y_i$ is bounded and we use the form $Y_i k(X_i,\cdot)$. We then move on to translate the results to the approach where we model $Y_i f(X_i)$ through the tensor product as in Eq.   \eqref{eq:second_modelling_appr}. Finally, we lift the assumption that $Y_i$ is bounded and study the problem in the context of the assumptions Eq. \eqref{eq:second_modelling_appr}
and Eq. \eqref{eq:std_reg_assump}. We assume throughout that the $X_i$'s are i.i.d. and that $k(X_i,\cdot) \in \mathcal{L}^2(P;\cH)$.

\subsubsection{First setting: Bounded $Y_i$'s \& $Y_i f(X_i) = \langle f,Y_i k(X_i,\cdot)\rangle$} \label{sec:bnd_Y}
There are few natural question when working with the empirical estimate $\mean_{y,n} = (1/n) \sum_{i=1}^n Y_i k(X_i,\cdot)$: what is a natural convex set which contains $\mean_{y,n}$ and over which we can optimize efficiently?  Do we have suitable population limits of the empirical quantities?  What can be said about the diameter of the empirical convex set, about how centered $\mean_{y,n}$ lies within the set and are assumptions on the covariance operator of use? In terms of an empirical convex set which contains $\mean_{y,n}$ it is natural to consider the set 
\[
	C_{y,n} = \cch \{Y_i k(X_i,\cdot) : i \leq n\}
\]
and optimization over this set is possible since we have control over the extremes of it.

Under the assumption  that the $Y_i$'s are of the form $f_0(X_i) + \epsilon_i$, there are natural expressions for the population limits $\mean_y$ and the convex set $C_y$. For concreteness, we assume in the following that $\X$ is a \textit{Borel space}, $\cH$ is \textit{separable}, 
$f_0$ and the feature map $\phi:\X \to \cH$ are measurable, $f_0$, the kernel function $k$ and the $\epsilon_i$'s  are \textit{bounded}, $\epsilon_i$ is independent of $X_i$, and $E(\epsilon_i) = 0$.  We can define the population limit of $\mean_{y,n}$ through
\[
	\mean_y = \int f_0(x) k(x,\cdot) \, dP(x),  
\]
where $P$ is the law of $X_1$. The element $\mean_y$ lies in $\cH$:  The function $f_0 \times \phi : \X \to \cH$ is weakly measurable since when $h \in \cH$, then $\langle f_0(x) \phi(x), h \rangle = f_0(x) h(x)$ is the product of two Borel measurable functions and is therefore Borel measurable. Because $\cH$ is separable it follows that $f_0 \times \phi$ is Bochner measurable. Furthermore, $ \| f_0(x) \phi(x) \| \leq |f_0(x)| k^{1/2}(x,x)$ is a bounded function of $x$ and  $\mean_y = \int f_0(x) \phi(x) \,dP$ is well defined and lies in $\cH$. 

\paragraph{Controlling $\|\mean_{y,n} - \mean_y\|$.} We can quantify the deviation of $\mean_{y,n}$ from $\mean_y$ in the following way. Let $\tilde{\mathcal{F}}$ be a countable dense subset of the unit ball of $\cH$ and using that dual elements can be moved through the Bochner integral, we get for $\alpha>0$, 
\begin{align} \label{eq:mean_y_n_vs_y}
	&\Pr(\| \mean_y - \mean_{y,n}\| \geq \alpha ) \notag \\ 
	&= \Pr\Bigl(\sup_{h \in \tilde{\mathcal{F}}} \langle h, \int f_0(X_1) k(X_1,\cdot) \, d\mu - \frac{1}{n} \sum_{i=1}^n Y_i k(X_i,\cdot) \rangle \geq \alpha \Bigr) \notag \\
&\leq \Pr\Bigl(\sup_{h \in \tilde{\mathcal{F}}} \int f_0(X_1) h(X_1) \, d\mu - \frac{1}{n} \sum_{i=1}^n f_0(X_i) h(X_i) \geq \alpha/2\Bigr) \notag  \\
&\quad\hspace{7cm}+ \Pr\Bigl(\sup_{h \in \tilde{\mathcal{F}}}  - \frac{1}{n} \sum_{i=1}^n \epsilon_i h(X_i) \geq \alpha/2 \Bigr). 
\end{align} 
The \textit{latter term} can be bounded by means of Theorem 3.3 in \cite{PIN94} which is a Bernsteintype  theorem for Banach spaced valued random variables. Note that the theorem statement in \cite{PIN94} contains and error which is corrected in \cite{PIN99}. The bound is the following,
\[\sup_{h \in \tilde{\mathcal{F}}}  - \frac{1}{n} \sum_{i=1}^n \epsilon_i h(X_i) = 
\|\frac{1}{n} \sum_{i=1}^n \epsilon_i \phi(X_i)\|\]
and $v_n =  \sum_{i=1}^n \epsilon_i \phi(X_i)$ attains values in $\cH$. The sequence $v_1,\ldots, v_n$ is a martingale sequence in $\cH$ with regard to the filtration $F_t = \sigma(X_1,\ldots,X_t,\epsilon_1,\ldots, \epsilon_t)$, $t \leq n$, since $E(v_t |F_{t-1} ) = 
E(\epsilon_t \phi(X_t)) + v_{t-1} = v_{t-1}$ (a.s.) for $2\leq t\leq n$ due to the independence between $\epsilon_t$ and $X_t$. Furthermore, let $v_0 = 0$ and $F_0 = \{\emptyset,\Omega\}$ so that $E(v_1 | F_0) = E(v_1) = v_0$ a.s. 
We can continue the sequence by letting $v_t = v_n$ and  $F_t = F_n$ for all $t> n$, which preserves the martingale property. To apply  \cite[Thm.3.3]{PIN94,PIN99} we need the following moment bounds; for all $m\geq 2$, 
\[
\sum_{t=1}^n E(\|\epsilon_t \phi(X_i) \|^m |F_{t-1}) \leq n (c \|k\|_\infty)^m \text{\quad\quad(a.s.)},   
\]
where $c$ is an upper bound on $|\epsilon_t|$. This implies that we can set $\Gamma =c \|k\|_\infty, B= n^{1/2} c \|k\|_\infty$ in  \cite[Thm.3.3]{PIN94,PIN99} and 
\begin{equation} \label{eq:bernstein_phi}
\Pr\big( \| \frac{1}{n} \sum_{i=1}^n \epsilon_i \phi(X_i) \| 
\geq \frac{\alpha}{2} \big) \leq 2 \exp\Bigl(
-\frac{n \alpha^2}{4 c \|k\|_\infty ( c \|k\|_\infty +\alpha +  \sqrt{c^2 \|k\|_\infty^2 + \alpha c \|k\|_\infty /n})}
\Bigr). 
\end{equation}
The \textit{first term} can be controlled with a standard \textit{Rademacher argument} after changing the kernel. Define the kernel $l = f_0 \otimes f_0$ and consider the product kernel $l \times k : \X \times \X \to \R$ with RKHS $\cH_{l \times k}$. For $h\in \cH$ it follows that $f_0 \times h \in \cH_{l \times k}$; for an $h$ of the form $\sum_{i=1}^m \alpha_i k(x_i,\cdot)$ one can write down the representation explicitly as 
$f_0 \times h = \sum_{i=1}^m \alpha_i f_0(x) k(x_i,x) = \sum_{i=1}^m (\alpha_i/f_0(x_i)) (l \times k)(x_i,x)$,   
whenever $f_0(x_i) \not = 0$ for all $i\leq m$. 
We can thus write
\begin{align} \label{eq:mean_y_n_vs_y_second}
&\Pr\Bigl(\sup_{h \in \tilde{\mathcal{F}}} \int f_0(X_1) h(X_1) \, d\mu - \frac{1}{n} \sum_{i=1}^n f_0(X_i) h(X_i) \geq \alpha/2\Bigr) \notag  \\
&= \Pr\Bigl(\sup_{g \in \tilde{\mathcal{F}}_{l\times k}} \int g(X_1) \, d\mu - \frac{1}{n} \sum_{i=1}^n g(X_i) \geq \alpha/2\Bigr), 
\end{align}
where 
$\tilde{\mathcal{F}}_{l\times k}$ is dense subset of the unit ball of $\cH_{l \times k}$.
The Rademacher argument that we are using in \eqref{eq:mean_rademacher} can now be applied.

\paragraph{Population limit of $C_{y,n}$.} The next question to address is how to \textit{define the population limit of $C_{y,n}$}. Under the boundedness assumption of $Y_1$ we can characterize the limit of $C_{y,n}$ in the following way. Let
$\bar b = \inf\{b : b \in \mathbb{R}, \epsilon_1 \leq b \text{ a.s.} \}$, $\ubar{b} = 
- \sup\{b : b \in \mathbb{R}, \epsilon_1 \geq b \text{ a.s.} \}$ and let 
\[
	C_y = \cch(\{ (f_0(x) + \bar b) k(x,\cdot) : x \in \X \} \cup  \{ (f_0(x) - \ubar b) k(x,\cdot) : x \in \X \}). 
\]
Note that for any $x\in \X$,  $(f_0(x) + \bar b) k(x,\cdot)$ and $(f_0(x) - \ubar{b}) k(x,\cdot)$ lie in $\cH$ and $C_y \subset \cH$. Furthermore, $\epsilon_i \in [-\ubar{b},\bar b]$ (a.s.) and for $i\leq n$,  $(f_0(X_i) + \epsilon_i) k(X_i,\cdot)$ is  almost surely a convex combination of $(f_0(X_i) + \bar b) k(X_i,\cdot)$ and $(f_0(X_i) - \ubar{b}) k(X_i,\cdot)$. Therefore, $C_{y,n}$ is almost surely contained within $C_y$.

While $C_y$ is a natural limit of $C_{y,n}$, we face the problem that the convergence towards $C_y$ can be arbitrarily slow since $\epsilon_1$ can have a low probability of attaining values close to $\bar b$ or $\ubar{b}$. Assumptions on the distribution of $\epsilon_1$ are one way to address this problem. Alternatively, we can work directly with $C_{y,n}$ and study how deep $\mean_{y,n}$ lies in $C_{y,n}$ by controlling events of the form $ \langle h, \tilde Y k(\tilde X,\cdot) - \mean_y \rangle \leq c $, where $(\tilde X,\tilde Y)$ has law $P_n$, and by comparing the random variables $\langle h, \tilde Y k(\tilde X,\cdot) - \mean_y \rangle$ and $\langle h, \tilde Y k(\tilde X,\cdot) - \mean_{y,n} \rangle$. 
We follow this latter approach and we use Rademacher complexities to control these events uniformly over the unit ball of $\cH$. In the Rademacher approach, we control such events by  lower bounding terms of the form
\begin{equation*} P_n \psi_\gamma( \langle h, \tilde y k(\tilde x,\cdot) - \mean_{y,n} \rangle - c ) \quad  \bigl( =  \int \psi_\gamma( \langle h, \tilde y k(\tilde x,\cdot) - \mean_{y,n} \rangle - c ) \,dP_n(\tilde x,\tilde y) \bigr)
\end{equation*}
for suitable $c,\gamma \in \R$ and all unit norm elements $h\in \cH$. The element $\mean_{y,n}$ converges to $\mean_y$ and, because $\psi_\gamma$ is $1/\gamma$-Lipschitz continuous, it follows that 
\begin{align} 
&|P_n \psi_\gamma( \langle h, \tilde y k(\tilde x,\cdot) - \mean_{y,n} \rangle - c ) 
 - P_n \psi_\gamma( \langle h, \tilde y k(\tilde x,\cdot) - \mean_{y} \rangle - c ) | \notag \\
&\leq P_n (1/\gamma) | \langle h, \mean_{y,n} - \mean_y \rangle| \leq (1/\gamma) \|\mean_{y,n} - \mean_y\|, \text{\quad (a.s.)} \label{eq:lip_to_mean} 
\end{align}
where the $P_n$ term becomes redundant since no variables $\tilde y$ and $\tilde x$ are present in the last line.

Next, we consider the convergence of  
$P_n \psi_\gamma( \langle h, \tilde y k(\tilde x,\cdot) - \mean_{y} \rangle - c )$ to its population limit $P(\psi_\gamma( \langle h, y k(x,\cdot) - \mean_{y} \rangle - c ))$ uniformly over the unit ball of $\cH$. The convergence can be controlled by using Rademacher complexities. Because $\epsilon$ is used in this section to denote the noise terms we will use $\zeta$  to denote Rademacher variables.
Since $\psi_\gamma$ is continuous and is applied to a subset of $\R$ we can note that
\[
\sum_{i=1}^n \zeta_i \psi_\gamma(Y_i h(X_i) - E(Y h(X)) - c)
\]
is well defined.  Also, $\gamma \psi_\gamma$ is a contraction vanishing at zero \cite[Sec.5.2.1]{GINE15} and  for any finite subset $F$ of the unit ball of $\cH$ it follow that
\begin{align*}
&E_\zeta \bigl(
	\sup_{h \in F}  \sum_{i=1}^n \zeta_i (\gamma/2) \psi_\gamma(Y_i h(X_i) - E(Y h(X)) - c)  \bigr) \\
&\leq  E_\zeta \bigl(
	\sup_{h \in F} \sum_{i=1}^n \zeta_i  Y_i h(X_i)   \bigr) \leq E_\zeta \bigl( \sup_{h \in F} \bigl| \sum_{i=1}^n \zeta_i  Y_i h(X_i) \bigr|  \bigr) \text{\quad (a.s.)},
\end{align*}
from \cite[Thm.5.2.1, Eq.5.50]{GINE15}. Note that, conditional on $Y_i$, the probability laws of $\zeta_i Y_i$ and $\zeta_i |Y_i|$ are the same. In particular, 
\[
E_\zeta \bigl( \sup_{h \in F} \bigl| \sum_{i=1}^n \zeta_i  Y_i h(X_i) \bigr|  \bigr)  = 
E_\zeta \bigl( \sup_{h \in F} \bigl| \sum_{i=1}^n \zeta_i  |Y_i| h(X_i) \bigr|  \bigr)  \text{\quad (a.s.)}. 
\]
Recall that Rademacher complexities are stable under taking absolute convex hulls. Let $C_F = \{ (|Y_1| h(X_1), \ldots, |Y_n| h(X_n)) : h \in F\}$ and $\hat C_F = \{(\|Y\|_\infty h(X_1),\ldots, \|Y\|_\infty h(X_n)) : h\in F\}$ then   $\text{abs conv } C_F \subset \text{ abs conv } \hat C_F$. Furthermore, 
\begin{align*}
&E_\zeta \bigl( \sup_{h \in F} \bigl| \sum_{i=1}^n \zeta_i  |Y_i| h(X_i) \bigr|  \bigr) 
= E_\zeta \bigl( \sup_{t \in C_F} \bigl| \sum_{i=1}^n \zeta_i  t_i \bigr|  \bigr) 
= E_\zeta \bigl( \sup_{t \in \text{abs conv } C_F} \bigl| \sum_{i=1}^n \zeta_i  t_i \bigr|  \bigr)\\
&\leq 
E_\zeta \bigl( \sup_{t \in \text{abs conv } \hat C_F} \bigl| \sum_{i=1}^n \zeta_i  t_i \bigr|  \bigr) =  \|Y\|_\infty E_\zeta \bigl( \sup_{h \in F} \bigl| \sum_{i=1}^n \zeta_i h(X_i) \bigr|  \bigr). \text{\quad (a.s.)} 
\end{align*}
In summary, we have shown that
\[
E_\zeta \bigl(
	\sup_{h \in F}  \sum_{i=1}^n \zeta_i  \psi_\gamma(Y_i h(X_i) - E(Y h(X)) - c)  \bigr)  \leq (2/\gamma) \|Y\|_\infty E_\zeta\bigl( \sup_{h \in F} \bigl| \sum_{i=1}^n \zeta_i h(X_i) \bigr|  \bigr). \text{\quad (a.s.)} 
\]
A simple variation of the above argument gives us a bound on the absolute value. In detail, 
\begin{align} \label{eq:Rademacher_for_function}
&E_\zeta \bigl(
	\sup_{h \in F} \bigl|  \sum_{i=1}^n \zeta_i (\gamma/2) \psi_\gamma(Y_i h(X_i) - E(Y h(X)) - c) \bigr| \bigr) \notag \\
&\leq E_\zeta \bigl( \sup_{h \in F} \bigl| \sum_{i=1}^n \zeta_i  Y_i h(X_i) \bigr|  \bigr) 
+ \sup_{h\in F} |E(Yh(X)) - c| E\bigl(\bigl|\sum_{i=1}^n \zeta_i\bigr|\bigr) \notag \\
&\leq \|Y\|_\infty E_\zeta \bigl( \sup_{h \in F} \bigl| \sum_{i=1}^n \zeta_i h(X_i) \bigr|  \bigr)
+ (\|Y\|_\infty \|k\|_\infty^{1/2}  + |c|) \sqrt{2\pi n},
\text{\quad (a.s.)}
\end{align}
where the last inequality follows from integrating a Hoeffding bound on $\Pr(|\sum_{i=1}^n \zeta_i| \geq t)$. Since this holds for all finite $F$ we can take the supremum over finite sets $F$ on both sides and move to $\tilde{\mathcal{F}}$ (see \eqref{eq:Rade_sep}).

\paragraph{Lower bounds.} With the Rademacher argument we control the difference between the empirical and population value. To make use of this bound we need a lower bound on the population value. This can be attained in the following way. 
Let $p = \Pr(\epsilon_1 \geq 0) \wedge \Pr(\epsilon_1 \leq 0)$. Because $E(\epsilon_1) = 0$ it holds that $p >0$. Using the towering rule for conditional expectations and that $\psi_\gamma$ is monotonically decreasing, we can now argue in the following way for the population limit and any $h\in \cH$,
\begin{align} \label{eq:lower_bnd_with_p}
&E(\psi_\gamma( \langle h, Y k(X,\cdot) - \mean_y \rangle - c )) \notag \\
&\geq 
E(\psi_\gamma( \langle h, Y k(X,\cdot) - \mean_y \rangle - c ) \times \chi\{\epsilon h(X) \leq 0 \}) \notag \\
&\geq 
E(\psi_\gamma( \langle h, f_0(X) k(X,\cdot) - \mean_y \rangle - c ) \times \notag \\
&\quad \quad  (E(\chi\{\epsilon \leq 0 \}|X) \times \chi\{h(X) \geq 0\} + E(\chi\{ \epsilon \geq 0 \}|X) \times \chi\{h(X) \leq 0\})) \notag \\
&\geq p E(\psi_\gamma( \langle h, f_0(X) k(X,\cdot) - \mean_y \rangle - c )). 
\end{align}
The final expectation term can be dealt with in the usual way after moving to the kernel function $l \times k$,  where $l = f_0 \otimes f_0$. We demonstrate this for the case that we work with a covariance operator assumption and we derive high probability lower bounds on the radius of a ball centered on $\mean_{y,n}$ which lies within $C_{y,n}$.  

\paragraph{Assumptions on the covariance operator.}
We denote the centered covariance operator for the kernel $l \times k$ by $\tilde{\mathfrak{C}}_{c,l\times k}$. As usual, we need an assumption on the smallest non-zero eigenvalue of this operator. Recall that $\|f_0\|_{l}  = 1$ and 
\[
	\|f_0 \times h\|_{l \times k} = \min \{ \|u\|_{\cH_l \otimes \cH_k} : f_0(x) h(x) =  u(x,x) \text{ for all } x\in \X\}.
\]
In particular, $\|f_0 \times h\|_{l \times k} \leq \|f_0 \otimes h \|_{\cH_l \otimes \cH_k} = \|h\|_k$.  In fact, this can be tightened by using  \cite[Prop5.20]{PAUL16}: the RKHS 
$\cH_{l \times k}$ is the set $\{ f_0 \times h: h \in \cH\}$ and for any $g,h \in \cH$ we have that $\langle f_0 \times g, f_0 \times h \rangle_{l \times k} = \langle g, h \rangle$.
In particular, all eigenfunctions of $\tilde{\mathfrak{C}}_{c,l\times k}$ are of the form $f_0 \times h$ for some $h\in \cH, \|h\|=1$. Therefore, our bounds will depend on 
\begin{equation}
\bar \lambda_\star = \inf \{ \langle \tilde{\mathfrak{C}}_{c,l\times k} f_0 \times h, f_0 \times h \rangle_{l \times k} : \|h \|= 1, f_0 \times h \in (\text{ker }  \tilde{\mathfrak{C}}_{c,l\times k})^\perp \}.
\end{equation}
As before, it is beneficial to move to the RKHS $\cH_{l\times k,S}$ corresponding to the kernel function $\kappa = (l \times k)\!\!\upharpoonright\! S \times S$, where $S$ is the support of $X$. Observe that 
\[
(l \times k) \!\!\upharpoonright\! S \times S = (l \!\!\upharpoonright\! S \times S) \times (k \!\!\upharpoonright\! S \times S) = 
((f_0  \!\!\upharpoonright\! S ) \otimes  (f_0  \!\!\upharpoonright\! S)) \times(k \!\!\upharpoonright\! S \times S). 
\]
Using \cite[Prop5.20]{PAUL16} again shows that  $\cH_{l\times k,S} = \{ (f_0  \!\!\upharpoonright\! S) \times h :  h \in \cH_S \}$, where $\cH_S$ is defined as before. Furthermore,
for $g,h \in \cH_S$, 
\[
	\langle (f_0 \restr S) \times g, (f_0 \restr S) \times h \rangle_{l\times k, S} = 
	\langle g, h \rangle_S.
\]
Let us also introduce $\mean_{l\times k, S} = \int \kappa(x, \cdot) \,dP(x)$, where $P$ is the law of $X$, and which is well defined whenever $(l \times k)(X,\cdot)$ is Bochner integrable. For $h\in \cH$,
\begin{align*}
\langle h, \mean_y \rangle &= \int_\X \langle h, f_0(x) k(x,\cdot) \rangle \,dP(x) = \int_S (f_0 \restr S)(x) (h\restr S) (x)  \, dP(x) \\
&= \int_S \langle h\restr S, (f_0\restr S)(x) (k\restr S\times S)(x,\cdot) \rangle_S \,dP(x) \\ 
&=\int_S \langle (f_0 \restr S)\times (h\restr S), (f_0\restr S) \times (f_0\restr S)(x) (k\restr S\times S)(x,\cdot) \rangle_{l\times k,S} \,dP(x)  \\
&=\int_S \langle (f_0 \restr S)\times (h\restr S), ((l \times k)\restr S \times S)(x,\cdot) \rangle_{l\times k,S} \,dP(x) \\
&= \langle (f_0 \restr S)\times (h\restr S), \mean_{l\times k,S} \rangle_{l\times k, S}.
\end{align*}
The key observation is now the following, for any $h \in \cH$ and almost surely 
\begin{align*}
&\langle h, f_0(X) k(X,\cdot) - \mean_y \rangle =  (h \restr S)(X) (f_0 \restr S)(X)  - \langle (f_0 \restr S)\times (h\restr S), \mean_{l\times k,S} \rangle_{l\times k, S} \\
&= \langle (f_0 \restr S) \times (h \restr S), \kappa(X,\cdot) - \mean_{l\times k,S} \rangle_{l\times k, S}.
\end{align*}
This leads directly to a first result. Under suitable assumptions and with $\delta = \bar \lambda_\star/2 \|(f_0 \otimes f_0) \times k\|_\infty^{1/2}$, $\gamma = \delta/4$ and $c = - \delta/2$, Equation \eqref{eq:lbnd_P_cov} shows that for any $h\in \cH$, such that $\|h\restr S\|_S=1$, 
\begin{align} \label{eq:paley_zygmund_f_0}
&E(\psi_\gamma(\langle h,f_0(X)k(X,\cdot)  - \mean_y \rangle -c )) \notag \\
&= E(\psi_\gamma(\langle (f_0 \restr S) \times (h \restr S), \kappa(X,\cdot) - \mean_{l\times k,S} \rangle_{l\times k, S} - c)) \notag \\
&\geq \bar \lambda_\star^2/8\|(f_0 \otimes f_0) \times k\|_\infty.
\end{align}
Combining the various steps above leads to the following proposition, which is an adaptation of Theorem \ref{thm:covariance}.
\begin{prop} \label{prop:bnd_Y}
Let $(\X \times \R,\mathcal{A},P)$ be some probability space, let $P$ be a topological measure that is $\tau$-additive, and 
let $k$ be a measurable kernel function defined on $\X$ s.t. the corresponding RKHS  $\cH$ is finite dimensional.  Furthermore, let $(X_1,Y_1),\ldots, (X_n,Y_n)$ be i.i.d. random variables attaining values in $\X \times \R$, with law $P$, and of the form $Y_i = f_0(X_i) + \epsilon_i$ where $\epsilon_1,\ldots, \epsilon_n$ are centered i.i.d. random variables which are independent of $X_1,\ldots, X_n$ and such that $|\epsilon_1| \leq c_\epsilon$ (a.s.), and $f_0 \in \mathcal{L}^2(P)$. Assume that $\|(f_0 \otimes f_0)  \times k\|_\infty < \infty$, and that the centered covariance operator $\tilde{\mathfrak{C}}_{c,l \times k}$ has an eigen-decomposition with smallest non-zero eigenvalue being $\bar \lambda_\star$. Let $\delta = \bar \lambda_\star/2 \|(f_0 \otimes f_0) \times k\|_\infty^{1/2}$ and $p = \Pr(\epsilon_1 \geq 0) \wedge \Pr(\epsilon_1 \leq 0)$. For any $q \in (0,1)$ and whenever $n$ is (strictly) greater than 
\begin{align*}
	&\frac{1024 \|f_0 \otimes f_0 \times k\|^2_\infty}{p^2 \bar \lambda_\star^4 \delta^2} 
\Bigl(
16 (8  \|f_0 \otimes f_0 \times k\|_\infty + 6\sqrt{2\log(3/q)})^2 \\ 
&\quad \enspace  \vee 16 (4 c_\epsilon \|k\|_\infty \log(6/q) +  ( (4c^2_\epsilon \|k\|^2_\infty + \sqrt{1+c^2_\epsilon \|k\|_\infty^2}  )\log(6/q))^{1/2})^2 \\
&\quad \enspace  \vee  (\delta \sqrt{2\log(6/q)} + 16 \sqrt{2\pi} ((\|f_0\|_\infty + c_\epsilon)  \|k\|_\infty^{1/2}  +\delta/2) 
+  32 \|k\|_\infty^{1/2} (\|f_0\|_\infty + c_\epsilon))^2 
\Bigr) 
\end{align*}
then, with probability at least $1-q$, there exists a ball of radius $\delta/4$ around $\mean_{y,n}$ in $C_{y,n}$ within the affine subspace spanned by $C_{y,n}$.
\end{prop}
\begin{proof}
\textbf{(a)}  Since $ f_0 \otimes f_0 \times k$ is a bounded kernel function we can apply  \eqref{eq:paley_zygmund_f_0}
 and conclude for $\gamma = \delta/4$ and $c = - \delta/2$ that   $E(\psi_\gamma(\langle h,f_0(X)k(X,\cdot)  - \mean_y \rangle -c )) \geq  \bar \lambda_\star^2/8\|(f_0 \otimes f_0) \times k\|_\infty$ and from  \eqref{eq:lower_bnd_with_p} it follows that
\[
E(\psi_\gamma( \langle h, Y k(X,\cdot) - \mean_y \rangle - c )) \geq p \bar \lambda_\star^2/8\|(f_0 \otimes f_0) \times k\|_\infty.
\]
\textbf{(b)} Next, we have to incorporate a few triangle inequalities. First, we fill in the details in \eqref{eq:mean_y_n_vs_y}. From \eqref{eq:mean_y_n_vs_y_second}
it follows that  for $p_1 \in (0,1)$ and $\alpha_1 = 8 n^{-1/2} \|f_0 \otimes f_0 \times k\|_\infty + 6\sqrt{2\log(1/p_1)} n^{-1/2}$,
\begin{align*}
&\Pr\Bigl(\sup_{h \in \tilde{\mathcal{F}}} P f_0 \times h  - P_n f_0 \times h \geq \alpha_1/2\Bigr) =  \Pr\Bigl(\sup_{g \in \tilde{\mathcal{F}}_{l\times k}}  |P g  - P_n g| \geq \alpha_1/2\Bigr) \leq p_1.
\end{align*}
Also note that we can simplify \eqref{eq:bernstein_phi} to 
\[
\Pr\big( \| \frac{1}{n} \sum_{i=1}^n \epsilon_i \phi(X_i) \| 
\geq \frac{\alpha_2}{2} \big) \leq 2 \exp\Bigl(
-\frac{n \alpha_2^2}{4 c_\epsilon \|k\|_\infty ( c_\epsilon \|k\|_\infty +\alpha_2 +  \sqrt{1 + c_\epsilon^2 \|k\|_\infty^2})}
\Bigr),
\]
whenever $n \geq \alpha_2 c_\epsilon \|k\|_\infty$. Hence, for such $n$, for $p_2 \in (0,1)$ and 
\[
	\alpha_2 = \frac{4 c_\epsilon \|k\|_\infty \log(2/p_2)}{n} + \left(\frac{(4c^2_\epsilon \|k\|^2_\infty + \sqrt{1+c^2_\epsilon \|k\|_\infty^2}  )\log(2/p_2)}{n} \right)^{1/2} 
\]
it follows that
\[
	\Pr\big( \| \frac{1}{n} \sum_{i=1}^n \epsilon_i \phi(X_i) \| 
\geq \frac{\alpha_2}{2} \big) \leq p_2.
\]
In particular, for $\alpha_{12} = \alpha_1 \vee \alpha_2$ and whenever $n \geq \alpha_2 c_\epsilon \|k\|_\infty$,
\[
\Pr( \| \mean_y - \mean_{y,n} \| \geq \alpha_{12}) \leq p_1 + p_2.
\]
\textbf{(c)} From  \eqref{eq:lip_to_mean}  we can infer that almost surely
\begin{align*}
&P \psi_\gamma( \langle h, y k(x,\cdot) - \mean_{y} \rangle - c )-  
P_n \psi_\gamma( \langle h, \tilde y k(\tilde x,\cdot) - \mean_{y,n} \rangle - c )  \\
 &\leq \gamma^{-1} \|\mean_{y,n} - \mean_y\| + 
P \psi_\gamma( \langle h, y k(x,\cdot) - \mean_{y} \rangle - c ) - P_n \psi_\gamma( \langle h, \tilde y k(\tilde x,\cdot) - \mean_{y} \rangle - c ). 
\end{align*}
The same inequality holds almost surely if we consider the supremum over $\tilde{\mathcal{F}}$. Therefore,
with $\gamma = \delta/4$ and $\alpha > 0$,
\begin{align*}
&\Pr(\sup_{h \in \tilde{\mathcal{F}}} P \psi_\gamma( \langle h, y k(x,\cdot) - \mean_{y} \rangle - c )-  
P_n \psi_\gamma( \langle h, \tilde y k(\tilde x,\cdot) - \mean_{y,n} \rangle - c )  \geq 2 \alpha ) \\
&\leq \Pr( \|\mean_{y,n} - \mean_y\| \geq  \delta \alpha /4) \\ 
& \quad \quad + \Pr( \sup_{h \in \tilde{\mathcal{F}}}  P \psi_\gamma( \langle h, y k(x,\cdot) - \mean_{y} \rangle - c )- 
P_n \psi_\gamma( \langle h, \tilde y k(\tilde x,\cdot) - \mean_{y} \rangle - c )   \geq \alpha). 
\end{align*}
The latter term can be dealt with by a Rademacher argument when using \eqref{eq:Rademacher_for_function}. In detail, for any $p_3 \in (0,1)$, with probability 
$1-p_3$ simultaneously for all $h \in \tilde{\mathcal{F}}$, 
\begin{align*}
P_n \psi_\gamma( \langle h, \tilde y k(\tilde x,\cdot) - \mean_{y} \rangle - c ) \geq&  P \psi_\gamma( \langle h, y k(x,\cdot) - \mean_{y} \rangle - c ) -  \sqrt{2\log(2/p_3)/n}  \\
& \quad  - 2 E_\zeta \bigl(\sup_{h \in \tilde{\mathcal{F}}} \bigl|\frac{1}{n}  \sum_{i=1}^n \zeta_i \psi_\gamma(Y_i h(X_i) - E(Y h(X)) - c) \bigr| \bigr)  
\end{align*}
follows from \cite[Thm3.4.5]{GINE15}; also see p. \pageref{sec:Rademacher_expl}. Substituting \eqref{eq:Rademacher_for_function} leads to the following lower bound on the $P_n$ term,
\begin{align*}
&P \psi_\gamma( \langle h, y k(x,\cdot) - \mean_{y} \rangle - c )  -  \sqrt{2\log(2/p_3)/n} \\
&- (4/\gamma) ( \|Y\|_\infty E_\zeta \bigl( \sup_{h \in \tilde{\mathcal{F}}} \bigl|\frac{1}{n} \sum_{i=1}^n \zeta_i h(X_i) \bigr|  \bigr)
+ (\|Y\|_\infty \|k\|_\infty^{1/2}  + |c|) \sqrt{2\pi/ n} ).
\end{align*}
Filling in $\gamma$, $c$, the upper bound on $|Y|$ and the Rademacher complexity of $\tilde{\mathcal{F}}$, reduces the lower bound to
\[
	\alpha_3 = \frac{\sqrt{2\log(2/p_3)}}{\sqrt{n}} + \frac{16 \sqrt{2\pi} ((\|f_0\|_\infty + c_\epsilon)  \|k\|_\infty^{1/2}  +\delta/2) }{\delta \sqrt{n}}
+  \frac{32 \|k\|_\infty^{1/2} (\|f_0\|_\infty + c_\epsilon)}{\delta \sqrt{n}}  
\]
and 
\begin{align*}
	\Pr\bigl( \sup_{h \in \tilde{\mathcal{F}}} P \psi_\gamma( \langle h, y k(x,\cdot) - \mean_{y} \rangle - c ) -  P_n \psi_\gamma( \langle h, \tilde y k(\tilde x,\cdot) - \mean_{y} \rangle - c ) \geq \alpha_3 \bigr) \leq p_3.   
\end{align*}
\textbf{(d)} Combining these bounds we can derive a lower bound on  $P_n \psi_\gamma( \langle h, \tilde y k(\tilde x,\cdot) - \mean_{y,n} \rangle - c )$.
In detail, let $p_1 = p_2 = p_3 = q/3$  and set
\[
\alpha^\star = \frac{4 \alpha_{12}}{\delta} \vee \alpha_3  
\]
then with probability $1-q$ simultaneously for all $h \in \tilde{\mathcal{F}}$,
\[
P_n \psi_\gamma( \langle h, \tilde y k(\tilde x,\cdot) - \mean_{y,n} \rangle - c ) \geq p \bar \lambda_\star^2/8\|(f_0 \otimes f_0) \times k\|_\infty -  
2 \alpha^\star.
\]
To guarantee that the right hand side is strictly positive we can choose the $n$ which is provided in the statement of the proposition. 
\end{proof}

\subsubsection{Second setting: Bounded $Y_i$'s \& Eq. \eqref{eq:second_modelling_appr}}
In this section we map   $h \in \cH$ to $\check{h} =  \langle 1,\cdot\rangle_\mathbb{R} \otimes h(\cdot)$ and we work with the kernel function
\[
	\rho ((y_1,x_1),(y_2,x_2)) = \langle y_1, y_2 \rangle_\R k(x_1,x_2).
\]
In the introduction to this section we denoted the RKHS $\cH_\rho$ by $\R' \otimes \cH$. We will use in the following the more compact notation $\cH_\rho$, $\|\cdot \|_\rho$ etc. 

Compared to the approach in the previous section, using $\cH_\rho$  offers a dramatic simplification of the analysis and leads to improved bounds.
As usual, we are interested in approximating a mean element. In the current context, this is $\mean^\otimes_y = \int \rho((y,x), \cdot) \, dP(x,y)$ and 
there is a straight forward relation to the element $\mean_y$ that was used in the previous section, 
\[
	\mean^\otimes_{y} = \int \langle y, \cdot \rangle_\mathbb{R} \otimes k(x,\cdot) \, dP(x,y) = \langle 1,\cdot \rangle_\R \otimes 
	\int y k(x,\cdot) \, dP(x,y) = \langle 1,\cdot \rangle_\R \otimes \mean_y,
\]
using \eqref{eq:tensor_ind}. The element $\mean^\otimes_y$  lies in $\dR \otimes \cH$ whenever $Y^2 k(X,X) \in \mathcal{L}^1(\mu)$.  
Under our assumption that $Y = f_0(X) + \epsilon$, $X$ and $\epsilon$ independent random variables, the representation of $\mean_y^\otimes$ simplifies to 
\begin{align*}
	\mean_y^\otimes &= E( \langle f_0(X) + \epsilon ,\cdot \rangle_\R \otimes k(X,\cdot)) \\
		&=  E(f_0(X) \langle 1, \cdot \rangle_\R \otimes k(X,\cdot)) + \langle E(\epsilon), \cdot \rangle   \otimes E(k(X,\cdot)) \\  
		 &= \langle 1,\cdot \rangle_\R \otimes E(f_0(X) k(X,\cdot)),
\end{align*}
where we used \eqref{eq:tensor_ind} in the second and in the last equality. This is again just $\langle 1, \cdot \rangle_\R \otimes \mean_y$.

If $P$ is $\tau$-additive as a measure on $\X \times \R$ then the support $S$ of $P$ is well defined and we have a natural population limit 
\[
C^\otimes_y = \cch \{
\langle y, \cdot \rangle_\R \otimes k(x,\cdot) :  (x,y) \in S\}
\]
of the empirical convex set 
\[
	C^\otimes_{y,n} = \cch \{ \langle Y_i, \cdot \rangle_\R \otimes k(X_i,\cdot) :  i \leq n\}.
\]
These are just the convex sets associated with the kernel function $\rho$ acting on $\R \times \X$ and $\mean_y^\otimes, \mean_{y,n}^\otimes$ are the corresponding mean and empirical mean elements. In fact, we can apply right away Theorem \ref{thm:covariance}. Our sample space is then $[- \|f_0\|_\infty - c_\epsilon, \|f_0\|_\infty + c_\epsilon] \times \X$,
where $c_\epsilon$ is a constant such that $|Y| \leq c_\epsilon$ a.s. The kernel function is $\rho$ restricted to the sample space and 
\[
	\|\rho \!\!\upharpoonright \!\! S \times S \|_\infty  \leq (\|f_0\|_\infty + c_\epsilon) \|k\|_\infty.
\]
In this formulation it might not be directly obvious how assumptions on the distribution of $Y$ enter. Using a Rademacher argument we can control the difference between $P$ and $P_n$ when acting on indicator functions. To do so we do do not need any assumption on the distribution of $Y$ beside boundedness. But, if you recall our earlier arguments, you will notice that we used lower bounds on $P$ when applied to indicator functions to control the size of $C^\otimes_{y,n}$. This lower bound on $P$ depends on the distribution of $Y$. In particular, with the covariance operator approach, it depends on the variance of $Y$.   

Let $\tilde{\mathfrak{C}}^\otimes_c$ be the centered covariance operator corresponding to kernel $\rho$ then for $g,h \in \cH$,
\begin{align*}
	\langle \tilde{\mathfrak{C}}^\otimes_c \check g,\check h \rangle_\rho 
	&= E( Y^2 g(X) h(X)) - E(Y g(X)) E(Y h(X))\\
	&= E( f_0^2(X) g(X) h(X)) + \sigma^2 E(g(X) h(X)) - E(f_0(X) g(X)) E(f_0(X) h(X)),
\end{align*}
where $\sigma^2$ is the variance of $\epsilon$. We can also relate this expression back to the covariance operator discussed in the earlier approach, for $h\in \cH$,
\[
\langle \tilde{\mathfrak{C}}^\otimes_c \check g,\check h \rangle_\rho 
= \langle \tilde{\mathfrak{C}}_{c,l\times k} f_0 \otimes g, f_0 \otimes h \rangle_{l\times k} + \sigma^2 E(g(X)) E(h(X)). 
\]
Recall that for $g,h \in \cH$,  $\langle g, h \rangle = \langle f_0 \times g, f_0 \times h \rangle_{l\times k}$ and note that $\langle  \check g, \check h \rangle_\rho = \langle g,h \rangle$. In particular,
\begin{equation} \label{eq:norms_equiv}
\|h \| = \|\check h\|_\rho = \|f_0 \times h\|_{l \times k}
\end{equation}
for all $h\in \cH$. If $h\in \cH, \|h\|=1$,  is such that $f_0 \times h$ is an eigenfunction of $\tilde{\mathfrak{C}}_{c,l \times k}$ with eigenvalue $\lambda$ and $\sigma^2$ is the variance of $\epsilon$ then 
\[
\langle \tilde{\mathfrak{C}}^\otimes_c \check h,\check h \rangle_\rho = \lambda + \sigma^2 E(h^2(X)).
\]
Furthermore, if $f_0 \times h$ is an eigenfunction of $\tilde{\mathfrak{C}}_{c,l \times k}$ and $g$ is such that $\langle g, h \rangle = 0$ then 
\[
\langle \tilde{\mathfrak{C}}^\otimes_c \check h,\check g \rangle_\rho =  \langle \tilde{\mathfrak{C}}_{c,l \times k} f_0 \times h, f_0 \times g \rangle_{l\times k} + \sigma^2 E(g(X)) E(h(X)) = \sigma^2 E(g(X)) E(h(X)). 
\]
There is no reason why the latter term should be zero and the two operators will generally not have the same eigenfunctions (in the sense that $\check h$ is an eigenfunction of $\tilde{\mathfrak{C}}^\otimes_c$ iff  $f_0 \otimes h$ is an eigenfunction of $\tilde{\mathfrak{C}}_{c,l \times k}$). 

\begin{remark} \label{remark:lowest_EV_product}
	If $v =  \inf \{ E(h^2(X)) : h \in \cH, \|h \|=1\} > 0$ then $\tilde{\mathfrak{C}}^\otimes_c$ has no eigenvalue below $\sigma^2 v$. This can help with the compression, but notice that larger values of $\sigma^2$ are related to larger values $\|\rho\|_\infty$ which hinders the compression. 
\end{remark}

\paragraph{Lower bounds on the width.} 
We could also look at the width of the convex set $C_y^\otimes$ by means of the kernel function $\rho$. While this is a useful exercise we only  want to highlight here a simple relation between the width of $C_y^\otimes$ and the width of usual convex set $C_S$ (as a subset of $\cH_S$). For $h \in \cH$, $\|h\|=1$,   
\begin{align*}
	\diam_{\check h} (C_y^\otimes) &= \sup_{(x,y) \in S}  (y - f_0(x))  \langle h,  k(x,\cdot) \rangle  + h(x) f_0(x) \\ 
&\quad \quad \quad -  \infd_{(x',y') \in S} 
((y' - f_0(x'))  \langle h,  k(x',\cdot) \rangle  + h(x') f_0(x')) \\
&\geq \sup_{x \in \X_S} \sup_{y,y' \in S_x} (y-y') h(x) 
= \sup_{x \in \X_S} \sup_{y,y' \in S_x} (y-y') |h(x)|,
\end{align*}
where $\X_S = \{x: (x,y) \in S\}$ and $S_x = \{ y : (x,y) \in S\}$. Also, note that 
\[
\sup_{x\in \X_S} |h(x)| \geq (1/2) \diam_h(C_S)
\]
and we have a lower bound on  $\diam_{\check h} (C_y^\otimes)$ which is a product of the width of $C_S$ and the spread of $\epsilon$.


\subsubsection{Third setting: Unbounded $Y_i$'s \& Eq. \eqref{eq:second_modelling_appr}} \label{sec:unbounded_Y}
If $Y$ is unbounded 
then one way to approach the approximation problem is to cap the observations $Y_i$ and to control the cap as a function of $n$. We demonstrate this for the case that we have the model stated in \eqref{eq:std_reg_assump},
$f_0$ is \textit{bounded} and \textit{measurable}, and $\epsilon$ is \textit{sub-Gaussian} with variance factor $\nu$ (see \cite[Sec.2.3]{GAB13}) but is not necessarily bounded. The natural sample space is now $\R \times \X$. The kernel $\rho$ is well defined on $\R \times \X$ but is unbounded. When $f_0$ is bounded, $k$ is measurable and bounded, and $\epsilon$ is sub-Gaussian, we have that  $\langle Y,\cdot \rangle_\R k(X,\cdot) \in \mathcal{L}^2(\mu,\cH_\rho)$ since 
\[
	E( \| \langle Y,\cdot \rangle_\R k(X,\cdot) \|_\rho^2) = E( Y^2 k(X,X)) \leq \|k\|_\infty E(Y^2)
\]
and the latter term is finite since $Y$ is sub-Gaussian.
Let $\{r_n\}_{n \geq 1}$ be a non-negative and non-decreasing sequence, and let $\wideparen Y^{(n)} = (Y \wedge (r_n + \|f_0\|_\infty)) \vee - (r_n + \|f_0\|_\infty)$. The $\|f_0\|_\infty$ can obviously be replaced by an upper bound on the norm, but as the argument is developed such a bound is needed to control the error introduced by capping the observations $Y_i$. Define $\wideparen{\mean}_{y,n}^\otimes = (1/n) \sum_{i=1}^n \langle \wideparen Y^{(n)}_i, \cdot \rangle_\R k(X_i, \cdot)$. In this section our aim is to derive a suitable adapted version of Theorem \ref{thm:covariance} for this setting where $\epsilon$ is sub-Gaussian. We start by investigating the effect of the capping of $Y$.

\paragraph{Bounding $\| \mean_{y,n}^{\otimes} - \wideparen{\mean}_{y,n}^\otimes \|$.}
A simple expansion yields
\begin{align*}
	\| \mean_{y,n}^{\otimes} - \wideparen{\mean}_{y,n}^\otimes \|^2_{\rho} &= \frac{1}{n^2} \|  \sum_{i=1}^n  \langle Y_i - \wideparen Y^{(n)}_i, \cdot \rangle_\R k(X_i,\cdot) \|_\rho^2\\
&= \frac{1}{n^2} \sum_{i,j =1}^n (Y_i - \wideparen Y^{(n)}_i) ( Y_j - \wideparen Y_j^{(n)}) k(X_i, X_j). 
\end{align*}
Due to the independence of the observations and  by using the Cauchy-Schwarz inequality, 
\begin{align*}
	E( \|\mean_{y,n}^{\otimes} - \wideparen{\mean}_{y,n}^\otimes \|^2_{\rho}) \leq  &\frac{1}{n^2} \sum_{i =1}^n 
| E (Y_i - \wideparen Y^{(n)}_i)^2 k(X_i, X_i)| \\
&+ \frac{1}{n^2} \sum_{i \not = j} |\langle E((Y_i - \wideparen Y^{(n)}_i) k(X_i,\cdot)), E(( Y_j - \wideparen Y_j^{(n)})  k(X_j, \cdot) ) \rangle | \\
\leq & \frac{\|k\|_\infty}{n^2} \sum_{i =1}^n E (Y_i - \wideparen Y^{(n)}_i)^2 +  \frac{n-1}{n} E^2( |Y_1 - \wideparen Y^{(n)}_1| \|k(X_1,\cdot)\|) \\
\leq &\frac{\|k\|_\infty}{n^2} \sum_{i =1}^n E (Y_i - \wideparen Y^{(n)}_i)^2 +  \frac{(n-1)\|k\|_\infty}{n} E^2( |Y_1 - \wideparen Y^{(n)}_1|). 
\end{align*}
Let $\lambda$ denote the law of $\epsilon$. Noting that $\epsilon$ is sub-Gaussian and using \cite[Sec.2.3]{GAB13} together with \cite[252O]{FREM} and \cite[Prop2.5(a)]{DUDLEY14}, we obtain
\begin{align}
	E(|Y_i - \wideparen Y^{(n)}_i|) &\leq \int_0^\infty (t - r_n) \times \chi\{t \geq r_n \} \, d\lambda(t) + \int_{-\infty}^0  (-t - r_n) \times \chi\{ - t \geq r_n\} \, d\lambda(t)  \notag \\
					&\leq \int_0^\infty \lambda\{ t : t \geq s + r_n\} \, ds + \int_0^\infty \lambda\{t : -t \geq s + r_n \} \, ds \notag \\
					&\leq 2 \int_{r_n}^\infty e^{-s^2/2 \nu} \, ds  
					\leq \sqrt{2\pi \nu} e^{-r_n^2/ 2 \nu}.
					\label{eq:cappedYabs}
\end{align}
Similarly, 
\begin{align}
E(Y_i - \wideparen Y^{(n)}_i)^2 &\leq \int_0^\infty (t-r_n)^2 \times \chi\{t \geq r_n \} \, d\lambda(t) + \int_{-\infty}^0  (-t-r_n)^2 \times \chi\{ - t \geq r_n \} \, d\lambda(t)  \notag \\
&\leq \int_0^\infty \lambda\{ t : t \geq \sqrt{s} + r_n\} \, ds + \int_0^\infty \lambda\{t : t \leq - \sqrt{s} - r_n \} \, ds \notag \\
&\leq  2 e^{-r_n^2/2\nu} \int_{0}^\infty e^{-s/2\nu} \, ds =    4 \nu e^{-r_n^2/2 \nu}. \label{eq:cappedYVar}
\end{align}
Combining these yields
\begin{equation*} 
E( \|\mean_{y,n}^{\otimes} - \wideparen{\mean}_{y,n}^\otimes \|^2_{\rho}) \leq 2 \nu \|k\|_\infty  e^{-r_n^2/2 \nu} \Bigl(2/n + \pi  e^{-r_n^2/2 \nu}  \Bigr)\end{equation*}
and
\begin{align*}
	\Pr(\|\mean_{y,n}^{\otimes} - \wideparen{\mean}_{y,n}^\otimes \|_{\rho} \geq t ) & \leq \frac{2 \nu \|k\|_\infty  e^{-r_n^2/2 \nu} \Bigl(2/n + \pi  e^{-r_n^2/2 \nu}  \Bigr) }{t^2}.
\end{align*}
In other words, if \textit{we have an upper bound on $\|f_0\|_\infty$} and cap the observations as described above then
with probability $1- \delta$ for any $\delta \in (0,1)$, 
\begin{equation} \label{eq:mean_capped}
\|\mean_{y,n}^{\otimes} - \wideparen{\mean}_{y,n}^\otimes \|_{\rho}  \leq  \sqrt{2 \nu} \|k\|^{1/2}_\infty  e^{-r_n^2/4 \nu} \Bigl(2/n + \pi  e^{-r_n^2/2 \nu}  \Bigr)^{1/2} 
\delta^{-1/2}.
\end{equation}
\paragraph{Spectrum of the covariance operator.} 
In Remark \ref{remark:lowest_EV_product} we observed that no eigenvalue of $\tilde{\mathfrak{C}}^{\otimes}_c$ can be lower than $\sigma^2v$, where $\sigma^2$ is the variance of $\epsilon$ and   $v =  \inf \{ E(h^2(X)) : h \in \cH, \|h \|=1\}$. There is a peculiar detail that we have to be careful about: if $h_1, h_2 \in \cH$ are linearly independent but $h_1\!\!\upharpoonright \!\! \mathcal{X}_S  = h_2\!\!\upharpoonright \!\! \mathcal{X}_S$, where $\mathcal{X}_S = \{x : (x,y) \in S\}$, then there exists an $h \in \cH$ of norm one for which $E(h^2(X)) = 0$. Furthermore, the corresponding functions $\check h_1, \check h_2$ in $\cH_\rho$ are not constant on $S$ but some linear combination of $\check h_1$ and $\check h_2$ is zero on $S$ (at least when the involved functions are continuous). To make use of the lower bound $\sigma^2 v$  it makes therefore sense to move right away to functions restricted to $S_f = \{(y,x) : (x,y) \in S\}$ or $\mathcal{X}_S$.

Before coming back to the lower bound we want to take a paragraph to understand better $\cH_{\rho,S_f}$, which is the RKHS corresponding to the kernel function $\rho\!\!\upharpoonright\!\! S_f \times S_f$. First, notice that there are no (non-zero) constant functions in $\cH_{\rho,S_f}$ if $\epsilon$ is not almost surely zero. In particular, the covariance operator $\Cov_c^{\otimes,S}$ has then only eigenvalues that are strictly positive. If $\Cov_c^{\otimes}$ has zero eigenvalues then there must be elements $\check h$ which are constant on $S$ but this means that these elements have to be equal to zero on $S$ and correspond to the origin in $\cH_{\rho,S_f}$.      
Also notice that $\cH_{\rho,S_f}$ is not the same RKHS as the RKHS with kernel function $(l \!\!\upharpoonright \!\! \R_S \times \R_S) \times (k \!\!\upharpoonright \!\! \X_S \times \X_S)$, where $\R_S = \{y: (x,y) \in S\}$ and $l(y,y') = \langle y,y' \rangle_\R$ for all $y,y' \in \R$. This follows directly since they have different domains. The latter kernel is defined for pairs $(x,y)$ in $\mathcal{X}_S \times \R_S$ while the former is defined for pairs $(x,y) \in S$. This is inconvenient since we like to use $E(h^2(X))$ for  $h$ in some RKHS of functions acting on the support of some measure and it is not directly 
obvious what this support should be like. In the following, let $\mathfrak{T}_\X$ be a topology on $\X$, let $\mathfrak{T}$ be the corresponding product topology on $\X \times \R$, and assume that the law $P$ of $(X,Y)$ is a \textit{Radon measure} with $\sigma$-algebra $\mathcal{A}$; in particular, it is a  topological $\tau$-additive measure and  $\mathfrak{T} \subset \mathcal{A}$. Then $S$ is well defined as a subset of $\X \times \R$. Let us also introduce our probability space $(\Omega,\Sigma,\mu)$, assume that $\mu$ is \textit{complete} and $(X,Y)$ is a well defined random variable in the sense that $(X,Y)^{-1}[A] \in \Sigma$ for all $A \in \mathcal{A}$.  
Furthermore, consider the $\sigma$-algebra $\mathcal{A}_\X = \{A : A \times \R \in \mathcal{A}\}$ and let $P_X = P \circ \pi_\X^{-1}$ where the function $\pi_\X: \X \times \R \to \X$ projects onto the first coordinate.  The $\sigma$-algebra $\mathcal{A}_\X$ contains $\mathfrak{T}_\X$ since for $O \in \mathfrak{T}_\X$  it holds that $O \times \R$ is in the product topology $\mathfrak{T}$. 
Hence, $P_X$ is a topological measure. If $\X$ is a \textit{Hausdorff space} then $P_X$ is, in fact, a Radon measure (apply \cite[418I]{FREM} to $\pi_\X$ and note that $\pi_\X$ is continuous). This implies that  the support $S' \subset \X$ of $P_X$ is well defined. Observe that $\overline{\X_S} = S'$: the projection $\pi_\X$ is a continuous inverse-measure preserving function from $\X \times \R $ to $\X$ and, due to \cite[411N.e]{FREM}, the support of $P_\X$ is
$\overline{\pi_\X[S]} = \overline{ \{ x : (x,y) \in S\} } =   \overline{S_\X}$. We also have to check that $X$ is actually a well defined random variable in the sense that 
$X^{-1}[A] \in \Sigma$ for all $A \in \mathcal{A}_\X$, and that $E(h^2(X)) = \int h^2 \, dP_X = \int (h\!\!\!\upharpoonright\!\! \overline{\X_S})^2 \, dP_X$. The former can be seen in the following way. Since $\pi_\X$ is a measurable function from $(\X \times \R, \mathcal{A})$ to $(\X, \mathcal{A}_\X)$ it follows that $X = \pi_\X \circ (X,Y)$ is measurable as a mapping from $(\Omega,\Sigma)$ to $(\X,\mathcal{A}_\X)$ and is a well defined random variable. For the latter, if $h$ is in $\mathcal{L}^2(\X, P_X)$ then $\int h^2 \,dP_X$ is well defined and obviously equal to $\int (h\!\!\!\upharpoonright\!\!\! \overline{\X_S})^2 \, dP_X$. It remains to show the $E(h^2(X)) = \int h^2 \, dP_X$. One way to show this is to use \cite[235E]{FREM}. This can be applied since $\mu X^{-1}[A] = \Pr(X \in A, Y\in \R) = P \pi_\X^{-1}[A] = P_\X(A)   $  for any $A\in \mathcal{A}_\X$. 

We can now define an RKHS of functions that act on the support $\overline{\X_S}$. For compactness of notation let $\tilde S = \overline{\X_S}$, 
let $k_{\tilde S} = k\!\!\upharpoonright\!\!\tilde S \! \times \! \tilde S$ and denote the corresponding RKHS by $\cH_{\tilde S}$. This RKHS allows us to carry over Remark \ref{remark:lowest_EV_product} to the case where we work with $\cH_{\rho,S_f}$. Let $\Cov^{\otimes,S}_c$ be the centered covariance operator that corresponds to $\cH_{\rho,S_f}$. Since,
$\cH_{\rho,S_f} = \{ \check h \!\!\upharpoonright\!\!S_f : h \in \cH\}$, it follows that  
\begin{align} 
\langle \Cov^{\otimes,S}_c \check h \!\!\upharpoonright \!\! S_f, \check h \!\!\upharpoonright \!\! S_f \rangle_{\rho,S_f} &= E(f^2_0(X) h^2(X))  + \sigma^2 E(h^2(X)) - E^2(f_0(X) h(X))\notag \\
&= E(f_0^2(X) h^2(X))  + \sigma^2 \langle \Cov^{\tilde S} h \!\!\upharpoonright \!\! \tilde S, h \!\!\upharpoonright \!\! \tilde S   \rangle_{\tilde S} - E^2(f_0(X) h(X)). \label{eq:cov_lbd_hcheck} 
\end{align}
When the kernel function $k$ is \textit{continuous}, then the only function $h \in \cH_{\tilde S}$  for which $E(h^2(X)) = 0$ is $h = 0$ which has norm zero (otherwise there is an open set on which $h^2(X)$ is bounded away from zero and the intersection of this open set with the support has measure strictly larger than zero \cite[411N]{FREM}). In this case,  
all the eigenvalues of $\Cov^{\tilde S}$ are strictly positive and also all eigenvalues of  $\Cov^{\otimes,S}_c$ are strictly positive, implying that there is no constant function in $\cH_{\rho,S_f}$. Also, note that $\|h \!\! \upharpoonright \!\! \tilde S \|_{\tilde S} = \| \check h  \!\! \upharpoonright \!\! S_f \|_{\rho,S_f}$ for all  $h \in \cH$: first observe that $\check g \!\! \upharpoonright \!\! S_f = \check h \!\! \upharpoonright \!\! S_f$ if, and only if, $g  \!\! \upharpoonright \!\! \tilde S = h \!\! \upharpoonright \!\! \tilde S$. If  
$g  \!\! \upharpoonright \!\! \tilde S = h \!\! \upharpoonright \!\! \tilde S$ then for $(x,y) \in S$, $\check g (y,x) = y g(x) = y h(x) = \check h(y,x)$ because $x\in \X_S \subset \tilde S$. On the other hand, if $\check g \!\! \upharpoonright \!\! S_f = \check h \!\! \upharpoonright \!\! S_f$ then for any $x \in \X_S$ there exist a point $y$ such that $(x,y) \in S$ and $y g(x)  = y h(x)$ which implies $g(x) = h(x)$ if $y \not = 0$. In fact, if $\sigma^2 >0$, there exist at least two such points and, in particular, there exists a $y \not = 0$ such that $y g(x) = y h(x)$. Since $g=h$ on the dense subset $\X_S$ of $\tilde S$ and both $g,h$ are continuous (assuming $k$ is continuous) it follows that $g = h$ on $\tilde S$ (e.g. \cite[Thm1.5.4]{ENG89} and using that $\R$ is a Hausdorff space). Using  \eqref{eq:norms_equiv},
\begin{align}
	\|  \check h  \!\! \upharpoonright \!\! S_f \|_{\rho,S_f} 
	&= \inf\{ \|\check g \|_{\rho} : \check g  \!\! \upharpoonright \!\! S_f =  \check h \!\! \upharpoonright \!\! S_f, g \in \cH \} 
	=  \inf\{ \|g \| : \check g  \!\! \upharpoonright \!\! S_f =  \check h \!\! \upharpoonright \!\! S_f, g \in \cH \} \notag \\
	&=  \inf\{ \|g \| : g  \!\! \upharpoonright \!\! \tilde S =  h \!\! \upharpoonright \!\! \tilde S, g \in \cH \}
	= \|  h \!\! \upharpoonright \!\! \tilde S\|_{\tilde S}. \label{eq:check_h_S}
\end{align}
This implies that a strictly positive lower bound on the  
$\langle \Cov^{\tilde S} h \!\!\upharpoonright \!\! \tilde S, h \!\!\upharpoonright \!\! \tilde S   \rangle_{\tilde S}$ is given by the smallest eigenvalue of $\Cov^{\tilde S}$ 
(when $\cH$ is finite dimensional). We can also express this bound in terms of $\cH$ since for $h \in \cH$, $\langle \Cov h, h \rangle = \langle \Cov^{\tilde S}  h \!\!\upharpoonright \!\! \tilde S, h \!\!\upharpoonright \!\! \tilde S \rangle_{\tilde S}$ and $\|h\| \geq \|h \!\!\upharpoonright \!\! \tilde S \|_{\tilde S}$ it follows that the smallest eigenvalue of $\Cov$ provides a lower bound on the smallest eigenvalue of $\Cov^{\tilde S}$ and this lower bound is strictly positive. We might be tempted to improve this lower bound by recalling 
that the eigenvalues of $\Cov_c$ tell us the dimension of $\cH_{\tilde S}$, but notice that there is no reason why the eigenfunctions of $\Cov_c$ and $\Cov$ should be related since one corresponds to  the variance and the other corresponds to the second moment, and it is not directly obvious of how to benefit from the additional information that $\Cov_c$ provides. 

\paragraph{A family of covariance operators.}
The move from $Y$  to $\wideparen{Y}^{(n)}$ affects the covariance, and the covariance operator corresponding to $\langle Y, \cdot \rangle_\R k(X,\cdot)$ is not the same as the covariance operator corresponding to $\langle \wideparen{Y}^{(n)}, \cdot \rangle_\R k(X,\cdot)$, $n\geq 1$. Let us denote the covariance operators corresponding to the  $\wideparen{Y}$'s by the somewhat unwieldy $\CovCap$ and the covariance operator corresponding to the support  $S_n$ of the law of $(X,\wideparen{Y}^{(n)})$ by $\CovCapn$; we assume that the laws $P^{(n)}$ of $(X,\wideparen{Y}^{(n)})$ are  \textit{Radon measures}, which guarantees that \textit{the support of $P^{(n)}$ is well defined}. It is easy to verify that \textit{ $P^{(n)}$ is a Radon measure if $P$ itself is a Radon measure} and the topology corresponding to $P$ is a Hausdorff topology. 
 Consider the set $A  =   \mathcal{X} \times [- (r_n + \|f_0\|_\infty), r_n + \|f_0\|_\infty]$ equipped with the subspace topology which is also a  Hausdorff topology  \cite[Thm.2.1.6]{ENG89}, and the continuous function $f: \mathcal{X} \times \R \to A$ given by $f(x,y) = (x, \wideparen{y}^{(n)})$, where we mean the same transformation as for the random variable $Y$. The push-forward $P^{(n)} = f_\# P$ is a Radon measure according to \cite[418I]{FREM}.


We need lower bounds on the smallest non-zero eigenvalues of the different $\CovCap$ operators to use our compression approach. 
It seems natural to 
work with an assumption on the smallest eigenvalue of the covariance operator $\tilde{\mathfrak{C}}^{\otimes,S}_c$, which corresponds to the original $Y$, and to relate the eigenvalues of $\CovCapn$ back to the eigenvalues of $\tilde{\mathfrak{C}}^{\otimes,S}_c$. As discussed on the previous page, \textit{the covariance operator $\tilde{\mathfrak{C}}^{\otimes,S}_c$ does not have an eigenvalue that is zero if $k$ is continuous}. In this case, the smallest eigenvalue $\wideparen{\lambda}^{(n)}_\star$  of $\CovCapn$  is at least of size  $\bar \lambda_\star/2$, where $\bar \lambda_\star$ is  the smallest eigenvalue of  $\tilde{\mathfrak{C}}^{\otimes,S}_c$, whenever
\[
	\| \tilde{\mathfrak{C}}^{\otimes,S}_c - \CovCapn \|_{op} \leq \frac{\sigma^2 v}{2},
\]
where \eqref{eq:cov_lbd_hcheck} tells us that we can choose $v$ either as the smallest eigenvalue of $\Cov^{\tilde S}$ or the smallest non-zero eigenvalue of $\Cov$, and where $0< \sigma^2 \leq \nu$ is the variance of $\epsilon$.  Alternatively, we can obviously also directly impose assumptions on the eigenvalues of $\tilde{\mathfrak{C}}^{\otimes,S}_c$. We can bound the operator norm in the following way,
\begin{align*}
&\| \tilde{\mathfrak{C}}^{\otimes,S}_c - \CovCapn \|_{op} = 
	\sup_{\| \check h \! \upharpoonright \! S_f\|_{\rho,S_f} =1} \sup_{\| \check g \! \upharpoonright \! S_f\|_{\rho,S_f} =1\vphantom{ \| \check h \! \upharpoonright \! S_f\|_{\rho,S_f} =1}}  
	\langle \Cov^{\otimes,S}_c \check h \!\! \upharpoonright \!\! S_f- \CovCapn \check h \!\! \upharpoonright \!\! S_f, \check g \!\! \upharpoonright \!\! S_f \rangle_{\rho,S_f} \\  
&= \sup_{\|  h \! \upharpoonright \! \tilde S\|_{\tilde S} = 1} \sup_{\|  g \! \upharpoonright \! \tilde S\|_{\tilde S} = 1}  E( (Y^2 - (\wideparen{Y}^{(n)})^2) h(X)g(X))- E( Y g(X)) E( Y h(X)) \\
&\quad \quad \quad \quad \quad \quad \quad  \quad \quad \quad + E( \wideparen{Y}^{(n)} g(X)) E( \wideparen{Y}^{(n)} h(X)). 
\end{align*}
Let us first address the second moment term. For $h,g$ such that $\|  h \!\! \upharpoonright \!\! \tilde S\|_{\tilde S} = 1 = \|  g \!\! \upharpoonright \!\! \tilde S\|_{\tilde S} $,
\begin{align*}
&|E( (Y^2 - (\wideparen{Y}^{(n)})^2) h(X)g(X))| \leq \|k\|_\infty E(Y^2 - (\wideparen{Y}^{(n)})^2) \\
&= \|k\|_\infty \int_0^\infty (t^2 - r_n^2) \times \chi\{t \geq r_n \} \,d \lambda(t) +  \|k\|_\infty \int_{-\infty}^0 (t^2 - r_n^2) \times \chi \{ -t \geq r_n\} \, d\lambda(t) \\
&\leq \|k\|_\infty \int_{0}^\infty \lambda\{t: t \geq \sqrt{s + r_n^2}\} \,ds + \|k\|_\infty \int_0^\infty \lambda\{t: t\leq - \sqrt{s + r_n^2}  \} \, ds \\
&\leq 2 \|k\|_\infty \int_{r_n^2}^\infty e^{-s /2 \nu} \, ds = 4\nu \|k\|_\infty e^{-r^2_n/2 \nu}.   
\end{align*}
The other term can be controlled in the following way,
\begin{align*}
&|  E( Y g(X)) E( Y h(X)) - E( \wideparen{Y}^{(n)} g(X)) E( \wideparen{Y}^{(n)} h(X))| \\ 
&\leq | E( (Y -  \wideparen{Y}^{(n)}) g(X)) E( f_0(X) h(X))| +   |E(\wideparen{Y}^{(n)} g(X)) E((Y - \wideparen{Y}^{(n)}) h(X))| \\
&\leq \|k\|_\infty (2 \|f_0\|_\infty + r_n)  E( |Y -  \wideparen{Y}^{(n)}|) \leq   \sqrt{8 \pi \nu} \|k\|_\infty (2\|f_0\|_\infty + r_n) e^{-r_n^2/2\nu}. 
\end{align*}
Combining these yields 
\begin{equation} \label{eq:EV_lowerbnd}
\| \tilde{\mathfrak{C}}^{\otimes,S}_c - \CovCapn \|_{op} \leq  (\sqrt{8 \pi \nu} (2\|f_0\|_\infty + r_n) + 4 \nu ) \|k\|_\infty e^{-r_n^2/2\nu}.
\end{equation}
In particular, if we use $\nu = \sigma^2$,  
\begin{equation}\label{eq:def_r1}
r_1 = (1 \vee 2 \sigma^2) \vee  2 \sigma \log^{1/2} \Bigl( 
	\frac{ \|k\|_\infty (12 \sqrt{\pi} (\|f_0\|_\infty + 1) + 8 \sigma ) }{\sigma \bar \lambda_{\star,\tilde S}}
\Bigr),
\end{equation}
where $\bar \lambda_{\star,\tilde S}$ is the smallest non-zero eigenvalue of $\Cov^{\tilde S}$, and let $\{r_n\}_{n\geq 1}$ be a non-decreasing sequence then 
for all $n \geq 1$,
\[
\wideparen{\lambda}_\star^{(n)} \geq \bar \lambda_{\star,\tilde S} /2.
\]
This follows from the argument on the last page and because this choice guarantees that the right hand side of Equation \eqref{eq:EV_lowerbnd}
 is upper bounded by $\sigma^2  \bar \lambda_{\star,\tilde S}$: first notice that $r_1^2/4 \sigma^2 \geq \log r_1$ for any $r_1 \geq 1 \vee 2 \sigma^2$. Hence,
\begin{align*}
&(\sqrt{32 \pi} \sigma \|f_0\|_\infty  + 4 \sigma^2 ) \|k\|_\infty e^{-r_1^2/2\sigma^2} +  \sqrt{8 \pi} \sigma \|k\|_\infty e^{-r_1^2 / 2\sigma^2 + \log r_1} \\
&\leq (\sqrt{32 \pi} \sigma \|f_0\|_\infty  + 4 \sigma^2 ) \|k\|_\infty e^{-r_1^2/2\sigma^2} +  \sqrt{8 \pi} \sigma \|k\|_\infty e^{-r_1^2 / 4\sigma^2} \\
&\leq (\sqrt{32 \pi} \sigma (\|f_0\|_\infty +1) + 4 \sigma^2 ) \|k\|_\infty e^{-r_1^2/4\sigma^2}. 
\end{align*}
The same arguments applies to any $r_n > r_1$ and the final display is non-increasing in the $r_1$ argument. Setting this final display equal to $\sigma^2  \bar \lambda_{\star,\tilde S} /2$ yields the expression in \eqref{eq:def_r1}.

Also, notice that $r_1$ depends logarithmically on the unknown terms $\|f_0\|_\infty$ and  $\bar \lambda_{\star,\tilde S}$.

\paragraph{Compression in the case of  sub-Gaussian noise}
We have now all the ingredients to state a proposition for the \textit{sub-Gaussian noise case under the assumption that we have an upper bound on $\|f_0\|_\infty$, a lower bound on $\bar \lambda_{\star,\tilde S}$ and some control over the variance term $\sigma^2$}. In particular, we know that when 
$r_1$ is chosen as in \eqref{eq:def_r1}
 that $\CovCapn$ has eigenvalues that are closely related to the eigenvalues of $\tilde{\mathfrak{C}}^{\otimes,S}_c$ and that we can apply our results to compress
 $\wideparen{\mean}_{y,n}^\otimes$. Furthermore, we know how to control the difference between $\wideparen{\mean}_{y,n}^\otimes$ and $\mean_{y,n}^\otimes$. The following proposition ties these results together.
\begin{prop} \label{prop:unbounded}
Let $(\X, \mathfrak{T}_\X)$ be a Hausdorff space, $(\X \times \R,\mathfrak{T}, \mathcal{A},P)$ be a topological measure space such that $P$ is a Radon probability measure which has support $S$, $\mathfrak{T}$ is the product topology corresponding to $\mathfrak{T}_\X$ and the standard topology on $\R$, and let $k$ be a continuous and bounded kernel function defined on $\X$ such that the corresponding RKHS  $\cH$ is finite dimensional.  
Furthermore, let $(X_1,Y_1),\ldots, (X_n,Y_n)$ be i.i.d. random variables attaining values in $\X \times \R$, with law $P$, and of the form $Y_i = f_0(X_i) + \epsilon_i$, where $\epsilon_1,\ldots, \epsilon_n$ are centered i.i.d. random variables which are independent of $X_1,\ldots, X_n$ and such that $\epsilon_1$ is sub-Gaussian with variance $0 < \sigma^2$, and $f_0$ is a measurable and bounded function. Let $\bar \lambda_{\star,\tilde S}$ be the smallest eigenvalue of the covariance operator $\Cov^{\tilde S}$ that corresponds to the kernel function 
$k\!\!\upharpoonright\!\! \tilde S \times \tilde S$, where $\tilde S$ is the closure of $\{x : (x,y) \in S\}$ in $\X$. Furthermore, let $\bar \lambda_\star$ be the smallest eigenvalue of of the covariance operator $\Cov^{\otimes,S}_c$ corresponding to the kernel function $\rho \!\!\upharpoonright\!\! S \times S$, $\rho((y_1,x_1),(y_2,x_2)) = y_1 y_2 k(x_1,x_2)$ for all $x_1,x_2 \in \X, y_1,y_2 \in \R$. Given $q \in (0,1)$ define the sequence $\{r_n\}_{n\geq 1}$ in the following way. Define $r_1$ as in \eqref{eq:def_r1}
and for $n \geq 2$ through 
\[
r_n = r_1 \vee \sqrt{2} \sigma \log(n/q). 
\]
Under these conditions, for any $n\geq 1$, the smallest eigenvalue $\wideparen{\lambda}_\star^{(n)}$ of $\CovCapn$ fulfills  $\wideparen{\lambda}_\star^{(n)} \geq  \bar \lambda_\star/2 \geq \sigma^2 \bar \lambda_{\star,\tilde S}/2 > 0$ and there exists a ball of radius $\wideparen{\delta}^{(n)} = \wideparen{\lambda}_\star^{(n)} / (2 (\|f_0\|_\infty + r_n)^{1/2} \|k\|^{1/2}_\infty)$ around $\wideparen{\mean}^\otimes_{y}$ within the affine space spanned by $\wideparen{C}_\rho = \{ \rho((y,x),\cdot) : (x,y) \in S_n\}$ as a subset of  $\cH_\rho$, where $S_n$ is the support of the law of $P^{(n)}$ corresponding to $(X,\wideparen{Y}^{(n)})$.  Whenever $n$ is (strictly) greater than 
\begin{align*}
	&\left(
\frac{8 (\|f_0\|_\infty + r_n) \|k\|_\infty (\sqrt{2\log(12/q)} + 192 (\|f_0\|_\infty + r_n)  \|k\|_\infty /\wideparen{\lambda}^{(n)}_\star)}{(\wideparen{\lambda}^{(n)}_\star)^2} \right)^{2} \\
	&\vee  \left(\frac{16 (\|f_0\|_\infty + r_n)^{1/2} \|k\|_\infty^{1/2} +  \sqrt{288\log(4/q)}}{\wideparen{\delta}^{(n)}}\right)^{2}
\end{align*}
with probability $1-q$ there exists a ball of radius $\wideparen{\delta}^{(n)}/4$ around $\wideparen{\mean}^\otimes_{y,n}$ in $\wideparen{C}^{\otimes}_{\rho,n}$ within the affine subspace spanned by $\wideparen{C}_\rho$ and 
\[
\|\wideparen{\mean}^\otimes_{y,n}  - \mean_{y,n}^\otimes \|_\rho \leq 6 \sigma \|k\|_\infty^{1/2}  n^{-3/4}.
\]
\end{prop}
\begin{proof}
We derived the inequalities concerning the eigenvalues earlier in this section. Furthermore, the bound on $\wideparen{\delta}^{(n)}$ follows directly 
when applying Theorem  \ref{thm:covariance} to the random variables $(X,\wideparen{Y}^{(n)})$ and the kernel function $\rho\!\!\upharpoonright \!\! S_n \times S_n$, noting that $\|\rho\!\!\upharpoonright \!\! S_n \times S_n\|_\infty \leq (\|f_0\|_\infty + r_n) \|k\|_\infty $. The bound on $n$ is also taken from Theorem \ref{thm:covariance} with the only modification being that a union bound is used to guarantee simultaneously the existence of the ball around  $\wideparen{\mean}^\otimes_{y,n}$ within  $\wideparen{C}^{\otimes}_{\rho,n}$ and that 
$\|\wideparen{\mean}^\otimes_{y,n}  - \mean_{y,n}^\otimes \|_\rho$ is upper bounded by $n^{-1/2}$. In detail, for the stated $q$ with probability $1- q/2$ there exists a ball around $ \wideparen{\mean}^\otimes_{y,n}$ and with the given choice of $r_n$, with probability $1- q/2$
\begin{align*}
	\|\wideparen{\mean}^\otimes_{y,n}  - \mean_{y,n}^\otimes \|_\rho &\leq \sqrt{2 \sigma^2} \|k\|^{1/2}_\infty  e^{-r_n^2/4 \sigma^2} \Bigl(2/n + \pi  e^{-r_n^2/2 \sigma^2}  \Bigr)^{1/2} 
(q/2)^{-1/2} \\ 
	&\leq 2 \sigma \|k\|^{1/2}_\infty n^{-1/2} n^{-1/4} (2 n^{-1/2} + \pi)^{1/2} \leq 3 \sigma \|k\|_\infty^{1/2} \pi^{1/2} n^{-3/4}. 
\end{align*}
follows from \eqref{eq:mean_capped}.

\end{proof}

\subsection{Simultaneous compression} \label{sec:simultaneous}
In this section we are interested in compressing different quantities like the covariance operator and the mean element simultaneously, meaning that we want to find \textit{a single} convex combination of a subset of the data that allows us to approximate both quantities well. As mentioned in the introduction, we are utilizing a direct sum approach to approach the simultaneous compression problem.  In this section,  we start our exploration with  $\mathfrak{C}_n$ and $\mean_{y,n}$ for bounded $Y$, which is in some sense easy to deal with since the RKHSs corresponding to them have intersection $\{0\}$ (after some minor adjustments of the kernel functions) which makes the direct sum approach easy to apply. We then explore how we can deal with RKHSs $\cH_1,\cH_2$ for which the intersection is a non-trivial subspace. This problem is more challenging and we combine the direct sum approach with a quotient space approach to deal with it. We conclude this section by applying this approach to  approximate simultaneously  $\mathfrak{C}_n$, $\mean_{y,n}$ and $\sum_{i=1}^n Y_i$, which allows us to calculate the least squares error for  RKHS functions using only a core set of the data. 

\subsubsection{Compressing the covariance and weighted mean embedding simultaneously}
One of the main challenges when trying to control the approximation error of $\mathfrak{C}_n$ and $\mean_{y,n}$ simultaneously is to determine the size of the convex set that contains $(\mathfrak{C}_n,\mean_{y,n})$ within the direct sum of two RKHSs and to locate $(\mathfrak{C}_n,\mean_{y,n})$ within this convex set, or, alternatively, to analyze the covariance operator corresponding to this new space. These problems would be easier to handle if we could identify the direct sum space with an RKHS and apply the techniques that we have developed for RKHSs. When using the first approach for $\mean_{y,n}$, we face directly a problem in that we will gain some weighted sum of $(\kappa(X_i,\cdot),k(X_i,\cdot))$ as an approximation, but we need a weighted sum of $(\kappa(X_i,\cdot),Y_i k(X_i,\cdot))$. This problem can be circumvented by incorporating the $Y_i$'s into the kernel as we have done in the second approach, i.e. for a given kernel function $k$ on $\X$ let,   
\begin{equation} \label{eq:def_rho}
	\rho((y_1,x_1),(y_2,x_2)) = y_1 y_2 k(x_1,x_2) = \langle \langle y_1, \cdot \rangle_\R \otimes k(x_1,\cdot),\langle y_2, \cdot \rangle_\R \otimes k(x_2,\cdot)\rangle_\otimes
\end{equation}
then $\rho$ is a kernel function on $\mathbb{R}\times \X$ and we move from $\mean_{y,n}$ to $\mean_{y,n}^\otimes$. It helps to also extend $\kappa$ to $\mathbb{R}\times \X$ by setting $\kappa_y((y_1,x_1),(y_2,x_2)) = \kappa(x_1,x_2)$. Let $\hat h$ be the extension of $h \in \cH_\kappa$ to $\R \times \X$, i.e. $\hat h(y,x) = h(x)$ for all $x\in \X,y\in\R$, then 
$\|\hat h\|_{\kappa_y} = \|h\|_{\kappa}$.  For finite linear combinations this follows from  $\|\sum_{i=1}^n \alpha_i \kappa_y( (y_i,x_i),\cdot)\|^2_{\kappa_y} = \sum_{i,j=1}^n \alpha_i \alpha_j \kappa_y((y_i,x_i), (y_j,x_j)) = \|\sum_{i=1}^n \alpha_i \kappa(x_i,\cdot)\|^2_\kappa$, where $n\in \mathbb{N}, \alpha_i \in \R, x_i\in \X, y_i \in \R$ for all $i\leq n$ and extends to all of $\cH_\kappa$ by a denseness argument. By a similar argument we can see that the extension map is surjective.

Observe that   $\widehat{\cH \odot \cH}$, that is the RKHS corresponding to $\kappa_y$, and $\cH_\rho = \dR \otimes \cH$ are linearly independent, i.e. $(\widehat{\cH\odot \cH}) \cap (\dR \otimes \cH) = \{0\}$, because $\kappa_y((y_1,x_1),(y_2,x_2))$ does not depend on the values $y_1,y_2$ while $\rho$ does.
Due to this linear independence  
we have that $\mathcal{K} = (\widehat{\cH \odot \cH}) \oplus (\dR \otimes \cH)$ is isometrically isomorphic to $\cH_{\kappa_y + \rho}$: 
Let $\mathcal{G} = \{g + h : (g,h) \in \mathcal{K} \}$ with 
norm $\|f \|_\mathcal{G} = \inf \{\|(g,h)\|_{\mathcal{K}} : g+ h = f, (g,h) \in \mathcal{K} \}$. There exists a surjective isometry between $\mathcal{K}$ and $\mathcal{G}$. Because $\widehat{\cH \odot \cH}$ and $\dR \otimes \cH$ are linearly independent there is for every $ f\in \mathcal{G}$ exactly one pair $(g,h) \in \mathcal{K}$ such that $g + h = f$ and $\|f\|_\mathcal{G} = \|(g,h)\|_\mathcal{K}$. Furthermore, we have an inner product on $\mathcal{G}$ which is given by
$\langle f_1, f_2 \rangle_\mathcal{G} = \langle (g_1,h_1), (g_2,h_2) \rangle_\mathcal{K}$ whenever $f_1 = g_1 + h_1$ and $f_2 = g_2 + h_2$.   
For $(g,h) \in \mathcal{K}$ we have that $g\in \cH_{\kappa_y}$ and $h \in \cH_{\rho}$. By \cite[Thm.,p.353]{ARON50} the kernel $\kappa_y + \rho$ is the kernel of $\mathcal{G}$ and, therefore, $\cH_{\kappa_y + \rho}$ is isometrically isomorphic to $\mathcal{K}$.

When $P$ is a Radon measure with support $S \subset \X \times \R$ then we can look at $\mathcal{K}_S = (\widehat{\cH \odot \cH})_S \oplus (\R' \otimes \cH)_S$,  where $(\widehat{\cH \odot \cH})_S = \{u \!\! \upharpoonright \!\! S_f : u\in  \widehat{\cH \odot \cH}\}= \cH_{\kappa_y\! \upharpoonright  S_f \times S_f} $ with norm 
$\|u\|_{\kappa_y\! \upharpoonright  S_f \times S_f} = \inf \{ \|v\| : u = v \!\! \upharpoonright \!\! S_f,  v \in \widehat{\cH \odot \cH} \}$, with $S_f = \{(y,x) : (x,y) \in S\}$, and similarly we define  $(\R' \otimes \cH)_S$. If $\R_S = \{y : (x,y) \in S \}$ contains at least two elements then  $(\widehat{\cH \odot \cH})_S \cap (\R' \otimes \cH)_S = \{0\}$ and the above argument shows that $\mathcal{K}_S$ is isometrically isomorphic to $\cH_{(\kappa_y + \rho)\!\upharpoonright S_f \times S_f}$. We summarize this in the following lemma.
%
\begin{lemma} \label{lem:iso_iso_simult}
Let $\X$ be a measurable space and $k$ a measurable kernel function on $\X$ with corresponding RKHS $\cH$ then 
\[
	\widehat{\cH \odot \cH}\oplus (\dR \otimes \cH) \cong \cH_{\kappa_y + \rho}.
\]
Furthermore, if $P$ is a Radon measure on $\X \times \R$ with support $S$ and $\R_S$ contains at least two elements then 
\[
	(\widehat{\cH \odot \cH})_S \oplus (\R' \otimes \cH)_S \cong \cH_{(\kappa_y + \rho)\!\upharpoonright S_f \times S_f}.
\]
\end{lemma}
In the following, we focus on the case where $P$ is a Radon measure and study the RKHS $\cH_{(\kappa_y + \rho)\!\upharpoonright S_f \times S_f}$. For ease of notation let $\kappa_\oplus = (\kappa_y + \rho)\!\upharpoonright \! S_f \times S_f$. Similar to before, there is a natural definition for the convex set that contains our mean element. This convex set is 
\[
C_{\kappa_\oplus} = \cch \{\kappa_{\oplus}((y,x),\cdot) :  (x,y) \in S\}
\]
and the empirical analogue is 
\[
	C_{\kappa_\oplus,n} = \cch \{ \kappa_{\oplus}((Y_i,X_i),\cdot)  :  i \leq n\}.
\]
The mean element that we want to approximate is then $\mean_{\kappa_\oplus} = \int \kappa_{\oplus}((y,x),\cdot) \,dP(x,y)$ when this is well defined, and the empirical analogue is   $\mean_{\kappa_\oplus,n} = (1/n) \sum_{i=1}^n \kappa_{\oplus}((Y_i,X_i),\cdot)$.

In the following, we will assume that $Y = f_0(X) + \epsilon$ with both $f_0$ and $\epsilon$ being bounded and $\epsilon$ independent of $X$. 

\paragraph{Covariance operator} We denote the covariance operator corresponding to $\kappa_\oplus$ by $\Cov_{\kappa_\oplus}$. Because we are dealing with a direct sum one might suppose that it follows directly that the covariance operator factors into the individual covariance operators corresponding to $\kappa_y \!\upharpoonright \! S_f \times S_f$ and $\rho\!\upharpoonright \! S_f \times S_f$. Unfortunately that is not the case: for $h_1,h_2 \in \cH_{\kappa_\oplus}$ there exists $f_1,f_2 \in \cH_{\kappa_y \upharpoonright  S_f \times S_f}$ and $g_1,g_2\in \cH_{\rho \upharpoonright  S_f \times S_f }$ such that $h_i = f_i + g_i$ for $i \in \{1,2\}$, and $\|h_i\|^2_{\kappa_\oplus} = \|f_i\|^2_{\kappa_y \upharpoonright  S_f \times S_f}
+ \|g_i\|^2_{\rho \upharpoonright  S_f \times S_f }$. Hence,
\begin{align} 
	\langle \Cov_{\kappa_\oplus} h_1, h_2 \rangle_{\kappa_\oplus}  =& 
\langle \Cov_{\kappa_y \upharpoonright S_f \times S_f}  f_1, f_2 \rangle_{\kappa_y\upharpoonright S_f \times S_f} + \langle \Cov_{\rho\upharpoonright S_f \times S_f} g_1, g_2 \rangle_{\rho\upharpoonright S_f \times S_f} \notag \\ 
&+ E(f_1 \times g_2) + E(f_2 \times g_1).  \label{eq:cov_for_directsum}
\end{align}
The cross-terms do not vanish even if we use the centered covariance operator.

\paragraph{Width of $C_{\kappa_\oplus}$} We can apply our standard approach directly to the kernel function $\kappa_\oplus$ (recall that in our definition of this kernel the reduction to the support of $P$ is already incorporated) to gain insights into the convex set $C_{\kappa_\oplus}$. Alternatively, we can aim to link the width of $C_{\kappa_\oplus}$ back to the width of the corresponding convex sets corresponding to the kernel $k$ and $\kappa$.  
Due to Lemma \ref{lem:iso_iso_simult} we have that 
\[
\diam_{u} C_{\oplus} = \diam_{h} C_{\kappa_\oplus},
\]
where  $u \in (\widehat{\cH \odot \cH})_S \oplus (\R' \otimes \cH)_S , \|u\| =1$,  $h$, with $\|h\|=1$, is the corresponding element in $\cH_{\kappa_\oplus}$, and 
\[
	C_\oplus = \cch\{ 
((\kappa_y \!\!\upharpoonright\!\! S_f \times S_f)((y,x),\cdot), (\rho \!\!\upharpoonright\!\! S_f \times S_f)((y,x),\cdot)\}\subset (\widehat{\cH \odot \cH})_S \oplus (\R' \otimes \cH)_S.
\]
Hence, we can bound the width of $C_\oplus$ instead of bounding directly the width of $C_{\kappa_\oplus}$. We can write any $u \in (\widehat{\cH \odot \cH})_S \oplus (\R' \otimes \cH)_S$  as $(\hat g\!\!\upharpoonright \!\! S_f ,v\!\!\upharpoonright \!\! S_f)$, where $g \in \cH \odot \cH$, $\hat g$ is the extension of $g$ to $\R \times \X$, and  $v \in \dR \otimes \cH$. Observe that if $v$ is given by a finite linear combination of elements $\langle y_i,\cdot \rangle_\R \otimes h_i$, $y_i \in \R, h_i \in \cH$, then 
\begin{equation} \label{eq:dir_tens_helper}
	v = \sum_{i=1}^n \alpha_i (\langle y_i, \cdot \rangle_\R \otimes h_i) = \langle 1, \cdot \rangle_\R \otimes \bigl( 
		\sum_{i=1}^n y_i \alpha_i h_i  \bigr). 
\end{equation}
For such finite linear combinations let $\psi: \dR \otimes \cH \to \cH$ be $\psi(v) =\sum_{i=1}^n y_i \alpha_i h_i$. The map $\psi$ is independent of the particular representation of $v$ because if
\[
\sum_{i=1}^n \alpha_i (\langle y_i, \cdot \rangle_\R \otimes h_i) = v = \sum_{i=1}^m \beta_i (\langle z_i, \cdot \rangle_\R \otimes g_i)
\]
for a suitable $m\in\mathbb{N}$ and corresponding $\beta_i,z_i \in \R$, $g_i \in \cH$ for all $i \leq m$, then 
\[
0 = \| \langle 1, \cdot \rangle_\R \otimes (\sum_{i=1}^n y_i \alpha_i h_i - \sum_{i=1}^m z_i \beta_i g_i  )  \|^2_\rho = 
\|\sum_{i=1}^n y_i \alpha_i h_i - \sum_{i=1}^m z_i \beta_i g_i\|^2 
\]
We can also observe that $\|v\|^2_\rho  = \| \sum_{i=1}^n y_i \alpha_i h_i\|^2 = \|\psi(v)\|^2$. Furthermore, $\psi$ is linear and therefore an isometry. Since the finite linear combinations lie dense in $\dR \otimes \cH$ and $\cH$, and both $\dR \otimes \cH$ and $\cH$ are complete, we can extend $\psi$ to a surjective isometry between $\dR \otimes \cH$ and $\cH$ \cite[Cor.4.3.18]{ENG89}. In particular, any
$v \in \dR \otimes \cH$ can be represented as $\psi^{-1}(h)$ with a unique $h\in \cH$.


The width of $C_{\oplus}$  can now be lower bounded in the following way: choose $\alpha > 0$, let $b_\alpha = \sup\{b : \Pr(\epsilon \geq b) > \alpha \text{ and } \Pr(\epsilon \leq - b) > \alpha \}$ and $I = [-b_\alpha, b_\alpha]$. Then 
\begin{align*}
\diam_{(\hat g\!\upharpoonright \! S_f,\psi^{-1}(h)\!\upharpoonright \! S_f)} C_{\oplus} =\!\!\! \sup_{x\in\X_S, z \in I } \!\!\!\!(g(x) + (f_0(x) + z) h(x)) -\!\!\! \infd_{x\in \X_S, z \in I} \!\!\!\!\! (g(x) +  (f_0(x) + z)h(x))  
\end{align*}
whenever $(\hat g \!\upharpoonright \! S_f,\psi^{-1}(h)\!\upharpoonright \! S_f)$ has norm one, $g\in \cH\odot \cH$ and $h\in \cH$. In particular, when choosing the same point $x$ and using $z$ to move to absolute values, we gain
\begin{equation} \label{eq:sim_diam_lb_h}
\diam_{(\hat g\!\upharpoonright \! S_f,\psi^{-1}(h)\!\upharpoonright \! S_f)} C_\oplus \geq 
2 \|h\| b_\alpha  \sup_{x \in \X_S} \frac{|h(x)|}{\|h\|} \geq b_\alpha \|h\| \diam_{h/\|h\|}(C_{\X_S}),   
\end{equation}
where $C_{\X_S}$ is the usual convex set for the kernel $k\!\!\upharpoonright\!\! \X_S \times \X_S$. 

We need to complement this bound with a bound that is based on $g$ to deal with cases where $\|h\|$  is small. 
When $\|f_0\|_\infty$ is smaller than $b_\alpha$ then there is a  simple way to get a lower bound that involves $g$. For two points $x_1,x_2 \in \X_S$ and  any $h\in \cH$, we can chose $z_1,z_2 \in \R_S$ such that $(f_0(x_1)  + z_1) h(x_1) \geq 0 \geq (f_0(x_2) + z_2) h(x_2)$ and, hence, 
\begin{equation} \label{eq:sim_diam_lb_g}
\diam_{(\hat g \!\upharpoonright \! S_f,\psi^{-1}(h)\!\upharpoonright \! S_f)} C_\oplus \geq \sup_{x\in \X_S} g(x) - \infd_{x \in \X_S} g(x) \geq \|g\|_{\cH \odot \cH} \, \diam_{g/\|g\|_{\cH\odot \cH}} C_\odot.
\end{equation}
We can combine \eqref{eq:sim_diam_lb_h} and \eqref{eq:sim_diam_lb_g} to gain
a lower bound on the width of $C_\oplus$ in terms of  the widths of $C$ and $C_\odot$. 

\subparagraph{The low noise setting.} The situation gets more complicated when $|f_0|$ attains values that are significantly larger than $b_\alpha$. For instance, when there is no noise, i.e. $\epsilon = 0$ (a.s.), and there exists some $h\in \cH, g\in \cH \odot \cH$ such that $h(x) \not = 0$ and $f_0(x) = - g(x)/h(x)$ on $\R_S$, and  $(\hat g\!\upharpoonright \! S_f,\psi^{-1}(h)\!\upharpoonright \! S_f)$ has norm one, it holds that $\diam_{(\hat g\!\upharpoonright \! S_f,\psi^{-1}(h)\!\upharpoonright \! S_f)} C_\oplus = 0$. For $f_0$ to be equal or close to $-g /h $ it is necessary that $f_0$ attains large values when $\|h\|$ is small. For example, when $\cH \odot \cH$ is finite dimensional with dimension $d$, $\lambda_d > 0$ is the smallest eigenvalue of a suitable kernel matrix based on the kernel $\kappa$, and $k$ is a bounded kernel, then  $f_0(x) = - g(x)/h(x)$ can only happen if 
\[
\sup_{x\in \X} |f_0(x)| \geq \frac{\|g\|_\infty}{\|h\|_\infty} \geq \frac{\|g\| \lambda_d^{1/2}}{d^{1/2} \|h\| \|k\|_\infty^{1/2}}.
\]
For a small value of $\|h\|$ this implies that $\|g\|$ will be close to $1$ and $|f_0|$ has to attain a large value at some locations $x \in \X$.

\subparagraph{Interpolation and another look at the low noise setting} \label{sec:simul_interpol}
We look now at the case where there is no noise at all, that is $Y = f_0(X)$, 
\[
	C_\oplus = \cch\{(\kappa_y((f_0(x),x),\cdot), \langle f_0(x),\cdot \rangle_{\dR} \otimes k(x,\cdot)) : x\in \X  \} 
\]
and we are interested in interpolating $f_0$. 
In particular,  we are controlling the width of $C_\oplus$ depending on how $f_0$ is related to $\cH$ and $\cH \odot \cH$. The direct sum approach is useful to gain a deeper understanding of how well $(\mathfrak{C}_y, \mean_y^\otimes)$ can be approximated. The width of $C_\oplus$ in this interpolation setting has a simple form.
Assume that the support of the marginal measure is all of $\X$ and since there is no noise it then follows that the support of the measure $P$ is $S = \{ (x, f_0(x)) : x \in \X\}$.   For $g \in \cH \odot \cH, h \in \cH$, 
\[
\diam_{(\hat g,\psi^{-1}(h))} C_\oplus  = \sup_{x\in\X} (g(x) + f_0(x) h(x)) - \infd_{x\in \X} (g(x) +  f_0(x)h(x)) 
\]
The functions $f_0 \times h$  lie in the RKHS $\cH_{f_0}$ which has the kernel function $k_0(x,y) = f_0(x) k(x,y) f_0(y)$. According to \cite[Prop.5.20]{PAUL16} the RKHS $\cH_{f_0}$ is equal to $\{f_0 \times h : h \in \cH\}$ and the inner product on $\cH_{f_0}$ is given by $ \langle f_0 \times h_1, f_0 \times h_2\rangle_{f_0} = \langle h_1,h_2 \rangle$ whenever $h_1,h_2 \in \cH$. If $\cH_{f_0} \cap (\cH \odot \cH) = \{0\}$ then we can embed both $\cH_{f_0}$ and $\cH \odot \cH$ in
the direct sum $\mathcal{G} = (\cH \odot \cH) \oplus \cH_{f_0}$  such that for any $f \in \cH_{f_0}, h \in \cH \odot \cH$ it holds that $\|f\|_{f_0} = \|(0,f)\|_{\mathcal{G}}$ and $\|h \|_{\cH \odot \cH} = \|(h,0)\|_{\mathcal{G}}$. As in Lemma \ref{lem:iso_iso_simult} it holds that $\mathcal{G} \cong \cH_{k_0 + \kappa}$  and, therefore, it also holds that $\|f\|_{k_0} = \|f\|_{k_0 + \kappa}$ and $\|h\|_{\cH \odot \cH} = \|h\|_{k_0 + \kappa}$. In this case, 
\[\diam_{(\hat g,\psi^{-1}(h))} = \diam_{(g, f_0 \times h)} C_\mathcal{G}
\]
whenever $g\in \cH\odot \cH, h \in \cH$, where $C_\mathcal{G} = \cch \{ (\kappa(x,\cdot), k_0(x,\cdot)) : x \in \X \} \subset \mathcal{G}$. This follows directly from
\begin{align*}
g(x) + f_0(x) h(x) = \langle (g,f_0 \times h), (\kappa(x,\cdot),k_0(x,\cdot)) \rangle_\mathcal{G}.  
\end{align*}
We can now follow the approach from Section \ref{sec:finite_dim_appox_LB} and, in particular, apply Proposition \ref{prop:approx_lower_bnd} to the RKHS with kernel $k_{f_0} + \kappa$. Assumptions on $f_0$ imply then lower bounds on the width.


\subsubsection{Linearly dependent spaces} \label{sec:quotient_lin_dep}
The setting above where we approximate $\mathfrak{C}_n$ and $\mean^\otimes_{y,n}$ simultaneously is easy to deal with because the corresponding RKHSs are linearly independent.
On the other hand, when the spaces over which we want to optimize are linearly dependent then the RKHS is not isometrically isomorphic to the direct product space and the approach needs to be modified. This can happen, for example, when we try to approximate $\mathfrak{C}$ simultaneously to $\mean$. In this context the corresponding spaces $\cH \odot \cH$ and $\cH$ can overlap. For instance, when $k$ is a polynomial kernel of order two then $\cH$ and $\cH \odot \cH$ are not linearly independent.

Whenever $\cH \odot \cH$ and $\cH$ are not linearly independent it is natural to identify elements like $(h,0)$ and $(0,h)$, $h\in (\cH \odot \cH) \cap \cH$. One way to do so is to consider the subspace $U = \{(-h,h) : h \in  (\cH \odot \cH) \cap \cH \}$ of $\mathcal{K}:= (\cH \odot \cH) \oplus \cH$. The subspace is closed: let $\{(-h_n,h_n)\}_{n \in \mathbb{N}}$ be a convergent sequence in $U$. This sequence is also a Cauchy sequence and for any $\epsilon >0$ there exists an $N \in \mathbb{N}$ such that for all $n,m \geq \mathbb{N}$, 
\[
	\epsilon > \| (-h_n,h_n) - (-h_m,h_m)\|_\oplus^2 = \|h_m - h_n\|^2 + \|h_n -h_m\|^2_{\cH \odot \cH} 
\]
and $\{h_n\}_{n\in \mathbb{N}}$ is a Cauchy sequence both in $\cH$ and $\cH \odot \cH$. Hence, it converges in both spaces. Let $f$ be its limit in 
$\cH \odot \cH$ and $g$ its limit in $\cH$ then for any $x \in \X$ there exists an $n\in\mathbb{N}$ such that
$|f(x) - g(x)| \leq \epsilon + |h_n(x) - h_n(x)| = \epsilon$ and $f = g$. It also follows right away that 
$\lim_{n\rightarrow \infty} \|(-h_n,h_n) - (-f,g)\|_\oplus  = 0 $ and the sequence has its limit in $U$.

Consider the quotient space $\mathcal{K} / U$ with co-sets $f^\bullet = f + U$, $f\in \mathcal{K}$, and the quotient norm $\|f^\bullet \|_{\mathcal{K}/U} = \inf\{\|f+h\|_\mathcal{K} : h \in U \}$. The space $\mathcal{K}/U$ is again a Hilbert space since $U$ is closed (e.g. \cite[Sec.III.4]{REED72}), and it is isometrically isomorphic to the Hilbert space
$\cH \odot \cH + \cH$ when the latter is equipped with the norm $\|f\|_+^2 = \inf \{\|g\|^2_{\cH \odot \cH} + \|h\|^2 : f=g+h,  g \in \cH \odot \cH, h \in  \cH \} $;
in particular, a co-set $(g,h) + U \in \mathcal{K}/U$ is mapped to the function $f = g + h$. This map is well defined since if $(g_1,h_1) \in (g,h)^\bullet$ then there is some $h_2$ such that $g_1 + h_1 = g - h_2 + h + h_2 = f$. Furthermore, by the choice of $U$, there are no two elements  $u^\bullet,v^\bullet \in \mathcal{K}/U$, $u^\bullet \not= v^\bullet$, that are mapped to the same function $f$. Assume otherwise, then there is some $f$ such that $f = g_1 +h_1 = g_2 + h_2$ and, therefore, $(g_2 + g_1 -  g_2, h_2 - g_1 + g_2) = (g_1, h_1)$. Since $g_1 - g_2 \in \cH \odot \cH$ and $g_1 - g_2 = h_1 - h_2 \in \cH$ it
follows that $(g_2,h_2)^\bullet = (g_1, h_1)^\bullet$ which contradicts the assumption. Finally, any element in $\cH \odot \cH +  \cH$ can be represented this way since if $f =  g + h$, $g \in \cH \odot \cH, h \in \cH$ then $(g,h)^\bullet$ is mapped to $f$. 
Using again \cite[Thm.,p.353]{ARON50} we can conclude that $\mathcal{K}/U$ and $\cH_{\kappa + k}$ are isometrically isomorphic.

While $\mathcal{K}/U$ and $\cH_{\kappa + k}$ are isometrically isomorphic it does not hold in general that $\mathcal{K}$ and $\mathcal{K}/U$  are isometrically isomorphic to $\cH_{\kappa + k}$. Hence, when mapping an element $u\in \mathcal{K}$ to $u^\bullet \in \mathcal{K}/U$, then finding an approximation $v^\bullet$ of $u^\bullet$ in $\mathcal{K}/U$, we generally cannot invert the $\bullet$ operation to gain an approximation of $u$. Selecting an arbitrary element in $v^\bullet$ does not work either since a small value of $\|u^\bullet - v^\bullet\|_{\mathcal{K}/U}$ does not imply that all elements in the corresponding co-sets have small distances, i.e. there is no reason why $\sup_{w\in v^\bullet} \|u - w\|_\mathcal{K}$ should be small. However, we are no trying to approximate arbitrary elements in $\mathcal{K}$ but only elements
\[
	(\mathfrak{C},\mean) = \frac{1}{n} \sum_{i=1}^n (\kappa(X_i,\cdot),k(X_i,\cdot))
\]
and we are optimizing the approximation over $\tilde C = \cch \{(\kappa(x,\cdot), k(x,\cdot))  : x\in \X \} \subset \mathcal{K}$. The important observation is that for any non-zero element $(-h,h) \in U$, that is $h\in (\cH \odot \cH) \cap \cH$, we have
\[
	\langle (-h,h), (\kappa(x,\cdot), k(x,\cdot)) \rangle_\mathcal{K} = -h(x) + h(x) = 0,
\]
and $\tilde C$ is a subset of $U^\perp$. 

The subspace $U^\perp$ together with the inner product inherited from $\mathcal{K}$ is isometrically isomorphic to $\cH_{\kappa + k}$. This follows since $\mathcal{K}/U$ and $\cH_{\kappa+k}$ are isometrically isomorphic and $U^\perp$ and $\mathcal{K}/U$ are isometrically isomorphic. The latter holds since every co-set corresponds to exactly one element in $U^\perp$, and for $u \in U^\perp$, $\|u^\bullet\|_{\mathcal{K}/U} = \inf\{\|u + v\|_\mathcal{K} :v \in U\}= \|u\|_\mathcal{K}$. 

Also, $\cspn \tilde C = U^\perp$. We know already that $\cspn \tilde C \subset U^\perp$. To show that they are equal let 
$ \mathcal{K} = \spn ( (\cspn \tilde C) \cup U)$. Observe that this space is closed since $\cspn \tilde C$ and $U$ are, and because they are orthogonal. It is sufficient to show that $(f,0) \in \mathcal{K}, (0,g) \in \mathcal{K}$ for all $f \in \cH \odot\cH$ and $g \in \cH$ since the smallest closed subspace that contains all these elements is $(\cH \odot \cH) \oplus \cH$.    

For $f = \sum_{i=1}^n \beta_i (\kappa(x_i,\cdot) + k(x_i,\cdot))\in \cH_{\kappa +k}$ define $\psi(f) = \sum_{i=1}^n \beta_i (\kappa(x_i,\cdot), k(x_i\cdot)) \in \cspn \tilde C \subset (\cH \odot \cH) \oplus \cH$. The operator $\psi: \cH_{\kappa +k} \to \cspn \tilde C$ is linear and defined on a dense subset of $\cH_{\kappa + k}$. It is furthermore norm preserving since 
\[
\| \psi(f) \|^2_\oplus  = \sum_{i,j=1}^n \beta_i \beta_j \kappa(x_i,x_j) + \sum_{i,j=1}^n \beta_i \beta_j k(x_i,x_j) = \|f\|_{\kappa +k}^2. 
\] 
Hence, it can be extended to a linear isometry, which we will also denote by $\psi$, between $\cH_{\kappa + k}$ and $\cspn \tilde C$ with the norm inherited from $(\cH \odot \cH) \oplus \cH$. 

For any $h\in (\cH \odot \cH) \cap \cH$ we can infer that it lies  in the RKHS with kernel $\kappa + k$ due to 
 \cite[Thm.,p.353]{ARON50} and $\psi(h)$ lies in $\cspn \tilde C$.  Write $\psi(h)$ as   $(h_1, h_2)$, $h_1 \in \cH \odot \cH, h_2 \in \cH$, then for all $x\in \X$, 
\begin{align*}
	h_1(x) + h_2(x) &= \langle (h_1,h_2), (\kappa(x,\cdot), k(x,\cdot)) \rangle_\oplus = \langle \psi(h), \psi(\kappa(x,\cdot) + k(x,\cdot)) \rangle_\oplus \\
&= \langle h,\kappa(x,\cdot) + k(x,\cdot) \rangle_{\kappa +k} = h(x). 
\end{align*}
In other words, for any $h\in (\cH \odot \cH)\cap \cH$ we have  $h_1 \in \cH \odot \cH, h_2 \in \cH$ such that $h= h_1 + h_2$ and $(h_1,h_2) \in \cspn \tilde C$. Since $h,h_1 \in \cH \odot \cH$ it follows that $h_2 = h - h_1 \in (\cH \odot \cH) \cap \cH$ and $(h_2, -h_2) \in U$. Thus, 
$(h,0) = (h_1,h_2) + (h_2,-h_2) \in \mathcal{K}$. Similarly, we can observe that $(0,h) \in \mathcal{K}$.

For $f\in \cH \odot \cH$ let $\psi(f) = (f_1,f_2)$ with $f_1 \in \cH\odot \cH$ and $f_2 \in \cH$. In other words, $f = f_1 + f_2$ and 
since $\cH \odot \cH$ is a linear space, we know that $f_2 = f - f_1 \in (\cH \odot \cH) \cap \cH$. And, as above, we can conclude that $(f, 0)$ also lies in $\mathcal{K}$. The same argument also shows that for any $g \in \cH$ we have $(0,g) \in \mathcal{K}$. Hence, $\mathcal{K} = (\cH \odot \cH) \oplus \cH$ and 
$\cspn \tilde C = U^\perp$.





\begin{lemma} \label{lem:iso_iso_quotient}
Let $\X$ be a measurable space and $k$ a measurable kernel function on $\X$ with corresponding RKHS $\cH$ then $\cspn\{(\kappa(x,\cdot), k(x,\cdot)) : x \in \X  \} \subset (\cH \odot \cH) \oplus \cH$ equipped with the inner product of $(\cH \odot \cH) \oplus \cH$
is isometrically isomorphic to $\cH_{\kappa + k}$.
\end{lemma}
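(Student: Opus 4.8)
The plan is to realize the target space as the orthogonal complement of the ``diagonal'' subspace
\[
U = \{(-h,h) : h \in (\cH \odot \cH) \cap \cH\} \subset (\cH \odot \cH)\oplus\cH,
\]
and then to pass through the quotient $\bigl((\cH\odot\cH)\oplus\cH\bigr)/U$ to $\cH_{\kappa+k}$. Concretely, I would establish the chain of isometric isomorphisms
\[
\cspn\{(\kappa(x,\cdot),k(x,\cdot)) : x\in\X\} \;=\; U^\perp \;\cong\; \bigl((\cH\odot\cH)\oplus\cH\bigr)/U \;\cong\; \cH_{\kappa+k},
\]
most of whose ingredients are already worked out in the discussion preceding the lemma, so that the proof mainly consists of assembling them.

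The first step is to confirm that $U$ is a closed subspace: a convergent sequence $(-h_n,h_n)$ in $U$ is Cauchy in both coordinates, so $\{h_n\}$ converges both in $\cH$ and in $\cH\odot\cH$ to limits which agree pointwise, hence coincide, placing the limit in $U$. Then $\mathcal{Q} := \bigl((\cH\odot\cH)\oplus\cH\bigr)/U$ is a Hilbert space, and the assignment $(g,h)+U \mapsto g+h$ is a well-defined bijection onto $\cH\odot\cH+\cH$ which becomes an isometry once the latter carries the norm $\|f\|_+^2 = \inf\{\|g\|_{\cH\odot\cH}^2+\|h\|^2 : f=g+h\}$; by Aronszajn's theorem on sums of reproducing kernels (\cite[Thm.,p.353]{ARON50}), $(\cH\odot\cH+\cH,\|\cdot\|_+)$ is exactly $\cH_{\kappa+k}$. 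Next, the canonical quotient map restricted to $U^\perp$ is a bijective isometry onto $\mathcal{Q}$: it is injective since $U\cap U^\perp=\{0\}$, surjective because $(\cH\odot\cH)\oplus\cH = U\oplus U^\perp$, and norm preserving because $\|u+v\|^2=\|u\|^2+\|v\|^2$ for $u\in U^\perp$, $v\in U$, so the infimum defining the quotient norm is attained at $v=0$.

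The remaining and most delicate step is the identity $\cspn\tilde C = U^\perp$, where $\tilde C = \cch\{(\kappa(x,\cdot),k(x,\cdot)):x\in\X\}$ (note $\cspn\tilde C$ coincides with the closed span appearing in the statement, since the convex hull sits between the set and its span). The inclusion $\cspn\tilde C\subseteq U^\perp$ is immediate from $\langle(-h,h),(\kappa(x,\cdot),k(x,\cdot))\rangle_\oplus = -h(x)+h(x)=0$. For the reverse inclusion I would introduce the linear map $\psi:\cH_{\kappa+k}\to\cspn\tilde C$ sending $\sum_i\beta_i(\kappa(x_i,\cdot)+k(x_i,\cdot))$ to $\sum_i\beta_i(\kappa(x_i,\cdot),k(x_i,\cdot))$, verify it is norm preserving on the dense subspace of finite combinations, extend it by completeness, and then show that $\spn\bigl((\cspn\tilde C)\cup U\bigr)$ exhausts $(\cH\odot\cH)\oplus\cH$. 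The crux is that for $h\in(\cH\odot\cH)\cap\cH$, writing $\psi(h)=(h_1,h_2)$ and using the reproducing property gives $h=h_1+h_2$ with $h_2=h-h_1\in(\cH\odot\cH)\cap\cH$, so $(h,0)=(h_1,h_2)+(h_2,-h_2)$ with $(h_2,-h_2)\in U$; an analogous computation produces $(0,h)$, and applying the same reasoning to arbitrary $f\in\cH\odot\cH$ and $g\in\cH$ yields $(f,0)$ and $(0,g)$. Since $\cspn\tilde C$ and $U$ are orthogonal, this forces $\cspn\tilde C=U^\perp$. Composing the three isometries then proves the lemma. I expect this span-filling argument to be the main obstacle: one must check that every decomposition produced by $\psi$ genuinely splits across $\cH\odot\cH$ and $\cH$ with the correction term landing inside the intersection, so that it legitimately lies in $U$.
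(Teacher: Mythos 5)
Your proposal is correct and follows essentially the same route as the paper: the closed diagonal subspace $U$, the quotient identified with $\cH_{\kappa+k}$ via Aronszajn's sum-of-kernels theorem, the isometry between $U^\perp$ and the quotient, and the span-filling argument with the map $\psi$ to show $\cspn\{(\kappa(x,\cdot),k(x,\cdot)) : x\in\X\} = U^\perp$. The only cosmetic difference is that you package the steps explicitly as a chain of isometric isomorphisms, which the paper develops in the surrounding discussion rather than inside a single proof.
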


\subsubsection{Simultaneous least-squares risk approximation for unbounded $Y$} \label{sec:simult_risk} 
Often it is unnecessary to include the  $(1/n) \sum_{i=1}^n Y_i^2$ term in the simultaneous approximation since many methods only rely on the terms that include $f$ (e.g. the ridge regressor) and also $(1/n) \sum_{i=1}^n Y_i^2 \in \R$ itself can be represented by a single real number and does not need to be compressed. However, when selecting points $(X_{\iota(1)},Y_{\iota(m)})$, $m \ll n$, for a coreset then
\[
\frac{1}{m} \sum_{i=1}^m (f^2(X_{\iota(i)}) -2 Y_{\iota(i)} f(X_{\iota(i)})) + \frac{1}{n} \sum_{i=1}^n Y_i^2  
\]
is not the mean squared error of $f$ given the sample $X_{\iota(1)},Y_{\iota(1)}, \ldots,X_{\iota(m)},Y_{\iota(m)}$ and might even be negative.
An easy way to remedy this problem is to move to $(1/m) \sum_{i=1}^m Y_{\iota(i)}^2$ but then we do not have any guarantee that this is close 
to $(1/n) \sum_{i=1}^n Y_i^2$. An alternative is to include the $Y_i$'s in the simultaneous approximation problem. This can be done by, for instance, 
defining a kernel on $\R \times \X$ through $r((y_1,x_1),(y_2,x_2)) = \langle y_1, y_2 \rangle_\R$ and by considering the direct sum
\[
\widehat{\cH \odot \cH} \oplus (\dR \otimes \cH) \oplus \cH_r. 
\]
Alternatively, we can restrict the functions to the support $S$ of the underlying measure and consider  
\[
	(\widehat{\cH \odot \cH})_S \oplus (\dR \otimes \cH)_S \oplus \cH_{r\upharpoonright S_f \times S_f}. 
\]
If there is \textit{no constant function in the RKHS $\cH_{\X_S}$} then  $(\dR \otimes \cH)_S \cap  \cH_{r\upharpoonright S_f \times S_f} = \{0\}$ and
\[
		(\widehat{\cH \odot \cH})_S \cap ((\dR \otimes \cH)_S \cap  \cH_{r\upharpoonright S_f \times S_f}) =\{0\}
\]
if, furthermore, \textit{$\R_S$ contains at least two different values}: any function in $\widehat{\cH \odot \cH}$ is of the form $g^2(x), x \in \X_S, g\in \cH$ and functions in 
$(\dR \otimes \cH)_S \cap  \cH_{r\upharpoonright S_f \times S_f})$ are of the form $(y,x) \mapsto y h(x) + c y$ for some constant $c$.  For any functions $g,h\in \cH$, choose $y_1, y_2 \in \R_S$, $y_1 \not = y_2$, and $x_1, x_2 \in \X_S$ are such that $h(x_1) \not = h(x_2)$. For $g^2$ to be equal to $yh(x) +cy$ it has to hold that   $g^2(x_1)$ is equal to $y_1 h(x_1) + c y_1$  and it also has to be equal to $y_2 h(x_1) + c y_2$ In other words,
$ (y_1 - y_2) h(x_1) = c (y_2 - y_1)$ and $h(x_1) = -c$ and similarly for $h(x_2)$, that is $h(x_1) = h(x_2)$ with a contraction to the choice of $x_1$ and $x_2$. Hence, under these conditions we can identify the direct sum with an RKHS corresponding to a sum of kernels,
\[
	(\widehat{\cH \odot \cH})_S \oplus (\dR \otimes \cH)_S \oplus \cH_{r\upharpoonright S_f \times S_f}  \cong
	\cH_{(\kappa_y + \rho + r)\!\upharpoonright S_f \times S_f}.
\]
When the RKHS $\cH_{\X_S}$ contains the constant function then the intersection $(\R' \otimes \cH)_S \cap \cH_{r \upharpoonright S_f \times S_f}$ is not empty since the function $(y,x) \mapsto y$, with domain $S$,  lies in $(\R' \otimes \cH)_S$ and in $\cH_{r\upharpoonright S_f \times S_f}$. We can follow the same approach as in Section \ref{sec:quotient_lin_dep} and consider the one dimensional subspace $U = \{(-h,h) : h \in (\R' \otimes \cH)_S \cap 
\cH_{r \upharpoonright S_f \times S_f}\}$ of $(\dR \otimes \cH)_S \oplus \cH_{r\upharpoonright S_f \times S_f}$ and consider the quotient space 
$\mathcal{Q} =  ((\dR \otimes \cH)_S \oplus \cH_{r\upharpoonright S_f \times S_f}) / U$ with the usual quotient norm. The space $\mathcal{Q}$ is a Hilbert space  \cite[Sec.III.4]{REED72}. By the same argument as in Section  \ref{sec:quotient_lin_dep} we can infer that 
\[
	\mathcal{Q} \cong \cH_{(\rho + r)\upharpoonright S_f \times S_f} 
\]
and for $(x,y) \in S$, $(\rho((y,x),\cdot), r((y,x),\cdot))$ lies in $U^\perp \subseteq  (\dR \otimes \cH)_S \oplus \cH_{r\upharpoonright S_f \times S_f}$. Also, the space $U^\perp$, with the inherited inner product, is isometric isomorphic to  $\cH_{(\rho + r)\upharpoonright S_f \times S_f}$. 
 \textit{When $\R_S$ contains at least two elements} then $(\widehat{\cH \odot \cH})_S \cap ((\dR \otimes \cH)_S \cap  \cH_{r\upharpoonright S_f \times S_f}) = \{0\}$ and 
\[
		(\widehat{\cH \odot \cH})_S \oplus \mathcal{Q} \cong \cH_{(\kappa_y + \rho + r)\upharpoonright S_f \times S_f}. 
\]
We will apply these results to the problem of ridge regression and it is convenient to have a proposition which provides guarantees on the approximation in the ridge regression context. Since we do not need to approximate the sum of the $Y_i^1$ terms to compute the ridge regression estimator we will consider the space $(\widehat{\cH \odot \cH})_S \cap (\dR \otimes \cH)_S$. We  make the assumption that  \textit{$\cH$ does not contain the constant functions}, which removes the need to consider quotient spaces. Furthermore, we will assume \textit{sub-Gaussian noise} and that \textit{$f_0$ is bounded} but we will allow for \textit{unbounded $Y_i$ random variables}. Recall the definitions of the kernel functions $\rho$ in \eqref{eq:def_rho}, $\kappa:\X \times \X \to \R$, $\kappa = k^2$, and its extension $\kappa_y$ (see just below \eqref{eq:def_rho}). Before stating a result on the compression, we need to modify the arguments that we used to control the difference between $\mean_{y,n}^\otimes$ and $\wideparen \mean_{y,n}^\otimes$, and the difference between $\tilde{\mathfrak{C}}^{\otimes,S}_c$ and  $\CovCapn$. This is necessary since the kernel function, which we will denote by $\tau$ below, is $(\kappa_y + \rho)((y,x), (y',x')) = \kappa(x,x') + yy' k(x,x')$  in the current context, and this kernel function is not of the form $yy' \tilde k(x,x')$, where $\tilde k$ is some kernel on $\X$. Since we assumed that latter form in Section \ref{sec:unbounded_Y} when we derived the bounds on the differences, we cannot simply reuse the earlier results. Fortunately, the necessary modifications are minor: Let $\mean_{\tau,n}$ be the empirical mean element corresponding to the kernel function $\tau$ on $\R \times \X$
and 
\[
	\wideparen \mean_{\tau,n} = \frac{1}{n} \sum_{i\leq n} \tau((\wideparen{Y}_i^{(n)}, X_i), \cdot )  =  \frac{1}{n} \sum_{i\leq n} (\kappa(X_i,\cdot) +   \langle \wideparen{Y}_i^{(n)},\cdot \rangle_\R k(X_i,\cdot)).
\]
The distance between the empirical mean element and its capped version is
\begin{align*}
\| \mean_{\tau,n} - \wideparen \mean_{\tau,n} \|^2_\tau 
&= \frac{1}{n^2} \| \sum_{i\leq n} \langle Y_i - \wideparen Y_i^{(n)},\cdot \rangle_\R k(X_i,\cdot)  \|^2_\tau \\
&\leq\frac{1}{n^2} \| \sum_{i\leq n} \langle Y_i - \wideparen Y_i^{(n)},\cdot \rangle_\R k(X_i,\cdot)  \|^2_\rho  
\end{align*}
since the $\kappa$ terms cancel and because the function inside the norm lies within $\cH_\rho$ (apply \cite[Thm.5.4]{PAUL16} to get the inequality). 
This implies that we can reuse the bound in \eqref{eq:mean_capped}, and with probability $1-\delta$, $\delta \in (0,1)$, we have that
\begin{equation}
\| \mean_{\tau,n} - \wideparen \mean_{\tau,n} \|^2_\tau \leq  \sqrt{2 \nu} \|k\|^{1/2}_\infty  e^{-r_n^2/4 \nu} \Bigl(2/n + \pi  e^{-r_n^2/2 \nu}  \Bigr)^{1/2} 
\delta^{-1/2}, \label{eq:direct_sum_mean}
\end{equation}
where $\nu>0$ is the variance factor corresponding to the sub-Gaussian noise terms.
We also need control over the covariance operators corresponding to the capped $Y_i$'s. We proceed as in Section \ref{sec:unbounded_Y}. Assuming that the law $P$ of $(Y,X)$  is a Radon measure let $S$ be its support and let $S_n$ be the support of $(\wideparen Y^{(n)}, X)$ (which is well defined as the law of this random variable is again a Radon measure). Let $\Cov_{c,\tau}^S : \cH_\tau \to \cH_\tau$ be the covariance operator corresponding to the original random variable and  $\CovCapntau: \cH_\tau \to \cH_\tau$ the covariance operator corresponding to the capped random variable. We start by bounding the difference between these covariance operators in the operator norm,
\begin{align*}
\|\Cov_{c,\tau}^S - \CovCapntau\|_{op} = \sup_{\|h_1 \!\upharpoonright\! S_f\|_{\tau,S_f} =1} \sup_{\|h_2\!\upharpoonright\! S_f\|_{\tau,S_f} =1}
\langle (\Cov_{c,\tau}^S - \CovCapntau) h_1 \!\upharpoonright\! S_f, h_2 \!\upharpoonright\! S_f\rangle_{\tau,S_f}. 
\end{align*}
Due to Lemma \ref{lem:iso_iso_simult} the space $\cH_{\tau,S_f}$ is isometrically isomorphic to a direct sum space and, as above \eqref{eq:cov_for_directsum}, we can write $h_i = f_i + g_i$ for $i \in \{1,2\}$, $f_1,f_2 \in \cH_{\kappa_y\upharpoonright S_f \times S_f}, g_1,g_2 \in \cH_{\rho  \upharpoonright S_f \times S_f}$ and such that the squared norms of the $h_i$ equals the sum of the squared norms of the $f_i$ and $g_i$. We can proceed by expanding the $h_i$'s and observing that $f_i(\wideparen y,x) = f_i(y,x)$ for all $y \in \R$ and $x\in \X$ since $f_i$ is a function of the second coordinate only, 
\begin{align*}
&\langle (\Cov_{c,\tau}^S - \CovCapntau) h_1 \!\upharpoonright\! S_f, h_2 \!\upharpoonright\! S_f\rangle_{\tau,S_f} \\
&= E( h_1(Y,X) h_2(Y,X))  - E(h_1(Y,X)) E(h_2(Y,X))  \\ &\quad - E( h_1(\wideparen Y^{(n)},X) h_2(\wideparen Y^{(n)},X))  + E(h_1(\wideparen Y^{(n)},X)) E(h_2(\wideparen Y^{(n)},X)). 
\end{align*}
The difference of the bias terms becomes 
\begin{align*}
&E(h_1(\wideparen Y^{(n)},X)) E(h_2(\wideparen Y^{(n)},X)) - E(h_1(Y,X)) E(h_2(Y,X))  \\
&= E(g_1 (\wideparen Y^{(n)},X)) E(g_2(\wideparen Y^{(n)},X)) -  E(g_1 (Y,X)) E(g_2(Y,X))\\ 
&\quad + E(f_1(Y,X)) (E(g_2(\wideparen Y^{(n)},X)) - E(g_2(Y,X)))\\
&\quad + E(f_2(Y,X)) (E(g_1(\wideparen Y^{(n)},X)) - E(g_1(Y,X))).
\end{align*}
Since $f_1,f_2$ have norm one it follows that $E(f_1(Y,X))$ and $E(f_2(Y,X))$ are upper bounded by $\|\kappa\|_\infty^{1/2} = \|k\|_\infty$. Also recall that $g_i$, $i\in \{1,2\}$, can be written as $\check{u}_i$ for some $u_i \in \cH_k$ and $u_i$ has norm one (see \eqref{eq:norms_equiv}). Hence 
\[
|E(g_i(\wideparen Y^{(n)},X)) - E(g_i(Y,X)))| \leq \|k\|_\infty^{1/2} E(|\wideparen Y^{(n)} - Y|)
\]
and the bound \eqref{eq:cappedYabs} can be used. Similarly, 
\begin{align*}
	&|E(g_1 (\wideparen Y^{(n)},X)) E(g_2(\wideparen Y^{(n)},X)) -  E(g_1 (Y,X)) E(g_2(Y,X))|\\
	&\leq |E(g_1 (\wideparen Y^{(n)},X))| \, |E(g_2(\wideparen Y^{(n)},X)) - E(g_2(Y,X))| \\
	&\quad + |E(g_2 (Y,X))| \, |E(g_1(\wideparen Y^{(n)},X)) - E(g_1(Y,X))| \\
	&\leq  2 \|k\|_\infty E(|\wideparen Y^{(n)} - Y|) (E(|Y- f_0(X)|) + E(|f_0(X)|)) \\ 
	&\leq 2 \|k\|_\infty (\sigma + \|f_0\|_\infty ) E(|\wideparen Y^{(n)} - Y|), 
\end{align*}
where we assume that the noise term has variance $\sigma^2 > 0$ and $f_0$ is bounded and measurable. Hence, 
\begin{align*}
&|E(h_1(\wideparen Y^{(n)},X)) E(h_2(\wideparen Y^{(n)},X)) - E(h_1(Y,X)) E(h_2(Y,X))|  \\
&\leq 2 \|k\|_\infty (\sigma + \|f_0\|_\infty +\|k\|_\infty^{1/2}) E(|\wideparen Y^{(n)} - Y|). 
\end{align*}
We can deal with the covariance terms in the same way,
\begin{align*}
&E( h_1(Y,X) h_2(Y,X)) -   E( h_1(\wideparen Y^{(n)},X) h_2(\wideparen Y^{(n)},X)) \\ 
&= E( g_1(Y,X) g_2(Y,X)) - E(g_1(\wideparen Y^{(n)},X) g_2(\wideparen Y^{(n)},X)) \\
&\quad + E( f_1(Y,X) (g_2(Y,X) - g_2(\wideparen Y^{(n)},X))) \\
&\quad + E( f_2(Y,X) (g_1(Y,X) - g_1(\wideparen Y^{(n)},X))) \\
& \leq \|k\|_\infty^{1/2} (E(|Y| |g_1(Y,X) - g_1( \wideparen Y^{(n)},X)|
) + E(|Y| |g_2(Y,X) - g_2( \wideparen Y^{(n)},X)|))
\\
&\quad + 2 \|k\|^{3/2}_\infty E(|Y - \wideparen Y^{(n)}|) \\
& \leq 2 \|k\|_\infty E(|Y| |Y -    \wideparen Y^{(n)}|)
)  + 2 \|k\|^{3/2}_\infty E(|Y - \wideparen Y^{(n)}|) \\
&\leq  2 \|k\|_\infty (E(|Y - f_0|^2)^{1/2}  E((Y -    \wideparen Y^{(n)})^2)^{1/2} + \|f_0\|_\infty E(|Y -    \wideparen Y^{(n)}|)
) \\ 
&\quad + 2 \|k\|^{3/2}_\infty E(|Y - \wideparen Y^{(n)}|) \\
&=2 \sigma \|k\|_\infty  E((Y -    \wideparen Y^{(n)})^2)^{1/2} +  2 \|k\|_\infty (\|f_0\|_\infty + \|k\|_\infty^{1/2}) E(|Y -    \wideparen Y^{(n)}|)
\end{align*}
and we can apply \eqref{eq:cappedYabs} and  \eqref{eq:cappedYVar}. Combining the above bounds and substituting \eqref{eq:cappedYabs} and  \eqref{eq:cappedYVar} yields the following bound,
\begin{align}
\|\Cov_{c,\tau}^S - \CovCapntau\|_{op} 
\leq&  2 \sigma \|k\|_\infty  E((Y -  \wideparen Y^{(n)})^2)^{1/2} \!\! + 4 \|k\|_\infty (\sigma/2 + \|f_0\|_\infty + \|k\|_\infty^{1/2}) E(|Y -    \wideparen Y^{(n)}|) \notag \\
\leq& 4 \sigma \nu^{1/2} \|k\|_\infty   e^{-r_n^2/4\nu}  + \sqrt{32\pi\nu}  \|k\|_\infty (\sigma/2 + \|f_0\|_\infty + \|k\|_\infty^{1/2}) e^{-r_n^2/2\nu} \notag \\
=& 4 \sigma^2 \|k\|_\infty   e^{-r_n^2/4\sigma^2}  + \sqrt{32\pi} \sigma  \|k\|_\infty (\sigma/2 + \|f_0\|_\infty + \|k\|_\infty^{1/2}) e^{-r_n^2/2\sigma^2}, \label{eq:direct_sum_cov}
\end{align}
where we used $\nu = \sigma^2$ in the last line. To make use of this bound we need a lower bound on the smallest eigenvalue of $\Cov_{c,\tau}^S$. We proceed as in \eqref{eq:cov_lbd_hcheck}. Instead of imposing such an assumption directly we can also use an assumption on the covariance operator corresponding to the kernel $k$ and the marginal distribution on $\X$, which seems more natural. To see this, fix $h \in \cH_{\tau,S_f}$, $\|h\|_{\tau,S_f} = 1$, and let $f \in \cH_{\kappa_y\upharpoonright S_f \times S_f}, g \in \cH_{k  \upharpoonright \tilde S \times \tilde S}$ be such that $h = f  + \check{g}$  and the squared norm of  $h$  equals the sum of the squared norms of the $f$ and $g$ (see the discussion around \eqref{eq:cov_lbd_hcheck}). 
\begin{align*}
	\langle \Cov_{c,\tau}^S  h, h \rangle_{\tau,S_f} 
	&= E( (f(X) + f_0(X) g(X) + \epsilon g(X))^2) - E^2( f(X) + f_0(X) g(X)) \\
	&=  E( (f(X) + f_0(X) g(X))^2) -  E^2( f(X) + f_0(X) g(X)) + \sigma^2 E(g^2(X)) \\
	&\geq \sigma^2 \langle \Cov^{\tilde S} g, g \rangle_{k,\tilde S}.
\end{align*}
If \textit{$k$ is continuous}, we are guaranteed the smallest eigenvalue $\bar \lambda_{\star,\tilde S}$  of $\Cov^{\tilde S}_k$ is bounded away from zero (see below \eqref{eq:cov_lbd_hcheck}). In particular, if we choose $r_1$ such that the last display in \eqref{eq:direct_sum_cov} is upper bounded by 
 $ \sigma^2 \bar \lambda_{\star,\tilde S}/ 2$ then the smallest eigenvalue $\wideparen \lambda_{\star}^{(n)}$ of the capped covariance operator is at least half the smallest eigenvalue $\bar \lambda_\star$  of  $\Cov_{c,\tau}^S$ and is lower bounded by   $\sigma^2 \bar \lambda_{\star,\tilde S}/2$. To guarantee this, we can define the sequence $\{r_n\}_{n\geq 1}$ similarly to before, starting with 
\begin{equation}
r_1 = 1 \vee 2 \sigma \log^{1/2}\Bigl(\frac{22 \|k\|_\infty (\sigma + \|f_0\|_\infty + \|k\|_\infty^{1/2}) }{\sigma^2 \bar \lambda_{\star,\tilde S}}
\Bigr) \label{def:r1_regression}
\end{equation}
and by assuring that the sequence is non-decreasing.

We are now in a position to state a compression result in the regression context 
along the lines of Proposition \ref{prop:unbounded}, but in the case where we compress the data simultaneously for the kernel $\kappa_y$ and $\rho$. For conciseness we will use the notation $\|\tau\|_{S_{f,n},\infty}$ for $\|\tau \!\!\upharpoonright \!\! S_{f,n} \times S_{f,n} \|_\infty$ where $S_{f,n} = \{
	(y,x) : (x,y) \in S_n\}$ and $S_n$ is the support of measure corresponding to the capped random variables.
\begin{prop} \label{prop:regression_ball}
Let $(\X \times \R, \mathfrak{T}, \mathcal{A},P)$ be a topological measure space such that $P$ is a Radon probability measure which has support $S$. Let $k$ be continuous bounded kernel function defined on $\X$ such that the corresponding RKHS is finite dimensional and does not contain the constant functions. Let $(X_1,Y_1), \ldots, (X_n,Y_n)$ be i.i.d. random variables with law $P$, and assume that $Y_i = f_0(X_i) + \epsilon_i$, for all $i \leq n$, where $f_0$ is a measurable and bounded function and $\epsilon_1,\ldots, \epsilon_n$ are i.i.d. sub-Gaussian random variables with variance $0 < \sigma^2$ which are independent of $X_1,\ldots, X_n$. Consider the kernel function 
$\tau = \kappa_y + \rho$ on $\R \times \X$ and let $\bar \lambda_{\star,\tilde S}$ be the smallest eigenvalue of the covariance operator $\Cov^{\tilde S}_k$ corresponding to the kernel function $k\!\upharpoonright\! \tilde S \times \tilde S$, $\tilde S = \overline{\{x: (x,y) \in S\}}$. Furthermore, let $\bar \lambda_\star$ be the smallest eigenvalue of $\tilde C_{c,\tau}^S$, then $\bar \lambda_\star>0$. Choose $q \in (0,1)$ and define the sequence $\{r_n\}_{n\geq 1}$ in the following way: choose $r_1$ as in \eqref{def:r1_regression} and for $n >2$ let $r_n = r_1 \vee\sqrt{2} \sigma \vee \sqrt{2} \sigma \log^{1/2}(16 n^{1/2}  \sigma^2 \|k\|_\infty/  q) $. For the smallest eigenvalue $\wideparen \lambda_{\star}^{(n)}$ of $\CovCapntau$ it holds that $\wideparen \lambda_{\star}^{(n)} \geq  \bar \lambda_\star/2 \geq \sigma^2 \bar \lambda_{\star,\tilde S}/2 > 0$, where $S_n$ is the support of the law of $P^{(n)}$ corresponding to $(X,\wideparen{Y}^{(n)})$. There exists a ball of radius $\wideparen \delta^{(n)} = \wideparen \lambda_{\star}^{(n)} / 2 \|\tau \|_{S_{f,n},\infty}^{1/2} \geq \wideparen \lambda_{\star}^{(n)} / 2 (\|k\|_\infty  + (\|f_0\|_\infty + r_n)^{1/2} \|k\|_\infty^{1/2})$ around $\wideparen \mean_\tau$ within the affine space spanned by $\wideparen{C}_\tau = \{ \tau((y,x),\cdot) : (x,y) \in S_n\}$ as a subset of  $\cH_\tau$. Furthermore, for any $q \in (0,1)$ and whenever $n$ is (strictly) greater than 
\[
	\left(
\frac{8 \|\tau\|_{S_{f,n},\infty} (\sqrt{2\log(12/q)} + 192\|\tau\|_{S_{f,n},\infty} /\wideparen{\lambda}_\star^{(n)})}{(\wideparen{\lambda}_\star^{(n)})^2} \right)^{2} \vee 
		\left(\frac{16\|\tau\|_{S_{f,n},\infty}^{1/2} +  \sqrt{288\log(4/q)}}{\wideparen{\delta}^{(n)}}\right)^{2}
\]
then with probability $1-q$ there exists a ball of radius $\wideparen \delta^{(n)}/4$ around $\wideparen{\mean}_{\tau,n}$ in $\wideparen{C}_{\tau,n}$ within the affine subspace spanned by $\wideparen{C}_\tau$ and  
\[ 
\|\wideparen{\mean}_{\tau,n} - \mean_{\tau,n}\|_\tau \leq n^{-1/2}.
\]
\end{prop}
\begin{proof}
Most of the statement has already been derived. Just note that 
$\|\tau\|_{S_{f,n},\infty}^{1/2}
\leq (\|k^2\|_\infty + (\|f_0\|_\infty + r_n) \|k\|_\infty )^{1/2} \leq \|k\|_\infty + (\|f_0\|_\infty + r_n)^{1/2} \|k\|_\infty^{1/2}$. For the definition of $r_n$ and the bound on the difference between the mean and the capped mean we could use 
essentially the same bound as in Proposition \ref{prop:unbounded}. Instead we use here a slightly different bound to demonstrate how the arguments can be varied: when $r_n \geq \sqrt{2} \sigma$ it follows from \eqref{eq:direct_sum_mean} that with probability $\tilde q$,
\[\| \mean_{\tau,n} - \wideparen \mean_{\tau,n} \|^2_\tau \leq   \sqrt{8} \sigma \|k\|^{1/2}_\infty  e^{-r_n^2/4 \nu}  
\tilde q^{-1/2}.
\]
Setting the right side equal to $n^{-1/4}$, setting $\tilde q = q/2$ and solving for $r_n$ gives
\[
r_n =\sqrt{2} \sigma \log^{1/2}(16 n^{1/2}  \sigma^2 \|k\|_\infty/  q).
\]
\end{proof}

\subsection{Rescaling the kernel function does not affect compression}
We finish this section by studying the effect of modifying the kernel function, or the involved convex sets, on the approximation of $\mean$. Given that the smallest eigenvalues of the covariance operator and the width of $C$ control the approximation of $\mean$ it is natural to try to increase these. One way to do so is to scale the kernel function by a constant factor $\alpha > 0$, i.e. replace the kernel function $k$ on $\X$ by $\alpha k$. However, due to \cite[Prop.5.20]{PAUL16} the inner products corresponding to the two spaces are scaled versions of each other (for all $x,y \in \X$,  $\langle k(x,\cdot), k(y,\cdot) \rangle_k = (1/\alpha) \langle \alpha k(x,\cdot), \alpha k(y,\cdot) \rangle_{\alpha k}$) 
and the algorithms that we discuss in the next section are unaffected by this change. One might also wonder if the error bounds are affected and if we can, at least, optimize these by choosing an appropriate scaling. It turns out that the error bounds are also invariant to the scaling of the kernel function. Let use start by analyzing the width of the convex set $C$. The change in width is easy to quantify: given a kernel function $k: \X \times \X \to \R$ the width increases by $\alpha^{1/2}$ if we replace the kernel function by 
$\alpha k$, where $\alpha >0$. In detail, a function $h$, $\|h\|_k=1$, lies in $\cH_k$ if, and only if,  the function $\alpha^{1/2} h$ lies in $\cH_{\alpha k}$ and has norm $\|\alpha^{1/2} h\|_{\alpha k} = 1$ \cite[Prop.5.20]{PAUL16} and the width is 
\[
	\diam_{\alpha^{1/2} h, \cH_{\alpha k}}(C)  = \alpha^{1/2}(\sup_{x \in \X} h(x) - \inf_{x \in \X} h(x)) = \alpha^{1/2} \diam_{h, \cH_k}(C).
\]
Similarly, when the smallest non-zero eigenvalue of the centered covariance operator $\Cov_{c,k}$ is  $\bar \lambda_l$, $l\geq 1$,
it follows that the smallest non-zero eigenvalue of  $\Cov_{c,\alpha k}$ is $\alpha \bar \lambda_l$ : let $h \in \cH_k, \|h\|_k=1$, be the eigenfunction corresponding to $\bar \lambda_l$ then $\alpha^{1/2} h$ has unit norm in $\cH_{\alpha k}$ and 
\[
\langle \Cov_{c,\alpha k} \alpha^{1/2} h, \alpha^{1/2} h \rangle_{\alpha k} = \alpha (E(h^2(X)) - (E(h(X)))^2) = \alpha \bar \lambda_l.
\]
Recall that the relation between the width and the radius $\delta$ of the largest ball around $\mean$ is approximately $\delta \approx (\diam_h(C))^{d+1}$ 
when $\X$ is a subset of $\R^d$ and under suitable conditions on the density (end of Section \ref{sec:assumption_density}). In other words, a scaling of the kernel function by $\alpha$ increases $\delta$ approximately by a factor of $\alpha^{(d+1)/2}$. This increase has to be compared to the increase in the Lipschitz constant which results from this scaling. In Equation \eqref{eq:width_lower_bnd} the Lipschitz constant $L$ enteres into the lower bound on $\delta$ through $L^{-d}$. For a function $\alpha^{1/2} h \in \cH_{\alpha k}$ we have  that
\[
\frac{|\alpha^{1/2} h(x) - \alpha^{1/2} h(y)|}{\|x-y\|} \leq \alpha^{1/2} L \|h\|  = \alpha^{1/2} L \|\alpha^{1/2} h\|_{\alpha k}, 	
\]
for any $x,y \in \X, x \not = y,$ if the Lipschitz assumption \eqref{def:Lip_for_RKHS} holds with constant $L$ for $\cH$. Combining these we see that 
$\delta$ changes approximately by a factor of $\alpha^{(d+1)/2} L^{-d} = \alpha^{1/2}$. Hence, we can increase $\delta$ by increasing $\alpha$. In the error bounds in the next section we will see that the key quantity for controlling the approximation error is the ratio $\delta/\|k\|^{1/2}_\infty$, which has to be large to guarantee a good compression. Since $\|\alpha k\|_\infty = \alpha \|k\|_\infty$ this term is independent of $\alpha$ and rescaling does not change the rate  of  compression that is promised by the bounds. Also note that we are changing the norm by which we measure the error. For some approximation $\hat \mean$ of $\mean$ in $\cH_k$ we can observe that $\|\hat \mean - \mean\|_k = \alpha^{1/2} \|  \hat \mean - \mean \|_{\alpha k}$. This $\alpha^{1/2}$ factor is cancelled by the leading constant in the error bound of the algorithms, which is of the order $\|\alpha k\|_\infty^{1/2} = \alpha^{1/2} \|k\|^{1/2}_\infty$.

If we consider instead the bound on $\delta$ that is based on the covariance operator (Theorem \ref{thm:covariance}) then $\delta$ is of order $\bar \lambda_l/\|k\|_\infty^{1/2}$ and a scaling of $\alpha$ leads again to $\alpha \bar \lambda_l / \alpha^{1/2} \|k\|_\infty^{1/2}$ and the scaling does not affect the error bound.

\section{Compression using the CGM and related approaches} \label{sec:algorithms}
We discuss in this section two methods to compress $\mean_n$. The bottleneck in both algorithms is the computation of the vector 
\[
s = (\langle k(X_1,\cdot), \mean_n \rangle, \ldots, \langle k(X_n,\cdot), \mean_n \rangle)^\top.
\]
If $s$ is available then the remaining parts of the two algorithms that we analyze have a runtime of $n l$, where $l$ is the number of iterations the algorithms are run for. In particular, for large $n$, $l$ will be in the order of $\log(n)$ when using the classical CGM and of order $n^{1/2}$ when using the kernel herding algorithm. This then results in a runtime of $O(n \log(n))$ and $O(n^{3/2})$ respectively to gain a representation of $\mean_n$. The CGM achieves in this context a compression down to $\log(n)$ many points and the kernel herding algorithm down to $n^{1/2}$, that is, if we have a ball of sufficient size around $\mean_n$ in $C_n$.

A naive algorithm to compute $s$ has a runtime of $O(n^2)$. In fact,  a brute-force computation needs to compute all pairs $k(X_i,X_j)$, $i,j \leq d$, and the computational complexity is the same as the computational complexity of computing the kernel matrix itself (though the algorithm only needs $O(n)$ memory instead of $O(n^2)$). However, there is hope for faster algorithms. For instance, when we have a finite dimensional RKHS with dimension $d$ then we can represent $\mean_n$ as a linear combination of $d$ points and $s$ can be computed in $O(n d)$ time. Computing the representation of $\mean_n$ needs another $d^2$ steps. In practice this is not useful because we would derive an exact representation of $\mean_n$ based on $d$ data points to find an approximation of $\mean_n$ using more than $d$ data points. Ideally, we would hope for an algorithm that can compute, or approximate, $s$ in $n \log n$ steps independently of the dimension $d$ of the Hilbert space. Alternatively, we could try to modify the main algorithms itself to mitigate the complexity of computing $s$. There are some standard ways to deal with large scale data in the context of the CGM as summarized in \cite{BRAU22}. However, they do not lead to computational benefits in our particular setting. We discuss a promising alternative that is based on a divide and conquer approach in some detail below (Section \ref{sec:fast_preproc_kh} and \ref{sec:fast_preproc_cgm}) after analyzing  the standard algorithms. We also include short discussions on how to adapt these methods when aiming for compressing other quantities like the covariance and  how to use the CGM to compress the data in the case of kernel regression. 

 \subsection{Kernel herding and subset selection} 
Let us start by stating a version of the kernel herding algorithm \cite{CHEN10} for compressing the empirical measure. 

\begin{algorithm}[h!] \caption{(The kernel herding algorithm for compressing $\mean_n$)} \label{alg:KH} 
\begin{normalsize}
\begin{align*} 
&\text{Input: sample } X_1,\ldots, X_n, \text{ kernel } k, \text{ number of points in the coreset } T.   \notag \\
&\text{Initialise: let }  w_1 = k(X_1,\cdot) - \mean_n \text{ and } \iota(1) = 1, \text{ iterate through } t \geq 2:  \notag \\
&\text{choose } \textstyle i^\star \in \argmax_{i \leq n} 
\inner{w_t}{k(X_i,\cdot)} \notag \\
&\text{set } \iota(t)  = i^\star, w_{t+1} = w_t - (k(X_{i^\star},\cdot) - \mean_n), \text{ and } \wh \mean_t = \frac{1}{t} \sum_{u = 1}^t k(X_{\iota(u)},\cdot)
\notag \\
&\text{Stop when } t=T \text{ and return the approximation }  \textstyle {\wh \mean}_{T}.   \notag
\end{align*}
\end{normalsize}
\vspace{-0.4cm}
\end{algorithm}

The index function $\iota:\{1, \ldots, T\} \to \{1,\ldots, n\}$ tracks the samples that we include in the coreset and 
the elements $w_t$  measure the error between $\mean_n$ and $\wh \mean_t$ as $\|w_t\| = t \| \mean_n - \wh \mean_t\|$. The algorithm converges with a rate of $1/t$ if, and only if, the sequence of weights $w_t$ is bounded. In other words, if the sequence diverges then the algorithm converges with a slow rate. It is easy to show that $\|w_t\|$ stays bounded when a ball of radius $\delta>0$ exists around $\mean_n$ in $C_n$ and that $\|\mean_n - \wh \mean_t\| \leq \alpha_{KH} /\delta t$ for a constant $\alpha_{KH}$  and all $t$. In particular, we can choose
\[
	\alpha_{KH} =    8 \|k\|_\infty. 
\]
Also, notice that the same bound holds when a ball of radius $\delta$ exists around $\mean_n$ in the affine span of $C_n$. 

Instead of running the algorithm for $T$ iterations independent of the approximation error we can also use the approximation error as a stopping criterion. The approximation error  $(1/t) \|w_t\| = (\|\wh \mean_t \|^2 - 2 \langle \wh \mean_t, \mean_n \rangle + \|\mean_n\|^2)^{1/2}$ can be computed exactly in $O(n^2)$ if we prevent the algorithm from running for more than $n$ iterations. In detail, pre-computing $\|\mean_n\|$ and $s$ can be done in $O(n^2)$. Also, $\|\wh \mean_t\|$ can be computed in $O(t)$ given $\|\wh \mean_{t-1}\|$ by using that $\|\wh \mean_t\|^2 = \|\wh \mean_{t-1}\|^2 + 2 \langle k(X_{\iota(t)},\cdot), \wh \mean_{t-1} \rangle + k(X_{\iota(t)}, X_{\iota(t)})$. Similarly, $\langle \wh \mean_t, \mean_n \rangle$ can be easily gained from $\langle \wh \mean_{t-1}, \mean_n \rangle$ by using $\langle \wh \mean_t, \mean_n \rangle = \langle \wh \mean_{t-1},\mean_n \rangle + s_{\iota(t)}$. 
A natural stopping point for the algorithm is an approximation error of $n^{-1/2}$ which guarantees that $\|\hat \mean_t - \mean\|$ will be of the same order as $\|\mean_n -\mean\|$.

The compression of this algorithm is sub-optimal but it has the advantage that it returns a coreset. The CGM, which we discuss below, achieves a significantly better compression but does not return a coreset of samples.

It is easy to gain high probability guarantees for the approximation error of a compression that uses $n^{1/2}$ many points, under the conditions stated in Section \ref{sec:bringing_it_together}. With a bit more work it is also possible to control the expected approximation error. We summarize these in the following proposition under a Lipschitz assumption on the kernel function, assuming that we have a Mercer kernel and that the constant functions are in the RKHS; in particular, we assume that $k$ is a continuous kernel on $[0,1]^l$, which is a sufficient assumption for Mercer's theorem to hold. When discussing the CGM we give a similar proposition which uses instead an assumption on the covariance operator; the aim is to highlight how the various assumptions can be combined with the algorithms.   
\begin{proposition} \label{prop:KH_guarantees}
Let $X_1,\ldots, X_n$ be i.i.d. random variables on some probability space $(\Omega,\mathcal{A},P)$, which attain values in  $\X = [0,1]^l, l \geq 1,$ and let $k$ be a continuous kernel function on $\X$ such that the corresponding RKHS $\cH$ is $d$-dimensional, $1 \leq d < \infty$, functions $h \in \cH$ are Lipschitz continuous in the sense of \eqref{def:Lip_for_RKHS} with Lipschitz constant $L>0$, and $\bm 1 \in \cH$. Furthermore, assume that the law of $X_1$ has a density $p$ on $\X$ and $\inf_{x\in\X} p(x) \geq c >0$ for some constant $c$. Let $\tilde \lambda_d$ be the smallest eigenvalue of the Mercer decomposition. With probability $1-q, q \in (0,1)$, 
\[
\| \wh \mean_{\lceil n^{1/2} \rceil} - \mean_n \| \leq \frac{32\|k\|_\infty}{\delta} \, n^{-1/2}  
\]
whenever 
\[
	n > \left(\frac{\sqrt{2\log(6/q)} + 96 \|k\|_\infty^{1/2}/\delta}{c\beta_l(\delta/8L)^l}\right)^{2} \vee 
		\left(\frac{4\|k\|_\infty^{1/2} + 3 \sqrt{2\log(2/q)}}{\delta/4}\right)^{2}
\]
and where
\[
\delta = 2\tilde \lambda_d \wedge  \frac{2 c \tilde \lambda_d^{l+1}\beta_l}{(l+1) L^l}. 
\]
Furthermore, let $N = (16 \|k\|_\infty^{1/2} /\delta)^2\vee (96 (8L)^l \|k\|_\infty^{1/2}/c \beta_l\delta^{l+1})^2$ then 
for any $t \geq 1$ and whenever $n\geq N$, 
\[
E(\| \wh \mean_{t} - \mean_n \|)  \leq 32 \|k\|_\infty/t \delta + 4\|k\|^{1/2}_\infty \exp(-(c_1 n^{1/2} - c_2 )_+^2)/t^{1/2},
\]
where $c_1$ and $c_2$ can be chosen as
\[
c_1  = (c\beta_l (\delta/8L)^l /\sqrt{3}) \wedge (\delta/17) \text{ and } c_2 =  ((96/ \sqrt{3} \delta) \vee 1) \|k\|^{1/2}_\infty. 
\]
If the stopping criterion of the algorithm is an error of $\|\wh \mean_{\hat t} - \mean_n \| \leq n^{-1/2}$, i.e. $\hat t = \inf_{t \geq 1} \|\wh \mean_t - \mean_n\| \leq n^{-1/2}$, and if the infimum is greater than $n$, then $\hat t = n$ and $\wh \mean_{\hat t} = \mean_n$, then
\[
E(\hat t\,) \leq  \lceil 32 \|k\|_\infty n^{1/2} / \delta \rceil + 2 n \exp(-(c_1 n^{1/2} - c_2 )^2) 
\]
whenever $n \geq N$.
\end{proposition}
\begin{proof}
The first part follows directly from Theorem \ref{thm:ball_in_empirical} and the bound on the error of the kernel herding algorithm. For the second statement, observe that
\[
\Pr( \|\mean_n - \mean\| \geq \delta/4) \leq \exp\left(-\frac{1}{288} \left(n^{1/2}\delta - 16 \|k\|_\infty^{1/2}   \right)^2 \right)
\]
whenever $n \geq (16 \|k\|_\infty^{1/2} /\delta)^2$, 
follows by the same argument as in Theorem \ref{thm:ball_in_empirical}. Similarly, there is a ball of radius $\delta/2$ around $\mean$ in $C_n$ in the affine span of $C_n$ with probability at least
\[
1 - (1/2) \exp\bigl(-\bigl(n^{1/2} c \beta_l (\delta/8L)^l   - 96 \|k\|_\infty^{1/2} / \delta   \bigr)^2 \bigr)
\]
whenever $n \geq (96 (8L)^l \|k\|_\infty^{1/2}/c \beta_l\delta^{l+1})^2$. Also, notice that even when there is no ball around $\mean_n$, for $t \geq 1$ it holds that
\[
\|w_{t+1} \|^2 = \|k(X_{\iota(t)},\cdot) - \mean_n\|^2  + \| w_t \|^2 - 2 \langle k(X_{\iota(t)},\cdot) - \mean_n, \mean_n \rangle.
\]
Now, $\langle k(X_{\iota(t)}, \cdot) , \mean_n \rangle \geq
\|\mean_n\|^2$  since $\mean_n$ lies in $C_n$ which has extremes $k(X_1,\cdot), \ldots,$ $ k(X_n,\cdot)$, and 
\[
\|w_{t+1} \|^2 \leq \sum_{i=1}^t \|k(X_{\iota(t)},\cdot) - \mean_n\|^2.  
\]
Hence, $\|\wh \mean_t - \mean_n\|^2 \leq (4/t) \|k\|_\infty $.

Combining these, we find that
\begin{align*}
E(\|\wh \mean_t - \mean_n \|) \leq 32 \|k\|_\infty /t \delta + 4\|k\|^{1/2}_\infty  \exp(-(c_1 n^{1/2} - c_2 )_+^2) t^{-1/2} 
\end{align*}
whenever $n$ is large enough and with $c_1, c_2$ as in the theorem statement.  

The third statement follows along similar lines. In the event that we have a ball of size $\delta/4$ it follows that 
$\|\wh \mean_t - \mean_n \| \leq 32 \|k\|_\infty/t \delta$. Setting the right hand side to $n^{-1/2}$ leads to
$\hat t \leq  \lceil 32 \|k\|_\infty n^{1/2} / \delta \rceil$. If this event does not occur then 
$\|\wh \mean_t - \mean_n \| \leq 2\|k\|_\infty^{1/2} t^{-1/2}$ and $\hat t \leq \lceil 4 \|k\|_\infty n \rceil$, but the algorithm stops when $\hat t > n$ and the trivial upper bound $\hat t \leq n$ is more useful. Combining these we find that
\[
E(\hat t) \leq \lceil 32 \|k\|_\infty n^{1/2} / \delta \rceil + 2 n \exp(-(c_1 n^{1/2} - c_2 )_+^2) 
\]
when $n$ is large enough.
\end{proof}

\subsubsection{Avoiding the explicit computation of $s$} \label{sec:fast_preproc_kh}
There are various ways one can try to reduce the computation time. For instance, the stochastic conditional gradient method seems like a promising candidate. An alternative way  
to mitigate the cost of computing $s$ is to split the data into batches of size about $n^{1/2}$, which implies that for each batch the corresponding vector can be computed in $O(n)$. Algorithm \ref{alg:KH_non_quad} implements such a version of kernel herding. There are a number of interesting observations that can be made when following this route. We will discuss a few such observations in this section and in Section \ref{sec:fast_preproc_cgm} below.   

\begin{algorithm}[ht] \caption{ (A version of kernel herding that avoids the explicit computation of $s$)} \label{alg:KH_non_quad} 
\begin{normalsize}
\begin{align*} 
&\text{Input: sample } X_1,\ldots, X_n, \text{ kernel } k, \alpha > 0.   \notag \\
&\text{Initialise: let }  \ell = \lfloor n^{1/2} +1/2 \rfloor. \notag \\
&\text{Split sample into  } \ell \text{ disjoint batches: } X_{j1}, \ldots, X_{j\ell_j}, j \leq \ell, \\
&\hspace{6cm}\text{ with each } \ell_j \in \{\ell-1, \ell, \ell +1,\ell+2\}.   \\
&\text{Apply Algorithm \ref{alg:KH} with } T = \lceil n^{1/4 + \alpha} \rceil \text{ to each batch to get } \wh \mean_1,\ldots, \wh \mean_\ell. \\
&\text{Compute } \|\tilde \mean_n\| \text{ and } \wh s =  (\langle \wh \mean_1,\tilde \mean_n \rangle, \ldots, \langle \wh \mean_\ell, \tilde \mean_n \rangle)^\top \text{, where } 
\tilde \mean_n = \frac{1}{n} \sum_{i=1}^\ell \ell_i \wh \mean_i. \\
&\text{Apply a version of Algorithm \ref{alg:KH} with } T = \lceil n^{1/4 + \alpha} \rceil \text{ to } \tilde \mean_n  \text{ using } \|\tilde \mean_n\| \text{ and } \wh s \text{ to get } \iota. \\
&\text{Return the approximation } \frac{1}{T} \sum_{i=1}^T \wh \mean_{\iota(i)}. \\  
\end{align*}
\end{normalsize}
\vspace{-0.4cm}
\end{algorithm}

 In terms of Algorithm \ref{alg:KH_non_quad}, notice that the number of samples per batch can always be chosen in $\{\ell-1, \ell, \ell+1, \ell+2 \}$ to guarantee that $n = \sum_{i=1}^\ell \ell_i$ because 
$\ell (\ell-1) \leq (n^{1/2} + 1/2)(n^{1/2} - 1/2) < n \leq (\ell + 1/2)^2 \leq  \ell(\ell +2)$ and by a version of Algorithm \ref{alg:KH} we mean the obvious modification where instead of $k(X_1,\cdot),\ldots, k(X_n,\cdot)$ we use $\wh \mean_1,\ldots, \wh \mean_\ell$ to approximate $\mean_n$. The algorithm works by specifying the number of iterations for the kernel herding algorithm. Alternatively, it makes sense to run the  first $\ell$ optimization algorithms as well as the last optimization step until an error of $n^{1/4}$ is attained.

The runtime of the algorithm can be computed in the following way: observe that, initially, the standard kernel herding algorithm is applied $\ell$ times to about $\ell$ many samples and the overall order of runtime for the first part is $O(\ell^3) = O(n^{3/2})$.  Also, observe that, given $\wh \mean_1,\ldots, \wh \mean_\ell$,
an approximation of $\|\mean_n\|$ can be computed in   $\ell \lceil n^{1/4 + \alpha}\rceil^2$, which is of order $O(n^{1+2 \alpha})$, by using the approximation 
$\tilde \mean_n = (1/n) (\ell_1 \wh \mean_1 + \ldots + \ell_\ell \wh \mean_\ell)$. Similarly, 
$\wh K = ( \langle \wh \mean_i, \wh \mean_j \rangle )_{i,j \leq \ell}$ can be computed in $\ell^2 \lceil n^{1/4+\alpha} \rceil^2$ which is of order
$O(n^{3/2 + 2 \alpha})$ and, given $\wh K$, the vector $\wh s = (\langle \wh \mean_1,\tilde \mean_n \rangle, \ldots, \langle \wh \mean_\ell, \tilde \mean_n \rangle)^\top$ can be computed in $\ell^2$ steps, which is of order $O(n)$. Given $\wh s$ and $\|\tilde \mean_n\|$ the second application of the kernel herding algorithm can be run in $\lceil n^{1/4 + \alpha} \rceil \ell$, which is of order $O(n^{3/4 + \alpha})$, and Algorithm \ref{alg:KH_non_quad} has an overall order of $O(n^{3/2+2 \alpha})$ 

Quantifying the approximation error of this algorithm is more difficult and in the following we only highlight some of the challenges that one has to address to control the approximation error. For $n$ large enough the difference 
$\|\wh \mean_i - (1/\ell_i) \sum_{j=1}^{\ell_i} k(X_{ij}, \cdot) \|$ is with high probability of order $n^{-1/4}$ for all $i \leq \ell$. Furthermore,
\begin{align*}
\| \mean_n - \tilde \mean_n\|^2 = \| \mean_n -  \frac{1}{n} \sum_{i=1}^\ell \ell_i \wh \mean_i \|^2 = \| \frac{1}{n} \sum_{i=1}^\ell \sum_{j=1}^{\ell_i} (k(X_{ij},\cdot) - \wh \mean_i)\|^2. 
\end{align*}
Notic Bochner integral  
\[
E( \sum_{j=1}^{\ell_i} (k(X_{ij},\cdot) - \wh \mean_i) ) = \ell_i ( \mean - E(\wh \mean_i)).	
\]
Furthermore, observe that the $\wh \mean_1, \ldots, \wh \mean_\ell$ are independent random variables since they are functions of separate samples and that $E(\langle \bm X, \bm Y \rangle) = \langle E(\bm X), E(\bm Y) \rangle$ for independent random variables in $\mathcal{L}^2(P;\cH)$. Hence, 
\begin{align*}
	E( \| \mean_n -  \frac{1}{n} \sum_{i=1}^\ell \ell_i \wh \mean_i \|^2 ) =  \frac{1}{n^2} \sum_{i=1}^\ell &E( \| \sum_{j=1}^{\ell_i} (k(X_{ij},\cdot) - \wh \mean_i) \|^2) \\
&+ \frac{1}{n^2} \sum_{i\not= j}^{\ell} \ell_i \ell_j \langle \mean - E(\wh \mean_i), \mean - E(\wh \mean_j) \rangle \\
\approx n^{-3/2} &E( \| \sum_{j=1}^{\ell_1} (k(X_{1j},\cdot) - \wh \mean_1) \|^2) +   \| \mean - E(\wh \mean_1)\|^2,
\end{align*}
where we have an approximation in the last line since the $\ell_i$'s are not necessarily all equal. A first difficulty is to determine the bias $\|\mean - E(\wh \mean_1)\|$ that the kernel herding algorithm introduces. A simple bound on the bias is gained by using $\|\mean -  E(\wh \mean_1)\| \leq E( \| \mean_n - \wh \mean_1\|)$ and Proposition \ref{prop:KH_guarantees} can be used to bound this by about $n^{-1/4}$ which implies a bound on the squared bias of order $n^{-1/2}$. This bound is of no use since we need a bias of order $1/n$ or less.
The other term behaves approximately as $1/n$ if there is a ball of size $\delta>0$ around $\mean$ in $C$ and $n$ is large enough. In particular, under the conditions of Proposition \ref{prop:KH_guarantees},
\begin{align*}
E( \| \frac{1}{\ell_1} \sum_{j=1}^{\ell_1} (k(X_{1j},\cdot) - \wh \mean_1) \|^2)
&\leq 16 \alpha^2/n^{1/2} \delta^2 + 8\|k\|_\infty \exp(-(c_1 n^{1/4} - c_2 )^2)n^{-1/4}.
\end{align*}
Up to the exponential term on the right, we have that   
\begin{align*}
n^{-3/2}  &E( \| \sum_{j=1}^{\ell_1} (k(X_{1j},\cdot) - \wh \mean_1) \|^2) \lesssim n^{-3/2} \ell_1^2   n^{-1/2}  \approx 1/n. 
\end{align*}
If the bias is also of order $O(1/n)$ then 
\[
\Pr( \| \mean_n -  \frac{1}{n} \sum_{i=1}^\ell \ell_i \wh \mean_i \| \geq t ) \lesssim \frac{1}{n t^2}.   
\]
In particular, for any $\beta > 0 $,
\[
\Pr( \|\mean_n - \tilde \mean_n\|  \geq n^{-1/2 + \beta})  = \Pr( \| \mean_n -  \frac{1}{n} \sum_{i=1}^\ell \ell_i \wh \mean_i \| \geq n^{-1/2 + \beta} ) \lesssim  n^{-2\beta}.   
\]
To summarize, if the bias is of order $O(1/n)$ we will have with high probability an approximation of $\mean_n$ that has an error of order $n^{-1/2 + \beta}$ and this approximation consists of approximately $T \ell \approx n^{3/4}$ many points.

The second application of the kernel herding algorithm aims to compress this further. In particular, if with high probability there is a ball around $\mean_n$ in the convex set
$\ch\{\wh \mean_i : i \leq \ell\}$, then we can hope that $n^{1/4}$ many of the $\wh \mean_i$ are sufficient to approximate $\mean_n$ with an error of order $n^{-1/2 + \beta}$. This would imply that an approximation with $n^{1/2}$ many elements is sufficient. However, since $\wh \mean_i$ converges to $\mean_n$ as $n$ goes to infinity, 
the size of such a ball has to be a function of $n$ and will shrink with $n$. This  itself does not imply that the algorithm will converge slowly since the smaller $\delta$ might be set-off by a smaller size of the convex set. In any case, a detailed analysis of the interplay between $\ch\{\wh \mean_i : i\leq \ell \}$ and $\mean_n$ is necessary to understand the compression that can be achieved by this algorithm and variations thereof.

Let us conclude our discussion of these algorithms with a final simple observation. The elements $\bar \mean_i = (1/\ell_i) \sum_{j=1}^{\ell_i} k(X_{ij},\cdot)$, which we are approximating with $\wh \mean_i$, can be interpreted as a sequence of independent and identically distributed (up to differences in the $\ell_i$'s) random variables whose second moment is 
given by 
\begin{align*}
	E(\|\bar \mean_i\|^2) &= \frac{1}{\ell_i^2} \sum_{j=1}^{\ell_i} E (k(X_{ij},X_{ij})) + \frac{1}{\ell_i^2} \sum_{u \not = v}^{\ell_i} \langle E(k(X_{iu},\cdot)), E(k(X_{iv},\cdot)) \rangle \\
			      &= \frac{1}{\ell_i}   + \frac{\ell_i -1 }{\ell_i} \|\mean\|^2, 
\end{align*}
whenever $k(x,x) = 1$ for all $x\in \X$. Hence,
\begin{align*}
	E(\|\bar \mean_i - \mean\|^2) = E(\|\bar \mean_i\|^2) - \|\mean\|^2  \approx n^{-1/2} \text{\, and \,}  
	E(\|\bar \mean_i - \mean\|) \lesssim n^{-1/4}. 
\end{align*}




\subsection{Better compression with the CGM}
A significantly better compression can be attained by using the CGM. The downside of using the CGM is that no coreset of datapoints is generated but some convex combination of the images of the data points in $\cH$ that approximates $\mean_n$ well. The standard CGM for compressing $\mean_n$ is given below. 
\begin{algorithm}[h!] \caption{(The CGM for compressing $\mean_n$.) } \label{alg:CGM} 
\begin{normalsize}
\begin{align*} 
&\text{Input: sample } X_1,\ldots, X_n, \text{ kernel } k, \text{ number of iterations } T.   \notag \\
&\text{Initialise: let }  \wh \mean_1 = k(X_1,\cdot), \alpha_{11} = 1 \text{ and } \iota(1) = 1, \text{ iterate through } t \geq 2:  \notag \\
&\text{choose } \textstyle i^\star \in \argmax_{i \leq n} 
\inner{k(X_i,\cdot) }{\wh \mean_{t-1} - \mean_n}, \notag \\
&\text{let } \alpha^\star = \frac{\langle k(X_{i^\star},\cdot) - \wh \mean_{t-1}, \wh \mean_{t-1} - \mean_n \rangle}{\|k(X_{i^\star},\cdot) - \wh \mean_{t-1} \|^2} \wedge 1, \\  
&\text{set } \iota(t)  = i^\star, \alpha_{tt} = \alpha^\star \text{ and  for all } u \leq t-1, \alpha_{tu} = (1-\alpha^\star) \alpha_{t-1,u},    \\
&\text{and let } \wh \mean_t = \sum_{u = 1}^t \alpha_{tu} k(X_{\iota(u)},\cdot).
\notag \\
&\text{Stop when } t=T \text{ and return the approximation }  \textstyle {\wh \mean}_{T}.   \notag
\end{align*}
\end{normalsize}
\vspace{-0.4cm}
\end{algorithm}

Notice that $\alpha^\star \geq 0$ since $k(X_{i^\star},\cdot)$ maximizes the inner product between any element in $C_n$ and $\wh \mean_{t-1} - \mean_n$.
This algorithms guarantees that the error is bounded by  
\[
\|\mean_n - \wh \mean_t\| \leq 2 \|k\|_\infty^{1/2} \exp\Biggl(- \frac{\delta(t-1)}{ 6 \|k\|_\infty^{1/2}} \Biggr),
\]
when a ball of size $\delta$ exists around $\mean_n$ in $C_n$ within the affine subspace spanned by $C_n$ \cite[Prop.3.2]{BECK04} and with $S$ denoting the support of the law of $X_1$.

The run-time of this algorithm is again dominated by the $O(n^2)$ run-time cost needed to compute $s$. When $s$ is available the run-time reduces to $O(T n)$: the $\arg\max$ step can be performed in $O(n)$ given $s$ and when the inner products $\langle k(X_i,\cdot), \wh \mean_{t-1} \rangle$ are available. Similarly, if $s$, the inner products $\langle k(X_i,\cdot), \wh \mean_{t-1} \rangle$, $\|\wh \mean_{t-1}\|$, $\langle \wh \mean_{t-1}, \mean_n \rangle$ and $\|k(X_{i^*},\cdot) - \wh \mean_{t-1}\|$ are available, it is possible to compute $\alpha^\star$ in $O(1)$. The norm term in the denominator can be computed in $O(1)$ from $\|\wh \mean_{t-1}\|$ and the inner products $\langle k(X_i,\cdot), \wh \mean_{t-1} \rangle$. The coefficients $\alpha_{tu}$ can be computed in $O(T^2)$. Updating the elements $\langle k(X_i,\cdot), \wh \mean_{t-1} \rangle$  to 
\[
	\langle k(X_i,\cdot), \wh \mean_{t} \rangle = (1-\alpha^\star) \langle k(X_i,\cdot),  \wh \mean_{t-1} \rangle + \alpha^\star \langle k(X_i,\cdot), k(X_{i^*},\cdot) \rangle
\]
can be done in $O(n)$. Furthermore, $\|\wh \mean_t\|^2 = (1-\alpha^\star)^2 \|\wh \mean_{t-1}\|^2 + (\alpha^\star)^2 k(X_{i^*},X_{i^*}) + 2 \alpha^\star (1-\alpha^\star) \langle k(X_{i^*},\cdot), \wh \mean_{t-1} \rangle$ and $\langle \wh \mean_t,\mean_n \rangle = (1-\alpha^\star) \langle \wh \mean_{t-1}, \mean_n \rangle + \alpha^\star \langle k(X_{i^*},\cdot) , \mean_n \rangle $ can both be updated in $O(1)$. In particular, if we aim for a compression down to $T = \log(n)$ elements then the run-time of the algorithm is $O(n\log(n))$, if $s$ is available.

As for the kernel herding algorithm, it is easy to bound, with high probability, the approximation error, as well as the expected error and the number of data points that are needed for the approximation when the stopping criterion is a pre-specified error.  
In the following proposition, we bound the approximation error given that the algorithm is run for $\lceil 12 \|k\|_\infty^{1/2} \log(n)/ \delta \rceil$ many iterations. 
Alternatively, it is possible to use $\lceil \log^\gamma (n) \rceil$, with $\gamma >1$, as a stopping criterion that does not depend on the unknown quantity $\delta$. For large enough $n$, $\lceil 12 \|k\|_\infty^{1/2} \log(n)/ \delta \rceil \leq  \lceil \log^\gamma (n) \rceil$ and the guarantees will carry over to that setting. 
\begin{prop}
Let $(\X,\mathcal{A},P)$ be some probability space with $P$ being a topological measure that is $\tau$-additive, and 
with measurable kernel function $k$ defined on $\X$ such that the corresponding RKHS  $\cH$ is finite dimensional.  Furthermore, let $X_1,\ldots, X_n$ be i.i.d. random variables attaining values in $\X$ and with law $P$. Assume that $\|k\|_\infty < \infty$, and that the centered covariance operator $\tilde{\mathfrak{C}}_c$ has an eigen-decomposition with smallest non-zero eigenvalue being $\bar \lambda_d$. Let $\beta = 3 \|k\|_\infty^{1/2} / \delta$ then with probability $1- q$, $q \in (0,1)$, 
\[
	\| \wh \mean_{\lceil 2 \beta \log(n) \rceil} - \mean_n \| \leq 2 \|k\|_\infty^{1/2}  n^{-1/2} 
\]
whenever $n$ is (strictly) greater than
\[
	\left(
\frac{8 \|k\|_\infty (\sqrt{2\log(6/q)} + 192\|k\|_\infty /\bar \lambda_d)}{\bar \lambda_d^2} \right)^{2} \vee 
		\left(\frac{16\|k\|_\infty^{1/2} +  \sqrt{288\log(2/q)}}{\delta}\right)^{2}
\]
and where $\delta = \bar \lambda_d/2 \|k\|_\infty^{1/2}$. Let $N = (16 \|k\|_\infty^{1/2} /\delta)^2 \vee (1536 \|k\|_\infty^2 / \bar \lambda_d^3)^2$
then for any $t \geq 1$ and whenever $n\geq N$,  
\[
E(\| \wh \mean_{t} - \mean_n \|)  \leq  \exp(- \delta (t-1)/24 \|k\|_\infty^{1/2})  + 
6\|k\|_\infty^{1/2} \exp(- (c_3 n^{1/2} - c_4)_+^2) 
,
\]
where $c_3 = (\bar \lambda_d^2/ 8 \sqrt{2}\|k\|_\infty) \wedge (\delta/17)$ and $c_4 = (192\|k\|_\infty / \sqrt{2} \bar\lambda_d) \vee \|k\|_\infty^{1/2}$ are possible choices.   

If the stopping criterion of the algorithm is an error of $\|\wh \mean_{\hat t} - \mean_n \| \leq n^{-1/2}$, i.e. $\hat t = \inf_{t \geq 1} \|\wh \mean_t - \mean_n\| \leq n^{-1/2}$, and if the infimum is greater than $n$, then $\hat t = n$ and $\wh \mean_{\hat t} = \mean_n$, then
\[
E(\hat t\,) \leq  \lceil 1 + 12 \|k\|_\infty^{1/2} \log(n)/\delta  \rceil + 3 n \exp(-(c_3 n^{1/2} - c_4 )^2) 
\]
whenever $n \geq N$.
\end{prop}
\begin{proof}
The first part follows directly from Theorem \ref{thm:covariance} and the bound on the error of the CGM. 
For the other statements let us consider the space $\cH_S$ corresponding to the kernel $k_S  = k\!\!\upharpoonright\!\! S \times S$, where $S$ is the support of the law $P$, and with corresponding objects $\mean_S, \mean_{S,n}$ and $\tilde{\mathfrak{C}}^S_c$.  As in the proof of Proposition \ref{prop:KH_guarantees}, we have that  
$\Pr( \|\mean_{S,n} - \mean_S\|_S \geq \delta/4) \leq \exp(-(1/288) (n^{1/2}\delta - 16 \|k\|_\infty^{1/2}  )^2 )$
whenever $n \geq (16 \|k\|_\infty^{1/2} /\delta)^2$. Furthermore, there is a ball or radius $\delta/2$ around $\mean_S$ in $C_n$ (as a subset of the affine span of $C_S$) with probability at least  
\[
1- 2 \exp\Bigl( - \frac{1}{2} \Bigl(  \frac{n^{1/2}\bar \lambda_d^2}{8\|k\|_\infty }  - \frac{192 \|k\|_\infty}{\bar \lambda_d}   
\Bigr)^2 \Bigr),
\]
whenever $n \geq (1536 \|k\|_\infty^2 / \bar \lambda_d^3)^2$. Hence, with probability at least 
\[
	1 - 3 \exp\bigl(- (c_3 n^{1/2} - c_4)_+^2  \bigr) 
\]
there is a ball or radius $\delta/4$ around $\mean_{S,n}$ in $C_n$ (as a subset of the affine span of $C_S$). The second result follows since the CGM reduces the error in each step and the initial error is bounded by $\|\wh \mean_1 -  \mean_n \| \leq 2 \|k\|^{1/2}_\infty$.

The third statement follows along similar lines. In the event that we have a ball of size $\delta/4$ it follows that 
$\|\wh \mean_t - \mean_n \| \leq \exp(- \delta(t-1)  / 24 \|k\|_\infty^{1/2})$. 
Setting the right hand side to $n^{-1/2}$ leads to $\hat t \leq  \lceil 1 + 12 \|k\|_\infty^{1/2} \log(n)/\delta   \rceil$. 
\end{proof}

\subsubsection{Compression for kernel regression}
We can also apply Algorithm \ref{alg:CGM} to compress the data for kernel regression. The only thing that we need to do is to use
the kernel function $\tau((y,x),(y',x')) = (\kappa_y + \rho)((y,x), (y',x')) = \kappa(x,x') + yy' k(x,x')$ that we used in Section \ref{sec:simult_risk} and where $k$ is some kernel function on the space $\X$, and to cap the response variables $Y$. We state the corresponding result for the compression of the mean element in high probability below. One can obviously also derive bounds on the deviation in expectation and the expected number of points in the core-set. 
\begin{proposition}
Let $(\X \times \R, \mathfrak{T}, \mathcal{A},P)$ be a topological measure space such that $P$ is a Radon probability measure which has support $S$. Let $k$ be continuous bounded kernel function defined on $\X$ such that the corresponding RKHS is finite dimensional and does not contain the constant functions. Let $(X_1,Y_1), \ldots, (X_n,Y_n)$ be i.i.d. random variables with law $P$, and assume that $Y_i = f_0(X_i) + \epsilon_i$, for all $i \leq n$, where $f_0$ is a measurable and bounded function and $\epsilon_1,\ldots, \epsilon_n$ are i.i.d. sub-Gaussian random variables with variance $0 < \sigma^2$ which are independent of $X_1,\ldots, X_n$. Let $\bar \lambda_{\star,\tilde S}$ be the smallest eigenvalue of the covariance operator $\Cov^{\tilde S}_k$ corresponding to the kernel function $k\!\upharpoonright\! \tilde S \times \tilde S$, $\tilde S = \overline{\{x: (x,y) \in S\}}$.
Chose $q \in (0,1)$ and define the sequence $\{r_n\}_{n\geq 1}$ in the following way: 
\[
r_1 = 1 \vee 2 \sigma \log^{1/2}\Bigl(\frac{22 \|k\|_\infty (\sigma + \|f_0\|_\infty + \|k\|_\infty^{1/2}) }{\sigma^2 \bar \lambda_{\star,\tilde S}}
\Bigr) 
\]
and for $n\geq 2$, let
\[
r_n = r_1 \vee\sqrt{2} \sigma \vee \sqrt{2} \sigma \log^{1/2}(16 n^{1/2}  \sigma^2 \|k\|_\infty/  q). 
\]
Define $\wideparen Y^{(n)} = (Y \wedge (r_n + \|f_0\|_\infty)) \vee - (r_n + \|f_0\|_\infty)$, let $\wideparen \mean_{\tau,n}$ be the empirical mean element corresponding to the kernel $\tau$ and the data $(X_1, \wideparen Y_1^{(n)}), \ldots, (X_n, \wideparen Y_n^{(n)})$ and let $\wh{\wideparen \mean}_{\tau,t}$ be the output of the algorithm when applied to the capped data and  $\wideparen \mean_{\tau,n}$. Let 
$\beta = 48  \|\tau \|_{S_{f,n},\infty} /\sigma^2 \bar \lambda_{\star,\tilde S}$ 
then with probability $1-q$
\[
\| \wh{\wideparen \mean}_{\tau,\lceil \beta \log(4 n \|\tau \|_{S_{f,n},\infty}) \rceil +1} - \mean_{\tau,n} \| \leq  2 n^{-1/2} 
\]
whenever $n$ is (strictly) greater than
\begin{align*}
	&\left(
\frac{16 \|\tau\|_{S_{f,n},\infty} (\sqrt{2\log(12/q)} + 384 \|\tau\|_{S_{f,n},\infty} /\sigma^2 \bar \lambda_{\star,\tilde S})}{
\sigma^4 \bar \lambda^2_{\star, \tilde S}} \right)^2 \\
	&\vee \left(\frac{64\|\tau\|_{S_{f,n},\infty} +  68 \|\tau\|_{S_{f,n},\infty}^{1/2} \log^{1/2}(4/q)}{\sigma^2 \bar \lambda_{\star,\tilde S}}\right)^{2}\!\!.
\end{align*}
\end{proposition}
\begin{proof}
The statement follows from Proposition \ref{prop:regression_ball}. In particular, under the states conditions and with probability at least $1-q$, simultaneously $\|\wideparen \mean_{\tau,n} - \mean_{\tau,n}\|_\tau \leq n^{-1/2}$ and 
\[
\| \wh{\wideparen \mean}_{\tau,t} - \wideparen{\mean}_{\tau,n} \|_\tau \leq 2 \|\tau\|_{S_{f,n},\infty}^{1/2}
\exp \left( - \frac{\wideparen \delta^{(n)}}{4} \frac{t-1}{6 \|\tau \|_{S_{f,n},\infty}^{1/2}}
\right).
\]
Setting the right side of the last equation equal to $n^{-1/2}$ yields
\[
t-1 = \left\lceil \frac{12 \|\tau \|_{S_{f,n},\infty}^{1/2} \log(4 n \|\tau \|_{S_{f,n},\infty} )}{\wideparen \delta^{(n)}} 
	\right\rceil. 
\]
Replacing $\wideparen \delta^{(n)}$ by its lower bound $\sigma^2 \bar \lambda_{\star,\tilde S}/4 \|\tau \|_{S_{f,n},\infty}^{1/2}$ gives the constant $\beta$ stated in the proposition. 

\end{proof}
\begin{remark}
The eigenvalue in the definition of the sequence $r_n$ can be replaced in that definition by a lower bound on this eigenvalue. Similarly, the term $\|f_0\|_\infty$ in the definition of $\wideparen Y$ can be replaced by an upper bound. We also used here the lower bound $\sigma^2 \bar \lambda_{\star,\tilde S}$ on $\wideparen \lambda_\star^{(n)}$ instead of using $\wideparen \lambda_\star^{(n)}$ directly. This affects, in particular, the number $n$ from which point onward the compression results apply. 
\end{remark}

\subsubsection{Avoiding the explicit computation of $s$} \label{sec:fast_preproc_cgm}
Mitigating the cost of computing $s$ is more difficult when the CGM is used. The main problem is that we are aiming for a run-time of $O(n \log(n))$ and there is not much leeway in each iteration. For instance, if, like for kernel herding, we split the data into $\sqrt{n}$ batches of size $\sqrt{n}$ then we have an overall run-time of $\sqrt{n} \times (\sqrt{n})^2 = n^{3/2}$ because computing $s$ per batch incurs a quadratic cost in the sample size. One way to reduce that compuational cost is to make the quadratic term smaller but then we have many batches. For example, if we aim for a $\log(n)$ batch size then we have $n/\log(n)$ many batches and the reduction in sample size is minuscle. In particular, we could not just run the CGM directly on the $n/\log(n)$ many approximations since that would result in an $n^2/\log^2(n)$ run-time cost. One way around this problem is to apply the process iteratively: in the first iteration use about $n/\log(n)=: T_1$ many batches and compute $\wh \mean_{11}, \ldots \wh \mean_{1T_1}$. This can be done in about $(n/\log(n)) \times \log^2(n) = n \log(n)$ time, resulting in approximations that consist of $\log\log(n)$ many elements each. If we want to allow a run-time of $O(n\log(n))$ per iteration then in the second iteration we can use  $T_2 := T_1 / \log^2(n)$ many batches since $(T_1 / \log^2(n)) \times \log^4(n) = n \log(n)$  (ignoring the $\log\log(n)$ terms). Continuing this process, at iteration $3$, we have $T_3 = T_2 / \log^4(n)$ many batches, and, more generally, for $i \geq 2$, we have $T_i = T_{i-1} / \log^{2^{i-1}}(n)$ many batches. We can stop the iterations when we are down to $\sqrt{n}$ many batches since we can apply the CGM then directly. To get down to $\sqrt{n}$ many batches we need about
\[
	\ell = \log \left(\frac{\log(n)}{\log \log(n)} \right) \approx \log \log(n)
\]
many iterations since
\[
\sqrt{n} \approx T_i = \frac{n}{\prod_{i=1}^\ell \log^{2^{i-1}}(n)} = \frac{n}{\log^{2^\ell}(n)}.   
\]
This then implies an overall run-time of this algorithm of about $O(n\log(n) \log\log(n))$.

A major concern with this algorithm is that we have many optimization problems that have to be solved simultaneously and we need to be lucky in each case to have a ball of sufficient size around the corresponding $\mean_n$ in $C_n$. It seems rather unlikely that we can guarantee for each of these optimization problems the existence of such a ball. A better way to approach this compression problem might be to work instead with fixed error bounds that have to be achieved in each optimization problem. The hope with this approach is that we can then guarantee a sufficient compression but the number of sample points needed might be larger than $\log(n)$. \textit{Algorithm \ref{alg:CGM_s}} implements this idea.

\begin{algorithm}[h!] \caption{(A compression algorithm for $\mean_n$ that uses the CGM and avoids the explicit computation of $s$.)} \label{alg:CGM_s} 
\begin{normalsize}
\begin{align*} 
&\text{Input: sample } X_1,\ldots, X_n, \text{ kernel } k.   \notag \\
&\text{In the follow let \,} \ell = \left \lceil\frac{1}{\log(2)} \log\left(\frac{\log(n) }{\log\log(n)}\right) -1 \right \rceil. \\ 
&\text{Split the sample into } T_1 := \lceil n/\log(n) \rceil \text{ batches}.  \\ 
&\text{Let } \mathcal{I}_{11}, \ldots, \mathcal{I}_{1T_1} \text{ be the corresponding indices of the sample points.} \\
&\text{Apply the CGM to each batch to approximate } \mean_{1,1}, \ldots, \mean_{1,T_1}
\text{ by using sample points} \\
&\text{\quad indexed by } \mathcal{I}_{1,1},\ldots, \mathcal{I}_{1,T_1}
\text{ until the error of all approximations }  j \leq T_1 \\
&\text{\quad is below }  \varepsilon_{1,j} = |\mathcal{I}_{1,j}|^{-1/2}. \\
&\text{Store the approximations in } \wh \mean_{1,1}, \ldots, \wh \mean_{1,T_1} \text{ and let } M_{1,1} = |\mathcal{I}_{1,1}|,\ldots,  M_{1,T_1} = |\mathcal{I}_{1,T_1}|. \\
&\text{Iterate through } i =  2, \ldots, \ell:  \\
&\text{\quad Split the approximations } \wh \mean_{i-1,1}, \ldots, \wh \mean_{i-1,T_{i-1}} \text{ into } T_i := \lceil T_{i-1}/ \log^{2^{i-1}}(n) \rceil \text{ batches}. \\
&\text{\quad Let } \mathcal{I}_{i1}, \ldots, \mathcal{I}_{iT_i} \text{ be the corresponding indices and for all } j \leq T_i \text{ let } \\
&\quad\quad\quad M_{i,j} = \sum_{u\in \mathcal{I}_{i,j}} M_{i-1,u}. 
\\
&\text{\quad For each batch } j \leq T_i \text{ average the old approximations  } \\
&\quad\quad\quad \tilde \mean_{i,j} := \frac{1}{M_{i,j}} \sum_{u \in \mathcal{I}_{i,j}} M_{i-1,u} \wh \mean_{i-1,u}. \\
&\text{\quad Apply the CGM to each batch } j \leq T_i, \text{ approximating } \tilde \mean_{i,j} \text{ by convex combinations } \\
&\text{\quad \quad of the elements } \wh \mean_{i-1,u}, u \in \mathcal{I}_{i,j}, \text{ with an error of at most  }  \varepsilon_{i,j} = M_{i,j}^{-1/2}. \\
&\text{\quad Store the approximations in } \wh \mean_{i,1}, \ldots, \wh \mean_{i,T_i}. \\
&\text{Apply the CGM a final time to } \wh \mean_{\ell,1}, \ldots, \wh \mean_{\ell, T_\ell} \text{ to compress } \frac{1}{T_\ell} \sum_{j=1}^{T_\ell} \wh \mean_{\ell,j} \\
&\text{\quad with an approximation error of at most } n^{-1/2} \text{ and return the approximation}.
\notag 
\end{align*}
\end{normalsize}
\vspace{-0.4cm}
\end{algorithm}

In the algorithm  $\mean_{1,1},\ldots, \mean_{1,T_1}$ denote the mean elements corresponding to the
initial $T_1$ batches. For the analysis of the algorithm it is useful to also have the mean elements corresponding to all the samples entering into the $j$'th batch in iteration $i$; denote this element by $\mean_{i,j}$. The idea of the algorithm is to approximate $\mean_{i,j}$ in iteration $i$ and batch $j$. Working directly with $\mean_{i,j}$ is not possible if we try to stay around $n \log(n)$ computation time per iteration since $\mean_{i,j}$ will consist eventually of about $\sqrt{n}$ many samples in each batch which implies a cost of $n$ per batch. Therefore, we 
approximate $\mean_{i,j}$ first by $\tilde \mean_{i,j}$ which will consist, under suitable conditions, of far fewer sample points. The approximation $\tilde \mean_{i,j}$ is then further compressed into $\wh \mean_{i,j}$ which consists of even fewer sample points. The variables $M_{i,j}$ keep track over how many sample points  $\mean_{i,j}$ is averaged. Hence, $\mean_{i,j} = (1/M_{i,j}) \sum_{u \in \mathcal{I}_{i,j}} M_{i-1,u} \mean_{i-1, u}$ for all $j \leq T_i$ and $2 \leq i \leq  \ell$. 

\textit{We left out a few details in the algorithm.} In particular, the usual vector $s$ that consists of inner products between $k(X_i,\cdot)$ and $\mean_n$ has to be replaced by vectors with entries of the form $\langle \wh \mean_{i-1,u}, \tilde \mean_{i,j} \rangle, u \in \mathcal{I}_{i,j}$, when $i\geq 2$. The element $\tilde \mean_{i,j}$ corresponds to an average over the $\wh \mean_{i-1,u}$ terms and there are $|\mathcal{I}_{i,j}|$ many terms over which this average is taken. The quantity $|\mathcal{I}_{i,j}|$ is not of major concern when bounding the computational complexity. The computational complexity of calculating these inner product vectors is rather dominated by how many points are contained in the approximations $\wh \mean_{i,j}$. Another point worth noting is that the final approximation will ideally by given in terms of convex combinations of the original sample points $k(X_1,\cdot), k(X_2,\cdot),\ldots$. Roughly speaking, this convex combination can be computed by multiplying the weights in the different iterations. Finally, observe that we can keep track of how well $\tilde \mean_{i,j}$ is approximated if $\wh \mean_{i,j}$ does not consist of too many points since the $|\mathcal{I}_{i,j}|$ are chosen small enough that we can compute and store  the corresponding kernel matrices 
\[
	(\langle \wh \mean_{i-1,u}, \wh \mean_{i-1,v} \rangle )_{u,v \in \mathcal{I}_{i,j}}\]
and from these kernel matrices we can compute the approximation errors.

\textit{Bounding the size of the set which is used in the resulting approximation} in high probability or expectation is a major challenge that we will not address here. However, it is easier to say something about the resulting \textit{approximation error} by refining the analysis of the kernel herding algorithm: the philosophy of the algorithm is to guarantee in high probability in each iteration that $\mean_n$ is approximated with an error of $n^{1/2}$. In detail,  observe that for any $1 \leq i \leq \ell$, $\mean_n = (1/n) \sum_{j=1}^{T_i} M_{i,j} \mean_{i,j}$, where we use that $\sum_{j =1}^{T_i} M_{i,j} = n$. We can  use the link between $\wh \mean_{i,j}$ and $\mean_{i,j}$ to measure in each iteration the error when approximating  $\mean_n$ by $(1/n) \sum_{j=1}^{T_i} M_{i,j} \wh \mean_{i,j}$. The naive approach of using the triangular inequality does not lead to useful results 
since  
\[
\| \mean_n - (1/n) \sum_{j=1}^{T_i} M_{i,j} \wh \mean_{i,j} \| \leq \frac{1}{n} \sum_{j=1}^{T_i} M_{i,j} \|\mean_{i,j} - \wh \mean_{i,j} \| \leq \varepsilon_i 
\]
and we would need to set $\varepsilon_i$ to $n^{-1/2}$ to guarantee a low enough approximation error. But aiming in each batch for an error of $n^{-1/2}$ when only $\log(n)$ sample points are in each batch is not useful. As for the kernel herding analysis, a better approach might be to consider the variance of the error and to make use of the independence of the sample points.  Let us first look at the case $i=1$,
\begin{align*}
&E(\|\mean_n - \frac{1}{n} \sum_{j=1}^{T_i} M_{i,j} \wh \mean_{i,j}\|^2)  = \frac{1}{n^2} E( \| \sum_{j=1}^{T_i} M_{i,j} (\mean_{i,j} - \wh \mean_{i,j})\|^2) \\
&= \frac{1}{n^2} \sum_{j=1}^{T_i} E( \| M_{i,j} (\mean_{i,j} - \wh \mean_{i,j})\|^2) + \frac{1}{n^2} \sum_{j_1=1}^{T_i} \sum_{j_2\not=j_1} M_{i,j_1} M_{i,j_2} E( \langle \mean_{i,j_1} - \wh \mean_{i,j_1}, \mean_{i,j_2} - \wh \mean_{i,j_2} \rangle) \\
&= \frac{1}{n^2} \sum_{j=1}^{T_i} M^2_{i,j} E( \| (\mean_{i,j} - \wh \mean_{i,j})\|^2) + \frac{1}{n^2} \sum_{j_1=1}^{T_i} \sum_{j_2\not=j_1} M_{i,j_1} M_{i,j_2} \langle \mean - E(\wh \mean_{i,j_1}), \mean - E(\wh \mean_{i,j_2}) \rangle. 
\end{align*}
As for the kernel herding algorithm we can control the bias term in a crude manner by using that 
\[
|\langle \mean - E(\wh \mean_{i,j_1}), \mean - E(\wh \mean_{i,j_2}) \rangle| \leq \max_{j\leq T_i} \|\mean - E(\wh \mean_{i,j})\|^2 
\leq \max_{j\leq T_i} E(\|\mean_{i,j} - \wh \mean_{i,j}\|^2) \leq \varepsilon_i^2. 
\]
However, this is not leading to an improvement since  in the first iteration
\begin{align*}
&E(\|\mean_n - \frac{1}{n} \sum_{j=1}^{T_1} M_{1,j} \wh \mean_{1,j}\|^2) \leq \frac{\varepsilon_1^2 \max_{j' \leq T_1} M_{1,j'}^2 (T_1 + T_1^2)}{n^2} \approx \varepsilon_1^2  (1+ 1/n) 
\end{align*}
and 
\[
\Pr(\|\mean_n - \frac{1}{n} \sum_{j=1}^{T_1} M_{1,j} \wh \mean_{1,j}\| \geq n^{-1/2}) \leq \varepsilon_1^2(n + 1). 
\]
implies that $\varepsilon_1$ would have to be of order $n^{-1/2}$. A central question at this point is \textit{of what order is the bias term}. In particular, is the upper bound of $\varepsilon_1^2$ for the squared bias term overly pessimistic? A natural threshold for the error in each batch is $\log^{-1/2}(n)$ in the first iteration since there are about $\log(n)$ many samples in each batch. For $\log^{1/2}(n)$ to be sufficiently low we need a bound on the bias term of about  $c n^{-1/2}$, $ c \in (0,1)$, since then  
\[
\Pr(\|\mean_n - \frac{1}{n} \sum_{j=1}^{T_1} M_{1,j} \wh \mean_{1,j}\| \geq n^{-1/2}) \leq c + \varepsilon_1^2 \log(n) \leq 2 c, 
\]
when a threshold of $c^{1/2} \log^{-1/2}(n)$ is used in the optimization. In other words, the bias term has to fall exponentially fast to allow for a threshold that is proportional to the sample size, i.e. the bias has to be below $\exp(-m/2)$, where $m = \log(n)$ is the sample size in each batch in the first iteration.  

\textit{The error in the successive approximations} can be treated in a similar way and since there are only about $\log\log(n)$ many iterations a simple union bound argument suffices to control the error simultaneously over all iterations. To demonstrate how the error evolves consider $i=2$, then 
\begin{align*}
&E(\|\mean_n - \frac{1}{n} \sum_{j=1}^{T_2} M_{2,j} \wh \mean_{2,j}\|^2)  \\
&\leq  3(E(\|\mean_n - \frac{1}{n} \sum_{j=1}^{T_1} M_{1,j} \wh \mean_{1,j}\|^2)  + E(\|\frac{1}{n} \sum_{j=1}^{T_1} M_{1,j} \wh \mean_{1,j} - \frac{1}{n} \sum_{j=1}^{T_2} M_{2,j} \wh \mean_{2,j}\|^2) ) 
\end{align*}
and
\begin{align*}
&n^2 E(\|\frac{1}{n} \sum_{j=1}^{T_1} M_{1,j} \wh \mean_{1,j} - \frac{1}{n} \sum_{j=1}^{T_2} M_{2,j} \wh \mean_{2,j}\|^2) = E(\| \sum_{j=1}^{T_2} (M_{2,j} \tilde \mean_{2,j} -  M_{2,j} \wh \mean_{2,j}) \|^2) \\
&=  \sum_{j=1}^{T_2} M^2_{2,j} E(\| \tilde \mean_{2,j} -   \wh \mean_{2,j} \|^2)
+ 2 \sum_{j=1}^{T_2} \sum_{u\not = j} M_{i,j} M_{i,u} \langle E(\tilde \mean_{2,j} -   \wh \mean_{2,j}), E(\tilde \mean_{2,u} -   \wh \mean_{2,u}) \rangle   
\end{align*}
where we can move the expectation inside the inner produce since $\tilde \mean_{2,j}$ and $\wh \mean_{2,j}$ are independent of $\tilde \mean_{2,u}$ and $\wh \mean_{2,u}$. The bias term that is now important is $\|E( \tilde \mean_{2,j})  - E(\wh \mean_{2,j})\|$ and we need a similar fast decay of the bias as for $i=1$. The other term is easier to deal with,
\begin{align*}
\frac{1}{n^2} \sum_{j=1}^{T_2} M^2_{2,j} E(\| \tilde \mean_{2,j} -   \wh \mean_{2,j} \|^2)
\leq \frac{1}{n^2} \sum_{j=1}^{T_2} M_{2,j} =\frac{1}{n}
\end{align*}
by the choice of $\varepsilon_{i,j}$ in the algorithm.

\textit{There are a few open problems} concerning this algorithm, and variations thereof. The algorithm is set up to enforce tighter and tighter error bounds in each iteration, i.e. the error threshold changes  approximately from $\log^{-1/2}(n)$ in the first iteration to $\log^{-3/2}(n)$ in the second iteration and $\log^{-7/2}(n)$ in the third iteration. The hope is that good approximations in the first iteration allow us to get even better approximations in the second round and so forth. But it is by no means obvious that this intuition is correct and in all likelihood these choices are not optimal. 

The next major obstacles in controlling the error of the algorithm are obviously the bias terms. If there is an exponential decrease in the bias then we are in a very fortunate situation and can control the approximation error. If the bias term decreases slower then it might be worth to consider alternatives of the CGM which incorporate bias reduction techniques and are not focusing solely on the approximation error.

The biggest challenge when studying this algorithm is in all likelihood the problem of controlling the size of the ball around the various elements $\mean_{i,j}$ simultaneously over all iterations and batches. In fact, a uniform bound might even be suboptimal for analyzing the performance of the algorithm since small ball sizes can be compensated for by batches that have a larger ball around their corresponding $\mean_{i,j}$ and which need less sample points than suggested by a worst case bound. In other words, we might need to control the fluctuations or the distribution of the ball sizes.

\section{Applications} \label{sec:applications}
In the following, we look at how these techniques can be combined with machine learning methods. In particular, we are looking at the two sample problem,  at kernel ridge regression and at kernel PCA. Since it is currently unclear what compression rates can be achieved when avoiding the upfront cost of $O(n^2)$, we formulate the runtime statements as functions of $\psi_{\text{comp}}(n)$ and $\psi_{\text{size}}(n)$, where $O(\psi_{\text{comp}}(n))$ is the computational cost for calculating the compression and $O(\psi_{\text{size}}(n))$ is order of the number of points that are needed in the compression to guarantee, with high probability, that the compression is no more than $c n^{-1/2}$, $c >0$, away from the mean element that corresponds to the empirical measure. In the finite dimensional settings that we consider and when using the standard algorithms, we can use $\psi_{\text{comp}}(n) = n^2$ and $\psi_{\text{size}}(n) = \log(n)$. Generally, the hope is that these can be changed to something of the form  $\psi_{\text{comp}}(n) = n \log^\alpha(n)$ and $\psi_{\text{size}}(n) = \log^\alpha (n)$, $\alpha \geq 1$.

\subsection{Two Sample Test}
In the two sample test problem  i.i.d. data $X_1,\ldots, X_n$ and $Y_1, \ldots, Y_m$ attaining values in $\X$ are given, the $X_i$'s are furthermore independent from the $Y_i$'s but it is unknown if the $X_i$'s have the same distribution as the $Y_i$'s. The null-hypothesis is that the distributions are equal. One way to build a test statistic for this hypothesis testing problem is to consider $\| \mean_{X,n} - \mean_{Y,m}\|$, where $k$ is a kernel function on $\X$, $k(X,\cdot), k(Y,\cdot) \in \mathcal{L}^1(P)$, $\mean_{X,n} = (1/n) \sum_{i=1}^n k(X_i,\cdot)$ and $\mean_{Y,m} = (1/m) \sum_{i=1}^m k(Y_i,\cdot)$. Calculating the norm can be done in $O((n\vee m)^2)$ by using that
\[
\|\mean_{X,n} - \mean_{Y,m}\|^2 = \frac{1}{n^2}\sum_{i,j=1}^n k(X_i,X_j) - \frac{2}{nm} \sum_{i=1}^n \sum_{j=1}^m k(X_i,Y_j)  + \frac{1}{m^2} \sum_{i,j=1}^m k(Y_i,Y_j).
\]
When using one of the compression approaches this turns into a run-time of the order
$O( (\psi_{\text{comp}}(n) \vee \psi_{\text{comp}}(m)) \vee (\psi_{\text{size}}(n) \vee \psi_{\text{size}}(m))^2)$. In particular, we can simply replace $\mean_{Y,m}$ and $\mean_{X,n}$ by their approximations. 
Furthermore, with high probability, the rate of convergence of $\|\mean_{X,n} - \mean_{Y,n}\|$ to 
$\|\mean_X - \mean_Y\|$, where $\mean_X = \int k(X,\cdot) \, dP$ and $\mean_Y = \int k(Y,\cdot) \, dP$,
will be preserved when moving to the compression.


\subsection{Kernel ridge regression}
Let us consider now the regression problem with data $(X_1,Y_1), \ldots, (X_n,Y_n)$, where we assume that the pairs are independent and that the $Y_i$ are bounded.
When the conditional gradient method is used to approximate $\mathfrak{C}_y$, $\mean_y$ and $(1/n) \sum_{i\leq n} Y_i$  simultaneously we get a single index function $\iota:\{1,\ldots,l\} \to \{1,\ldots,n\}$ and corresponding approximations  
$\hat{\mathfrak{C}}_{y,l} = \sum_{i=1}^l w_i \kappa(X_{\iota(i)}, \cdot)$ and 
$\hat{\mean}_{y,l} = \sum_{i=1}^l w_i \langle Y_{\iota(i)}, \cdot\rangle \otimes k(X_{\iota(i)},\cdot)$ with strictly positive $w_i$'s such that $w_1+\ldots + w_l = 1$. The approximation of the least-squares error for a function $h \in \cH$ is  
\begin{align*}
	\sum_{i=1}^l w_i (Y_{\iota(i)} - h(X_{\iota(i)}))^2 =  \langle \hat{\mathfrak{C}}_{y,l}, \tilde h \rangle_{\widehat{\cH \odot \cH}}
	- 2 \langle \hat \mean^{\otimes}_{y,l}, \check{h} \rangle_{\dR \otimes \cH} 
	+ \sum_{i=1}^l w_i Y_{\iota(i)}^2, 
\end{align*}
where we denote the function $(x,y) \mapsto h^2(x)$ with $\tilde h$.
Due to the representer theorem  we can write the solution  to the ridge regression problem in the form $h_\star = \sum_{i=1}^l \alpha_i k(X_{\iota(i)}, \cdot)$ for suitable $\alpha_i \in \mathbb{R}$. Substituting this into the  equation for the least-squares error and ignoring the last term  (which is irrelevant for finding the solution) leads to
\begin{align*}
\langle \hat{\mathfrak{C}}_{y,l}, \tilde{h}_\star \rangle_{\widehat{\cH \odot \cH}} - 2 \sum_{i=1}^l \alpha_i \langle \hat{\mean}^\otimes_{y,l}, \hat k(X_{\iota(i)},\cdot) \rangle_{\dR \otimes \cH}.
\end{align*}
Let $C_l$ be an $l\times l$ matrix with the entry in row $i$ and column $j$ being  
\begin{align*}
	&
	\sum_{u=1}^l w_u \langle k(X_{\iota(u)},\cdot)\otimes k(X_{\iota(u)},\cdot),k(X_{\iota(i)},\cdot)\otimes k(X_{\iota(j)},\cdot) \rangle_\otimes  \\
	 &= \sum_{u=1}^l w_u k(X_{\iota(u)},X_{\iota(i)}) k(X_{\iota(u)},X_{\iota(j)}). 
\end{align*}
then 
\[
	\langle \hat{\mathfrak{C}}_{y,l}, \tilde{h}_\star \rangle_{\widehat{\cH \odot \cH}} 
	= \alpha^\top C_l \alpha.
\]

Also, let $K_l$ be the  kernel matrix for samples $X_{\iota(1)},\ldots, X_{\iota(l)}$ and let $m_l$ be an $l$-dimensional vector with entry $i$ being 
\[
	\langle \hat \mean^\otimes_{y,l}, \hat k(X_{\iota(i)},\cdot) \rangle_{\dR \otimes \cH} = \sum_{u=1}^l w_u Y_{\iota(u)} k(X_{\iota(i)}, X_{\iota(u)}).
\]
With these in place the solution of the ridge-regression problem with regularization parameter $\lambda >0$ is found by minimizing 
\[
	\alpha^\top C_l \alpha - 2 \alpha^\top m_l + \lambda \alpha^\top K_l \alpha
\]
with respect to $\alpha \in \mathbb{R}^l$. Taking the gradient with respect to $\alpha$ and setting it to zero yields
\begin{equation} \label{eq:ridge_reg_zero}
2 C_l \alpha - 2 m_l + 2 \lambda K_l \alpha = 0.
\end{equation} 
Observe that $C_l = K_l W K_l$, where $W$ is a diagonal matrix with $W_{uu} =w_u$ for all $u\leq l$. Similarly,
$m_l = K_l W y = K_l W (K_l W)^\dagger K_l W y$, where $y$ is a vector with entries $y_u = Y_{\iota(u)}$ for all $u\leq l$. Hence, we can rewrite \eqref{eq:ridge_reg_zero} as
\[
	K_l ( (WK_l + \lambda I_l)  \alpha - W (K_l W)^\dagger K_l W y)=0,
\]
where $I_l$ is the $l\times l$ identity matrix. Since $W$ has strictly positive entries on the diagonal we can rewrite this as  
\begin{equation} \label{eq:Tikh_normal_eq}
	K_lW ( (K_l + \lambda W^{-1})  \alpha - (K_l W)^\dagger K_l W y)=0,
\end{equation}
for which a solution is given by 
\begin{equation} \label{eq:Tikh_minimal}
	\alpha = (K_l + \lambda W^{-1})^{-1} (K_l W)^\dagger K_l Wy.
\end{equation}
The inverse is well defined because $K_l$ is p.s.d. and $W^{-1}$ is (strictly) positive definite; the sum of a p.s.d. and strictly positive definite matrix is strictly positive definite and, therefore, has an inverse. Also $(K_l W)^\dagger = (K_l W)^{-1}$ whenever $K_l$ is of full rank and in this case
\begin{equation} \label{eq:Tikh_suboptimal}
	\alpha =  (K_l + \lambda W^{-1})^{-1} y. 
\end{equation}
This $\alpha$ is also a solution to \eqref{eq:Tikh_normal_eq} in the general case when $K_l$ is not full rank
since $(K_l W)^\dagger K_l W y$ can be replaced by $y$ in this equation.

In terms of the runtime, if we use an $O(n^3)$ algorithm for deriving the inverse then, after compression, the runtime is $O(\psi_{\text{comp}}(n)  \vee (\psi_{\text{size}}(n))^3)$. For example, if we work with a finite dimensional RKHS and use the standard CGM then we attain a runtime of $O(n^2)$. Beside the reduction in runtime the storage demand also goes down since only a matrix of size $\psi_{\text{size}}(n) \times \psi_{\text{size}}(n)$ has to be stored for calculating $\alpha$, and this can  be as small as $\log(n) \times \log(n)$. The CGM itself needs memory in the order of $O(n)$. 

\subsection{Kernel PCA}
The plug-in estimator of an eigenfunction of the 
covariance operator has a large bias when working in infinite dimensionsal RKHSs and does not achieve the minimax optimal rate of convergence \cite{KOLT18}. However, in finite dimensional RKHSs this is not of major concern and we can use the eigenfunction of  $\tilde{\mathfrak{C}}_n \in L(\cH,\cH)$ as an estimate of the eigenfunctions of the covariance operator. 
In this context we want to approximate $\tilde{\mathfrak{C}}_n$, which is
given by 
\[
\frac{1}{n} \sum_{i=1}^n k(X_i,\cdot) \wh{\otimes} k(X_i,\cdot),
\]
by using the CGM. 
As discussed in Section \ref{sec:cov_op_approx}
 we can apply the CGM to the RKHS with the kernel function $\kappa(x,y) = k^2(x,y)$ to approximate $\mathfrak{C}_n$ with some convex combination $\wh{\mathfrak{C}}_t = \sum_{i=1}^t \alpha_i \kappa(X_{\iota(i)},\cdot)$, where $\alpha_i \geq 0$ for all $i\leq t$, $\alpha_1 + \ldots + \alpha_t = 1$, and $\iota$ is some selection of data points. The element $\mathfrak{C}_n$ is closely related to the operator $\tilde{\mathfrak{C}}_n$ and 
a natural approximation of $\tilde{\mathfrak{C}}_n$ is $\wh{\tilde{\mathfrak{C}}}_t = \sum_{i=1}^t \alpha_i k(X_{\iota(i)}, \cdot) \wh{\otimes} k(X_{\iota(i)}, \cdot)$. Note that for any $f,g \in \cH$, 
\[
\langle \wh{\mathfrak{C}}_t, (f \otimes g) \circ \psi \rangle_\odot
= \sum_{i=1}^t \alpha_i f(X_{\iota(i)}) g(X_{\iota(i)}) 
= \langle  \wh{\tilde{\mathfrak{C}}}_t f, g \rangle, 
\]
where $\psi:\X \to \X  \times \X$,   $\psi(x) = (x,x)$.
The operator $\wh{\tilde{\mathfrak{C}}}_t$ is clearly symmetric  and, hence, self-adjoint since the RKHS is finite dimensional. Furthermore, all eigenvalues are non-negative since if $e \in \cH$ is an eigenfunction of $\wh{\tilde{\mathfrak{C}}}_t$ then
\[
\langle \wh{\tilde{\mathfrak{C}}}_t e, e \rangle =  \sum_{i=1}^t \alpha_i e^2(X_{\iota(i)}) \geq 0.  
\]
The main question is now if we can quantify the difference between eigenfunctions of $ 
\wh{\tilde{\mathfrak{C}}}_t$ and $\tilde{\mathfrak{C}}_n$. Let us assume that there are no multiple eingenvalues and that $\wh \lambda_1 \geq \ldots \geq \wh \lambda_d >0$ are the eigenvalues of  $\wh{\tilde{\mathfrak{C}}}_t$ and $e_1,\ldots, e_d$ are the corresponding eigenfunctions. Similarly, let $\mu_1,\ldots, \mu_d > 0$ be the eigenvalues of $\tilde{\mathfrak{C}}_n$ and $f_1,\ldots, f_d$ the corresponding eigenfunctions. Furthermore, assume that the CGM is run until 
\[
\| \wh{\mathfrak{C}}_t - \mathfrak{C}_n\|_\odot \leq \epsilon.
\]
Since 
\[
\|\wh{\tilde{\mathfrak{C}}}_t - \tilde{\mathfrak{C}}_n \|_{op} = \sup_{\|h\|=1} 
\|\wh{\tilde{\mathfrak{C}}}_t h - \tilde{\mathfrak{C}}_n h  \| = \sup_{\|h\|=1} \sup_{\|g\|=1} 
\langle (\wh{\tilde{\mathfrak{C}}}_t - \tilde{\mathfrak{C}}_n)(h), g \rangle 
= \langle  \wh{\mathfrak{C}}_t - \mathfrak{C}_n, (h \otimes g) \circ \psi \rangle_\odot
\]
and $\|(h \otimes g) \circ \psi \|_\otimes \leq \|h \otimes g\|_\otimes = \|h\|\|g\| \leq 1$ it follows from the Cauchy-Schwarz inequality that 
\[
\|\wh{\tilde{\mathfrak{C}}}_t - \tilde{\mathfrak{C}}_n \|_{op} \leq  \| \wh{\mathfrak{C}}_t - \mathfrak{C}_n\|_\odot \leq \epsilon.
\]
From this bound on the operator norm it follows right away that 
$\|(\wh{\tilde{\mathfrak{C}}}_t - \tilde{\mathfrak{C}}_n) (e_i)\|$ and 
$\|(\wh{\tilde{\mathfrak{C}}}_t - \tilde{\mathfrak{C}}_n) (f_i)\|$ are less than $\epsilon$ for all $i\leq d$. In particular,
\[
|\lambda_i - \langle e_i, \tilde{\mathfrak{C}}_n e_i \rangle| \leq \epsilon \text{\quad and \quad}
|\langle f_i, \wh{\tilde{\mathfrak{C}}}_t f_i \rangle - \mu_i| \leq \epsilon
\]
for all $i\leq d$. In particular,
\[
\lambda_1 \leq \langle e_1, \tilde{\mathfrak{C}}_n e_1 \rangle + \epsilon \leq  
\sup_{\|h\|=1} \|\tilde{\mathfrak{C}}_n h\| \epsilon \leq \mu_1 +\epsilon.     
\]
By symmetry of the argument it follows that $|\lambda_1 - \mu_1| \leq \epsilon$. The difference between $e_1$ and $f_1$ can now also be controlled: let $a_1,\ldots, a_d \in \R$ be such that $ e_1 = a_1 f_1 + \ldots + a_d f_d$ then $1 = \|e_1\|^2 = a_1^2 + \ldots a_d^2$,  \[
\mu_1 - 2\epsilon \leq \langle e_1, \tilde{\mathfrak{C}}_n e_1 \rangle = 
\sum_{i=1}^d a_i^2 \mu_i 
\]
and from $(1-a_1^2) \mu_1 -2 \epsilon \leq (1-a_1^2) \mu_2$ we can infer that for sufficiently small $\epsilon >0$,
\[
	a_1^2 \geq 1 -  \frac{2 \epsilon}{\mu_1 - \mu_2} \text{\quad and \quad}
	\|e_1 - f_1\|^2 = 2 - 2 a_1 \leq 2 -2 \Bigl(1- \frac{2\epsilon}{\mu_1 - \mu_2}\Bigr)^{1/2}\!\!\!\!\!\!. 
\]
The other eigenfunctions can be treated in a similar way by moving to the subspaces that are orthogonal to the already covered eigenfunctions $e_1,\ldots, e_l$, $l\leq d$. 

The computational complexity of the eigendecomposition is $O(n^3)$ as for kernel ridge regression. By compressing the data this goes down to $O(\psi_{\text{comp}}(n)  \vee (\psi_{\text{size}}(n))^3)$.






\section{Example: Slow rate of convergence in infinite dimensions}

The last section of this paper is dedicated to the construction of the example for which the kernel herding algorithm performs strictly worse than in finite dimensions when the density function of the data distribution has a density that is bounded away from zero. The corresponding theorem is the following.
 
\begin{theorem} \label{prop:counter-main}
There exists an initialization, a continuous kernel, and a Borel probability measure on $[0,1]$ which assigns non-zero probability to open intervals for which the kernel herding algorithm does not converge fast, i.e. there exists no constant $b$ such that $\|\mean_t- \mean \| \leq b/t$ for all $t \geq 1$. 
\end{theorem}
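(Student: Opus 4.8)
The plan is to realise concretely the heuristic outlined in the introduction, namely to build an infinite–dimensional RKHS in which the herding iterate accumulates mass along more and more orthogonal directions. Recall from the excerpt that $w_t = w_0 - t(\mean_t-\mean)$ with $\mean_t$ the returned approximation, and that since $w_0\in C$ is bounded, the algorithm attains the rate $1/t$ if and only if $(\|w_t\|)_{t\ge 1}$ is bounded; so it suffices to produce a continuous kernel on $[0,1]$, a Borel probability measure charging every open interval, and an admissible initialisation for which $\sup_t\|w_t\|=\infty$. I would first do this \emph{abstractly} in a separable Hilbert space $\cH$ with orthonormal basis $\{e_n\}_{n\ge1}$. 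Fix $\alpha_n=n^{-1/2}$, so that $\alpha_n\downarrow 0$ while $\sum_n\alpha_n^2=\infty$ — this divergent sum is the engine of the example — and fix strictly positive reals $\beta_n$, to be chosen last, decaying extremely fast. Put $a_n=\alpha_n e_n$, $b_n=-\beta_n e_n$, and let $C=\cch(\{0\}\cup\{a_n:n\ge1\}\cup\{b_n:n\ge1\})$; the point set is compact, so $C$ is compact by Mazur's theorem \cite[Thm.~12, p.~51]{DIES77}, hence no ball of $\cH$ meets $C$ in $C$, which is precisely why the fast-rate argument fails. One checks that the extreme points of $C$ are exactly the $a_n$ and the $b_n$. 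Take the initialisation $w_0=c:=\sum_n 2^{-n-1}\alpha_n e_n\in C$, a vector with tiny strictly positive coordinates, so that $\langle c,a_n\rangle>0$ for every $n$.

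Next I would \emph{analyse the greedy dynamics}. A linear functional $v\mapsto\langle w,v\rangle$ attains its maximum over the compact convex $C$ at an extreme point, hence at some $a_n$ or $b_n$; moving along the segments from $0$ to these points never beats the endpoints. This reduces the update to a per-coordinate scalar recursion: on a step that selects $a_n$ the $n$th coordinate of $w$ drops by $\alpha_n$, on a step that selects $b_n$ it rises by $\beta_n$, and on every step every coordinate $m$ rises by $\mu_m:=\langle\mean,e_m\rangle$. Choosing the $\beta_n$ small enough relative to the $\alpha$'s, to $c$, and to the (tiny, positive) numbers $\mu_m$ that the next step lets us prescribe, one forces the algorithm through successive phases: it selects fresh points $a_k,a_{k+1},\dots,a_{N_k}$, each at the first time its scaled coordinate $\langle w_i,e_n\rangle\,\alpha_n$ becomes the running maximum, building up $\|w\|^2$, and only then does ``recovery'' $b$-steps on the earliest still-depressed coordinate, which — because each such step restores only $\le\beta_n+\mu_n$ — is slow. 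The crucial quantitative point is that the greedy direction $w_i$ points essentially into the region where $C$ barely extends past $\mean$, so that $\langle w_i,a_n-\mean\rangle=\max_{v\in C}\langle w_i,v-\mean\rangle$ stays bounded even as $\|w_i\|\to\infty$, whereas each fresh selection injects $\|a_n-\mean\|^2\approx\alpha_n^2$ into $\|w\|^2$; thus progress never dominates noise. One concludes that there are times $t_k\to\infty$ at which $w_{t_k}$ has coordinates of order $-\alpha_n$ over an index window whose total $\sum\alpha_n^2$-weight tends to infinity, forcing $\|w_{t_k}\|\to\infty$.

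Finally I would \emph{manufacture the kernel}. Let $[s_n,s_{n+1}]$, $n\ge1$, be consecutive subintervals of $[0,1]$ with $s_1=0$ and $\sum_n(s_{n+1}-s_n)<1$, and let $\phi:[0,1]\to\cH$ be the piecewise-linear path that equals $0$ at every $s_n$, on $[s_n,s_{n+1}]$ traverses $0\to a_n\to 0\to b_n\to 0$, and equals $0$ past the accumulation point $s_\infty:=\lim_n s_n$; continuity at $s_\infty$ is immediate from $\alpha_n,\beta_n\to0$. Then $k(x,y):=\langle\phi(x),\phi(y)\rangle$ is a continuous positive-semidefinite kernel on $[0,1]$, its RKHS is isometrically isomorphic to $\cspn\{\phi(x):x\in[0,1]\}=\cH$ (the image contains all $a_n$, which span $\cH$), and under this isomorphism $\cch\{k(x,\cdot):x\in[0,1]\}$ corresponds to $\cch\,\phi([0,1])=C$, so running herding with kernel $k$ \emph{is} the abstract algorithm above. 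Take $P$ to be Lebesgue measure on $[0,1]$, which charges every open interval; then $\mean=\int_0^1\phi\,dP$ and $\mu_n=\langle\mean,e_n\rangle=\tfrac14(s_{n+1}-s_n)(\alpha_n-\beta_n)>0$ for a time-symmetric excursion, a quantity that can be driven below any threshold the dynamics analysis demands simply by shrinking $s_{n+1}-s_n$, subject only to $\sum_n(s_{n+1}-s_n)<1$. This is the promised minor modification replacing the idealised $\mean=0$. With this $k$, this $P$, and $w_0=c$, the second paragraph gives $\sup_t\|w_t\|=\infty$, so $\|\mean_t-\mean\|$ is not $O(1/t)$.

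The main obstacle is the dynamics step: one must choose the countably many parameters $(\alpha_n)$, $(\beta_n)$, $(s_{n+1}-s_n)$ and the vector $c$ so that an infinite family of ``the next selection is $a_{n+1}$'' inequalities holds \emph{simultaneously and for all times} $t$, despite the greedy rule's lack of foresight and the unavoidable positivity of $\mean$; and one must bound precisely how much mass the recovery $b$-steps and the $\mean$-drift restore, showing that across each phase it is dominated by what the fresh selections add, so that the high-water marks of $\|w_t\|^2$ diverge. Everything else — compactness and the extreme-point structure of $C$, the extreme-point characterisation of the argmax, continuity of $k$, the identification of the RKHS with $\cH$, and the value of $\mean$ — is routine.
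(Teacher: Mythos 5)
Your blueprint is the same one the paper itself sketches (large positive spikes $a_n$ along orthonormal directions with a slowly decaying trigger sequence, tiny opposite spikes $b_n$, a piecewise--linear path $\phi$ on $[0,1]$, the kernel $k(x,y)=\langle\phi(x),\phi(y)\rangle$, and transfer of the dynamics to the RKHS), and the peripheral steps you call routine (compactness of $C$, the transfer argument, continuity of $k$) are indeed the easy part. But the proposal has a genuine gap exactly where you flag it: the control of the greedy dynamics for all $t$ is not a tuning detail that can be deferred, and in the simplified setup you propose it is obstructed, not just unfinished. With Lebesgue measure and a path that spends time on the $a_n$--excursions, every coordinate of $\mean$ is strictly positive, so coordinate $m$ receives cumulative drift $t\mu_m\to\infty$; hence the phase picture "each $a_n$ is selected once and its coordinate stays depressed, injecting $\alpha_n^2$ into $\|w\|^2$" cannot persist: the drift eventually makes $a_m$ attractive again, it is re-selected infinitely often, and herding against a fixed nonzero mean direction tends to keep that coordinate of size $O(\alpha_m)$ rather than letting mass accumulate. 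Whether $\|w_t\|$ still diverges then requires a genuinely different quantitative argument, which the proposal does not supply; moreover the proposed fix (shrink $s_{n+1}-s_n$ to make $\mu_n$ small) is circular, because "small enough" depends on the times at which the later phases occur, which are unbounded and themselves depend on the $\mu_n$'s.

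The paper's proof shows why this cannot be patched cheaply. Because the measure must charge every open interval, the positive side of the path (the $a_n$'s and, crucially, the initialization direction) pulls the mean, and cancelling that pull with the exponentially smaller $b_n$'s would require non-integrable weights; a single square-summable initialization vector $c$ is incompatible with a legitimate measure. The paper therefore abandons both $\mean\neq 0$ and the one-shot initialization: it makes $\mean=0$ exactly by (i) breaking $c$ into stagewise pieces $c_{n,i}$ whose $e_n$-coordinates are of the same $2^{-n}$ scale as the $b_n$'s, (ii) adding auxiliary orthogonal directions $\tilde e_{n,i}$ that force these pieces to be selected in order, (iii) adding cancellation elements $d_n$ and a hand-built density, and then proves by induction the exact form of $w_t$ at every step (ordered one-time selection of the $a_n$'s, the $2^{-j}$-quantized effect of $b_j$-corrections, lower bounds $|\langle e_i,w_t\rangle|\geq 1/\ln(n+1)$ on a window of indices), which yields $\|w_t\|^2\gtrsim (n-3)/\ln^2(n+1)\to\infty$. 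Your proposal contains neither this machinery nor a substitute for it, so the central claim $\sup_t\|w_t\|=\infty$ remains unproven. (Two smaller points: your initialization $c\in C\setminus\phi[\X]$ departs from the paper, which initializes at $k(x_0,\cdot)$; and the reduction of the argmax over $\phi[\X]$ to the endpoints $a_n,b_n$ needs the observation that some point with strictly positive inner product exists, as in the paper's step ($\alpha$).)
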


The proof of Proposition \ref{prop:counter-main} is split into two parts. In the first part, we construct a Hilbert space, a map $\phi: [0,1]\to \cH$, and an element $\mean \in \cH$ such that the algorithm does not converge fast. We then use this Hilbert space to construct an RKHS for which the algorithm behaves in exactly the same way as when acting on the Hilbert space, and, consequently, the algorithm does not converge fast when applied to the RKHS.

\paragraph{The construction idea.}
Before getting into the technical details we like to outline the basic intuition of the construction: let the mean element $\mathfrak{m} = 0$. Then, given an infinite dimensional Hilbert space $\cH$, choose an orthonormal sequence 
$\{e_n\}_{n\geq 1}$ and elements $\{a_n\}_{n\geq 1}$ in $\cH$ such that each $a_n$ is a multiple of $e_n$. Initialize the algorithm with an element $c \in \cH$ which is of small magnitude compared to the $a_n$ and has a positive inner product with each $a_n$. The idea is that the different $a_n$'s will be chosen at one point by the algorithm and will add to the (rescaled) approximation error $w_t$ of $\mathfrak{m}$ ($t$ is the iteration number of the algorithm). In fact, we like to show that its norm will diverge to infinity. 

This initial construction has a few problems which have to be addressed to make this construction work.  The first problem with this construction is that $\inner{a_n}{c}$ is positive. In fact, $\inner{a_n}{e_n}, \inner{c}{e_n} > 0$ for all $n\geq 1$. But, we want the mean element $\mathfrak{m}$ to be $0$. Hence, we will need probability mass on the negative side to counter the mass accumulated by the $a_n$ and $c$. We can achieve this by introducing another set of elements $\{b_n\}_{n\geq 1}$ which are lying opposite to the $a_n$. Therefore, each $b_n$ is a negative multiple of $e_n$. These $b_n$ need to be further constraint in magnitude. If they are of a similar order like the $a_n$ then they can cancel the weight added to $w_t$ by the $a_n$'s. We are using here sequences with values in the order of $1/\ln(n+1)$ for $a_n$ and $-2^{-n}$ for  $b_n$.

Even though the $b_n$'s are of small magnitude compared to the $a_n$'s  it is not directly obvious why these $b_n$'s should not be chosen many times by the algorithm to cancel step-by-step the weight accumulated by the $a_n$'s. Here is an argument why this does not happen: the $a_n$'s are constructed such that each $a_n$ is chosen exactly once and they are selected in order by the algorithm. At a given iteration there is then an element $a_m$ which has not yet been chosen and our construction assures that in this case $\inner{a_m}{w_t}$ equals the initial value $\inner{a_m}{c}$, which is of magnitude $1/(m\ln(m+1))$. Since the algorithm chooses the element $h \in \phi[\X]$ that maximizes the inner product with $w_t$ we can infer that this inner product must be larger than $1/(m\ln(m+1))$. Or put differently, an element $b_n$ will only be chosen if $\inner{b_n}{w_t} \geq 1/(m\ln(m+1))$, that is $\inner{e_n}{w_t} \geq 2^{n}/(m\ln(m+1))$. If, in fact, the algorithm chooses, in this case, $b_n$ then we are at least assured that $\inner{e_n}{w_{t+1}} \geq 2^{n}/(m\ln(m+1)) - 2^{-m}$  (Figure \ref{fig:CounterEx} on page \pageref{fig:CounterEx} visualizes these bounds for different $m$). We do not need this extra scaling of $2^n$ and we use in the proof only that there are sufficiently many $e_n$ for which $\abs{\inner{e_n}{w_t}}$ is larger than 
$1/\ln(m+1)$. The number of elements for which the inner product is at least of this size grows in $m$ and the sum over these inner products gives us a diverging number that approaches infinity in $m$. This is then sufficient to show that the norm of $w_t$ diverges.  

\subparagraph{Interlacing.} In the above discussion we assume $\mathfrak{m} = 0$. However, constructing the probability measure such that  $\mathfrak{m} = 0$ is not straightforward. The problem is that the scaling on the positive side (the $a_n$'s and the $c$) is exponentially larger than the scaling on the negative side  (the $b_n$'s).  To get $\mathfrak{m} = 0$ we would need the probability mass for the $a_n$'s and $c$ times the magnitude of these elements to be scaled so that it equals the probability mass of the $b_n$'s times the scale of the $b_n$'s. The exponential difference in scale implies that the probability mass of the $b_n$'s needs to grow exponentially in $n$ and the sum of all this mass has to add up to infinity.

By closer inspection, one can observe that the $a_n$'s pose no serious problem since one can just downscale the probability assigned to them by an exponential factor. However, the $c$ poses a more serious problem. Let $p > 0$ be the probability corresponding to $c$. We use  
$c= \sum_{n=1}^\infty n^{-1} e_n$ and we thus have a factor of $p/n$ pulling the mean element towards the positive direction in dimension $n$. Hence, we will need a probability of $p_n = p 2^n/n$ for the $b_n$ elements to counter this pull. Since $p$ does not change with $n$ we are left with $p_n$'s that grow rapidly in $n$.   

Using an initialization $c$ is in a way too rigid and does not allow us to assign lower probability mass as $n$ increases. One way to overcome this problem is to break the initialization up and add probability mass to the different dimensions while the algorithm is running. We do this by replacing the single $c$ with infinitely many elements, one for each dimension $e_n$. Since we do not want to alter the overall behavior of the algorithm these different elements will need to be of a low scale and we need to sum multiple elements to regain the $1/n$ value that $c$ would have assigned. Therefore, for each dimension $e_n$, we are left with a finite sequence of elements $c_{n,1}, c_{n,2},\ldots$ which takes the role of the original $c$. 

The question is then how we can guarantee that all these $c_{n,i}$ elements are chosen to simulate the initialization through $c$ before the algorithm proceeds as usual. We guarantee this by introducing dimensions $\tilde e_{n,i}$ which are orthogonal to all the $e_n$. These dimensions are used to force the algorithm to choose $c_{n,i+1}$  
after $c_{n,i}$ until the final element of the sequence is chosen and we have a weight of $1/n$ in dimension $e_n$.

We still have not addressed the problem of assigning different probabilities to the different dimensions. But, since $c$ is now broken into many small pieces, it is easy to `lose' probability in $n$. 

\begin{proposition}  \label{Thm:CounterExample}
For any infinite-dimensional Hilbert space $\cH$ there exists a continuous function $\phi:\X \rightarrow \cH$, $\X:=[0,1]$, a probability measure $P$ on $\mathscr{B}_{[0,1]}$  which assigns positive measure to any open subset of $\X$, and an initialization $w_1 \in \phi[\X]$ such that the kernel herding algorithm when applied to $\int \phi(x) \,dP(x)$ generates a sequence $\{w_t\}_{t\geq 1}$ that is unbounded and the algorithm does not converge with a $1/t$ rate to $\mean = \int \phi(x) \,dP(x) \in \cH$.
\end{proposition}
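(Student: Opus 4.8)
The plan is to carry out in full the construction sketched in the preceding discussion. First I would fix inside $\cH$ a countable orthonormal family $\{e_n\}_{n\geq 1}\cup\{\tilde e_{n,j}\}_{n\geq 1,\,1\leq j\leq \ell_n}$, all mutually orthogonal (such a family exists in any infinite dimensional Hilbert space, separable or not), and define four families of special vectors: the slowly shrinking mass carriers $a_n=\alpha_n e_n$ with $\alpha_n\asymp 1/\ln(n+1)$; the exponentially small opposite vectors $b_n=-\beta_n e_n$ with $\beta_n=2^{-n}$; the initialisation pieces $c_{n,1},\dots,c_{n,\ell_n}$, each with tiny $e_n$-component $\gamma_n$ where $\ell_n\gamma_n\asymp 1/n$, and additionally carrying components along the $\tilde e_{n,j}$ designed to force herding to pick $c_{n,1},\dots,c_{n,\ell_n}$ consecutively; and the cancellation vectors $d_n\in\spn\tilde e_{n,1}$, whose sole role is to let $\mean$ be tuned. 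All scalars $\alpha_n,\beta_n,\gamma_n,\norm{d_n}$ tend to $0$, which will make the path continuous, and all cross inner products $\inner{c_{n,j}}{e_m}$, $m\neq n$, are arranged to vanish or be negligible.

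Next I would turn this discrete data into a continuous $\phi$ and a measure $P$. Enumerate the special vectors as $v_1,v_2,\dots$, pick points $t_1<t_2<\cdots\to 1$ in $[0,1]$, set $\phi(t_k)=v_k$ and $\phi(1)=0$, and interpolate by a polygonal arc in $\cH$ between consecutive $t_k$; since $\norm{v_k}\to 0$, this arc can be chosen to converge to $0$, so $\phi$ is continuous on $[0,1]$. For $P$ I would place atoms $p_k$ at the $t_k$ together with a small continuous component (a rescaled Lebesgue measure on $[0,1]$) so that every open subset receives positive mass; the $p_k$ are chosen summable, with the $p_k$ attached to the pieces $c_{n,j}$ small enough to be cancelled by the atom on $d_n$. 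The point of the interlacing is precisely that breaking the unrealisable single initialiser $c=\sum n^{-1}e_n$ into $\ell_n$ exponentially small pieces per dimension frees us to down-weight each dimension as steeply as needed while still recovering $1/n$ worth of $e_n$-mass once the algorithm runs. With these degrees of freedom I would choose $p_k$ and the continuous part so that $\mean=\int\phi\,dP$ equals a prescribed $\mean^\circ$ with $\norm{\mean^\circ}$ arbitrarily small and $\inner{\mean^\circ}{e_n}$ negligible for every $n$ — the minor modification replacing the ideal $\mean=0$.

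The heart of the proof is the analysis of the herding recursion $w_{i+1}=w_i-(\phi(x^\star)-\mean)$, $x^\star\in\argmax_x\inner{w_i}{\phi(x)}$, started at $w_1=c_{1,1}$. I would prove by induction on the iteration index the following picture: (i) while the algorithm is ``in block $n$'' the $\tilde e_{n,j}$-components force it to pick $c_{n,1},\dots,c_{n,\ell_n}$ in order and then $d_n$, after which $\inner{w_t}{e_n}\approx 1/n$, mimicking the effect of $c$; (ii) the algorithm then picks $a_n$, because $\inner{w_t}{a_n}=\alpha_n\inner{w_t}{e_n}\approx\alpha_n/n$ dominates all not-yet-visited blocks; (iii) once $a_m$ has been visited, $\inner{w_t}{e_m}$ can only ever be decreased by picking $b_m$, and $b_m$ is picked only when $\beta_m\inner{w_t}{e_m}$ is the running maximum, which — since that maximum is at least $\inner{w_t}{a_{m'}}\approx\alpha_{m'}/m'$ for the least unvisited index $m'$ — forces $\inner{w_t}{e_m}\gtrsim\alpha_{m'}/(\beta_m m')$, exponentially large, so that even after removing $b_m$ one still has $\inner{w_t}{e_m}\gtrsim\alpha_{m'}/(\beta_m m')-\beta_m$, still large. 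Hence, after block $M$ is finished there are at least $N_M$ indices $m$ with $\abs{\inner{w_t}{e_m}}\geq 1/\ln(M+1)$, where $N_M\to\infty$, so
\[
\norm{w_t}^2\;\geq\;\sum_m\abs{\inner{w_t}{e_m}}^2\;\geq\;\frac{N_M}{\ln^2(M+1)}\;\longrightarrow\;\infty .
\]
Thus $\{w_t\}_{t\geq 1}$ is unbounded; since $\norm{w_t}=t\,\norm{\mean-\hat\mean_t}$, no bound $\norm{\mean-\hat\mean_t}\leq b/t$ can hold, which is the claim. (The RKHS form of the theorem then follows by taking $k(x,y)=\inner{\phi(x)}{\phi(y)}$, a continuous kernel, on which herding runs identically.)

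The main obstacle is the simultaneous, mutually consistent choice of the scalar sequences $\alpha_n,\beta_n,\gamma_n,\norm{d_n}$ and the atom weights $p_k$ so that all three requirements hold at once: $\phi$ continuous with $v_k\to 0$; $P$ charging every open set while realising a near-zero $\mean$ with no appreciable $e_n$-component; and, most delicately, the induction above — in particular that a small $b_m$ can never cancel enough mass to bring $\inner{w_t}{e_m}$ down to insignificance, and that the auxiliary directions $\tilde e_{n,j}$ genuinely pin the selection order without themselves contributing enough to $\norm{w_t}$ to interfere. This bookkeeping is the substantive part; continuity of $\phi$, measurability, and the final contradiction are routine.
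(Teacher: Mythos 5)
Your overall architecture is the same as the paper's (blocks $a_n\asymp 1/\ln(n+1)$, $b_n=-2^{-n}e_n$, chained pieces $c_{n,j}$ pinned by auxiliary directions $\tilde e_{n,j}$, cancellation vectors $d_n$, an induction on the selection order, and the final bound $\norm{w_t}^2\gtrsim (n-N(n))/\ln^2(n+1)\to\infty$), but there is a genuine gap at the point you treat as a harmless relaxation: you propose to realise a mean $\mean^\circ$ that is merely ``arbitrarily small with negligible $e_n$-components'' instead of $\mean=0$ exactly. This does not work with the herding recursion $w_{t+1}=w_t-(\phi(x^\star)-\mean)$: the mean is added back at \emph{every} step, so $w_t$ contains the drift term $(t-1)\mean^\circ$. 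Any fixed nonzero component of $\mean^\circ$, however tiny, grows linearly in $t$ and eventually dominates the exponentially small quantities the selection analysis hinges on (the $2^{-j}$ coefficients of the $b_j$, the $\alpha_{n,i},\beta_n$ scalars, the exact telescoping of the $\tilde e_{n,j}$ components), so the induction determining the argmax at each step collapses and you lose control of which elements are chosen. The paper's ``minor modification'' is not a relaxation of $\mean=0$; it is precisely the structural change you already describe (breaking the single initialiser into the $c_{n,i}$ chain and introducing the $d_n$), whose purpose is to make $\mean=0$ achievable \emph{exactly} by a legitimate measure --- in the paper a piecewise constant, strictly positive density whose levels $\mathfrak a_n,\mathfrak b_n,\mathfrak c_n,\mathfrak d_n$ are solved for explicitly so that every coordinate $\inner{e_n}{\mean}$ and $\inner{\tilde e_{n,i}}{\mean}$ vanishes, and then normalised. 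Your atoms-plus-small-Lebesgue measure faces the same requirement: the continuous component contributes a fixed vector with components in essentially all directions, and you would have to cancel it exactly, not approximately.

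A secondary but related problem is that you defer exactly the part of the argument that carries the mathematical content. You acknowledge the ``bookkeeping'' as the main obstacle, but it is not optional: the paper's induction hypothesis specifies the exact algebraic form of $w_t$ (coefficients $\gamma_j$ that are multiples of $2^{-j}$, the single surviving $\tilde e$-component), and from it proves that $c_{1,1}$ and all $d_n$ are \emph{never} selected, that the $a_n$ are selected in order exactly once, and that a selected $b_j$ leaves $|\inner{e_j}{w_t}|$ large. Your sketch even has the algorithm select $d_n$ after each $c$-block, which contradicts the role you assign it (pure mass cancellation) and would disturb the $\tilde e_{n,1}$ accounting unless redesigned. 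Phrases such as ``cross inner products \dots vanish or are negligible'' are likewise not safe in an argument where errors accumulate over unboundedly many iterations. To close the gap you need (i) an exact-zero mean, obtained by solving for the measure weights as in the paper, and (ii) the full inductive verification of the selection order with exact, not approximate, coefficients.
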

\begin{proof}
\textbf{(a) Definition of the convex set:}
Let $\{N_i\}_{i=1}^\infty$ be a set of natural numbers to be defined below, pick a countable infinite orthonormal sequence $\{e_n'\}_{n \geq 1}$ in $\cH$  and split this sequence into 
$\{e_n\}_{n \geq 1} $ and the sequences $\tilde e_{n,1}, \ldots, \tilde e_{n,N_n}$ where $n$ goes through $2,3 \ldots $. This can be done since these are countable many sequences of $N_n +1$ elements and since countable unions of countable sets are again countable.
 Furthermore, define the sequences $\{a_n\}_{n \geq 1}, \{b_n\}_{n \geq 1}, \{c_{n,m} : 1 \leq n, 1 \leq m \leq N_n \}, \{d_n : 2 \leq n \}  \subseteq \cH$ by
{\allowdisplaybreaks
\begin{align*}
a_n &:=  \left(a_n' + \frac{1}{n}\right) e_n
\text{ with }  a_n' := C \left\lceil  \frac{2^n}{\ln(n+1)}\right\rceil 2^{-n} \text{ and } C = 4 \left\lceil 3 + \frac{4 \ln(9)}{\ln(2)} \right\rceil  = 64, \\
 b_n &:= - 2^{-n} e_n, \\
 N_1 &:= 1 \text{ and for } n \geq 2, N_n := \left\lceil \frac{2}{n \inner{-b_n}{e_n}} \right\rceil, \\ 
c_{1,1} &:= e_1 +  \alpha_{2,1} \tilde e_{2,1}, \text{ and for } n\geq 2: \\
c_{n,1} &:= \beta_n e_n + \alpha_{n,1} \tilde e_{n,1} - \alpha_{n,2} \tilde e_{n,2}, \, \quad
\ldots \\
c_{n,N_n-1} &:= \beta_n e_n + \alpha_{n,N_n-1} \tilde e_{n,N_n-1} -  \alpha_{n,N_n} \tilde e_{n,N_n},  \\
c_{n,N_n} &:= \beta_n e_n + \alpha_{n,N_n} \tilde e_{n,N_n} -  \alpha_{n+1,1} \tilde e_{n+1,1}, \\
d_2 &:= -(1/2) \alpha_{2,1} \tilde e_{2,1} \text{ and for all } 2 \leq n \text{ let } \\
d_n &:= (1/2) \alpha_{n,1} \tilde e_{n,1}, \\
\beta_n &:= - \frac{1}{n N_n}, \text{ for which } -\beta_n < \inner{-b_n}{e_n} \text{ holds,} \\
\alpha_{n,1} &:= \sqrt{\frac{\inner{e_n}{a_n}}{n}}, \text{ and } \alpha_{n,i} := \sqrt{\alpha_{n,1}^2 + (i-1) \beta^2_n} \text{ for } 2 \leq i \leq N_n.  
\end{align*}}
$- \beta_n$ is smaller than $\inner{-b_n}{e_n}$ because 
$-\beta_n = 1/(nN_n) \leq \inner{-b_n}{e_n}/2$. Also observe that the sequence $a_n'$ is non-increasing in $n$ since 
\begin{align*}
 \left\lceil  \frac{2^{n+1}}{\ln(n+2)}\right\rceil \frac{1}{2^{n+1}} \leq \left\lceil 2 \frac{ 2^{n}}{\ln(n+1)}\right\rceil \frac{1}{2^{n+1}} &\leq \left\lceil 2 \left\lceil \frac{ 2^{n}}{\ln(n+1)}\right\rceil \right\rceil \frac{1}{2^{n+1}} 
= \left\lceil \frac{ 2^{n}}{\ln(n+1)}\right\rceil \frac{1}{2^{n}},
\end{align*}
where we used that the function $\lceil \cdot \rceil$ is monotonically increasing.

\textbf{(b) Construction of a continuous map $\phi$:}
We construct a continuous function \\$\phi:[0,1] \rightarrow \cH$ which goes through the points $\{a_n\}_{n\in \mathbb{N}}, \{b_n\}_{n\in \mathbb{N}}$, $\{c_{n,i} : 1 \leq n, 1 \leq i \leq N_n \}$
and $\{d_n : 2 \leq n \}$. We split the construction into three separate functions, $\phi_1$ for the $a_n,b_n$ elements, $\phi_2$ for the $c_{n.i}$ and $\phi_3$ for the $d_n$ elements.

For ease of reading let $y_n = 1/(n+1)$ and 
$z_n = ( y_n + y_{n+1})/2$ for all $n\geq 1$. Define $\phi_1:[0,1] \rightarrow \cH$, with n going through $1, 2, 3 \ldots$, by
$$ \phi_1(x) := \begin{cases} 
  \frac{1 - x}{1-y_1} \, a_1  & \text{if } y_1 < x \leq 1,  \\
\frac{x - \xi }{y_n- \xi} \, a_n 		& \text{if } \xi:= \frac{y_n + z_n}{2} < x  \leq y_n, \\
\frac{\xi - x  }{\xi - z_n}	\, b_n	& \text{if } z_n < x \leq \frac{y_n + z_n}{2} =: \xi, \\
\frac{x - \xi }{z_n- \xi} \, b_n 		& \text{if } \xi:= \frac{y_{n+1} + z_n}{2} < x \leq z_n, \\
\frac{\xi - x  }{\xi - y_{n+1}} \, a_{n+1}		& \text{if } y_{n+1} < x \leq \frac{y_{n+1} + z_n}{2} =: \xi, \\
0 & \text{if } x = 0.
\end{cases}
$$
The function is continuous on $(0,1]$ as it is piecewise linear and the 
end points of the lines are connected. The only critical point is $0$. For continuity at $0$ it suffices that 
for any $\epsilon >0$ we can pick a $\delta$ such that $x < \delta$ implies
$\norm{\phi(x)} < \epsilon$. We restrict the search for a $\delta$ to points $1/n$, $n \in \mathbb{N}$. For such a $\delta$ the maximum of $\phi(x)$ in an interval $[0,\delta]$ is either attained on an $a_n$ or a $b_n$. As we have that
$\lim_{n\rightarrow \infty} \norm{a_n} = \lim_{n\rightarrow \infty} \norm{b_n} = 0$
there is for every $\epsilon>0$ an $N \in \mathbb{N}$ such that for all $n> N$ 
we have $\max(\norm{a_n},\norm{b_n}) < \epsilon$ and, consequently for
$\delta = 1/(N+1)$ we have that $\norm{\phi(x)} < \epsilon$ for any $0 \leq x \leq \delta$.

In the following let $N_1 :=1$.  Furthermore, let $\tilde y_n := 1/n$, $\Delta_n := (\tilde y_n - \tilde y_{n+1})/N_n$, $u_{n,m} := \tilde y_n -  m \Delta_n$, $u_{n,0} := \tilde y_n$ and let 
$\tilde z_{n,m} := (u_{n,m-1} + u_{n,m})/2$, for all $n\geq 1$ and $1\leq m \leq N_n-1$.  
With $n$ going through all of $1,2,\ldots$, define $\phi_2:[0,1]\rightarrow \cH$ by
\[\phi_2(x) := \begin{cases} 
 \frac{\tilde y_n - x }{\tilde y_n - \tilde z_{n,1}} \, c_{n,1} & \text{if } \tilde z_{n,1} < x \leq \tilde y_n, \\
 \frac{ x - u_{n,m}}{\tilde z_{n,m} - u_{n,m}} \, c_{n,m}     & \text{if } u_{n,m} < x \leq \tilde z_{n,m},1\leq m \leq N_n -1,\\
 \frac{u_{n,m} - x }{u_{n,m} -\tilde z_{n,m+1}} \, c_{n,m+1}     & \text{if } \tilde z_{n,m+1} < x \leq  u_{n,m},  1\leq m \leq N_n -1,\\
 \frac{x -\tilde  y_{n+1}}{\tilde z_{n,N_n} - \tilde y_{n+1}} \, c_{n,N_n}     & \text{if } \tilde y_{n+1} < x \leq \tilde z_{n,N_n}, \\
 0 & \text{if } x=0.
\end{cases}
\]
Similarly, by going through all $n\geq 2$ define 
$\phi_3:[0,1]\rightarrow \cH$ by 
\[\phi_3(x) := \begin{cases} 
 0 & \text{if } 1/2 < x \leq 1, \\
 \frac{\tilde y_n - x }{\tilde y_n - \tilde z_{n,1}} \, d_n & \text{if } \tilde z_{n,1} < x \leq \tilde y_n, \\
 \frac{ x -  u_{n,1}}{\tilde z_{n,1} - u_{n,1}} \, d_n     & \text{if }  u_{n,1} < x \leq \tilde z_{n,1}, \\
0 & \tilde y_{n+1} \leq x \leq u_{n,1}, \\
 0 & \text{if } x=0.
\end{cases}
\]
With the same reasoning as for $\phi_1$ one can infer that $\phi_2$ and $\phi_3$ are continuous.
Define 
\[\phi(x) := \begin{cases} 
\phi_1(3 x -2) & \text{if } 2/3 < x \leq 1, \\
\phi_2(3x - 1) & \text{if } 1/3 < x \leq 2/3, \\
 \phi_3(3x) & \text{if } 0 \leq x \leq 1/3. \\
\end{cases}
\]
The function $\phi$ is continuous since $\phi_1,\phi_2,\phi_3$ are continuous and $\phi_1(0) =  \phi_2(1)
=\phi_2(0)=\phi_3(1)=0$. This implies also that $\phi: [0,1] \rightarrow \cH$ is measurable. It is also Bochner integrable with respect to any probability measure defined on the Borel sets of $\mathbb{R}$ as $\norm{\phi(\cdot)} : [0,1] \rightarrow \mathbb{R}$ is continuous and, hence, bounded, i.e. $\int \norm{\phi(x)} \, dP(x) < \infty$.    

\textbf{(c) Definition of the probability measure:} 
We construct a Borel measure by defining a density $p$ on $[0,1]$.
Using the variables defined for $\phi_1,\phi_2,\phi_3$, constants $\mathfrak a_1,\mathfrak b_1, \ldots$ and $n$ going through $1, 2, \ldots$ we set
\[ p_1(x) = 
\begin{cases} 
   \mathfrak a_1  & \text{if } y_1 < x \leq 1,  \\
\mathfrak a_n 		& \text{if } \frac{y_n + z_n}{2} < x  \leq y_n, \\
\mathfrak b_n	& \text{if } \frac{y_{n+1} + z_n}{2} < x \leq \frac{y_n + z_n}{2}, \\
\mathfrak a_{n+1}		& \text{if } y_{n+1} < x \leq \frac{y_{n+1} + z_n}{2}, \\
0 & \text{if } x = 0.
\end{cases}
\]
and using constants $\mathfrak c_n$  we furthermore define
\[p_2(x) := \begin{cases} 
 \mathfrak c_n, & \text{if } \tilde y_{n+1} <  x \leq \tilde y_n, \\
 0 & \text{if } x=0.
\end{cases}
\]
Finally, going through all $n\geq 2$, $1 \leq m \leq N_n$ and with the constants $\mathfrak d_{n,m}$ let 
\[p_3(x) := \begin{cases} 
 1 & \text{if } 1/2 < x \leq 1, \\
 \mathfrak d_n & \text{if } u_{n,1} \leq x \leq \tilde y_n, \\
1 & \text{if }  \tilde y_{n+1} \leq x \leq u_{n,1}\\ 
 0 & \text{if } x=0
\end{cases}
\]
and combine these to define the density $p$ by
\[p(x) := \begin{cases} 
p_1(3 x -2) & \text{if } 2/3 < x \leq 1, \\
p_2(3x - 1) & \text{if } 1/3 < x \leq 2/3, \\
p_3(3x) & \text{if } 0 \leq x \leq 1/3. \\
\end{cases}
\]
Now, $\mathfrak m = 0$ iff $\langle e_n, \mathfrak m\rangle = \E \inner{e_n}{\phi} = 0
=\langle e_{n,i}', \mathfrak m\rangle = \E \langle e_{n,i}',\phi \rangle
$ for all $n\geq 1$,  $1 \leq i \leq N_n$. 

Observe that in general, if $a,b \in [0,1]$, $a \leq b$, the density $p$ is  constant  on $[a,b]$ 
with value $\mu \in [0,\infty)$, $h \in \cH$ and $\psi:[0,1] \rightarrow \cH$  is defined by
\[
\psi(x) = \begin{cases} 
(x-a)/(b-a) \mu h  & x \in [a,b], \\
0 & \text{otherwise}
\end{cases}
\] 
then for any $e_n$ (and $e_{n,i}'$) 
\[ \inner{e_n}{\E \psi} = \E \inner{e_n}{\psi} = \int_{[a,b]} \frac{x-a}{b-a} \mu \inner{e_n}{h} 
= (1/2) \mu \inner{e_n}{h} (b-a) 
\] 
and if 
\[
\psi(x) = \begin{cases} 
(b-x)/(b-a) \mu h & x \in [a,b], \\
0 & \text{otherwise}
\end{cases}
\] 
then  
\[ \inner{e_n}{\E \psi} = (1/2)  \mu \inner{e_n}{h} (b-a).
\]
So, 
\[
\inner{e_1}{\mathfrak m} = \frac{1}{6}  \inner{e_1}{\left(1-\frac{y_1+z_1}{2} \right) \mathfrak a_1 a_1 + 
\left(\frac{y_1 - y_2}{2} \right)\mathfrak b_1 b_1 +(\tilde y_1 - \tilde y_2) \mathfrak c_1 c_{1,1}} 
\]
will be zero by setting  
\[
\mathfrak b_1 := 2\cdot 6 \left(\left(1-\frac{y_1+z_1}{2} \right)\mathfrak a_1 
\frac{\inner{a_1}{e_1}}{\inner{-b_1}{e_1} } +  \left(1 - \frac{1}{2} \right)  \mathfrak c_1\frac{\inner{c_{1,1}}{e_1}}{\inner{-b_1}{e_1} } \right)
\]
and $\inner{e_n}{\mathfrak m} = 0$ by setting
\[
\mathfrak b_n = 2(n+1)(n+2) \left(
\frac{1}{4} \left(\frac{1}{n} - \frac{1}{n+2}\right) \mathfrak a_n \frac{ \inner{a_n}{e_n}}{\inner{-b_n}{e_n}}
+ \left(\frac{1}{n} - \frac{1}{n+1} \right) \frac{\mathfrak c_n \beta_n}{\inner{-b_n}{e_n}}
\right).
\]
Also, for any $n \geq 1$ we have that $\mathfrak b_n > 0$ if $\mathfrak a_n,\mathfrak c_n >0$. 
Let, $\mathcal{N}$ be a normalising constant to be defined below and let
\[ \mathfrak a_1 :=    \left. \frac{\mathcal{N}\inner{-b_1}{e_1} }{24 \inner{a_1}{e_1}}\middle/  \left(1-\frac{y_1+z_1}{2} \right)\right. > 0 \quad \text{and} \quad \mathfrak c_1 :=\frac{\mathcal{N}\inner{-b_1}{e_1} }{12 \inner{c_{1,1}}{e_1}} >0
\]
such that $\mathfrak{b}_1 = \mathcal{N}$. Also set for all $n\geq 2$ 
\[ \mathfrak a_n :=     \frac{n}{n+1} \frac{\mathcal{N}\inner{-b_n}{e_n} }{\inner{a_n}{e_n}} >0 \quad \text{and} \quad \mathfrak c_n :=\frac{n}{2(n+2)}\frac{\mathcal{N}\inner{b_n}{e_n} }{\beta_n} \frac{\Delta_{n+1}}{\Delta_{n}} >0
\]
which makes $\mathfrak b_n = \mathcal{N} (1- \Delta_{n+1}/\Delta_{n}) $ and all $\inner{e_n}{\mathfrak m} =0$. 
For the elements $\tilde e_{n,i}$ we have that 
\[
6 \inner{\tilde e_{2,1}}{\mathfrak m} = \Delta_1 \mathfrak c_1 \inner{c_{1,1}}{\tilde e_{2,1}} + \Delta_2 \mathfrak c_2    
 \inner{c_{2,1}}{\tilde e_{2,1}} + \Delta_2 \mathfrak d_2 \inner{d_{2}}{\tilde e_{2,1}}
\]
and we set 
\[
\mathfrak d_2 := \frac{\Delta_1}{\Delta_2} \mathfrak c_1 \frac{\inner{c_{1,1}}{\tilde e_{2,1}}}{\inner{-d_2}{\tilde e_{2,1}}}+ \mathfrak c_2 \frac{\inner{c_{2,1}}{\tilde e_{2,1}}}{\inner{-d_2}{\tilde e_{2,1}}} >0.
\]
Furthermore, for all $n>2$ let
\[
\mathfrak d_n :=  \frac{\Delta_{n-1}}{\Delta_n}
\mathfrak c_{n-1} \frac{\inner{-c_{n-1,N_{n-1}}}{\tilde e_{n,1}}}{\inner{d_n}{\tilde e_{n,1}}}- \mathfrak c_n \frac{\inner{c_{n,1}}{\tilde e_{n,1}}}{\inner{d_n}{\tilde e_{n,1}}}.
\]
$\mathfrak d_n > 0$ for $n>2$ since 
\begin{align*}
\frac{2 \inner{-d_n}{\tilde e_{n,1}}}{\alpha_{n,1} \mathcal{N}} \mathfrak{d}_n =&
\frac{(n-1)^2}{n+1} \inner{-b_{n-1}}{e_{n-1}} \left\lceil \frac{2}{(n-1) \inner{-b_{n-1}}{e_{n-1}}}\right\rceil \\
& - \frac{\Delta_{n+1}}{\Delta_{n}}\frac{n^2}{n+2} \inner{-b_{n}}{e_{n}} \left\lceil \frac{2}{n \inner{-b_{n}}{e_{n}}}\right\rceil \\
\geq& 2 \frac{n-1}{n+1} - 3\frac{n}{n+2} \frac{\Delta_{n+1}}{\Delta_n} 
= 2 \frac{n-1}{n+1} - 3\frac{n^2}{(n+2)^2}  \frac{N_n}{N_{n+1}} \\
\geq& 2 \frac{n-1}{n+1} - \frac{3}{2} \frac{n^2 (n+1)}{(n+2)^2} \left(\frac{1}{n} + \frac{1}{2^{n+1}} \right)
\end{align*}
which is strictly greater zero if 
\begin{align*}
4(n-1)(n+2)^2 - 4 n (n+1)^2 =4 (n^2 + n -4)
\end{align*}
is. But this is obvious for $n\geq 3$. For all remaining $n,i\geq 2$ we can observe that 
\[\inner{\tilde e_{n,i}}{\mathfrak m} = (1/6) \Delta_n \mathfrak c_n \inner{c_{n,i-1} - c_{n,i}}{\tilde e_{n,i}} =0
\]
for all $m\geq2$. Hence, $\mathfrak{m} = 0$ and the density is strictly greater $0$ on all but three points. It remains to set $\mathcal{N}$ such that the density integrates to one. We have for any $\mathcal{N}>0$ that
\begin{align*}
0 < \int_{[0,1]} p =&\mathfrak a_1 (1-(y_1 + z_1)/2)/3  +  (1/3) \sum_{n=2}^\infty  \mathfrak a_n ((y_n - z_{n-1}) - (y_n - z_n))/2   \\
&+  (1/3) \sum_{n=1}^\infty \mathfrak b_n ((y_n - z_{n}) - (y_{n+1} - z_n))/2   \\  
& +\sum_{n=1}^\infty  \frac{1}{3n(n+1)} \mathfrak c_n
+ 1/6 +  (1/3) \sum_{n=1}^\infty \Delta_n \mathfrak d_n +  (1/3) \sum_{n=1}^\infty (u_{n,1} - \tilde y_{n+1}).
\end{align*}
The first sum is a finite multiple of $\mathcal{N}$ since $\mathfrak a_n \approx 2^{-n}$ and the sum over 
$((y_n - z_{n-1}) - (y_n - z_n))/2$ is bounded by $1$. Similarly, the $\mathfrak c_n$ sum is bounded since 
$\mathfrak c_n$ itself is upper bounded by $\mathcal N$ and the rest is quadratic in $n$. Furthermore, 
$\mathfrak b_n$ is upper bounded by $\mathcal N$ and the sum of the intervals cannot exceed $1$.  Finally,
$\mathfrak d_n$ is upper bounded since
\[
\frac{2 \inner{-d_n}{\tilde e_{n,1}}}{\alpha_{n,1} \mathcal{N}} \mathfrak{d}_n \leq
\frac{2(n-1)}{n+1} 
\]
$\alpha_{n,1}$ is bounded and so is $\inner{-d_n}{\tilde e_{n,1}}$. Hence, the sum is a finite multiple of $\mathcal{N}$ and we have in total a term that is a finite multiple of $\mathcal{N}$ plus a constant that is smaller 
than $1/2$. Therefore, we can choose $\mathcal{N}$ such that $\int p = 1$.
\\

\textbf{(d) Behaviour of the algorithm:} 
Initialize the algorithm with $w_1 := c_{1,1} \in \phi[X]$ and let $x_t$ be the element which is chosen at stage $t$. The algorithm behaves as follows: 
\begin{enumerate}
\item[(1)] For any $t \geq 1$, if $w_t \not = 0$ then $x_t \in  \{a_n\}_{n \geq 1} \cup \{b_n\}_{n \geq 1} \cup \{c_{n,m} : 1 \leq n, 1 \leq m \leq N_n\}
\cup \{d_{n,m} : 2 \leq n, 1 \leq m \leq N_n \} 
$.






\item[(2)] Let $n = \min\{m : a_m \text{ has not been chosen in steps } 1 \ldots t-1\}$. If $t \geq 2$ then 
either the smallest element of $\{(m,j) : c_{m,j} \text{ has not been chosen in steps } 1 \ldots t-1\}\backslash\{c_{1,1}\}$
in the lexicographic order is $(n,i)$ with $1 \leq i \leq N_n$ and
$$w_t = - \gamma_1 e_1 -  \ldots - \gamma_{n-1} e_{n-1} + \gamma_n e_n +  \alpha_{n,i} \tilde e_{n,i},$$
where
$$\gamma_j = (2^j a_j' - l) 2^{-j},l \in \mathbb{N}  \text{\quad  and  \quad} a_j'\geq \gamma_j \geq \min\left\{a_j' ,\max\left\{\frac{2^j \inner{a_n}{e_n}}{n} - 2^{-j},0 \right\} \right\}$$ 
for $1 \leq j \leq n-1$ and $\gamma_n = -(i-1) \beta_n $ (first case),  
or the smallest element is $(n+1,1)$ and 
$$w_t = - \gamma_1 e_1 -  \ldots - \gamma_{n-1} e_{n-1} + \gamma_n e_n +  \alpha_{n+1,1} \tilde e_{n+1,1},$$
with $\gamma_1, \ldots, \gamma_{n-1}$ like above and $\gamma_n = 1/n$ (second case). 
In particular $w_t \not = 0$.

\item[(3)] Let $N(n) := \lceil 1 + \log_2( n \ln(n+1))\rceil$ then $n -1 \geq N(n)$ for all $n\geq 7$. If $n$ is the smallest index of an $a_n$ which has not been chosen yet and if this $n \geq 7$ then   
for any $i$ with $n-1 \geq i \geq N(n)$ 
$$ \inner{e_i}{w_t}   \leq - \frac{1}{\ln(n+1)}. $$

\item[(4)] For each $n \geq 1$ there exists a step $t \geq 1$ with  $x_t = a_n$.
\end{enumerate}
\quad\\
\pStart
($\alpha$) (1) is saying that no point on the line from $0$ to an $a_n$, $b_n, c_{n,m}$ or $d_{n,m}$ is chosen that differs from $a_n, b_n, c_{n,m}$ and $d_{n,m}$. To see this first observe  that only points  $\phi(x)$ will be chosen at any 
stage $t$ for which $\langle w_t, \phi(x) \rangle > 0$: By assumption $w_t \not= 0$. If there exists an $e_n$ with $\inner{e_n}{w_t} \not = 0$ then either $\inner{a_n}{w_t}$ or
$\inner{b_n}{w_t}$ is strictly positive. Also, if there is an $\tilde e_{n,m}$,  $(n,m) \not = (2,1)$, such that $\inner{\tilde e_{n,m}}{w_t} > 0 $ then
$\inner{d_{n,m}}{w_t}$ is strictly positive. Similarly, if $\inner{\tilde e_{2,1}}{w_t} < 0$ then $\inner{d_{2,1}}{w_t} >0$. Assuming that none of these cases apply we have that 
either  $\inner{\tilde e_{2,1}}{w_t} > 0$ or there is an $\tilde e_{n,m}$,  $(n,m) \not = (2,1)$, with $\inner{\tilde e_{n,m}}{w_t} < 0 $. 
In the first case $\inner{c_{1,1}}{w_t}>0$.  
In the latter case and with $\inner{\tilde e_{2,1}}{w_t} = 0$ 
let $(n',m') := \min \{ (n,m) :  \inner{\tilde e_{n,m}}{w_t} < 0 \}$ where the minimum is taken wrt. the lexicographic ordering. We have
$\inner{c_{n',m'-1}}{w_t} = \alpha_{n',m'-1} \inner{\tilde e_{n',m'-1}}{w_t}
- \alpha_{n',m'} \inner{\tilde e_{n',m'}}{w_t} > - \alpha_{n',m'} \inner{\tilde e_{n',m'}}{w_t} >0 $, if $m'>1$, and if $m'=1$ then  \[\langle c_{n'-1,N_{n'-1}},w_t\rangle = 
\alpha_{n'-1,N_{n'-1}} \langle\tilde e_{n'-1,N_{n'-1}},w_t \rangle 
- \alpha_{n',1} \inner{\tilde e_{n',1}}{w_t} > - \alpha_{n',1} \langle \tilde e_{n',1}, w_t\rangle > 0.\]

If the chosen $\phi(x)$ is on the line from $0$ to an $a_n$ then $a_n = \xi \phi(x)$ with $\xi \geq 1$ and 
$0 < \inner{\phi(x)}{w_t} \leq \xi \inner{\phi(x)}{w_t} = \inner{a_n}{w_t}$ and $\phi(x) = a_n$. The same argument applies to $b_n, c_{n,m}$ and $d_{n,m}$. 

($\beta$) We \textit{prove by induction} over $t \geq 2$ that (2) holds.
We start with the \textit{induction basis}. $w_1 = c_{1,1} = e_1 + \alpha_{2,1} \tilde e_{2,1}$ and 
we have $\inner{w_1}{b_n} \leq 0$,$\inner{w_1}{d_{n}} \leq 0$ for all $n$, $\inner{w_1}{a_n} = 0$ for all $n\geq 2$, 
$\inner{w_1}{c_{n,i}} = 0$ if either $n > 2$ or ($n=2$ and $i>2$). Furthermore,  
\begin{align*}
\inner{w_1}{c_{1,1}}= \norm{c_{1,1}}^2 = 1 + \alpha_{2,1}^2=  1 + \frac{\inner{e_2}{a_2}}{2}  &= 1 + \frac{C}{8} \left\lceil \frac{4}{\ln(3)}\right\rceil + \frac{1}{4} \leq 1 + \frac{C}{4} \left\lceil \frac{2}{\ln(2)} \right\rceil + \frac{1}{4} \\
 &< 1 + \frac{C}{2} \left\lceil \frac{2}{\ln(2)} \right\rceil  = \inner{w_1}{a_1}
 \end{align*}  
since $C\geq 1$, $\lceil 2/\ln(2) \rceil = 3$  and hence 
$$\frac{C}{4} \left\lceil \frac{2}{\ln(2)}\right\rceil >\frac{1}{4}.$$
Also,
$ \inner{w_1}{c_{2,1}} = \alpha_{2,1}^2 < \inner{w_1}{c_{1,1}} < \inner{w_1}{a_1}$
and $x_1 = a_1$. Therefore,
$$ w_2 = w_1 - a_1 = e_1 + \alpha_{2,1} \tilde e_{2,1} - (a_1' + 1) e_1 = - a_1' e_1 + \alpha_{2,1} \tilde e_{2,1}$$
and $w_2$ has the promised form.

($\gamma$) Next, we address the \textit{induction step}. 
\textbf{(i)}
Assuming $w_t$ has the given form in step $t$ we can observe that 
$$ \inner{x_t}{w_t} \geq \frac{\inner{a_n}{e_n}}{n} > 0$$
since in the first case 
$$\inner{c_{n,i}}{w_t} = 
\beta_n \gamma_n  + \alpha_{n,i}^2  =
-(i-1) \beta_n^2 +  (\alpha_{n,1}^2 + (i-1) \beta_n^2) =
 \frac{\inner{e_n}{a_n}}{n}
 $$ 
in case that $i>1$ or, for $i=1$, 
$$ \inner{c_{n,i}}{w_t} = 
\alpha_{n,1}^2  =
 \frac{\inner{e_n}{a_n}}{n}.
 $$ 
In the second case, 
$$\inner{a_n}{w_t} \geq \gamma_n \inner{e_n}{a_n} = \frac{\inner{e_n}{a_n}}{n}.$$ 
\textbf{(ii)} For the $b_j$ \textbf{(i)} implies that, first, no $j\geq n$ will have been chosen in $t$ since for these $\inner{b_j}{w_t} \leq 0$ holds in both cases. Also, if for a $j$, $1 \leq j \leq n-1$,  $\gamma_j < 2^j \inner{a_n}{e_n}/n $ then 
$\inner{b_j}{w_t} = \gamma_j 2^{-j} < \inner{a_n}{e_n}/n$ and $b_j \not = x_t$. 
On the other hand, if $\gamma_j  \geq 2^j \inner{a_n}{e_n}/n$ and $x_t = b_j$ then 
the coefficient changes by $-2^{-j}$, i.e. the new coefficient is
$$ \gamma_j - 2^{-j} \geq  2^j \frac{\inner{a_n}{e_n}}{n} - 2^{-j}.$$
The coefficient is also always non-negative since $\gamma_j$ is a multiple of $2^{-j}$ and $b_j$ will not be selected if $\gamma_j = 0$. In total, all cases are consistent with our induction hypothesis and we are safe against any application of $b_j$. 

\textbf{(iii)} In terms of $a_j$, we can directly observe that $\inner{a_j}{w_t} = 0$ if $j > n$ and $\inner{a_j}{w_t} = - \gamma_j \inner{e_j}{a_j} \leq 0 $ if $j<n$. So only 
$a_n$ might have been chosen at time $t$. However, in the first case we have 
that 
$$\inner{a_n}{w_t} = \gamma_n \inner{a_n}{e_n}  = - (i-1) \beta_n \inner{a_n}{e_n} 
\leq \frac{N_n -1}{n N_n}  \inner{a_n}{e_n}  <  \frac{\inner{a_n}{e_n}}{n} $$ 
and $x_t \not = a_n$. In the second case, if $\gamma_n = 1/n$, then
$$\inner{a_n}{w_t} = \frac{\inner{a_n}{e_n}}{n}.$$
Thus $a_n$ might be chosen, and, in case it is, then the new coefficient is
$\gamma_n - a_n' - 1/n = - a_n'$ which is consistent with the induction hypothesis.

\textbf{(iv)} Turning to the $c_{n',i'}$ elements we can observe that for $(n,i) > (2,1)$ we have $\inner{w_t}{c_{1,1}}\leq 0$. 
For $(n,i) = (2,1)$ we are in the first case since $N_2 = 4$ and 
\begin{align*}
\inner{w_t}{c_{1,1}} &= \inner{-\gamma_1 e_1 + \gamma_2 e_2 + \alpha_{2,1} \tilde e_{2,1}}{e_1 + \alpha_{2,1} \tilde e_{2,1}} = - \gamma_1 + (1/2) \inner{e_2}{a_2} \\
&\leq - (1/2) \inner{e_2}{a_2} + 1/2 + (1/2) \inner{e_2}{a_2}.
\end{align*}
But, $ 1/2 < (1/2) \inner{a_2}{e_2}$ and $c_{1,1}$ will never be chosen.  For the remaining $c_{n',i'}$ elements we have in the first case in step $t$ that no $c_{n',i'}$ will be chosen if $n' \not= n$ or $i' \not = i$ since
in these cases
$$   \inner{w_t}{c_{n',i'}} 
\begin{cases}
=0 & \text{if } n' > n, \\
= \gamma_n \beta_n = -(i-1) \beta_n^2 \leq 0 & \text{if } n'=n \text{ and } ( i' > i  \text{ or } i'< i-1), \\
= \gamma_n \beta_n - \alpha_{n,i}^2 < 0 & \text{if } n'=n \text{ and } i' + 1 = i, \\
 \leq 0 & \text{if } 2 \leq n' = n-1 \text{ and } i' = N_{n'} \text{ and } i=1, \quad (*) \\
= - \gamma_{n'} \beta_{n'} < 
\inner{w_t}{b_{n'}}  & \text{if } n' < n \text{ and } (*) \text{ does not apply.} 
\end{cases} $$
(*) follows  for $n\geq 3$  from 
\begin{align*} &\inner{w_t}{c_{n-1,N_{n-1}}} = \inner{w_t}{\beta_{n-1} e_{n-1} + \alpha_{n-1,N_{n-1}} \tilde e_{n-1,N_{n-1}} - \alpha_{n,1} \tilde e_{n,1}} \\
&= -\gamma_{n-1} \beta_{n-1} - \alpha^2_{n,1} = \frac{\gamma_{n-1}}{(n-1)N_{n-1}} - \alpha^2_{n,1} \\
&\leq \gamma_{n-1} 2^{-n} - \frac{\inner{e_n}{a_n}}{n} \leq 2^{-n} a_{n-1}' - \frac{a_n'}{n} \\
&= C \left(2^{-n} \left\lceil\frac{2^{n-1}}{\ln(n)} \right\rceil 2^{-(n-1)} 
-\frac{1}{n} \left\lceil\frac{2^{n}}{\ln(n+1)} \right\rceil 2^{-n}
\right)\\
 &\leq C n^{-1} 2^{-2n-1} \left(n \left\lceil\frac{2^{n}}{\ln(n^2)} \right\rceil 
- 2^{n-1} \left\lceil\frac{2^{n}}{\ln(n+1)} \right\rceil \right) \leq 0.
\end{align*}
If now $x_t = c_{n,i}$ then, in case $i<N_n$, 
\begin{align*}
 w_{t+1} &= w_t - c_{n,i} = - \sum_{i=1}^{n-1} \gamma_i e_i -(i-1) \beta_n e_n + \alpha_{n,i} \tilde e_{n,i} 
- \beta_n e_n - \alpha_{n,i} \tilde e_{n,i} + \alpha_{n,i+1} \tilde e_{n,i+1} \\
&= - \sum_{i=1}^{n-1} \gamma_i e_i - i \beta_n e_n + \alpha_{n,i+1} \tilde e_{n,i+1} 
\end{align*}
which has the desired form. Similarly, in case that $i= N_n$
\begin{align*}
 w_{t+1} &= w_t - c_{n,N_n} \\
 &= - \sum_{i=1}^{n-1} \gamma_i e_i -(N_n-1) \beta_n e_n + \alpha_{n,N_n} \tilde e_{n,N_n} 
- \beta_n e_n - \alpha_{n,N_n} \tilde e_{n,N_n} + \alpha_{n+1,1} \tilde e_{n+1,1} \\
&= - \sum_{i=1}^{n-1} \gamma_i e_i - N_n \beta_n e_n + \alpha_{n+1,1} \tilde e_{n+1,1} 
\end{align*}
which has the form of the second case since $- N_n \beta_n = 1/n$.

In the second case no element $c_{n',i'}$ will be chosen since
$$   \inner{w_t}{c_{n',i'}} = 
\begin{cases}
0 & \text{if } n' > n + 1 \text{ or } (n' = n+1 \text{ and } i'>1), \\
\alpha_{n+1,1}^2 
< \inner{e_n}{a_n}/n & \text{if } n'=n+1 \text{ and } i'=1, \\
\beta_n / n - \alpha_{n+1,1}^2 < 0 & \text{if } n' = n \text{ and } i'= N_n, \\
\beta_n / n  < 0 & \text{if } n' = n \text{ and } i' < N_n, \\
- \gamma_{n'} \beta_{n'}   <  \inner{w_t}{b_{n'}}  \text{ or } =0& \text{if } n' < n. \\
\end{cases} $$
\textbf{(v)} We turn to the $d$ elements. First case: If $(n,i) = (2,1)$ then $\inner{w_t}{d_{2}} = -(1/2) \alpha_{2,1}^2 <0$ and otherwise 
$\inner{w_t}{d_{2}} = 0$. In either way $d_{2}$ will not be chosen. For any other $n'$ we have that
$\inner{w_t}{d_{n'}} = 0$ if $(n,i) \not = (n',1)$. Otherwise
\begin{align*}
&\inner{w_t}{d_{n'}} = (1/2) \alpha_{n,1}^2 
= \frac{1}{2} 
\frac{\inner{e_n}{a_n}}{n}  
\end{align*}
and 
$\inner{w_t}{d_{n'}} < \inner{a_n}{e_n}/n$ and $d_{n'}$ will never be chosen.

Second case: 
 $\inner{w_t}{d_{2}} \leq 0$ and  all $d_{n'} $ with $n' \not = n+1$ are zero. Finally
\[
\inner{w_t}{d_{n+1}} = (1/2) \alpha_{n+1,1}^2 = \frac{\inner{e_{n+1}}{a_{n+1}}}{n+1} <
\frac{\inner{e_{n}}{a_{n}}}{n}
\]
since the sequence  $\inner{e_{n}}{a_{n}}$ is non-increasing.

So in step $t+1$ the element $w_{t+1}$ will have the right form, and, certainly, $w_{t+1} \not = 0$.

$(\delta)$ Next we prove \textbf{(3)}. Since the smallest index $n$ for which $a_n$ has not been chosen in rounds $1$ to $t$ is assumed to be larger than $7$ we can assume that $t\geq 2$.  

For each $m$, $\inner{a_m}{e_m}/m \geq a_m'/m \geq 1/(m\ln(m	+1))$. Hence, from (2) we conclude for all $i$ with 
$n-1 \geq i \geq N(n)$
\begin{align*}
-\langle e_i, w_{t+1} \rangle &= \gamma_i \geq \min\left\{ a_i',\max\left\{ 2^i \frac{\inner{a_n}{e_n}}{n} - 2^{-i},0 \right\} \right\} \\
&\geq  \min\left\{ \frac{1}{\ln(i+1)}, \max\left\{ \frac{2^i}{n \ln(n+1)} -2^{-i},0\right\} \right\}. 
\end{align*}  
Using the assumption $ i \geq N(n) $ we observe that 
\[ i \geq 1 + \log_2 (n \ln(n+1)) = \log_2( 2 n \ln(n+1))  \Rightarrow 
2^i \geq 2n \ln(n+1) \Rightarrow   \frac{2^i}{n \ln(n+1)} - 1 \geq 1\]
and since $i>1$ 
\[
\frac{2^i}{n \ln(n+1)} - 2^{-i}  \geq \frac{2^i}{n \ln(n+1)} - 1 \geq \frac{1}{\ln(i+1)} \geq \frac{1}{\ln(n+1)} .
\]
So, for $n -1 \geq i\geq N(n)$,  
$$  -\langle e_i, w_{t+1} \rangle \geq   \frac{1}{\ln(n+1)}.$$

\textbf{($\epsilon$)}  \textbf{(4)} also follows from (2). 
First, if $a_n$ has been chosen in any step $t$ then
for all $t' > t $ (2) tells us that $\inner{a_n}{w_{t'}} \leq 0$. Consequently, $x_{t'} \not = a_n$ and  $a_n$ will 
be chosen at most ones. Also, if $a_n$ is the element with minimal index which has not been chosen yet in time $t$ then
either $\inner{w_t}{a_n} = \inner{w_t}{a_m} = 0$ or $\inner{w_t}{a_n} > \inner{w_t}{a_m}$ for all $m>n$. In the first case no element $a_{n'}$ will be chosen and in the second case, if an $a_{n'}$ 
will be chosen it will be the one with the smallest index in the set of elements which have not been chosen yet. So the elements $a_n$ will be chosen in order and no element will be skipped.

Let us now assume that $\{a_m: a_m \not = x_t \text{ for all } t \geq 1 \}$ is not empty and let 
$a_n$ be the element with the smallest index in this set. 

The argument in ($\gamma$) shows us that no $c_{m,j}$ with $m>n$ will be chosen. 
Also, if $c_{m,j}$, $m\leq n$, has been chosen in any step $t$  then for all $t' > t$ we again infer from (2) 
and the argument in ($\gamma$) that $c_{m,j}$ will not be chosen in $t'$ and, hence, each $c_{m,j}$ is not chosen more than ones.  Also none of the $d_{n,i}$ elements will be ever chosen.

So the only way that an $a_n$ is never chosen is that infinite many $b_m$ elements are selected. Yet,
no $b_m$ with $m \geq n$ will be chosen since the inner product with the weight vector is less or equal to zero. Also  
each $b_m$, $m<n$ can only be chosen finite many times before the weight vector in direction $e_m$ becomes $0$ and the inner product with $b_m$ becomes $0$ too (at which point it will certainly not be chosen any more). So only finite many applications of $b_m$'s are possible with a contradiction that $a_n$ will not be chosen.  
 \pEnd

\textbf{(e) Unboundedness:}
d.3 and d.4 allow us now to show that the sequence $\{\norm{w_t}\}_{t \geq 1}$ is unbounded. Assume that at stage $t$ the element $n$ is the smallest index such that
$\inner{a_n}{w_t}$ is positive.

For $n\geq 7$ we know from $d.3$ that $\abs{\inner{e_i}{w_t}} \geq 1/ \ln(n+1)$ for all $i$, $N(n) \leq i < n$
Hence, for $n \geq 7$,    
\begin{align*}
\norm{w_t}^2 &= \sum_{i=1}^\infty\abs{\langle e_i, w_t \rangle}^2 \geq \sum_{i= N(n)}^{n-1} \frac{1}{(\ln(n+1))^2} \geq \frac{n-1-N(n)}{(\ln(n+1))^2}  \\
&= \frac{n-1 - \lceil 1 + \log_2 (n \ln(n+1))\rceil}{(\ln(n+1))^2} 
\geq \frac{n-3}{(\ln(n+1))^2} - \frac{\log_2 (n\ln(n+1))}{(\ln(n+1))^2}.
\end{align*}
Furthermore, 
$$ \frac{\log_2 (n\ln(n+1))}{(\ln(n+1))^2} = \frac{\ln(n) + \ln(\ln(n+1)))}{\ln(2) (\ln(n+1))^2} 
\leq \frac{2}{\ln(2)}, 
$$
since $\ln(x) \leq x$ for all $x > 0$ and $\ln(n+1) > 0$. Hence, 
$$\norm{w_t}^2 \geq \frac{n-3}{(\ln(n+1))^2} -  \frac{2}{\ln(2)}. $$
The right side goes to infinity in $n$ and, since for every $n$ there is a $t$ at which $a_n$ is chosen 
due to d.4, the norm of $w_t$ crosses any boundary at one time $t$. 
\end{proof}

\begin{corollary}
There exists a continuous kernel on $[0,1]$, a Borel probability measure on $[0,1]$ which assigns positive measure to open subsets of $[0,1]$ and an initialization for which the algorithm does not converge with a $1/t$ rate to $\mean$.
\end{corollary}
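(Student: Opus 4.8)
The plan is to derive the corollary from Proposition~\ref{Thm:CounterExample} by realizing its abstract Hilbert space $\cH$ and continuous feature map $\phi:[0,1]\to\cH$ as an RKHS with a \emph{continuous} kernel, in such a way that the kernel herding algorithm run in the RKHS reproduces, step for step, the algorithm run in $\cH$. Concretely, take $\cH$, $\phi$, $P$ and the initialization $w_1=c_{1,1}\in\phi[[0,1]]$ as in Proposition~\ref{Thm:CounterExample}, fix some $x_1\in[0,1]$ with $\phi(x_1)=w_1$, and set
\[
	k(x,y):=\langle\phi(x),\phi(y)\rangle_\cH,\qquad x,y\in[0,1].
\]
Then $k$ is a positive semidefinite kernel (a Gram kernel of a map into a Hilbert space) and it is continuous, because $\phi$ is continuous and the inner product on $\cH$ is jointly continuous; moreover $\|k\|_\infty=\sup_{x}k(x,x)=\sup_x\|\phi(x)\|^2<\infty$ since $\phi$ is continuous on the compact interval $[0,1]$. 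By the standard feature--map construction of RKHSs (e.g.\ \cite{PAUL16}) there is a unique isometric isomorphism $V$ from $\cH_0:=\cspn\{\phi(x):x\in[0,1]\}\subseteq\cH$ onto the RKHS $\cH_k$ of $k$, characterized by $V\phi(x)=k(x,\cdot)$ for all $x$.

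First I would record the elementary compatibility facts. Every $\phi(x)$ lies in $\cH_0$; so does $w_1=\phi(x_1)$; and $\mean:=\int\phi\,dP$, being a Bochner integral of an $\cH_0$--valued function, lies in the closed subspace $\cH_0$ as well. Since bounded linear operators commute with Bochner integrals, $V\mean=\int k(x,\cdot)\,dP=:\mean_k$, and $\mean_k$ is well defined because $\int\|k(x,\cdot)\|_{\cH_k}\,dP=\int k(x,x)^{1/2}\,dP<\infty$. Then I would show by induction on $t\ge1$ that, running kernel herding in $\cH_k$ with target $\mean_k$ and initialization $Vw_1=k(x_1,\cdot)\in\{k(x,\cdot):x\in[0,1]\}$ and using the same tie--breaking in the $\argmax$ as in $\cH$, the iterates $w_t^{\cH_k}$ satisfy $V^{-1}w_t^{\cH_k}=w_t$ and the points selected at each step coincide. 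The base case is immediate. For the step, $V$ being an isometry with $V\phi(x)=k(x,\cdot)$ gives
\[
	\langle w_t^{\cH_k},k(x,\cdot)\rangle_{\cH_k}=\langle V^{-1}w_t^{\cH_k},\phi(x)\rangle_\cH=\langle w_t,\phi(x)\rangle_\cH\quad\text{for all }x,
\]
so the $\argmax$ is the same set and the same $x^\star$ is chosen; then linearity of $V$ together with $V\mean=\mean_k$ gives
\[
	V^{-1}\bigl(w_t^{\cH_k}-(k(x^\star,\cdot)-\mean_k)\bigr)=w_t-(\phi(x^\star)-\mean)=w_{t+1},
\]
and likewise $V^{-1}\hat\mean_t^{\cH_k}=\hat\mean_t=(1/t)\sum_{i\le t}\phi(x_i)$. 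In particular $\|w_t^{\cH_k}\|_{\cH_k}=\|w_t\|_\cH$ and $\|\hat\mean_t^{\cH_k}-\mean_k\|_{\cH_k}=\|\hat\mean_t-\mean\|_\cH$ for all $t$.

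The conclusion then follows by transporting Proposition~\ref{Thm:CounterExample}: that proposition shows $\{w_t\}_{t\ge1}$ is unbounded and that there is no constant $b$ with $\|\hat\mean_t-\mean\|_\cH\le b/t$ for all $t$; by the isometries just established the same holds for $\{w_t^{\cH_k}\}_{t\ge1}$ and for $\|\hat\mean_t^{\cH_k}-\mean_k\|_{\cH_k}$, so the kernel herding algorithm on $\cH_k$ with kernel $k$, measure $P$, and initialization $k(x_1,\cdot)$ does not converge fast to $\mean_k=\int k(x,\cdot)\,dP$. Since $P$ is the probability measure from Proposition~\ref{Thm:CounterExample} it assigns positive measure to every open subset of $[0,1]$, and $k$ is a continuous kernel on $[0,1]$, which is exactly the assertion of the corollary. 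I expect the only delicate points to be the bookkeeping in the induction --- in particular checking that $\mean$ genuinely lies in $\cH_0$ so that $V\mean$ is defined, and that the $\argmax$ tie--breaking is applied consistently in the two runs --- whereas continuity of $k$ and the existence of the isometry $V$ are immediate from compactness of $[0,1]$ and the feature--map construction.
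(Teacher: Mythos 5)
Your proposal is correct and follows essentially the same route as the paper: define $k(x,y)=\langle\phi(x),\phi(y)\rangle$, identify $\mean$ with $\int k(x,\cdot)\,dP$ in the RKHS, and show by induction that the herding iterates in the RKHS mirror those in $\cH$ with equal weight norms, so the unboundedness from Proposition~\ref{Thm:CounterExample} transfers. The only difference is presentational: you invoke the canonical isometry $V:\cspn\phi[\X]\to\cH_k$ abstractly, while the paper builds the same correspondence by hand via Cauchy sequences of convex combinations converging to $\mathfrak m$ and $\mathfrak n$.
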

\begin{proof}
We consider the Hilbert space $(\cH,\inner{\cdot}{\cdot})$  from Proposition \ref{Thm:CounterExample} with the corresponding feature map $\phi:[0,1] \rightarrow \cH$. Define the continuous kernel function $k(x,y) := \inner{\phi(x)}{\phi(y)}$ on $[0,1]$ and let the corresponding RKHS be $(\mathcal{K},(\cdot,\cdot))$.
The geometry of the two spaces is closely related. We have for scalars $a_i, b_j$ and $x_i, y_j \in [0,1]$, $i=1,\ldots,n, j=1, \ldots, m$, that
\begin{align*}
\inner{\sum_{i=1}^n a_i \phi(x_i)}{\sum_{j=1}^m b_j \phi(y_j)} = \sum_{i=1}^n \sum_{j=1}^m k(x_i,y_j) = 
\left( \sum_{i=1}^n a_i k(x_i,\cdot),\sum_{j=1}^m b_j k(y_j,\cdot)\right).
\end{align*}
Furthermore, we know that the Bochner-integral $\mathfrak{m} \in \cH$ lies in $\cch \phi[\X]$ which equals the closure of $\ch \phi[\X]$ \cite{SIM11}[Thm. 5.2, p.71] 
and there exists a sequence $\{n_i\}_{i \in \mathbb{N}}$, $n_i \in \mathbb{N}$, elements $x_{ij} \in [0,1]$, and non-negative weights $a_{ij}$ with $\sum_{j=1}^{n_i} a_{ij} =1$ such that the sequence 
$\{s_i = \sum_{j=1}^{n_i} a_{ij} \phi(x_{ij})\}_{i\in \mathbb{N}}$ converges to $\mathfrak m$ in norm, i.e. $\norm{\mathfrak{m} - s_i} \rightarrow 0$ for $i\rightarrow \infty$. The corresponding sequence $\{\bar s_i = \sum_{j=1}^{n_i} a_{ij} k(x_{ij},\cdot))\}_{i\in \mathbb{N}}$ is a Cauchy sequence in $\mathcal{K}$ since 
\[ 
\norm{\bar s_i - \bar s_j}_\mathcal{K} = \norm{s_i - s_j}_\cH
\]
and has a limit $\mathfrak n \in \mathcal{K}$ because  $\mathcal{K}$ is complete. In particular,  for any $x \in \X$ 
\begin{align*}
 \abs{ \left(\mathfrak n, k(x,\cdot)\right) - \inner{\mathfrak m}{\phi(x)}}&= 
\abs{(\mathfrak{n} - \bar s_i,k(x,\cdot)) + (\bar s_i,k(x,\cdot)) - \inner{s_i}{\phi(x)} + \inner{s_i - \mathfrak{m}}{\phi(x)}} \\
&\leq k(x,x) \norm{\mathfrak{n} - \bar s_i}_\mathcal{K} + \norm{\phi(x)}_\cH \norm{\mathfrak m - s_i}_\cH \rightarrow 0 \text{ (in } i)
\end{align*}
and $(\mathfrak{n},k(x,\cdot)) = \inner{\mathfrak{m}}{\phi(x)}$ for every $x \in X$. 
Furthermore, for arbitrary $l$ points $x_1,\ldots, x_l \in X$ and scalars $a_1,\ldots,a_l$ it holds that  $(\mathfrak{n},\sum_{i=1}^l a_i k(x_i,\cdot))
=\langle \mathfrak{m},\sum_{i=1}^l a_i \phi(x_i)\rangle$ and
\begin{align*}
\abs{\norm{\mathfrak n}_\mathcal{K} - \norm{\mathfrak m}_\cH} &\leq |\norm{\mathfrak n}_\mathcal{K} - \norm{\bar s_i}_\mathcal{K}\!| 
+ |\norm{\bar s_i}_\mathcal{K} - \norm{s_i}_\cH \!| 
+ |\norm{s_i}_\cH - \norm{\mean}_\cH \!|
\end{align*}
which also goes to $0$ in $i$ and therefore $\norm{\mathfrak{n}}_\mathcal{K} = \norm{\mathfrak{m}}_\cH$.

The function $k(x,\cdot): X \rightarrow \cH$ is continuous and therefore Bochner-integrable with respect to $P$. Denote the Bochner integral with $\mathfrak{n}' = \int k(x,\cdot) \, dP$.  
For any $x \in X$
\[ 
\mathfrak n(x) = (\mathfrak n, k(x,\cdot)) = \inner{\mathfrak{m}}{\phi(x)} = \E \inner{\phi(\cdot)}{\phi(x)} = 
\E k(x,\cdot) = (\mathfrak{n}',k(x,\cdot)) = \mathfrak n'(x)
\]
and $\mathfrak n = \mathfrak n'$.

Now if the algorithm is applied in $(\mathcal{K},(\cdot,\cdot))$ with the initialization $k(x_0,\cdot)$, where $x_0$ is the element in $\X$ that maps to the initialization $\phi(x_0)$ that we use in Proposition \ref{Thm:CounterExample}, then sequences of elements $x_t$ and of weights $w_t$ are generated. The weights $w_t$ are of the form $k(x_0,\cdot) + \sum_{i=1}^t k(x_i,\cdot) - t \mathfrak n$. The sequence $x_1,x_2, \ldots$ also 
maximizes the  objective in $(\cH,\inner{\cdot}{\cdot})$. This can be seen by an induction over the weights $\tilde w_t \in \cH$ that are generated by the algorithm. The induction step is the following. 
\begin{align*}
&\max_{x \in [0,1]} \inner{\tilde w_t}{\phi(x)} = \max_{x \in [0,1]} \left( \inner{\phi(x_0)}{\phi(x)} + \sum_{i=1}^t \inner{\phi(x_i)}{\phi(x)} - t \inner{\mathfrak{m}}{\phi(x)}\right ) \\
&=  \max_{x \in [0,1]} \left( (k(x_0,\cdot),k(x,\cdot)) + \sum_{i=1}^t (k(x_i,\cdot),k(x,\cdot)) - t(\mathfrak{n},k(x,\cdot))\right) 
= (w_t,k(x_{t+1},\cdot)) \\
&=  (k(x_0,\cdot),k(x_{t+1},\cdot)) + \sum_{i=1}^t (k(x_i,\cdot),k(x_{t+1},\cdot)) - t(\mathfrak{n},k(x_{t+1},\cdot)) \\
&= \Bigl\langle \phi(x_0) + \sum_{i=1}^t \phi(x_i) -t\mathfrak{m},\phi(x_{t+1})\Bigr\rangle = \inner{\tilde w_t}{\phi(x_{t+1})}.
\end{align*}
From Proposition \ref{Thm:CounterExample} we can now infer  that the sequence $\{\norm{\tilde w_t}_\cH\}_{t \in \mathbb{N}} = \{\norm{w_t}_\mathcal{K}\}_{t \in \mathbb{N}}$ is unbounded and the algorithm does not converge with the fast rate in $\mathcal{K}$. 
\end{proof}

\bibliographystyle{plainnat}

\end{document}